\let\L\relax
\let\define\relax
\DeclareMathOperator{\E}{\mathbb{E}}
\DeclareMathOperator{\G}{\mathcal{G}}
\DeclareMathOperator{\D}{\mathcal{D}}
\DeclareMathOperator{\X}{\mathcal{X}}
\DeclareMathOperator{\Y}{\mathcal{Y}}
\DeclareMathOperator{\L}{\mathcal{L}}
\DeclareMathOperator*{\argmin}{arg\,min}
\DeclareMathOperator{\define}{\coloneqq}
\newcommand\norm[1]{\left\lVert#1\right\rVert}
\newcommand*{\tran}{^{\mkern-1.5mu\mathsf{T}}}
\begin{document}
\title{Query-decision Regression between Shortest Path and Minimum Steiner Tree}
%
%
\author{Guangmo Tong (\Letter)\orcidID{0000-0003-3247-4019} \and
Peng Zhao\orcidID{0000-0001-7043-0389} \and
Mina Samizadeh\orcidID{0000-0002-1082-966X}}
%
%
\institute{University of Delaware, USA \\
\email{\{amotong, pzhao, minasmz\}@udel.edu}
}
\maketitle              
\begin{abstract}
Considering a graph with unknown weights, can we find the shortest path for a pair of nodes if we know the minimal Steiner trees associated with some subset of nodes? That is, with respect to a fixed latent decision-making system (e.g., a weighted graph), we seek to solve one optimization problem (e.g., the shortest path problem) by leveraging information associated with another optimization problem (e.g., the minimal Steiner tree problem). In this paper, we study such a prototype problem called \textit{query-decision regression with task shifts}, focusing on the shortest path problem and the minimum Steiner tree problem. We provide theoretical insights regarding the design of realizable hypothesis spaces for building scoring models, and present two principled learning frameworks. Our experimental studies show that such problems can be solved to a decent extent with statistical significance.

\keywords{Statistical Learning  \and Data-driven Optimization \and Combinatorial Optimization.}
\end{abstract}
\section{Introduction}
In its most general sense, a decision-making problem seeks to find the best decision for an input query in terms of an objective function that quantifies the decision qualities \cite{edwards1954theory}. Traditionally, the objective function is given a prior, and we thus focus primarily on its optimization hardness. However, real-world systems are often subject to uncertainties, making the latent objective function not completely known to us \cite{kochenderfer2015decision}; this creates room for data-driven approaches to play a key role in building decision-making pipelines \cite{provost2013data}. 

\textbf{Query-decision Regression with Task Shifts (QRTS).} When facing an unknown objective function, one can adopt the learn-and-optimize framework where we first learn the unknown objective function from data and then solve the target optimization problem based on the learned function \cite{bertsimas2020predictive}, which can be dated back to Bengio’s work twenty
years ago \cite{bengio1997using}. Nevertheless, the learn-and-optimize framework suffers from the fact that the learning process is often driven by the \textit{average} accuracy while good optimization effects demand \textit{worst-case} guarantees \cite{ford2015beware}. Query-decision regression (QR), as an alternative decision-making diagram, seeks to infer good decisions by learning directly from successful optimization results. The feasibility of such a diagram has been proved by a few existing works \cite{chen2015deepdriving}. Such a success points out an interesting way of generalizing QR called query-decision regression with task shifts (QRTS): assuming that there are two query-decision tasks associated with a latent system, can we solve one task (i.e., the target task) by using optimization results associated with the other task (i.e., the source task) -- Figure \ref{fig: example}? Proving the feasibility of such problems is theoretically appealing, as it suggests that one can translate the optimal solutions between different optimization problems. 

\begin{figure}[t]
\centering
\subfloat[]{\label{fig: path_1}\includegraphics[width=0.10\textwidth]{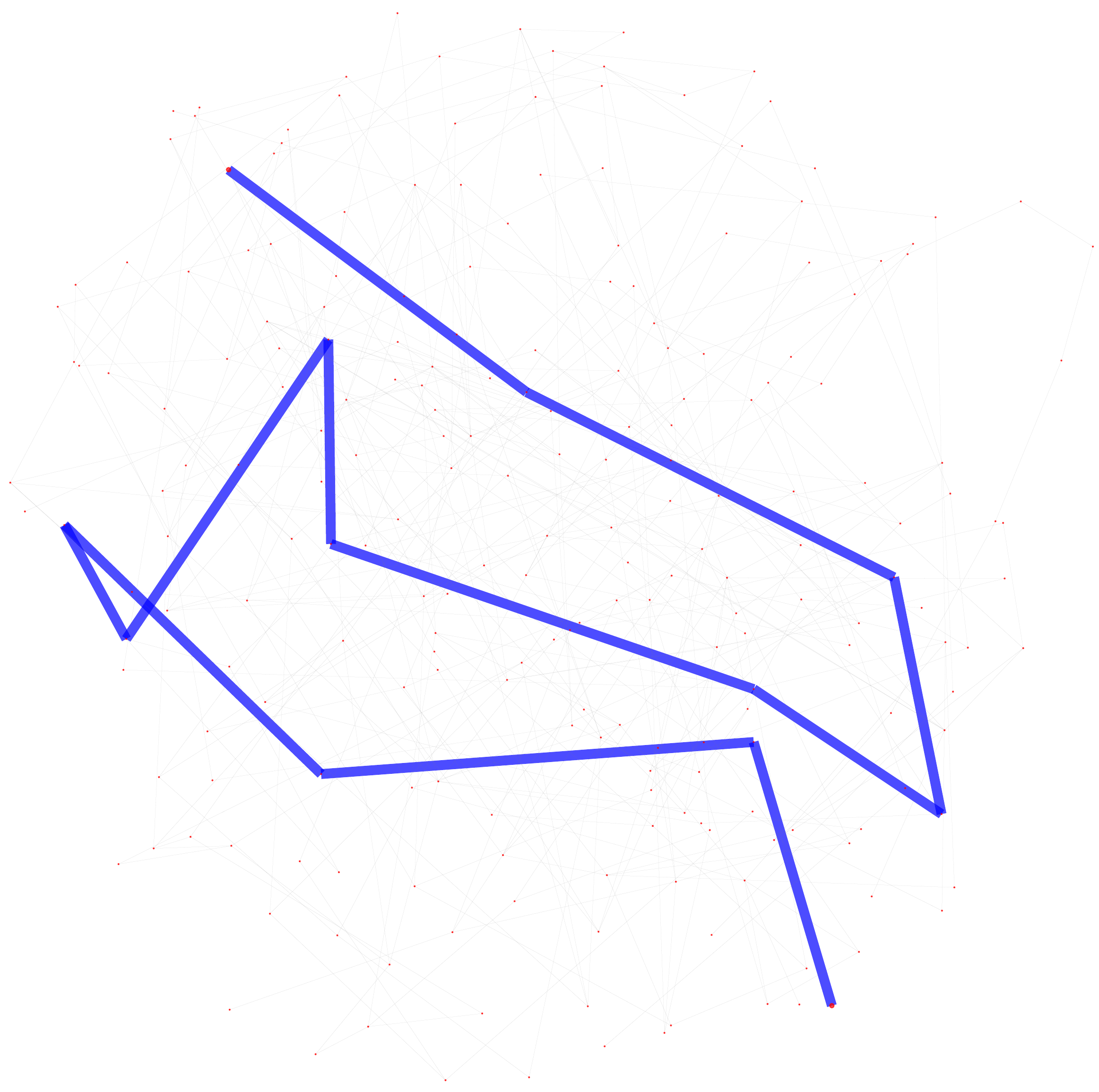}}\hspace{1mm}
\subfloat[]{\label{fig: path_2}\includegraphics[width=0.10\textwidth]{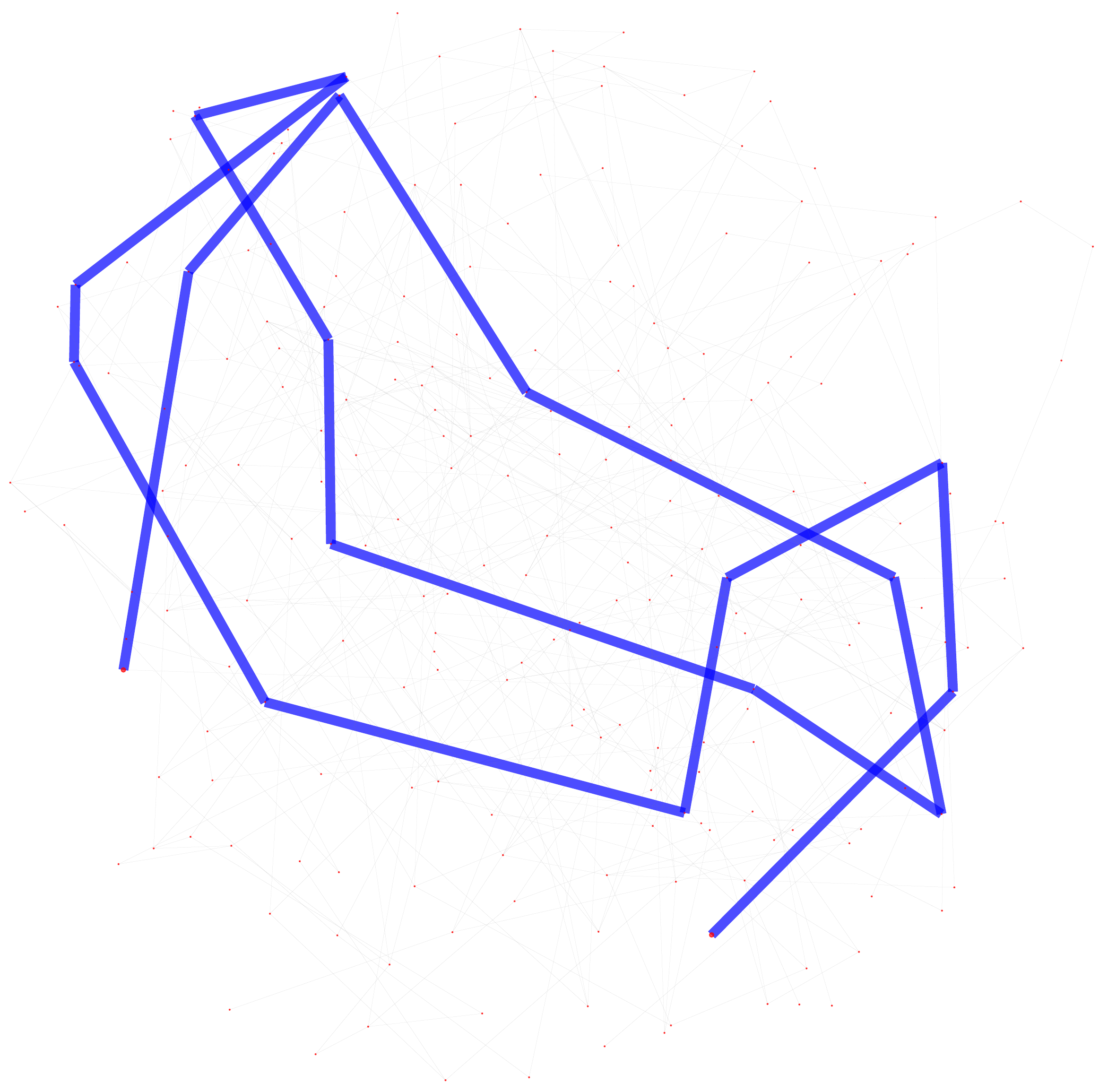}}\hspace{1mm}
\subfloat[]{\label{fig: path_3}\includegraphics[width=0.10\textwidth]{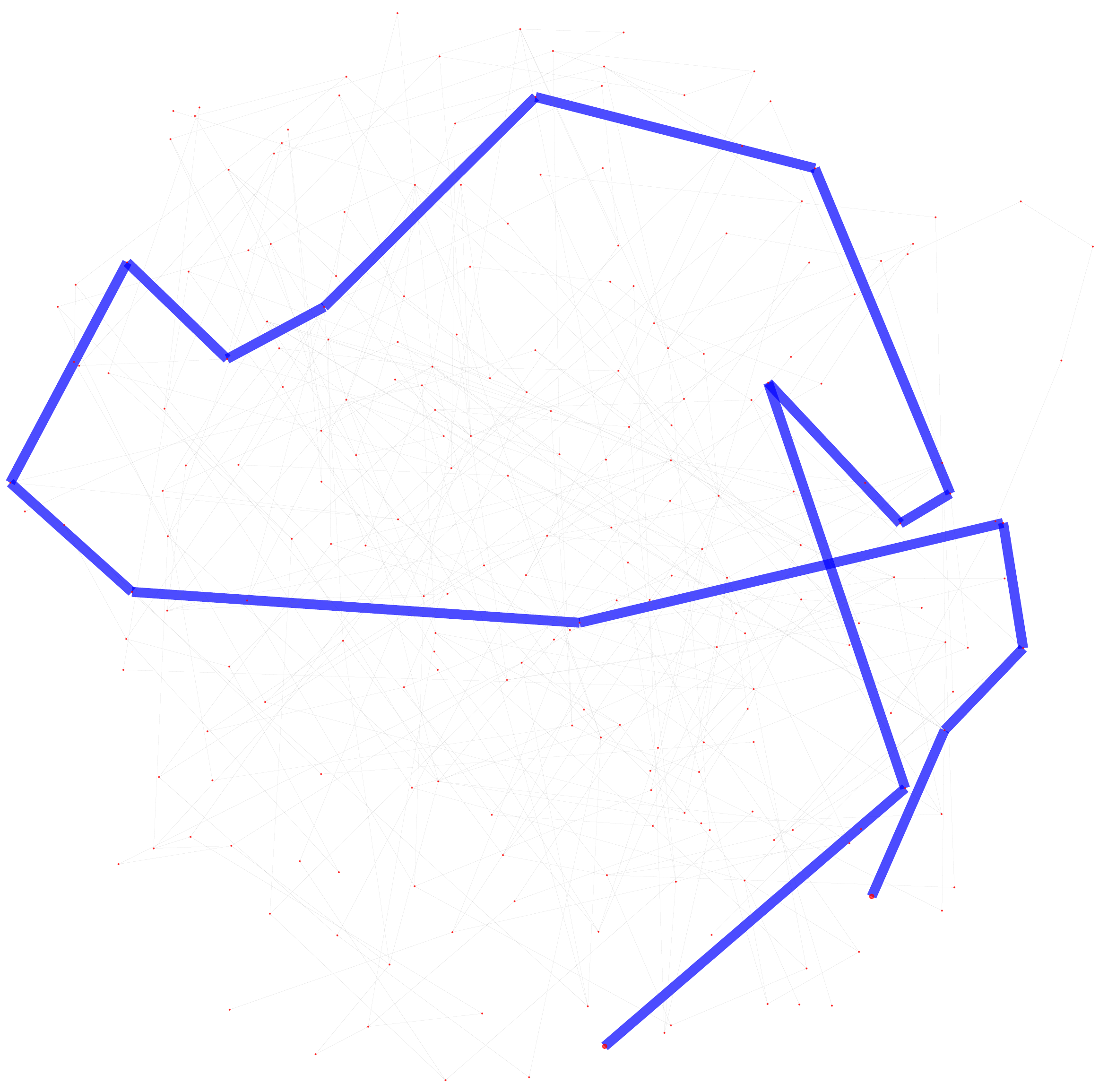}}\hspace{1mm}
\subfloat[]{\label{fig: path_4}\includegraphics[width=0.10\textwidth]{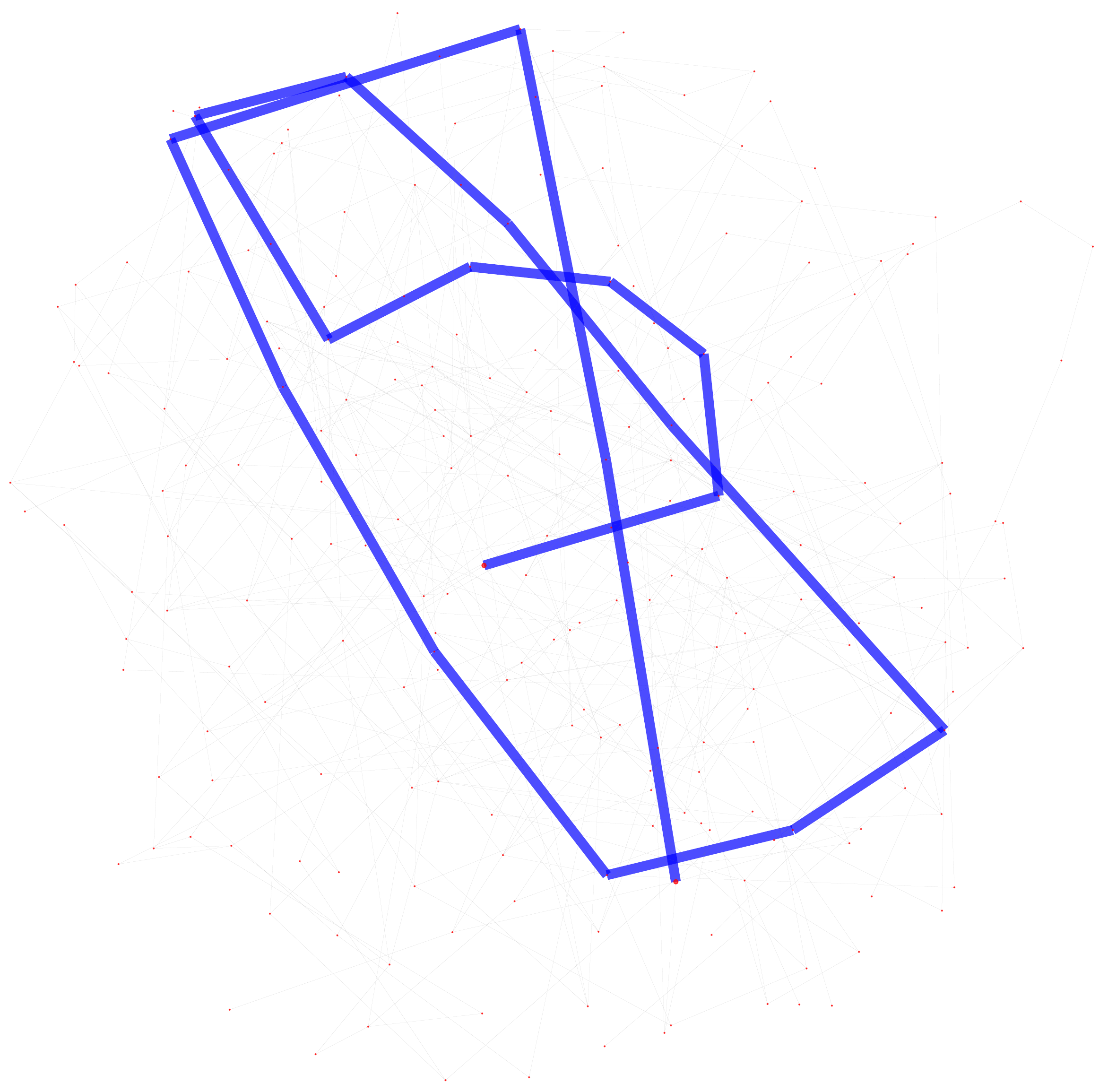}}\hspace{1mm}
\subfloat[]{\label{fig: tree_1}\includegraphics[width=0.10\textwidth]{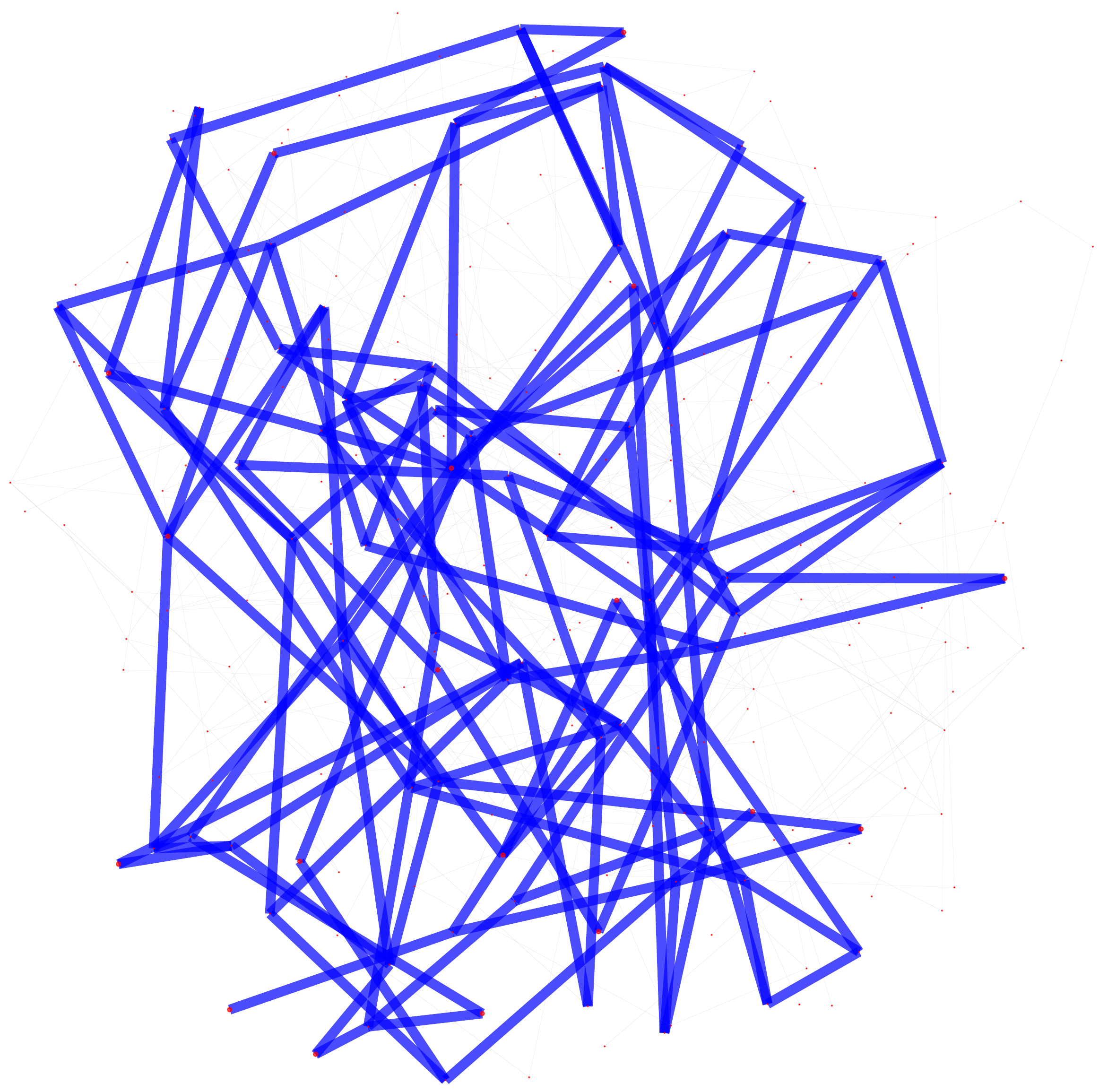}}\hspace{1mm}
\subfloat[]{\label{fig: tree_2}\includegraphics[width=0.10\textwidth]{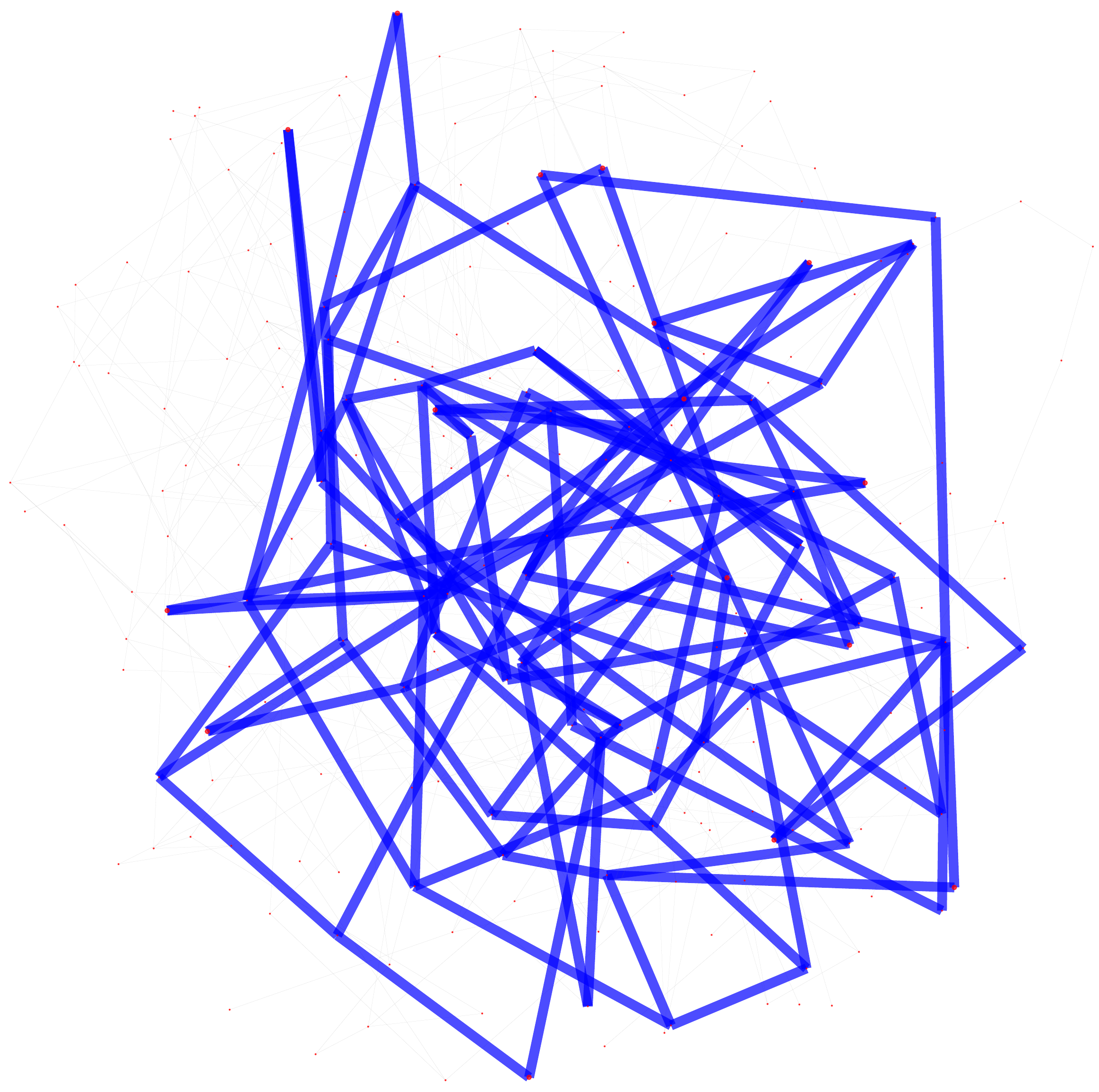}}\hspace{1mm}
\subfloat[]{\label{fig: tree_3}\includegraphics[width=0.10\textwidth]{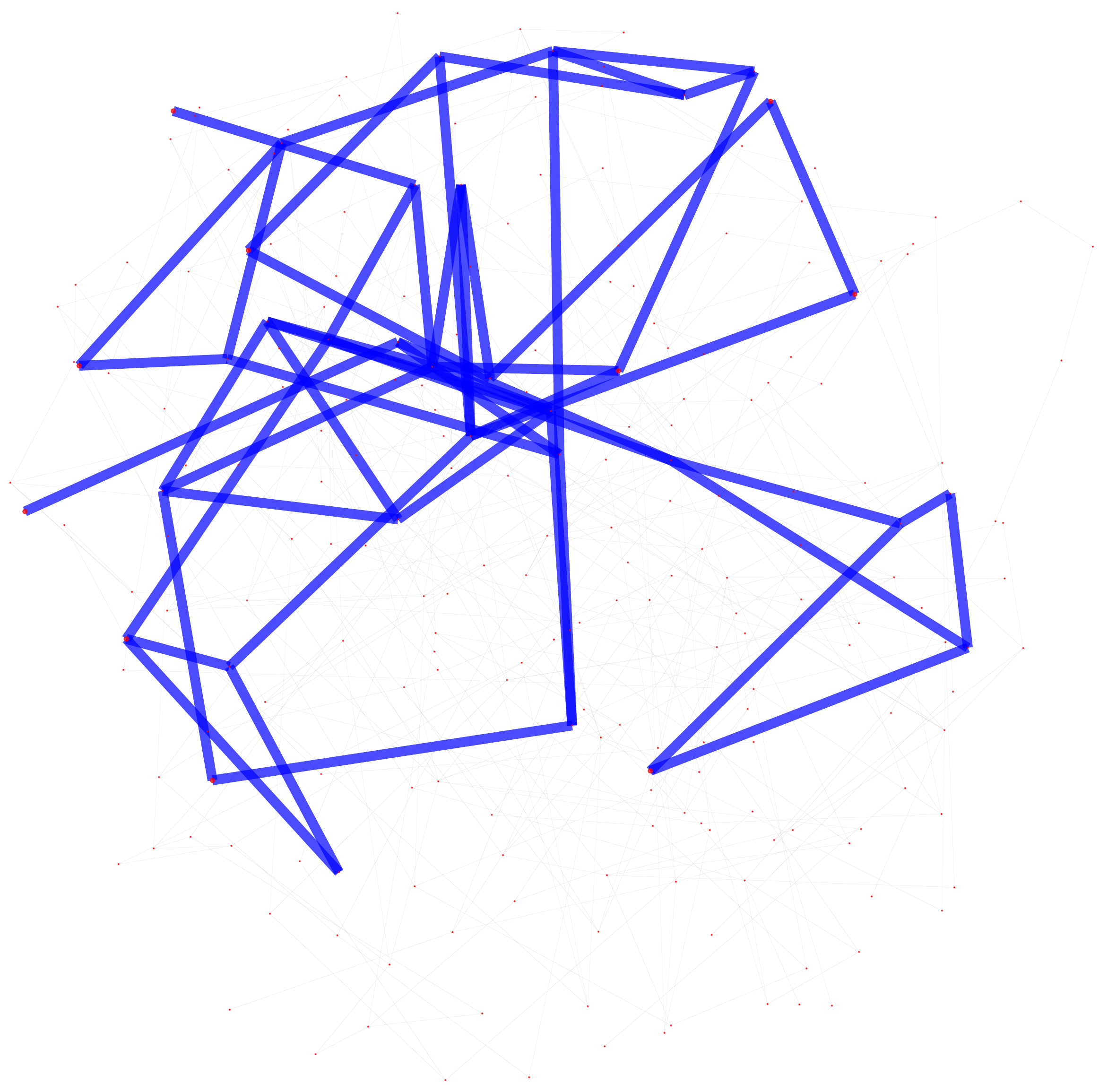}}\hspace{1mm}
\subfloat[]{\label{fig: tree_4}\includegraphics[width=0.10\textwidth]{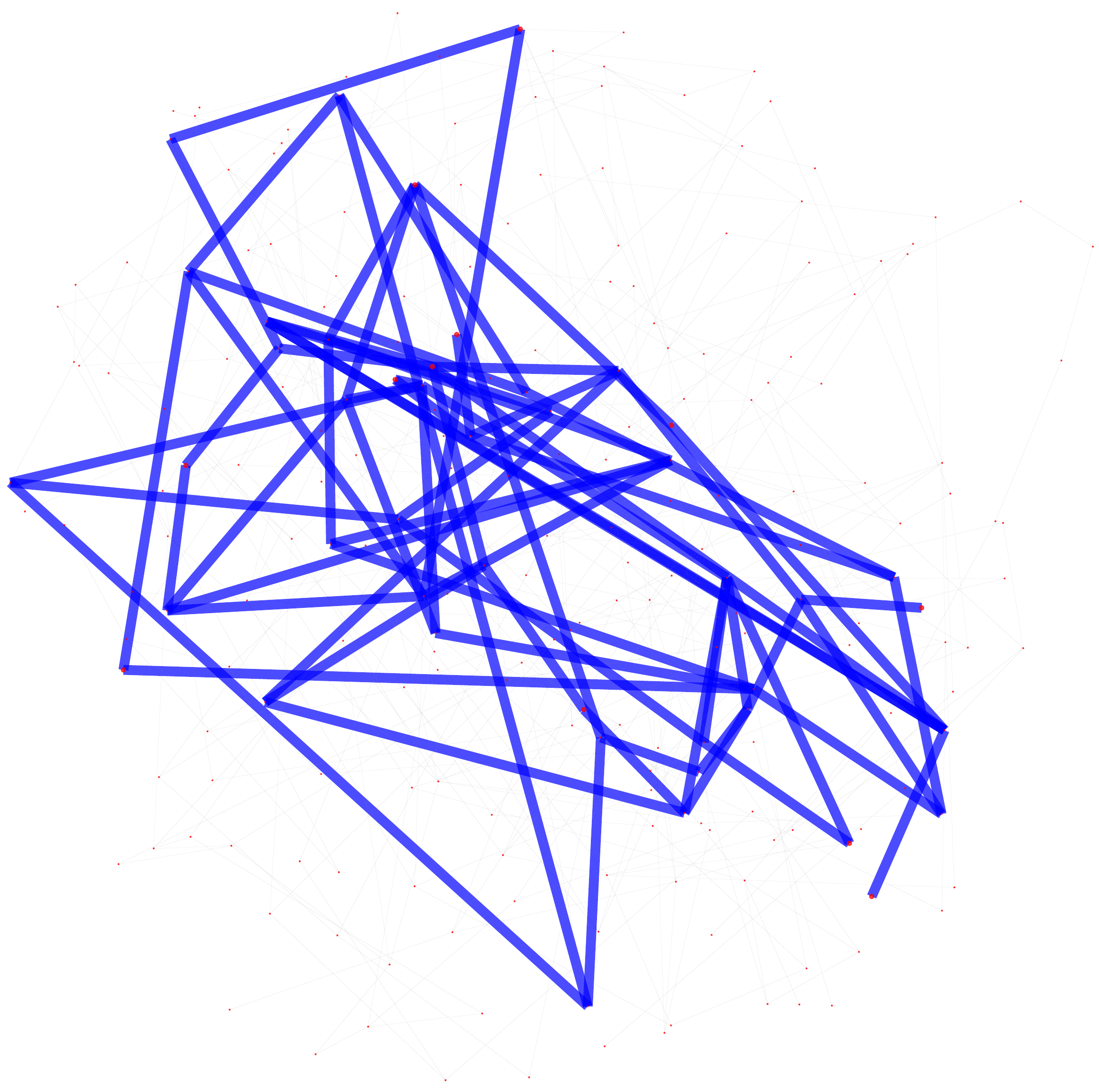}}
\vspace{-2mm}
\caption{\small Associated with a fixed weighted graph, (a)-(d) show the shortest paths of four pairs of nodes, and (e)-(h) show the minimal Steiner trees for four node subsets. Can we leverage the information in (a)-(d) to compute the solutions in (e)-(h), or vice versa?}
\label{fig: example}
\vspace{-5mm}
\end{figure}

\textbf{Contribution.} This paper presents the first study on QRTS over stochastic graphs for two specific problems, the shortest path problem and the minimum Steiner tree problem. Taking QRTS as a statistical learning problem, we provide theoretical analysis regarding the creation of realizable hypothesis spaces for designing score functions, seeking to integrate the latent decision objective into the learning pipeline for better optimization effects. Based on the proposed hypothesis space, we design two principled methods QRTS-P and QRTS-D, where P stands for \textbf{p}oint estimation and D stands for \textbf{d}istribution learning. In particular,  QRTS-P is designed based on the principle of point estimation that implicitly searches for the best mean graph, while QRTS-D leverages distribution learning to compute the pattern of the edge weights that can best fit the samples. We present empirical studies using graphs of classic families. As one of the main contributions of this paper, our results confirm that QRTS can be solved to a satisfactory extent, which may be the first piece of evidence showing that one can successfully translate knowledge between different optimization problems sharing the same underlying system. The supplementary materials\footnote{\url{https://github.com/cdslabamotong/QRTS}} include technical proofs, more discussions on experiments, source code, and data.

\section{Preliminaries}
\label{sec: pre}

We consider a countable family $\G$ of weighted directed graphs sharing the same graph structure $G=(V, E)$. For each weighted graph $g \in \G$, let $g(): E \rightarrow \mathbb{R}^+$ be its weight function. Without loss of generality, we assume that $G$ has no multiple edge when the edge directions are omitted; therefore, each graph $g \in \G$ can also be taken as an undirected graph, without causing confusion regarding the edge weights. Associated with $\G$, there is an \textit{unknown} distribution $\D_{\G}$ over $\G$. We consider optimization problems in the following form.
\begin{definition}[\textbf{Query-decision Optimization}]
Let $\X \subseteq 2^V$ be a query space and $\Y \subseteq 2^E$ be a decision space. In addition, let $f(x,y,g) \in  \mathbb{R}^{+}$ be the decision value associated with a query $x \in \X$, a decision $y \in \Y$, and a graph $g \in \G$ (either directed or undirected). For a given query $x \in \X$, we seek to find the decision that can minimize the expected decision value:
{\small
\begin{align}
\label{eq: true_problem}
{\argmin}_{y \in \Y} F_{f,\D_{\G}}(x,y) \text{~where~} \hspace{0.0cm}  F_{f,\D_{\G}}(x,y) \define  \E_{g \sim  \D
_{\G}} [f(x,y,g)].
\end{align}}%
Such a task is specified by a three-tuple $(f, \X, \Y)$. It reduces to the deterministic case with $|\G|=1$ (e.g., Figure \ref{fig: example}).
\end{definition}
When the distribution $\D_{\G}$ is known to us, the above problems fall into stochastic combinatorial optimization \cite{rajkumar2014online}. In addressing the case when $\D_{\G}$ is unknown, query-decision regression emphasizes the scenario when there is no proper data to learn $\D_{\G}$, and it is motivated by the aspiration to \textit{learn directly from successful optimization results}. Since the optimization problem in question (i.e., Eq. (\ref{eq: true_problem})) can be computationally hard under common complexity assumptions (e.g., NP$\neq$P), we assume that an approximate solution is observed. Accordingly, we will utilize samples in the form of
{\small
\begin{align}
\label{eq: data}
D_{f,\alpha}= \Big\{(x_j,y_j) \big|  F_{f,\D_{\G}}(x_j,y_j) \leq \alpha \cdot \min_{y \in \Y} F_{f,\D_{\G}}(x_j,y) \Big\},
\end{align}}%
where the quality of the observed decision is controlled by a nominal ratio $\alpha \geq 1$. With such, we formulate query-decision regression with/without task shifts as statistical learning problems. 

\begin{definition}[\textbf{Query-decision Regression (QR)}]
Associated with a query-decision optimization problem $(f,\X,\Y)$ and a distribution $\D_{\X}$ over $\X$, given a collection $D_{f,\alpha}=\{(x_j,y_j)\}$ of query-decision pairs with $x_j$ being iid from $\D_{\X}$, we aim to learn a decision-making model $M: \X \rightarrow \Y$ that can predict high-quality decisions for future queries. 
\end{definition}

\begin{definition}[\textbf{Query-decision Regression with Task Shifts (QRTS)}]
Consider a source task $(f_S, \X_S, \Y_S)$ and a target task $(f_T, \X_T, \Y_T)$ sharing the same $\G$ and $\D_{\G}$. Suppose that we are provided with query-decision samples $D_{f_S,\alpha}$ associated with the source task. We aim to learn a decision-making model $M: \X_T \rightarrow \Y_T$ for the target task, with the goal of maximizing the optimization effect:
{\small
\begin{align}
\label{eq: loss}
    \L(M,\D_{\X_T}) \define \E_{x\sim \D_{\X_T}} \Big[\frac{\min_{y \in \Y_T} F_{f_T,\D_{\G}}(x,y)}{F_{f_T,\D_{\G}}(x,M(x))}\Big],
\end{align}}%
where $\D_{\X_T}$ is the query distribution of the target task.
\end{definition}

\begin{remark} [\textbf{Technical Challenge}]
\label{remark: challenge}
In principle, QR falls into the setting of supervised learning, in the sense that it attempts to learn a mapping using labeled data. Therefore, standard supervised learning methods can solve such problems with statistical significance more or less, although they may not be the optimal methods for specific tasks \cite{tong2021usco}. QRTS reduces to QR when the source task is identical to the target one, but it is arguably more challenging: one can no longer apply standard supervised learning methods because the query-decision mapping we seek to infer is associated with the target task but the samples are of the source task. To the best of our knowledge, no existing method can be directly applied to solve problems like the one in Figure \ref{fig: example}.
\end{remark}

\section{A Warm-up Method: QRTS-P}
\label{sec: qrts-p}
In this section, we present a simple and intuitive method called QRTS-P for solving QRTS. 
To illustrate the idea, we notice that the latent objective function $F_{f,\D_{\G}}$ can be expressed as a function of the mean weights of the edges, which is due to the linearity of expectation. 
\begin{example}
Suppose that the considered query-decision optimization problem is the stochastic shortest path problem. For each node pair $x=(u,v)$, let $\Y_x$ be the set of all paths from $u$ to $v$. The latent objective function can thus be expressed as 
{\small
\begin{align}
\label{eq: example}
F_{f,\D_{\G}}(x,y) =
\begin{cases}
  \sum_{e \in y}\E_{g\sim\D_{\G}}[g(e)] & \text{if $y \in \Y_{x}$}  \\
 +\infty & \text{otherwise}. 
 \end{cases}
\end{align}}%
Similarly, for the stochastic minimal Steiner tree problem \cite{vazirani2001approximation}, which finds the min-weight subgraph that connects a given set of nodes, we may define $\Y_x$ as the set of valid Steiner trees of a node set $x$, and with such, the latent objective has the identical form as Eq. (\ref{eq: example}).
\end{example}
In abstract, let $\Y_{f, x} \subseteq \Y_{f}$ be the set of the feasible solutions associated with a query $x$, and $\mathds{1}_S \in \{0,1\}$ be the set indicator function, i.e., $\mathds{1}_S(x) = 1 \iff x\in S$. The query-decision optimization now has the generic form of
{\small
\begin{align}
\label{eq: linear_objective}
\argmin_{y \in \Y_{f,x}} {\sum}_{e \in y}\E_{g\sim\D_{\G}}[g(e)] = {\sum}_{e \in E}\E_{g\sim\D_{\G}}[g(e)] \mathds{1}_y(e)
\end{align}}%
Such an abstraction suggests that it would be sufficient for solving the target task if one can find the mean graph induced by $\D_{\G}$, which essentially asks for good estimations of $\{\E_{g\sim\D_{\G}}[g(e)]| e \in E\}$ -- leading to a point estimation problem \cite{lehmann2006theory}. In what follows, we will see how samples $D_{f_S,\alpha}$ associated with the source task can be helpful for such a purpose.

For each $e\in E$, let $w_e \in \mathbb{R}^+$ be the sought-after estimation of $\E_{g\sim\D_{\G}}[g(e)]$. Since each sample $(x_j,y_j)$ in $D_{f_S,\alpha}$ is an $\alpha$-approximation, in light of Eq. (\ref{eq: linear_objective}), a desired set $\{w_e\} \define \{w_e| e \in \E\}$ should satisfy the linear constraint
{\small
\begin{align}
\label{eq: qrts_p_constraint}
\alpha \cdot \min_{y \in \Y_{f_S,x_j}}    \sum_{e \in E} w_{e}\mathds{1}_{y} \geq    \sum_{e \in E} w_{e} \mathds{1}_{y_j}, 
\end{align}}%
which means that the sample decision $y_j$ is also an $\alpha$-approximation in the mean graph induced by $\{w_{e}\}.$
Applying the standard large-margin training to Eq. (\ref{eq: qrts_p_constraint}) \cite{suthaharan2016support}, a robust estimation can be inferred by solving the following quadratic program
{\small
\begin{align}
    \label{eq: qrts_p_training}
    \min_{w_{e}, \eta_j} \sum_{e \in E} w_{e}^2+ C \cdot \sum_j \eta_j \hspace{0.0cm}
    \text{~s.t.~} \hspace{0.0cm} \alpha \cdot \underbrace{\min_{y \in \Y_{f_S,x_j}}  \sum_{e \in E} w_{e} \mathds{1}_{y}(e)}_{\text{\textbf{source inference}}} -\sum_{e \in E} w_{e} \mathds{1}_{y_j}(e)  \geq -\eta_j, \forall j.
\end{align}}%
where $C$ is a hyperparamter. Optimization problems in the above form have been widely discussed for training structured prediction models, and they can be solved efficiently as long as the source inference problem in Eq. (\ref{eq: qrts_p_training}) can be effectively solved for a given $\{w_e\}$ \cite{lucchi2013learning}. We adopt the cutting plane algorithm in our experiments and defer the details to Appendix \ref{sup: cutting}. With the learned weights $\{w_e\}$, the inference for a query $x^* \in \X_T$ of the target problem can be computed through
{\small
\begin{align}
\label{eq: target_inference_qrts_p}
\text{\textbf{target inference}:} \hspace{0.3cm} \min_{y \in \Y_{f_T,x^*}}  {\sum}_{e \in E} w_{e} \mathds{1}_{y}(e)
\end{align}}%
We denote such an approach as QRTS-P. The source and target inferences will be discussed later in Remark \ref{remark: oracle}, as they are special cases of later problems.

\section{A Probabilistic Perspective: QRTS-D}
\label{sec: qrts-d}
In this section, we present a more involved method called QRTS-D for solving QRTS. It turns out that QRTS-D subtly subsumes QRTS-P as a special case.

\subsection{Overall Framework}
QRTS-D follows the standard scoring model, where we assign each decision a score and make a prediction by selecting the decision with \textit{the lowest score}:
{\small
\begin{align}
\label{eq: qrts_d_score}
\text{\textbf{score function}:} \hspace{0.5cm} & h: \X_T \times \Y_{T} \rightarrow \mathbb{R}\\ \nonumber
\text{\textbf{inference}:} \hspace{0.5cm}&{\argmin}_{y \in \Y_{T}} h(x,y).
\end{align}}%
Such a framework is expected to solve QRTS well, provided that for each pair $(x,y) \in \X_T \times \Y_{T}$, a low score $h(x,y)$ can imply a small objective value $F_{f_T,\D_{\G}}(x,y)$. Implementing such an idea hinges on three integral parts: \textbf{a)} a hypothesis space $H$ of $h$; \textbf{b)} training methods to search for the best score function within $H$ based on the empirical evidence $D_{f_S,\alpha}$; \textbf{c)} algorithms for solving the inference problem.
With such a framework, we will first discuss insights for designing a desired hypothesis space and then present training methods.

\subsection{Hypothesis Design}
In designing a desired score function $h$, the key observation is that the true objective function $F_{f_T,\D_{\G}}$ of the target task is a perfect score function, in that the inference over $F_{f_T,\D_{\G}}(x,y)$ recovers the exact optimal solution. While $\D_{\G}$ is unknown to us, the technique of importance sampling offers a means of deriving a parameterized approximation \cite{tokdar2010importance}. In particular, for any empirical distribution $\D_{\G}^{em}$ over $\G$, we have 
{\small
\begin{align}
\label{eq: key}
 F_{f_T,\D_{\G}}(x,y) =  \int_{g \in \G} \frac{\D_{\G}[g]}{\D_{\G}^{em}[g]} f_T(x,y,g) d \D_{\G}^{em}, 
\end{align}}%
which immediately implies the function approximation guarantee between  $F_{f_T,\D_{\G}}$ and an affine combination of $f_T$.
\begin{theorem}
\label{theorem: approximation}
Let  $\norm{\cdot}$ denote the function distance with respect to the Lebesgue measure associated with any distribution $\D$ over $\X_T \times \Y_T$. For each $\epsilon \geq 0$, $\lim_{K \rightarrow \infty}  \Pr_{g_i \sim  \D_{\G}^{em}}\Big[ \inf_{  w_i \in \mathbb{R}} \norm{ F_{f_T,\D_{\G}}(x,y) - \sum_{i=1}^{K} w_i f_T(x,y,g_i) } \leq \epsilon  \Big] =1$.
\end{theorem}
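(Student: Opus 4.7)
The plan is to exhibit a single explicit choice of weights $\{w_i\}$ that achieves the bound with probability tending to $1$ over the sample $\{g_i\}$, which suffices because the theorem controls an infimum. Motivated by the importance-sampling identity in Eq.~(\ref{eq: key}), I choose
\[
w_i \define \tfrac{1}{K} \cdot \tfrac{\D_{\G}[g_i]}{\D_{\G}^{em}[g_i]},
\]
so that for every fixed $(x,y) \in \X_T \times \Y_T$, the candidate $h_K(x,y) \define \sum_{i=1}^K w_i f_T(x,y,g_i)$ is an unbiased Monte Carlo estimator of $F_{f_T,\D_{\G}}(x,y)$ under $g_i \sim \D_{\G}^{em}$.

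The argument then proceeds in three steps. First, by the strong law of large numbers applied to the iid summands $\tfrac{\D_{\G}[g_i]}{\D_{\G}^{em}[g_i]} f_T(x,y,g_i)$, the sequence $h_K(x,y)$ converges almost surely to $F_{f_T,\D_{\G}}(x,y)$ for every fixed $(x,y)$, and its variance decays like $O(1/K)$. Second, I lift this pointwise convergence to convergence in the function norm. Reading $\norm{\cdot}$ as the $L^2(\D)$ norm (the $L^1$ case is analogous), Fubini's theorem lets me swap the sample expectation with the $\D$-integral:
\[
\E_{g_i \sim \D_{\G}^{em}}\!\left[\norm{h_K - F_{f_T,\D_{\G}}}^2\right] \;=\; \int_{\X_T \times \Y_T} \mathrm{Var}\bigl(h_K(x,y)\bigr)\, d\D(x,y) \;=\; O(1/K).
\]
Third, Markov's inequality gives $\Pr[\norm{h_K - F_{f_T,\D_{\G}}} > \epsilon] \leq \epsilon^{-2} \cdot O(1/K) \to 0$ for any $\epsilon > 0$, and since the infimum over $w_i$ in the theorem is bounded above by $\norm{h_K - F_{f_T,\D_{\G}}}$ for my specific choice, the probability of the $\epsilon$-event tends to $1$.

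The main obstacle is verifying the $\D$-integrability of the Monte Carlo variance, which reduces to two standard importance-sampling conditions: (a) the support of $\D_{\G}$ is contained in the support of $\D_{\G}^{em}$, so that $\D_{\G}[g]/\D_{\G}^{em}[g]$ is well-defined wherever needed, and (b) the weighted score $\tfrac{\D_{\G}[g]}{\D_{\G}^{em}[g]} f_T(x,y,g)$ has a second moment that is $\D$-integrable in $(x,y)$. Both are mild in the paper's setup: $\G$ is countable with positive weights, $f_T$ is bounded on the feasible domain, and $\D_{\G}^{em}$ can be chosen with full support. A minor subtlety is the infeasible case $f_T = +\infty$ from Eq.~(\ref{eq: example}), which is handled by restricting the norm's integration to the feasibility region or by treating both sides as infinite there; similarly, the $\epsilon = 0$ boundary is naturally a limit of the $\epsilon > 0$ statements. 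Once these integrability issues are dispatched, the remainder is a routine SLLN-with-Fubini computation.
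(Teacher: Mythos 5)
Your argument is correct for the theorem as stated and shares the paper's skeleton: both proofs take the importance-sampling weights $w_i = \tfrac{1}{K}\,\D_{\G}[g_i]/\D_{\G}^{em}[g_i]$ suggested by Eq.~(\ref{eq: key}) (which is what the paper's appendix intends, despite a typographical inversion of the ratio there), note that the resulting $h_K$ is unbiased pointwise, and use Fubini to reduce the expected squared $L^2(\D)$ error to an integrated Monte Carlo variance of order $e^{2D_{\infty}}\max f_T^2/K$. The divergence is in the concentration step. The paper bounds the \emph{expectation} of the norm, verifies a bounded-differences property of $\Delta_{\{g_i\}}$ under replacement of a single $g_i$, and invokes McDiarmid's inequality; this yields an exponential tail in the failure probability and hence the explicit threshold $K \gtrsim \epsilon^{-2}e^{2D_{\infty}}\max f_T^2\ln(1/\delta)$ recorded in its lemma. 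You instead apply Markov's inequality directly to the second moment, which is shorter, dispenses with the bounded-differences verification and the $\epsilon_1+\epsilon_2$ split, and fully suffices for the limit statement $\Pr[\cdot]\to 1$; the cost is a tail decaying only polynomially in $K$, so your route would not recover the paper's quantitative $\ln(1/\delta)$ dependence. Two caveats you share with the paper rather than introduce: the $\epsilon = 0$ endpoint is not actually delivered by either argument (Markov and McDiarmid alike give $\Pr[\norm{\cdot}\leq\epsilon]\to 1$ only for $\epsilon>0$, and your remark that it is ``a limit of the $\epsilon>0$ statements'' is not a valid deduction), and both treatments assume $f_T$ is bounded on the integration domain, which requires restricting to feasible pairs as you note.
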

Theorem \ref{theorem: approximation} justifies the following hypothesis space for the score function of which the complexity is controlled by its dimension $K \in \mathbb{Z}$.
{\small
\begin{align*}
H_{K,\D_{\G}^{em}} \define \Big\{ h_{\vec{w},\{g_i\}}(x,y) \define \sum_{i=1}^K w_i f_T(x,y,g_i) \big| 
 g_i \sim  \D_{\G}^{em}, \vec{w} = (w_1,...,w_K) \in \mathbb{R}^K \Big\}.
\end{align*}}%
These score functions are very reminiscent of the principled kernel machines \cite{hofmann2008kernel}, with the distinction that our kernel function, namely $f_T$, is inherited from the latent optimization problem rather than standard kernels \cite{murphy2012machine}. For such a score function $h_{\vec{w},\{g_i\}}(x,y)$, the inference process is further specialized as 
{\small
\begin{align}
\label{eq: target_inference_qrts_d}
\text{\textbf{target inference}:}~M_{\vec{w},\{g_i\}}(x) \define {\min}_{y \in \Y_T}  {\sum}_{i} w_i f_T(x,y,g_i).
\end{align}}%
With the construction of $H_{K,\D_{\G}^{em}}$, a realizable space can be achieved provided that the dimension $K$ is sufficiently large, which allows us to characterize the generalization loss Eq. (\ref{eq: loss}).

\begin{theorem}
\label{theorem: consistency}    
Suppose that a $\beta$-approximation is adopted to solve the target inference problem Eq. (\ref{eq: target_inference_qrts_d}). Let $D_{\infty} \in \mathbb{R}$ be the $\infty$-order Rényi divergence between $\D_{\G}$ and $\D_{\G}^{em}$. For each $\epsilon \geq 0$ and $\delta>0$, there exists 
{\small
\begin{align*}
C = O\Big(\frac{\ln |\X_T|+\ln |\Y_T|}{\epsilon^2} \cdot \ln \frac{1}{\delta} \cdot \exp(D_{\infty})\Big)
\end{align*}}%
such that when $K\geq C$, with probability at least $1-\delta$ over the selection of $\{g_i\}$, we have $\sup_{ \vec{w} \in \mathbb{R}^K }  \L(M_{\vec{w},\{g_i\}},\D_{\X_T})\geq \beta\cdot \frac{1-\epsilon}{1+\epsilon}$.
\end{theorem}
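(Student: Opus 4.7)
The plan is to produce an explicit witness $\vec{w}^{\star}\in\mathbb{R}^K$ arising from importance-sampling weights, show that the induced score function $h_{\vec{w}^{\star},\{g_i\}}$ approximates $F_{f_T,\D_{\G}}$ multiplicatively within $\epsilon$ \emph{uniformly} over $\X_T\times\Y_T$, and then convert this function-approximation guarantee into the loss bound via the standard approximate-argmin calculation. Exhibiting one such $\vec{w}^{\star}$ gives a lower bound on $\sup_{\vec w}\L(M_{\vec w,\{g_i\}},\D_{\X_T})$, which is all that is required.

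\textbf{Step 1 (concentration).} Set $w_i^{\star}\coloneqq K^{-1}\,\D_{\G}[g_i]/\D_{\G}^{em}[g_i]$. By Eq.~(\ref{eq: key}), each summand of $h_{\vec{w}^{\star},\{g_i\}}(x,y)=\sum_i w_i^{\star} f_T(x,y,g_i)$ is an unbiased importance-sampling estimator of $F_{f_T,\D_{\G}}(x,y)$. The $\infty$-R\'enyi divergence bounds the ratio $\D_{\G}[g]/\D_{\G}^{em}[g]$ by $\exp(D_\infty)$, so (treating $f_T$ as bounded) each summand is bounded in absolute value by $O(\exp(D_\infty))$ and has variance at most $\E_{g\sim\D_{\G}}[(\D_{\G}[g]/\D_{\G}^{em}[g])\,f_T^2]\leq \exp(D_\infty)\,\E_{g\sim\D_{\G}}[f_T^2]=O(\exp(D_\infty))$. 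Bernstein's inequality then yields, for each $(x,y)$ and each deviation level $t=\epsilon\cdot F_{f_T,\D_{\G}}(x,y)$, a tail bound of the form $2\exp(-\Theta(K\epsilon^2/\exp(D_\infty)))$. A union bound over the finite set $\X_T\times\Y_T$ gives that for $K$ of the asserted order $C$, with probability $\geq 1-\delta$ the inequality $(1-\epsilon)F_{f_T,\D_{\G}}(x,y)\leq h_{\vec{w}^{\star},\{g_i\}}(x,y)\leq (1+\epsilon)F_{f_T,\D_{\G}}(x,y)$ holds simultaneously for every $(x,y)\in\X_T\times\Y_T$; this is a quantitative instantiation of Theorem~\ref{theorem: approximation}.

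\textbf{Step 2 (scoring-model argument).} Fix $x\in\X_T$, let $y^{\star}\in\argmin_y F_{f_T,\D_{\G}}(x,y)$, and let $\hat y$ denote the $\beta$-approximate minimizer of $h_{\vec{w}^{\star},\{g_i\}}(x,\cdot)$ returned by the inference oracle, i.e.\ $h_{\vec{w}^{\star},\{g_i\}}(x,\hat y)\leq \beta^{-1} h_{\vec{w}^{\star},\{g_i\}}(x,y^{\star})$. Chaining this with the two-sided multiplicative bound of Step~1 gives
\begin{align*}
(1-\epsilon)\,F_{f_T,\D_{\G}}(x,\hat y)\;\leq\;h_{\vec{w}^{\star},\{g_i\}}(x,\hat y)\;\leq\;\beta^{-1}(1+\epsilon)\,F_{f_T,\D_{\G}}(x,y^{\star}),
\end{align*}
so $F_{f_T,\D_{\G}}(x,y^{\star})/F_{f_T,\D_{\G}}(x,\hat y)\geq \beta(1-\epsilon)/(1+\epsilon)$. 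Taking expectation over $x\sim\D_{\X_T}$ yields $\L(M_{\vec{w}^{\star},\{g_i\}},\D_{\X_T})\geq \beta(1-\epsilon)/(1+\epsilon)$, which lower-bounds the supremum.

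\textbf{Main obstacle.} The delicate part is forcing \emph{multiplicative} rather than additive concentration, since it is what feeds cleanly into the ratio-shaped loss. This is exactly what makes the variance-aware Bernstein bound (contributing a single $\exp(D_\infty)$) essential; a crude Hoeffding would degrade the factor to $\exp(2D_\infty)$. A secondary nuisance is that the relative deviation $t/F_{f_T,\D_{\G}}$ must be uniformly controlled, which requires $F_{f_T,\D_{\G}}$ to be bounded away from zero on its effective domain; for the shortest-path and Steiner-tree instances this is automatic from positivity of edge weights and nontriviality of feasible solutions. Everything else is routine bookkeeping: the $|\X_T|\cdot|\Y_T|$ union bound and the elementary approximate-argmin inequality.
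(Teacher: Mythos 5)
Your proposal is correct and follows essentially the same route as the paper: the paper's proof also takes the explicit witness $w_i=\D_{\G}[g_i]/(K\D_{\G}^{em}[g_i])$, establishes the uniform multiplicative concentration $|h_{\vec{w},\{g_i\}}(x,y)-F_{f_T,\D_{\G}}(x,y)|\leq\epsilon\,F_{f_T,\D_{\G}}(x,y)$ via a pointwise Chernoff-type bound plus a union bound over $\X_T\times\Y_T$ (with the same constants $\exp(D_\infty)$ and $\min_x F_{f_T,\D_{\G}}(x,y_x^*)$ playing the roles you identify), and then chains through the $\beta$-approximate minimizer exactly as in your Step 2. Your handling of the $\beta$ convention is in fact the one consistent with the stated bound $\beta\cdot\frac{1-\epsilon}{1+\epsilon}$.
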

Theorem \ref{theorem: consistency} suggests that a high dimension (i.e., $K$) may be needed when a) the spaces are large and/or b) the deviation between $\D_{\G}$ and $\D_{\G}^{em}$ is high, which is intuitive. The proof of Theorem \ref{theorem: consistency} leverages point-wise concentration to acquire the desired guarantees, while Theorem \ref{theorem: approximation} is proved through concentrations in function spaces (Appendix \ref{sup: proof}). 

\subsection{QRTS-D}
With the design of $H_{K,\D_{\G}^{em}}$, we now present methods for computing a concrete score function $h_{\vec{w},\{g_i\}}$, which is to decide a collection $\{g_i\}$ of subgraphs as well as the associated weights $\vec{w}$. In light of Eq. (\ref{eq: key}), $\vec{w}_i$ can be viewed as the importance of graph $g_i$. We will not restrict ourselves to a specific choice of $\D_{\G}^{em}$ and thus assume that a nominal parametric family $\D_{\G,\theta}^{em}$ is adopted, with an extra subscription $\theta$ added to denote the parameter set. Assuming that the hyperparameter $K$ is given, the framework of QRTS-D loops over three phases: \textbf{a) graph sampling}, to sample $\{g_1,...,g_K\}$ iid from $\D_{\G,\theta}^{em}$; \textbf{b) importance learning}, to compute $\vec{w}$ for $\{g_i\}$; \textbf{c) distribution tuning}, to update $\D_{\G,\theta}^{em}$.
The first phase is trivial, and we will therefore focus on the other two phases.

\textbf{Importance Learning.} In computing the weights $\vec{w}$ for a given $\{g_i\}$, we have reached a key point to attack the challenges mentioned in Remark \ref{remark: challenge}: the function approximation guarantee in Theorem \ref{theorem: approximation} holds not only for the target task but also for the source task. That is, $\sum_{i=1}^K w_i f_T(x,y,g_i)$ is a desired score function (for solving the target task) if and only if $\sum_{i=1}^K w_i f_S(x,y,g_i)$ can well approximate the true objective function $F_{f_S,\D_{\G}}(x,y)$ of the source task. Therefore, since the samples in $D_{f_{S},\alpha}=\{(x_j,y_j)\}$ are $\alpha$-approximations to the source task, the ideal weights $\vec{w}$ should satisfy 
{\small
\begin{align*}
\alpha \min_{y \in \Y_S} {\sum}_{i=1}^K w_i f_S(x_j,y,g_i) \geq {\sum}_{i=1}^K w_i f_S(x_j,y_j,g_i).  
\end{align*}}%
In this way, we have been able to leverage the samples from the source task to decide the best $\vec{w}$ associated with $\{g_i\}$. This owes to the fact that our design $H_{K,\D_{\G}^{em}}$ allows us to separate $\D_{\G}$ from the task-dependent kernels (i.e., $f_S$ and $f_T$), which is otherwise not possible if we parametrized the score function (i.e., Eq. \ref{eq: qrts_d_score}) using naive methods (e.g., neural networks). Following the same logic behind the translation from Eq. (\ref{eq: qrts_p_constraint}) to Eq. (\ref{eq: qrts_p_training}), the above constraints lead to a similar optimization program: 
{\small\begin{align}
\label{eq: qrts_d_training}
\min_{\vec{w}, \eta_i} \norm{\vec{w}}^2+ C \cdot \sum_i \eta_i 
\text{~\textbf{s.t}.}   \underbrace{\min_{y \in \Y_S}  \alpha\sum_{i=1}^K w_i f_S(x_j,y,g_i)}_{\textbf{source inference}}  - \sum_{i=1}^K w_i f_S(x_j,y_j,g_i)   \geq -\eta_j, \forall j.
\end{align}}%
The above program shares the same type with Eq. (\ref{eq: qrts_p_training}), and we again defer the optimization details to Appendix \ref{sup: cutting}.

\textbf{Distribution Tuning.} With the importance vector $\vec{w}$ learned based on the subgraphs $\{g_i\}$ sampled from the current $\theta$, we seek to fine-tune $\D_{\G, \theta}^{em}$ to make it aligned more with the latent distribution $\D_{\G}$, which is desired as suggested by the proof of Theorem \ref{theorem: approximation}. Inspired by Eq. (\ref{eq: key}), the true likelihood associated with $g_i$ is approximated by $w_i^* \define w_i \D_{\G, \theta}^{em} [g_i]$. Consequently, one possible way to reshape $\D_{\G, \theta}^{em}$ is to find the $\theta^*$ that can minimize the discrepancy between $\D_{\G, \theta^*}^{em}$ and $\D_{\vec{w}^*}$, i.e., $\theta^* = \argmin_{\theta} D(\D_{\G, \theta}^{em} || \D_{\vec{w}^*})_{|\{g_i\}}$, where $\D_{\vec{w}^*}$ is the discrete distribution over $\{g_i\}$ defined by normalizing $(w_1^*,...,w_K^*)$, and 
the distance measure $D(||)$ can be selected at the convenience of the choice of the $\D_{\G, \theta}^{em}$ -- for example, cosine similarity or cross-entropy. For such problems, standard methods can be directly applied when $\D_{\G, \theta}^{em}$ is parameterized by common distribution families; first- and second-order methods can be readily used if $\D_{\G, \theta}^{em}$ has a complex form such as neural networks.

\begin{algorithm}[t]
\caption{QRTS-D}\label{alg: qrts_d}
\small
\begin{algorithmic}[1]
\State \textbf{Input:} $D_{f_S,\alpha}=\{x_y,y_i\}, C, K, T, \alpha$, $\D_{\G,\theta}^{em}$;
\State \textbf{Output:} $\vec{w}=(w_1,...,w_K)$ and $\{g_1,...,g_K\}$
\State Initialize $\theta$, $t=0$;
\Repeat
\State $\{g_1,...,g_K\}$ iid from $\D_{\G,\theta}^{em}$;
\State Compute $\vec{w}$ via Eq. (\ref{eq: qrts_d_training}) based on $D_{f_S,\alpha}$ and $C$;
\State Update $\theta$ via $\theta^* = \argmin_{\theta} D(\D_{\G, \theta}^{em} || \D_{\vec{w}^*})_{|\{g_i\}}$;
\State $t=t+1$
\Until $t=T$
\State \textbf{Return}  $\{g_1,...,g_K\}$ and $\vec{w}$
\end{algorithmic}
\end{algorithm}

The QRTS-D method is conceptually simple, as summarized in Alg. \ref{alg: qrts_d}. Similar to QRTS-P, using such a method requires algorithms for solving the source and target inferences in Eqs. (\ref{eq: target_inference_qrts_d}) and (\ref{eq: qrts_d_training}). In what follows, we discuss such issues as well as the possibility of enhancing QRTS-D using QRTS-P

\begin{remark}[\textbf{Source and Target Inferences}]
\label{remark: oracle}
For QRTS-P, the source (resp., target) inference problem is nothing but to solve the source (resp., target) query-decision optimization task in its deterministic case. For QRTS-D, the inference problems are to solve the source and target tasks over a weighted combination of deterministic graphs. For the shortest path problem, such inference problems can be solved in polynomial time; for the minimum Steiner tree problem, such inference problems admit 2-approximation \cite{vazirani2001approximation}.
\end{remark}


\begin{remark}[\textbf{QRTS-P vs QRTS-D}]
\label{remark: capacity}
As one may have noticed, QRTS-P is a natural special case of the importance learning phase of QRTS-D, in the sense that each $w_e$ in QRTS-P corresponds to the importance of the subgraph with one edge  (i.e., $e$). In other words, QRTS-P can be viewed as the QRTS-D where the support of $\D_{\G, \theta}^{em}$ is the span of single-edge subgraphs with unit weights. Notably, the dimension of QRTS-P is fixed and thus limited by the number of edges, while the dimension $K$ of QRTS-D can be made arbitrarily large. For this reason, QRTS-P may be preferred if the sample size is small, while QRTS-D can better handle large sample sets, which is evidenced by our experimental studies. 
\end{remark}

\begin{remark}[\textbf{QRTS-PD}]
 \label{remark: QRTS-PD}
In QRTS-D, the initialization of $\D_{\G,\theta}^{em}$ is an open issue, and this creates the possibility of integrating QRTS-P into QRTS-D by initializing  $\D_{\G,\theta}^{em}$ using the weights $\{w_e\}$ learned from QRTS-P. This leads to another approach called QRTS-PD consisting of three steps: \textbf{a)} run QRTS-P to acquire the estimations $\{w_e\}$; \textbf{b)} stabilize $\theta$ based on $\{w_e\}$ through maximum likelihood estimation, i.e., $\min_{\theta} -\sum_{e \in E} \log \sum_{g \in \G} \D_{\G,\theta}^{em}[g|g(e)=w_e]$; \textbf{c)} run QRTS-D. 
From such a perspective, QRTS-PD can be taken as a continuation of QRTS-P to further improve the generalization performance by building models that are more expressive.
\end{remark}

\section{Empirical Studies}
\label{sec: exp}
In this section, we present empirical studies demonstrating that QRTS can be solved with statistical significance using the presented methods.

\subsection{Experimental settings}

\textbf{Source and Target Tasks.} We specifically focus on two query-decision optimization tasks: shortest path and minimum Steiner tree \cite{hwang1992steiner}. Depending on the selection of the source and target tasks, we have two possible settings: \textit{Path-to-Tree} and \textit{Tree-to-Path}. The source and target inferences can be approximated effectively, as discussed in Remark \ref{remark: oracle}. These algorithms are also used to generate samples of query-decision pairs (i.e., Eq. (\ref{eq: data})).

\begin{table*}[!pt]
\renewcommand{\arraystretch}{1.0} 
\caption{\small \textbf{Results for Path-to-Tree on Kro, Col and BA.}  Each cell shows the mean ratio together with the standard deviation (std). The top three results in each column are highlighted.}
\label{table: PT_1}
\centering
\scalebox{0.98}{\small
\begin{tabular}{@{}  l@{\hspace{1mm}} r@{\hspace{3mm}} c@{\hspace{1mm}} c @{\hspace{1mm}} c @{\hspace{3mm}} c @{\hspace{1mm}} c @{\hspace{1mm}} c @{\hspace{3mm}} c @{\hspace{1mm}} c   @{\hspace{1mm}} c  @{}}

\toprule
 &    &\multicolumn{3}{c}{\textbf{Kro}}& 
 \multicolumn{3}{c}{\textbf{Col}} & \multicolumn{3}{c}{\textbf{BA}}  \\
\multicolumn{2}{c}{Train Size}  & 60 & 240 & 2400 &  60 & 240 & 2400 & 60 & 240 & 2400 \\

 \midrule
 \multicolumn{2}{c}{\textbf{QRTS-P}} & \textbf{4.0}\tiny{(0.3)} & 3.4\tiny{(0.5)}&  2.4\tiny{(0.3)}&  291\tiny{(78)} &  \textbf{150}\tiny{(29)}&  128\tiny{(7.4)}&   181\tiny{(28)}&  144\tiny{(41)} & 63\tiny{(35)} \\
 \midrule
 \multirow{3}{*}{\makecell{\textbf{QRTS}\\ \textbf{-PD}$^{-}$}}  & 60 & 4.4\tiny{(0.3)} & 3.4\tiny{(0.8)} & 2.3\tiny{(0.4)} & \textbf{250}\tiny{(82)} & 367\tiny{(56)} & 123\tiny{(2.3)} & 209\tiny{(22)} & 195\tiny{(22)} & 40\tiny{(13)} \\
 & 240 & 4.6\tiny{(0.7)} & 3.7\tiny{(0.1)} & 2.7\tiny{(0.2)} & 330\tiny{(65)} & 227\tiny{(14)} & 117\tiny{(6.1)} & \textbf{129}\tiny{(22)} & 149\tiny{(19)} & 35\tiny{(12)} \\
 & 2400 & 4.3\tiny{(0.9)} & \textbf{3.2}\tiny{(0.5)} & 2.4\tiny{(0.1)} & \textbf{214}\tiny{(68)} & 183\tiny{(69)} & \textbf{88}\tiny{(12)} & \textbf{139}\tiny{(38)} & \textbf{131}\tiny{(44)} & \textbf{27}\tiny{(7.2)} \\
 \midrule
 \multirow{3}{*}{\makecell{\textbf{QRTS}\\ \textbf{-PD-1}}}  & 60 & \textbf{3.9}\tiny{(0.7)} & 3.4\tiny{(0.8)} & \textbf{2.3}\tiny{(0.4)} & 361\tiny{(55)} & 225\tiny{(11)} & 115\tiny{(5.1)} & 177\tiny{(31)} & 130\tiny{(34)} & 43\tiny{(14)} \\
 & 240 & 4.2\tiny{(0.4)} & 3.2\tiny{(0.5)} & 2.4\tiny{(0.2)} & 350\tiny{(88)} & 245\tiny{(31)} & 129\tiny{(16)} & 166\tiny{(34)} & 132\tiny{(49)} & 34\tiny{(13)} \\
 & 2400 & 4.3\tiny{(0.5)} & \textbf{3.1}\tiny{(0.3)} & \textbf{2.3}\tiny{(0.4)} & \textbf{261}\tiny{(93)} & 186\tiny{(24)} & \textbf{106}\tiny{(6.2)} & \textbf{160}\tiny{(29)} & \textbf{119}\tiny{(3)} & {34}\tiny{(7.3)} \\
 \midrule
  \multirow{3}{*}{\makecell{\textbf{QRTS}\\ \textbf{-PD-3}}}  & 60 & 4.3\tiny{(1.1)} & 3.2\tiny{(0.6)} & 2.6\tiny{(0.5)} & 431\tiny{(27)} & \textbf{167}\tiny{(99)} & \textbf{110}\tiny{(15)} & 391\tiny{(46)} & 144\tiny{(13)} & 60\tiny{(13)} \\
 & 240 & 4.3\tiny{(0.9)} & 3.3\tiny{(0.6)} & 2.3\tiny{(0.4)} & 317\tiny{(87)} & {183}\tiny{(35)} & 126\tiny{(11)} & 202\tiny{(38)} & 138\tiny{(17)} & \textbf{30}\tiny{(17)} \\
 & 2400 & \textbf{4.0}\tiny{(0.4)} & \textbf{3.2}\tiny{(0.4)} & \textbf{2.2}\tiny{(0.2)} & 324\tiny{(38)} & \textbf{107}\tiny{(12)} & 113\tiny{(6.1)} & 184\tiny{(49)} & \textbf{120}\tiny{(2.4)} & \textbf{23}\tiny{(2.7)} \\
  \midrule
 \multicolumn{2}{c}{\textbf{Unit} \& \textbf{Rand}} & \multicolumn{3}{c}{5.2{\tiny(0.2)} \& 10.4{\tiny(0.3)}} &\multicolumn{3}{c}{990{\tiny(68)} \& 2231{\tiny(45)}} &\multicolumn{3}{c}{349{\tiny(4.7)} \& 749\tiny{(13)}} \\
 

\bottomrule
\end{tabular}%
} 
\vspace{-6mm}
\end{table*}

\textbf{Graph, True Distribution, and Samples.} We adopt a collection of graphs of classic types: a Kronecker graph (\textbf{Kro}) \cite{leskovec2010kronecker}, a road network of Colorado (\textbf{Col}) \cite{demetrescu2008implementation}, a Barabasi–Albert graph (\textbf{BA}) \cite{barabasi1999emergence}, and two Watts–Strogatz graphs with different densities (\textbf{WS-dense} and \textbf{WS-sparse}) \cite{watts1998collective}.
The statistics of these graphs can be found in Appendix \ref{sup: exp}. To have a diverse graph pattern, we generate the ground truth distribution $\D_{\G}$ by assigning each edge a Weibull distribution \cite{lai2011weibull} with parameters randomly selected from $\{1,...,20\}$. For each graph and each problem instance, we generate a pool of $10,000$ query-decision pairs.

\textbf{QRTS Methods.} We use QRTS-PD-1 (resp.,  QRTS-PD-3) to denote the QRTS-PD method when one (resp., three) iterations over the three phases are used. Based on QRTS-PD-1, we implement QRTS-PD$^{-}$ that foregoes the distribution tuning phase. For these methods, the model dimensions $K$ are selected from $\{60, 240, 2400\}$. We utilize the one-slack cutting plane algorithm \cite{joachims2009cutting} for the large-margin training (i.e., Eqs. (\ref{eq: qrts_p_training}) and (\ref{eq: qrts_d_training})). The empirical distribution $\D_{\G}^{em}$ is parameterized by assigning each edge an exponential distribution. 


\textbf{Baselines.} We set up two baselines \textbf{Unit} and \textbf{Rand}. Unit computes the predictions based on the graph with unit-weight edges. Rand computes the decision based on the graph with random weights. Unit essentially leverages only the graph structure to compute predictions. We note that none of the methods in existing papers can be directly applied to QRTS (Remark \ref{remark: challenge}). 


\textbf{Training and Testing.} In each run, the training size is selected from $\{60, 240, 2400\}$, and the testing size is $1000$, where samples are randomly selected from the sample pool. Given a testing set $\{x_i, y_i\}$ of the target task, the performance is measured by the ratio $\sum_{i} F_{f_T,\D_{\G}}(x_i,y_i^*)/\sum_{i} F_{f_T,\D_{\G}}(x_i,y_i)$, where $y_i^*$ is the predicted decision associated with $x_i$; a lower ratio implies better performance. We report the average ratios and the standard deviations over five runs for each method.

\begin{table*}[!pt]
\renewcommand{\arraystretch}{1.0} 
\caption{\small \textbf{Results on Tree-to-Path.} Each cell shows the mean ratio together with the standard deviation (std). Small stds ($<$ 0.1) are denoted as 0.0.  The top three results in each column are highlighted.}
\label{table: TP_1}
\centering
\scalebox{0.90}{\small
\begin{tabular}{@{}  l@{\hspace{1mm}} r@{\hspace{2mm}} c@{\hspace{1mm}} c @{\hspace{1mm}} c @{\hspace{2mm}} c @{\hspace{1mm}} c @{\hspace{1mm}} c @{\hspace{2mm}} c @{\hspace{1mm}} c   @{\hspace{1mm}} c  @{}}

\toprule
 &    &\multicolumn{3}{c}{\textbf{Kro}}& 
 \multicolumn{3}{c}{\textbf{Col}} & \multicolumn{3}{c}{\textbf{BA}}  \\
\multicolumn{2}{c}{Train Size}  & 60 & 240 & 2400 &  60 & 240 & 2400 & 60 & 240 & 2400 \\

 \midrule
 \multicolumn{2}{c}{\textbf{QRTS-P}} & \textbf{1.46} \tiny{(0.0)} & \textbf{1.37} \tiny{(0.0)} & 1.60 \tiny{(0.0)} & 9.8 \tiny{(0.2)} & 6.6 \tiny{(0.4)} & \textbf{6.1} \tiny{(0.1)} & 1.6 \tiny{(0.1)} & 1.4 \tiny{(0.2)} & 1.4 \tiny{(0.1)} \\
 \midrule
 \multirow{3}{*}{\makecell{\textbf{QRTS}\\ \textbf{-PD}$^{-}$}}  & 60 & \textbf{1.44} \tiny{(0.1)} & 1.41 \tiny{(0.1)} & 1.39 \tiny{(0.0)} & 11 \tiny{(5.8)} & 8.6 \tiny{(1.5)} & 7.1 \tiny{(0.6)} & 1.9 \tiny{(0.4)} & 1.4 \tiny{(0.2)} & 1.5 \tiny{(0.1)} \\
 & 240  & \textbf{1.42} \tiny{(0.1)} & 1.46 \tiny{(0.0)} & 1.39 \tiny{(0.0)} & 9.9 \tiny{(4.2)} & 6.8 \tiny{(3.6)} & 6.5 \tiny{(0.4)} & 1.7 \tiny{(0.3)} & 1.5 \tiny{(0.2)} & 1.5 \tiny{(0.1)} \\
 & 2400  & 1.48 \tiny{(0.1)} & 1.45 \tiny{(0.0)} & 1.38 \tiny{(0.0)} & \textbf{8.9} \tiny{(3.1)} & \textbf{6.0} \tiny{(1.3)} & 6.2 \tiny{(0.9)} & \textbf{1.5} \tiny{(0.2)} & \textbf{1.3} \tiny{(0.1)} & \textbf{1.3} \tiny{(0.1)} \\
 \midrule
 \multirow{3}{*}{\makecell{\textbf{QRTS}\\ \textbf{-PD-1}}}  & 60 & 1.55 \tiny{(0.0)} & 1.42 \tiny{(0.1)} & 1.37 \tiny{(0.0)} & 6.6 \tiny{(0.6)} & 6.3 \tiny{(2.4)} & 6.8 \tiny{(0.6)} & 1.6 \tiny{(0.3)} & 1.5 \tiny{(0.0)} & 1.4 \tiny{(0.1)} \\
 & 240 & 1.56 \tiny{(0.1)} & 1.44 \tiny{(0.1)} & 1.36 \tiny{(0.0)} & 11 \tiny{(3.2)} & 7.6 \tiny{(1.4)} & 6.6 \tiny{(0.0)} & 1.6 \tiny{(0.3)} & 1.5 \tiny{(0.3)} & 1.5 \tiny{(0.1)} \\
 & 2400 & 1.52 \tiny{(0.1)} & 1.39 \tiny{(0.1)} & 1.37 \tiny{(0.0)} & 14 \tiny{(2.4)} & 7.7 \tiny{(3.9)} & 7.2 \tiny{(0.3)} & 1.7 \tiny{(0.1)} & 1.5 \tiny{(0.3)} & 1.7 \tiny{(0.1)} \\
 \midrule
  \multirow{3}{*}{\makecell{\textbf{QRTS}\\ \textbf{-PD-3}}}  & 60 & 1.53 \tiny{(0.1)} & \textbf{1.41} \tiny{(0.1)} & \textbf{1.34} \tiny{(0.1)} & 13 \tiny{(4.2)} & \textbf{5.9} \tiny{(1.5)} & \textbf{5.5} \tiny{(0.9)} & 1.7 \tiny{(0.2)} & 1.5 \tiny{(0.1)} & 1.4 \tiny{(0.1)} \\
 & 240 & 1.50 \tiny{(0.1)} & 1.42 \tiny{(0.0)} & \textbf{1.36} \tiny{(0.0)} & \textbf{9.4 }\tiny{(1.4)} & \textbf{6.6} \tiny{(0.2)} & \textbf{5.9} \tiny{(0.1)} & \textbf{1.6} \tiny{(0.1)} & \textbf{1.4} \tiny{(0.2)} & \textbf{1.2} \tiny{(0.1)} \\
 & 2400 & 1.47 \tiny{(0.1)} & \textbf{1.41} \tiny{(0.1)} & \textbf{1.32} \tiny{(0.0)} & \textbf{7.8} \tiny{(0.6)} & 8.5 \tiny{(1.4)} & 7.7 \tiny{(2.0)} & \textbf{1.3} \tiny{(0.1)} & \textbf{1.3} \tiny{(0.1)} & \textbf{1.3} \tiny{(0.2)} \\
  \midrule
 \multicolumn{2}{c}{\textbf{Unit} \& \textbf{Rand}} & \multicolumn{3}{c}{1.57 {\tiny(0.1)} \& 1.78 {\tiny(0.1)}} &  \multicolumn{3}{c}{9.2 {\tiny(0.1)} \& 19 {\tiny(0.1)}} & \multicolumn{3}{c}{1.57{\tiny(0.0)} \& 2.2{\tiny(0.3)}} \\


\bottomrule
\end{tabular}%
} 
\vspace{-4mm}
\end{table*}

\subsection{Analysis}
The results on Kro, Col, and BA are given in Tables \ref{table: PT_1} and \ref{table: TP_1}. The results on WS-sparse and WS-dense can be found in Appendix \ref{sup: exp}.  The main observations are listed below, and the minor observations are given in Appendix \ref{sup: exp}.

\textbf{O1: The proposed methods behave reasonably with promising performance.} First, we observe that the proposed methods perform significantly better when more samples are given, which suggests that they are able to infer meaningful information from the samples toward solving the target task. On the other hand, all the proposed methods are clearly better than Rand, implying that the model efficacy is non-trivial. In addition, they easily outperform Unit by an evident margin in most cases. For example, for Path-to-Tree on BA in Table \ref{table: PT_1}, the best ratio achieved by QRTS-PD-3 is $23$, while Unit and Random cannot produce a ratio smaller than $300$.

\textbf{O2: QRTS-P offers an effective initialization for QRTS-D.} With very few exceptions, QRTS-PD performs much better than QRTS-P under the same sample size, which confirms that QRTS-P can indeed be improved by using importance learning through re-sampling, which echos Remark \ref{remark: QRTS-PD}. This is especially true when the sample size is large; for example, for Path-to-Tree on BA with $2400$ samples, methods based on QRTS-PD with a dimension of $2400$ are at least twice better than QRTS-P in terms of the performance ratio, demonstrating that QRTS-PD of a high dimension can better consume large datasets.

\textbf{O3: Distribution tuning is helpful after multiple iterations.} Since QRTS-D can be used without the distribution tuning phase, we are wondering if the distribution turning phase is necessary. By comparing QRTS-PD-1 with QRTS-PD$^{-}$, we see that the distribution tuning phase can be useful in many cases, but its efficacy is not very significant. However, combined with the results of QRTS-PD-3, we observe that the distribution turning phase can better reinforce the optimization effect when multiple iterations are used. Finally, by comparing QRTS-PD-1 and QRTS-PD-3, we find that training more iterations is useful mostly when the model dimension is large, which is especially the case for Tree-to-Path (Table \ref{table: TP_1}). 

\textbf{O4: The learning process is smooth.} Figure \ref{fig: visua} visualizes the learning process of QRTS-P for two example testing queries. One can see that QRTS-P tends to select solutions with fewer edges under the initial random weights $\vec{w}$, and it gradually finds better solutions (possibly with more edges) when better weights are learned. We have such observations for most of the samples, which suggest that the proposed method works the way it is supposed to. More visualizations can be found in Appendix \ref{sup: exp}.

\begin{figure}[t]
\centering
\captionsetup[subfloat]{labelfont=scriptsize,textfont=scriptsize,labelformat=empty}
\subfloat[  { [30, 0]} ]{\label{fig: log_2_20_8}\includegraphics[width=0.08\textwidth]{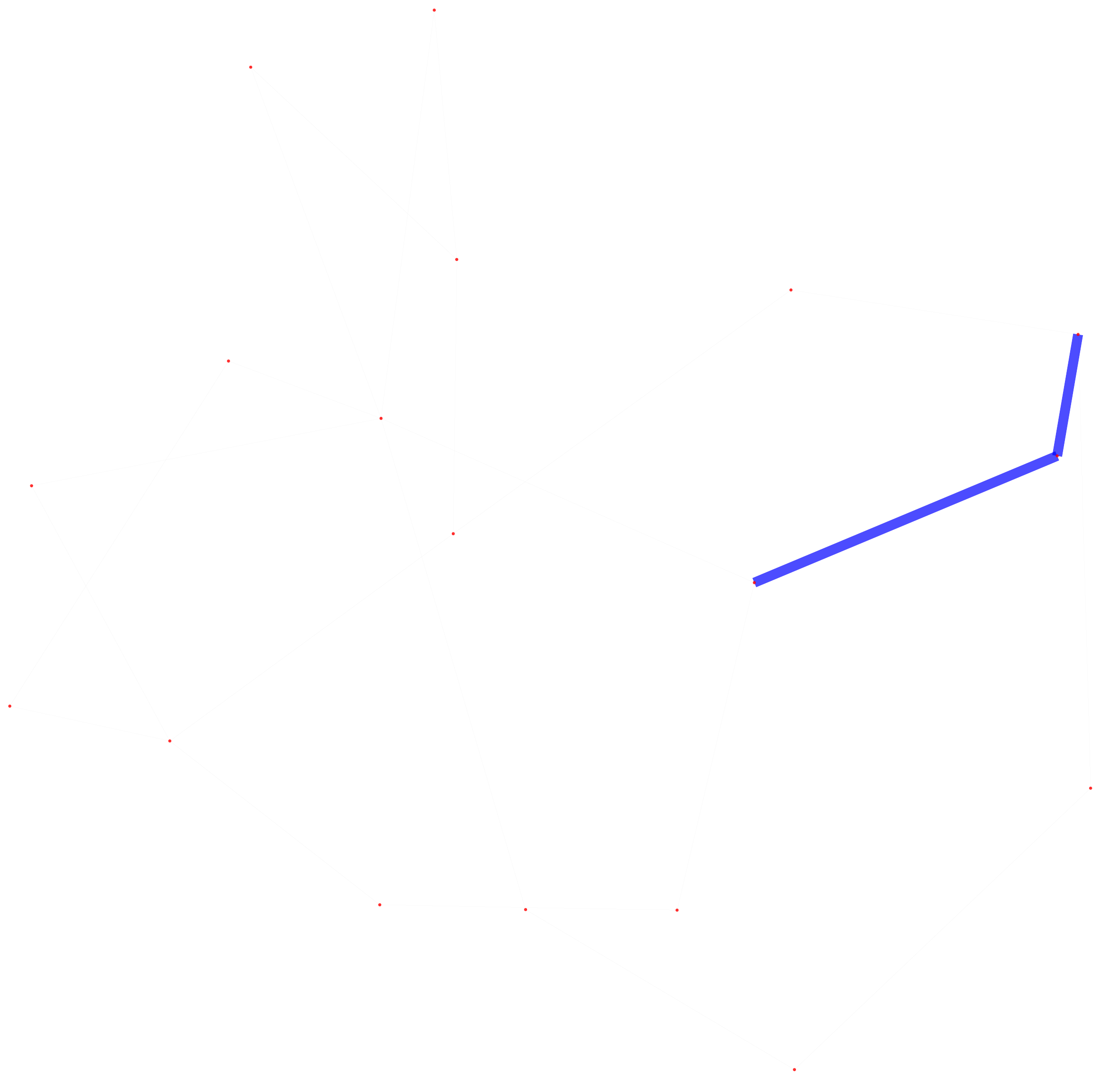}}\hspace{1mm}
\subfloat[ {[30, 1]} ]{\label{fig: log_2_21_8}\includegraphics[width=0.08\textwidth]{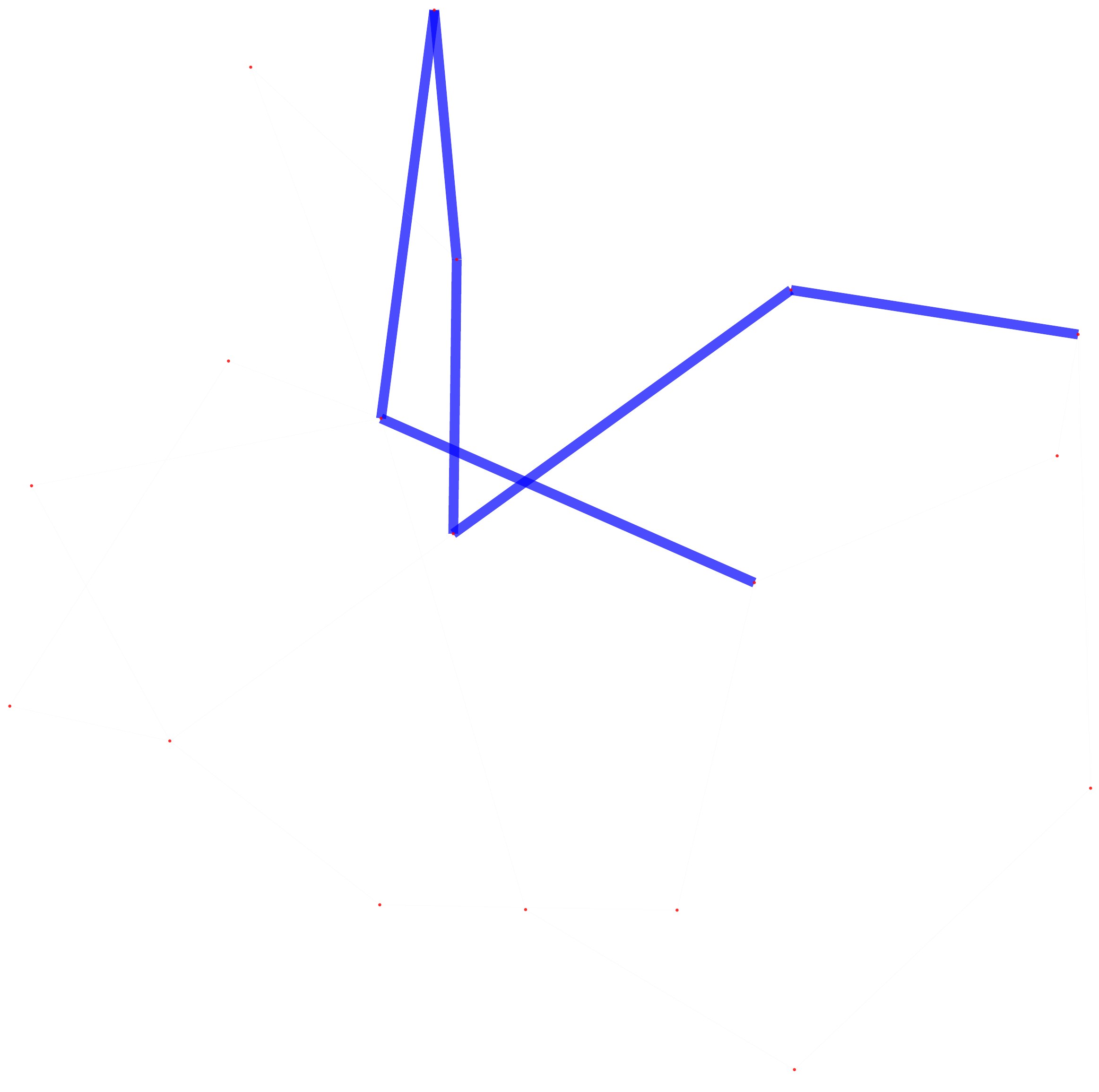}}\hspace{1mm}
\subfloat[ {[30, 2]} ]{\label{fig: log_2_22_8}\includegraphics[width=0.08\textwidth]{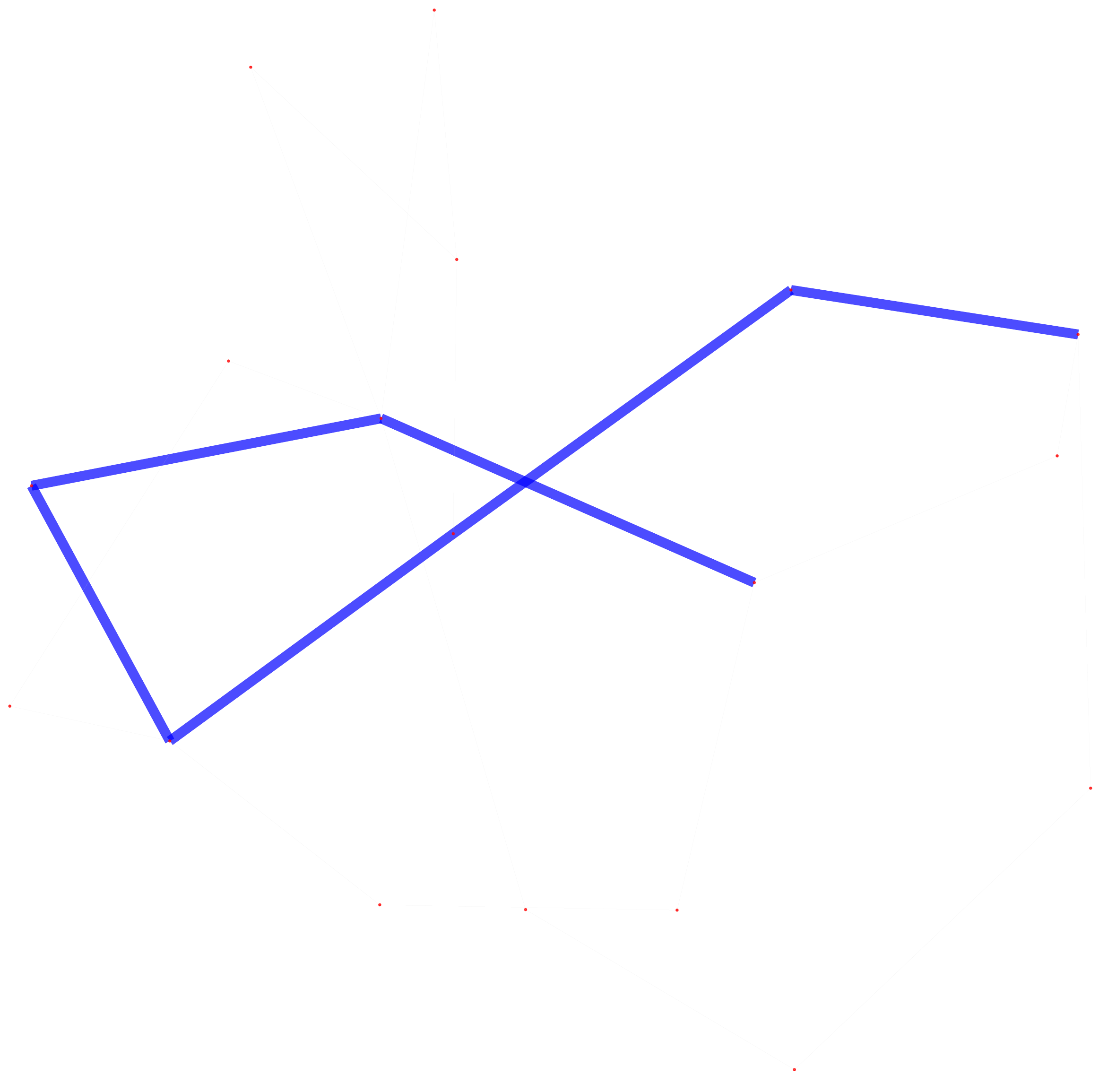}}\hspace{1mm}
\subfloat[ {[30, 5]} ]{\label{fig: log_2_24_8}\includegraphics[width=0.08\textwidth]{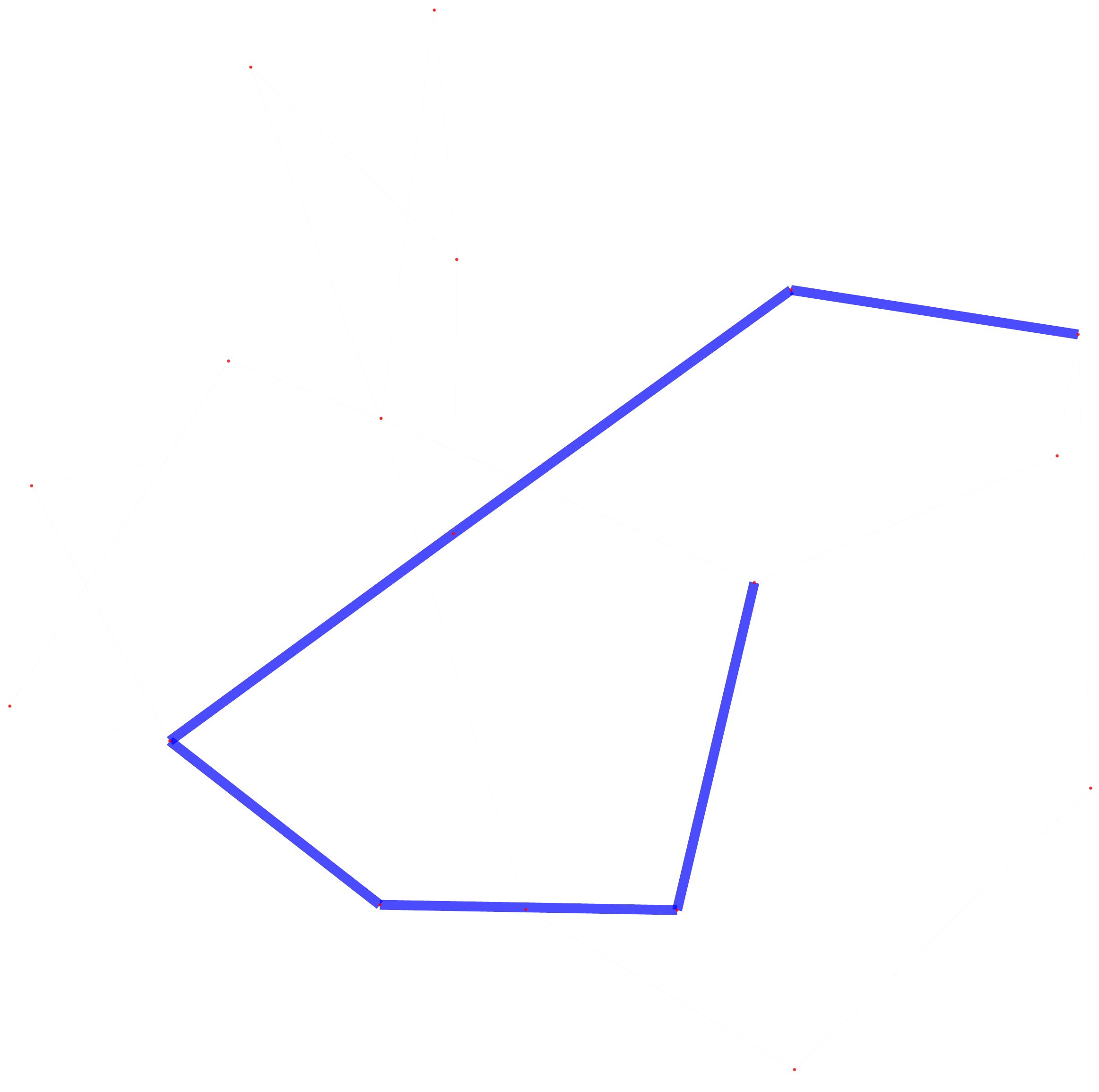}}\hspace{1mm}
\subfloat[Target]{\label{fig: log_2_19_8}\includegraphics[width=0.08\textwidth]{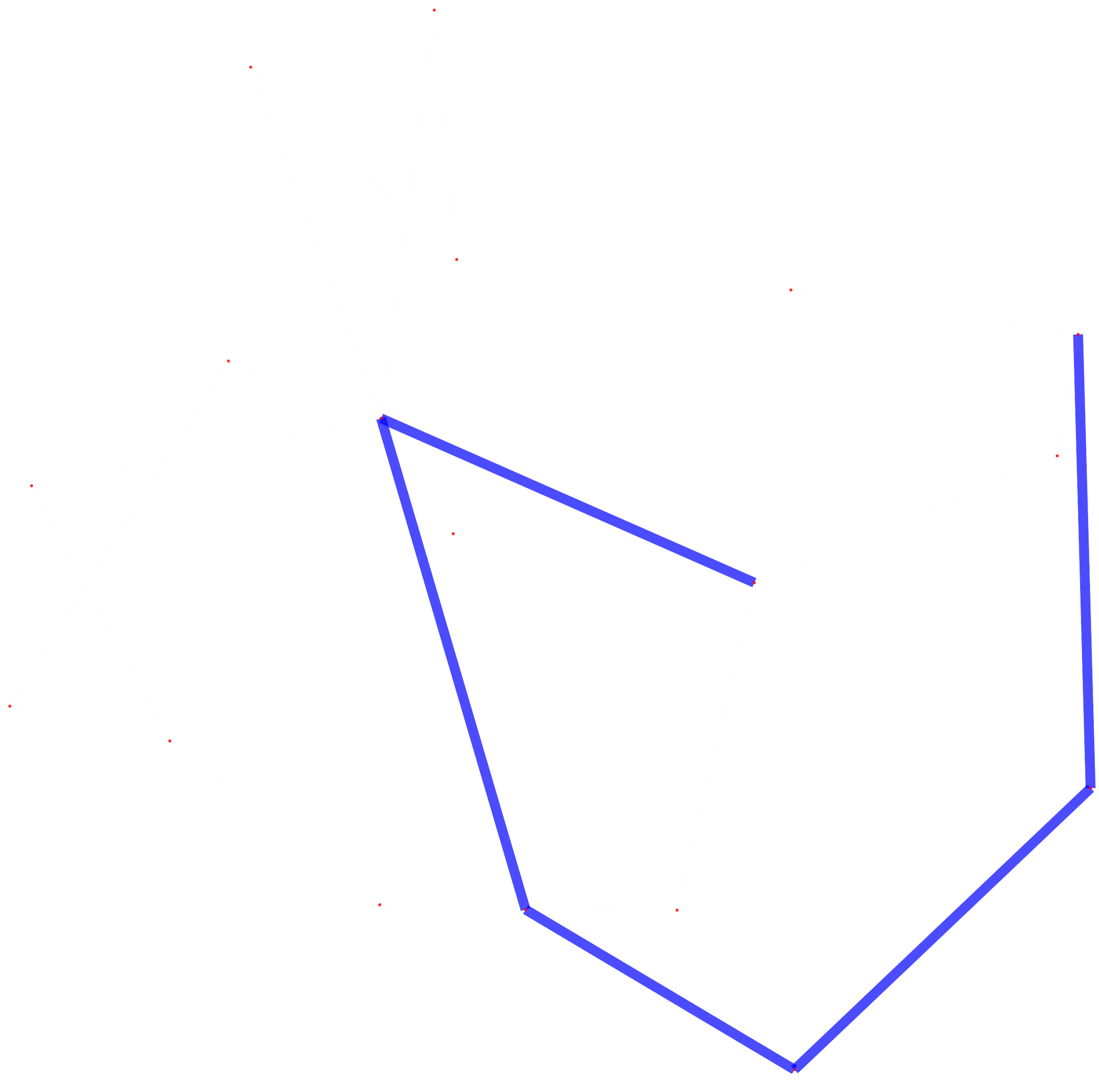}}\hspace{5mm}
\subfloat[  { [30, 0]} ]{\label{fig: 4_30_0}\includegraphics[width=0.10\textwidth]{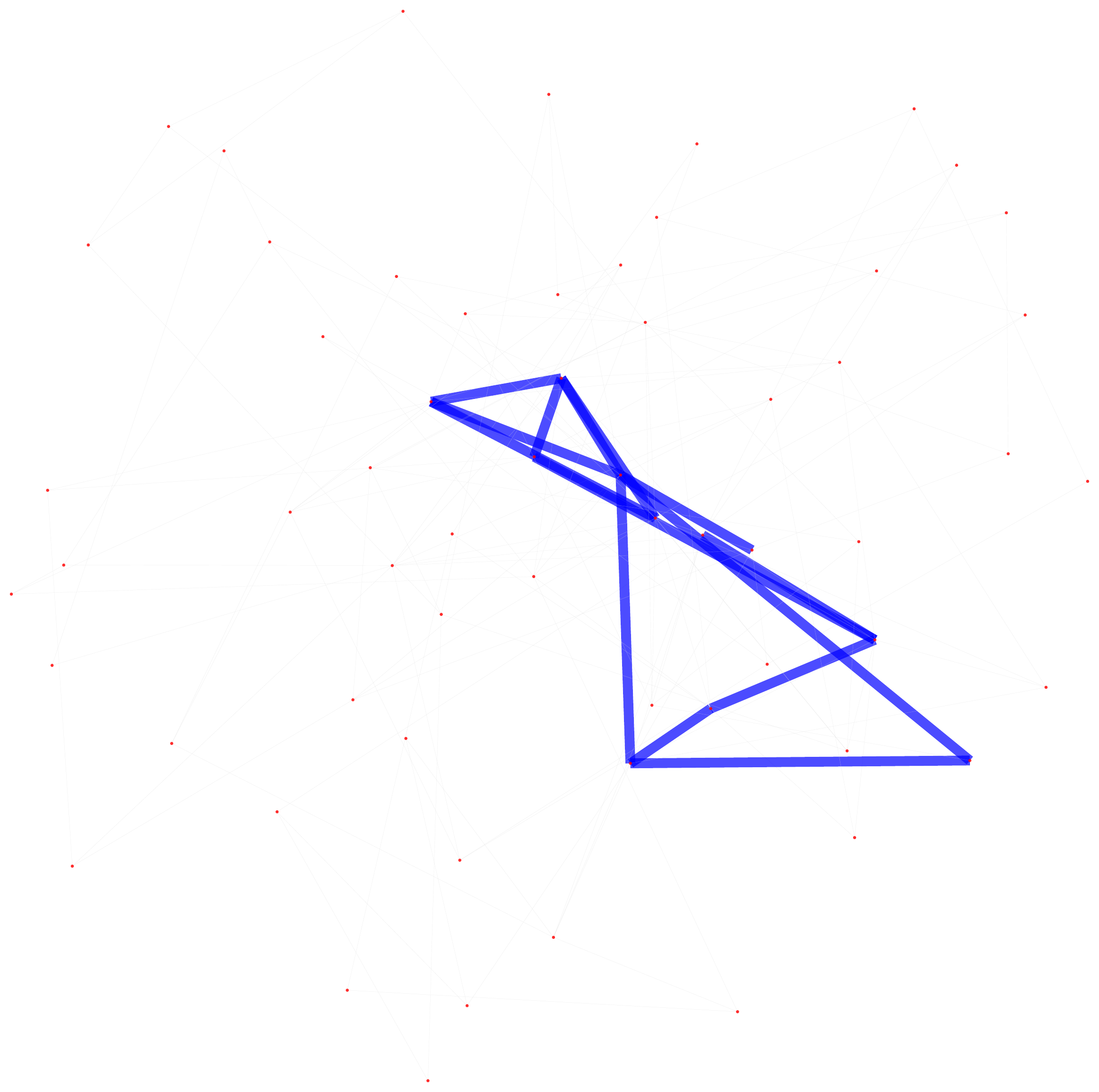}}\hspace{1mm}
\subfloat[  { [30, 1]} ]{\label{fig: 4_30_1}\includegraphics[width=0.10\textwidth]{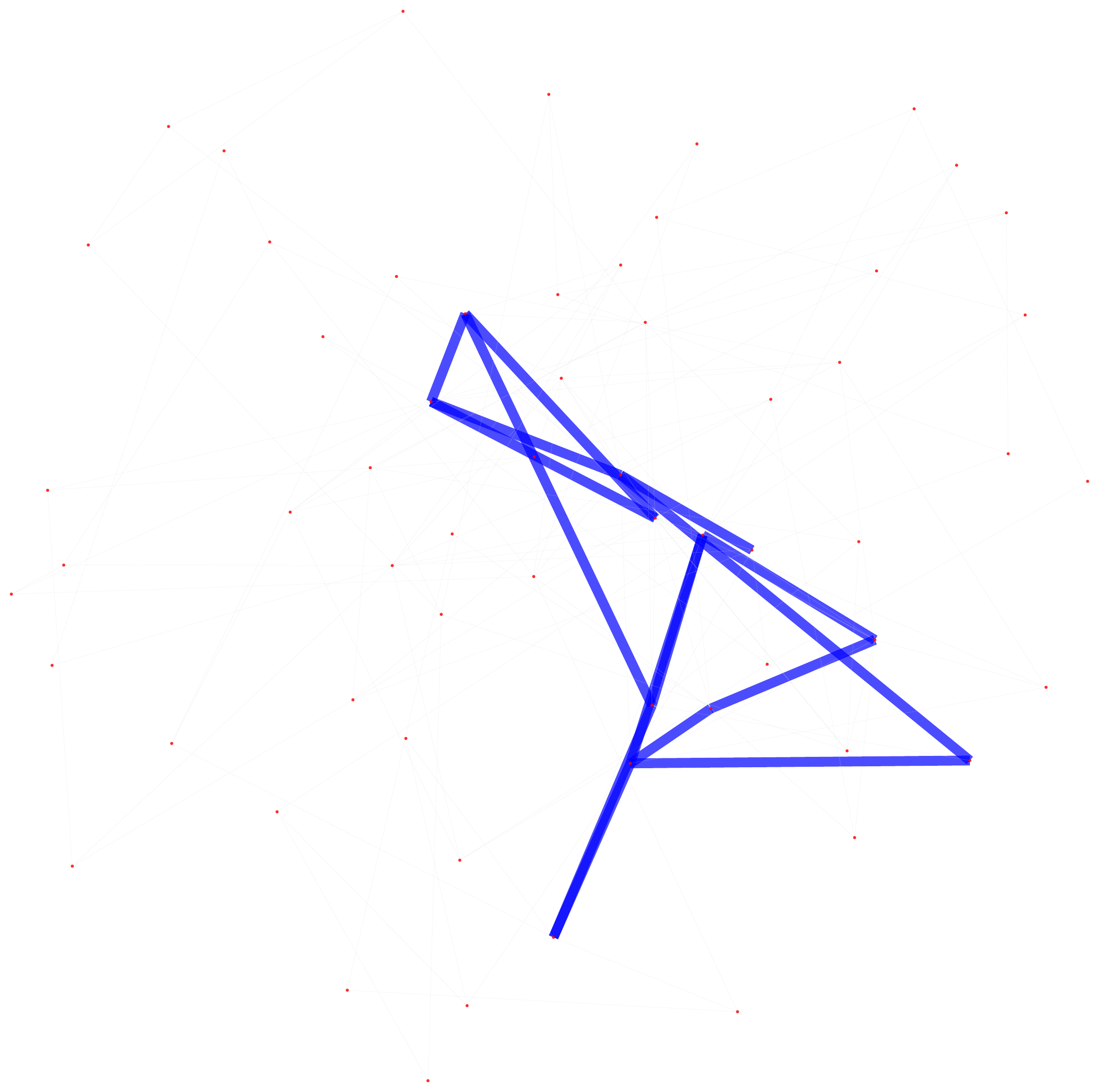}}\hspace{1mm}
\subfloat[  { [30, 2]} ]{\label{fig: 4_30_2}\includegraphics[width=0.10\textwidth]{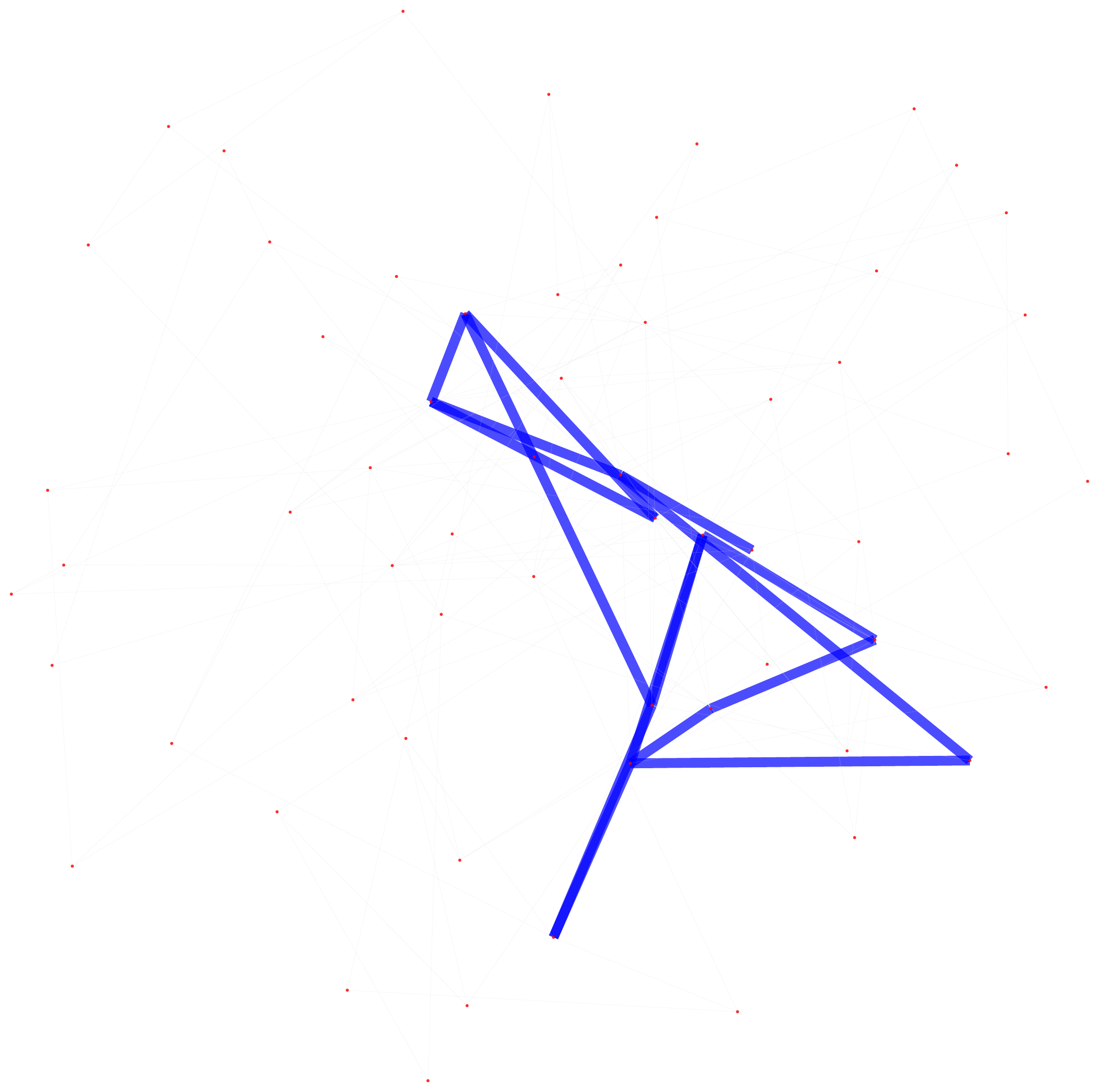}}\hspace{1mm}
\subfloat[  Target ]{\label{fig: 4_30_t}\includegraphics[width=0.10\textwidth]{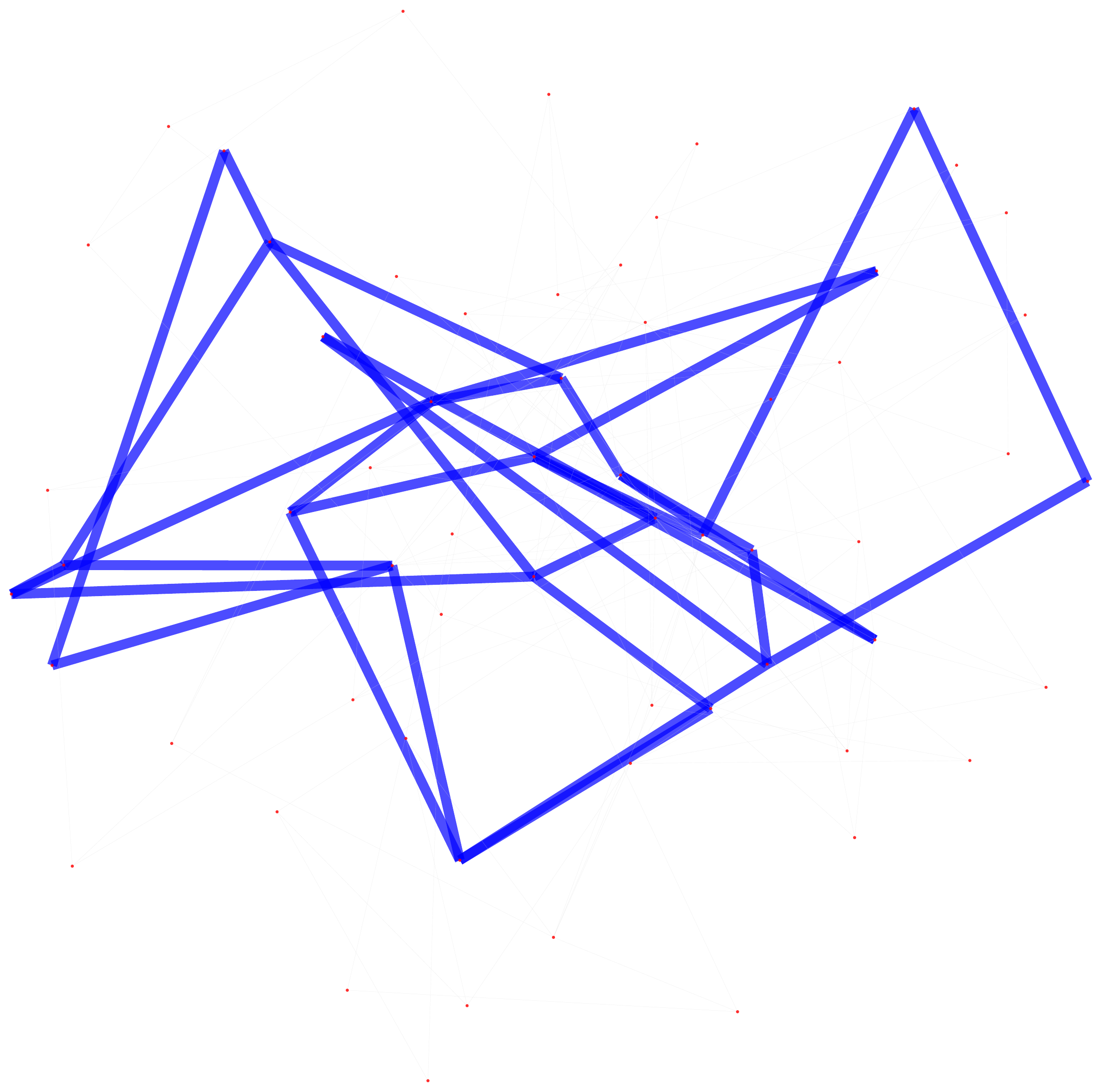}}\hspace{0mm}


\subfloat[ {[60, 0]} ]{\label{fig: log_2_14_8}\includegraphics[width=0.08\textwidth]{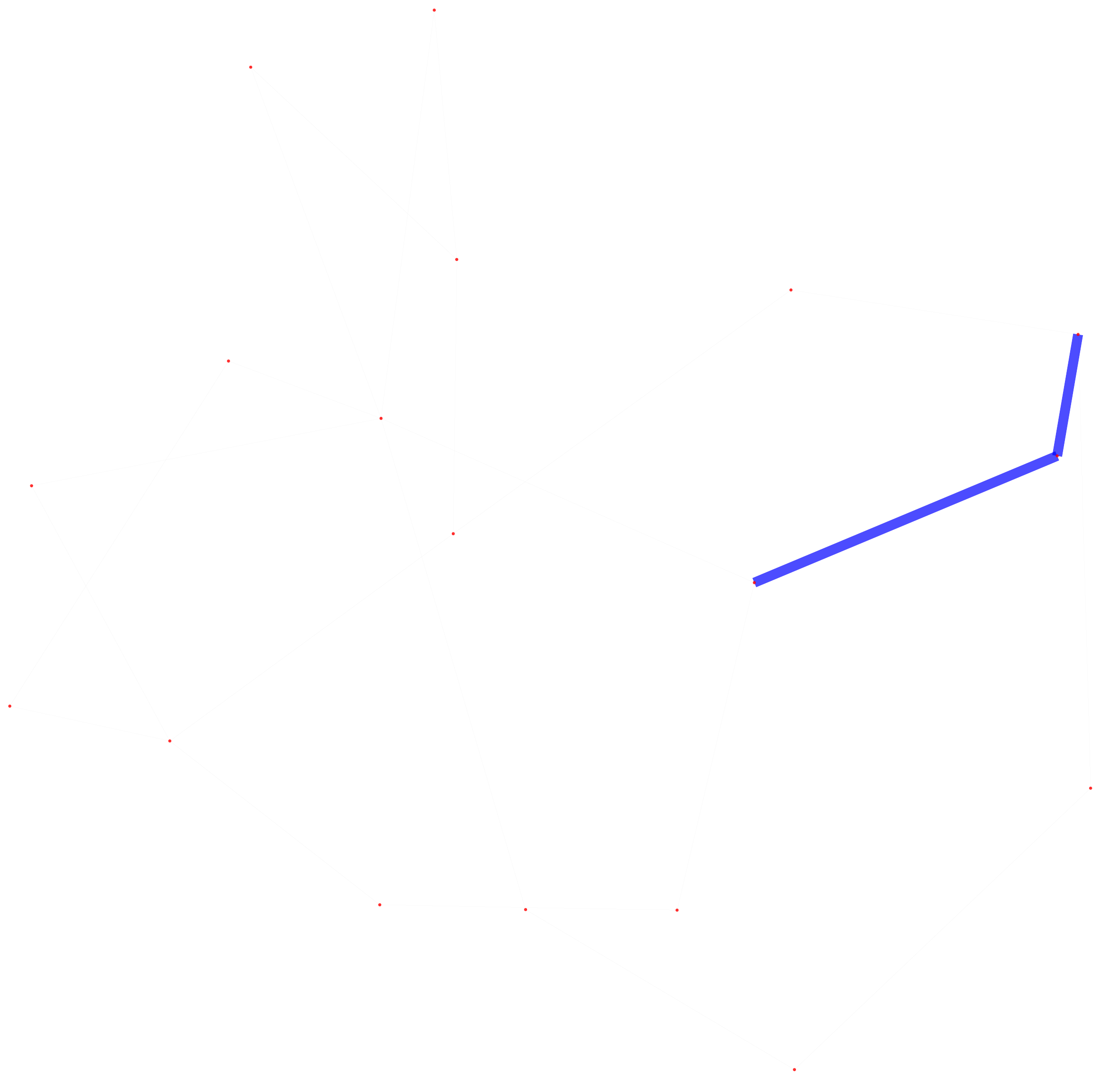}}\hspace{1mm}
\subfloat[ {[60, 1]} ]{\label{fig: log_2_15_8}\includegraphics[width=0.08\textwidth]{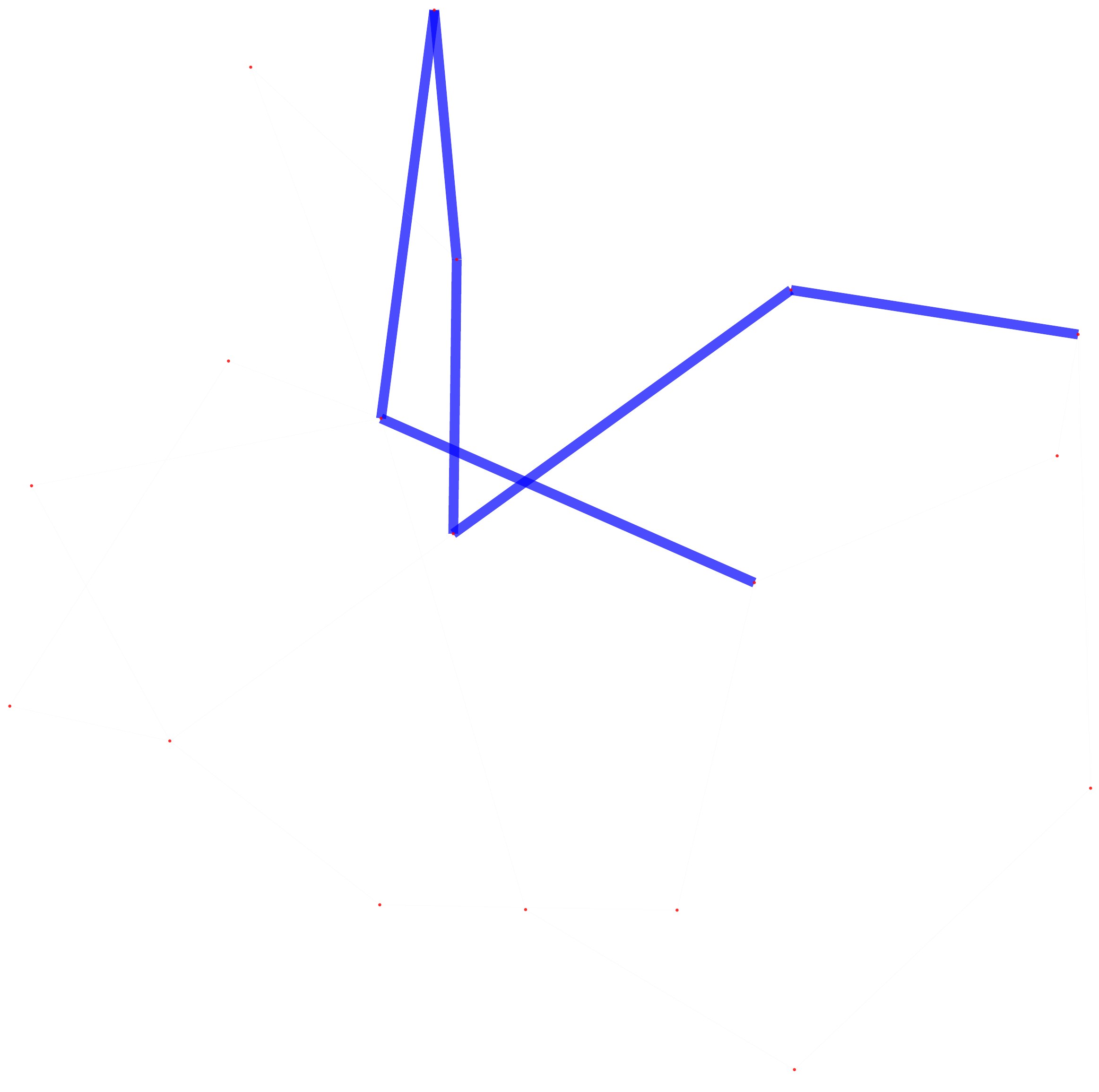}}\hspace{1mm}
\subfloat[ {[60, 2]} ]{\label{fig: log_2_16_8}\includegraphics[width=0.08\textwidth]{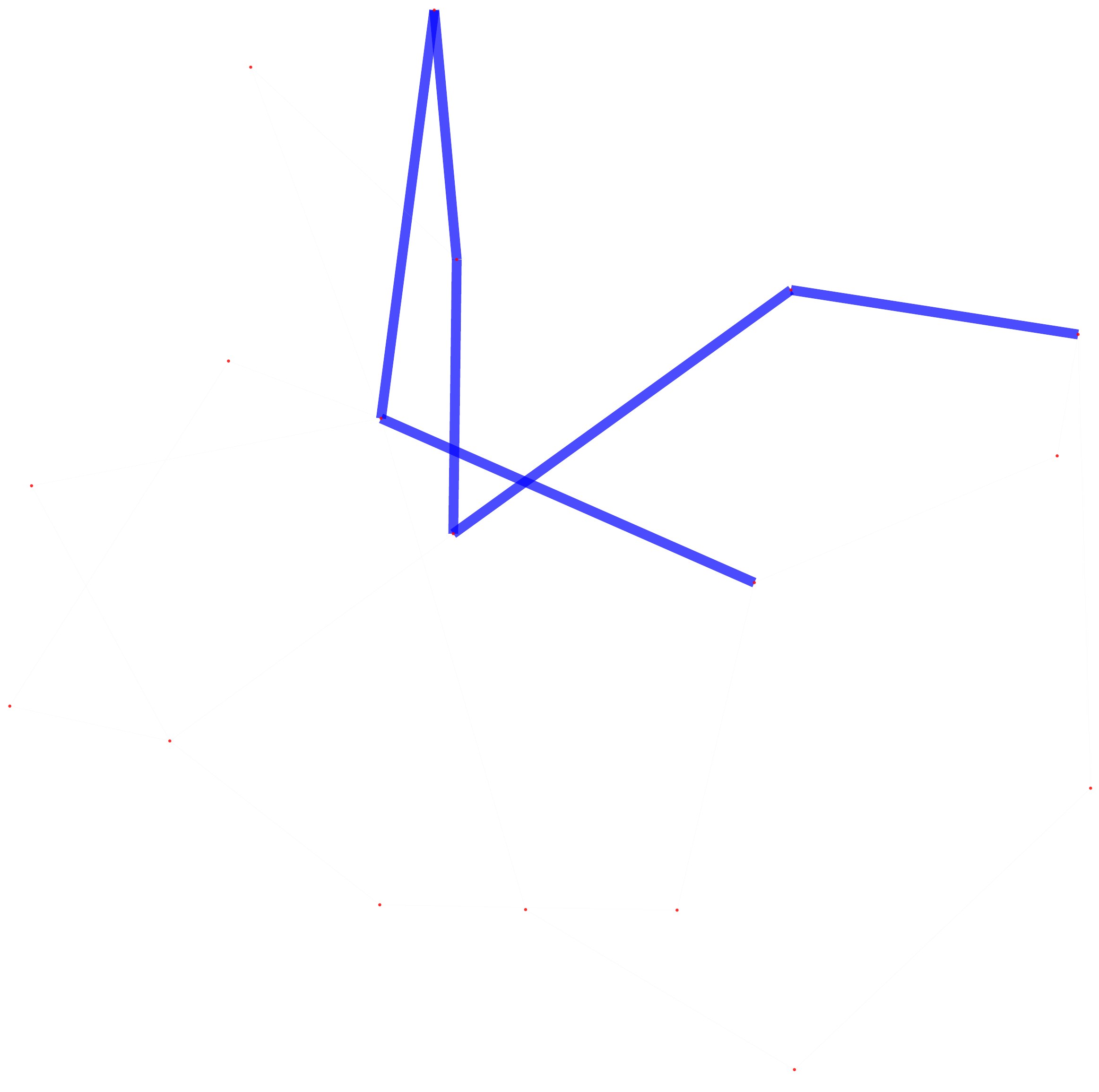}}\hspace{1mm}
\subfloat[ {[60, 5]} ]{\label{fig: log_2_18_8}\includegraphics[width=0.08\textwidth]{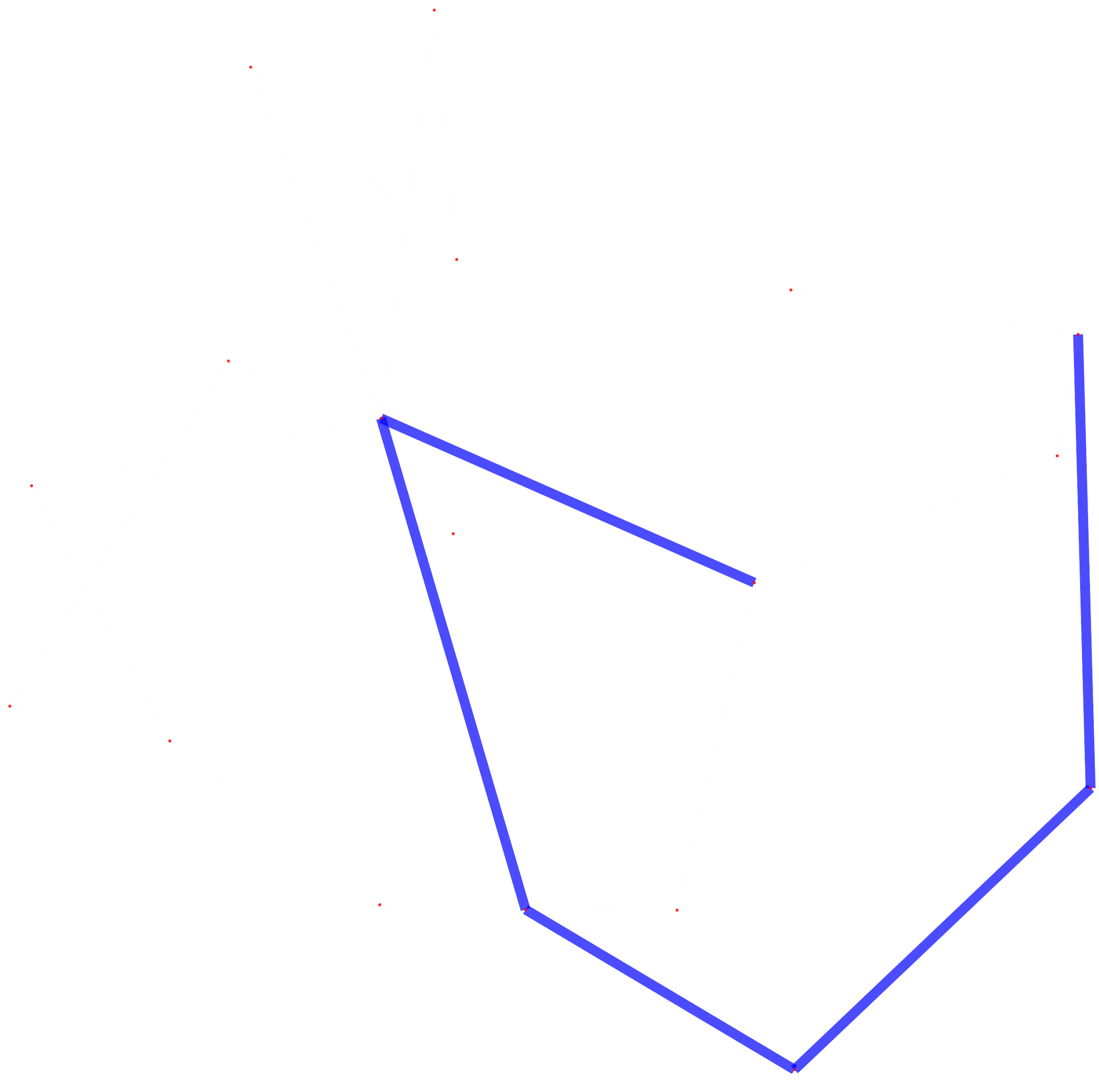}}\hspace{1mm}
\subfloat[Target]{\label{fig: log_2_13_8}\includegraphics[width=0.08\textwidth]{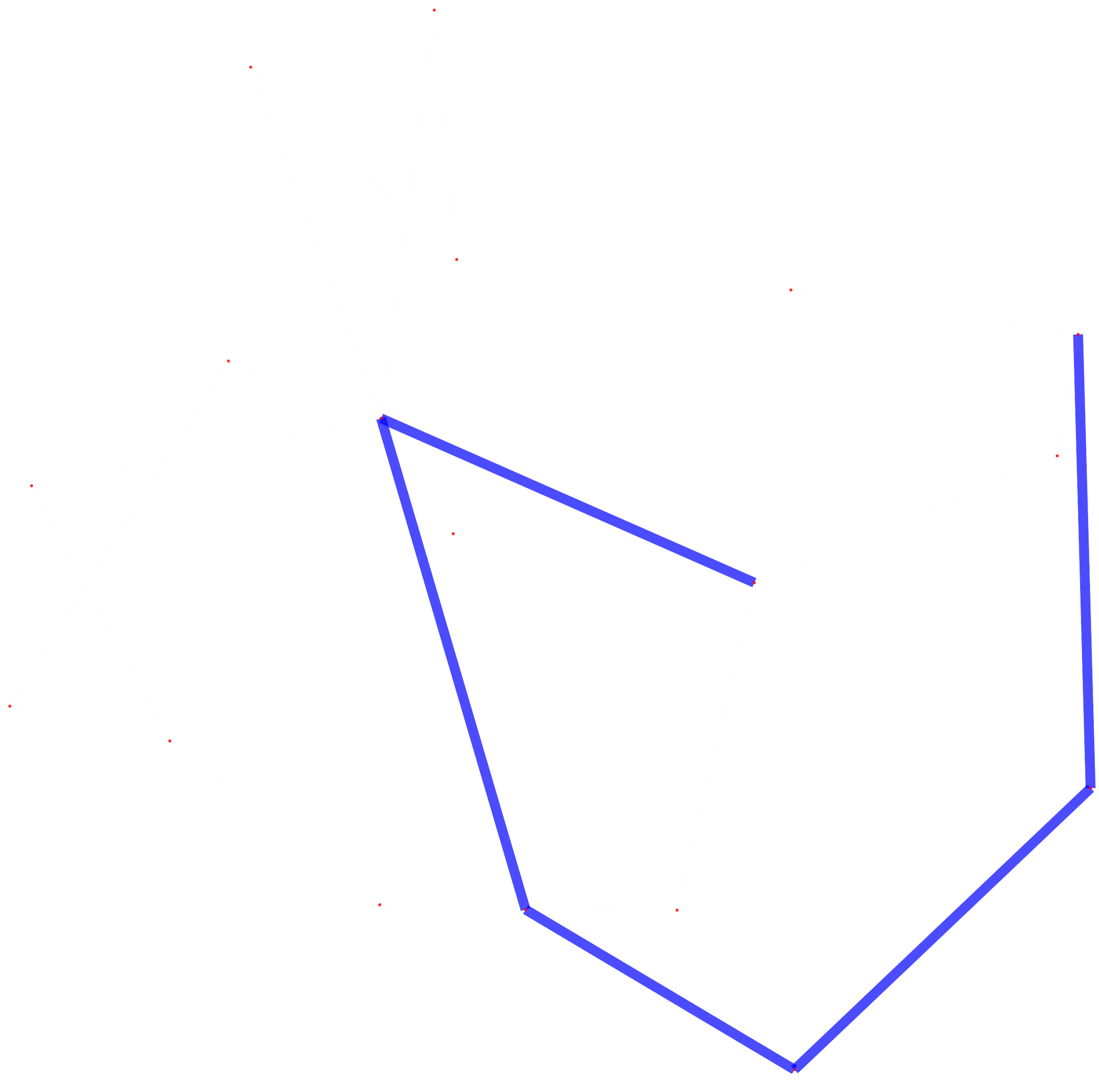}}\hspace{5mm}
\subfloat[  { [240, 0]} ]{\label{fig: 4_240_0}\includegraphics[width=0.10\textwidth]{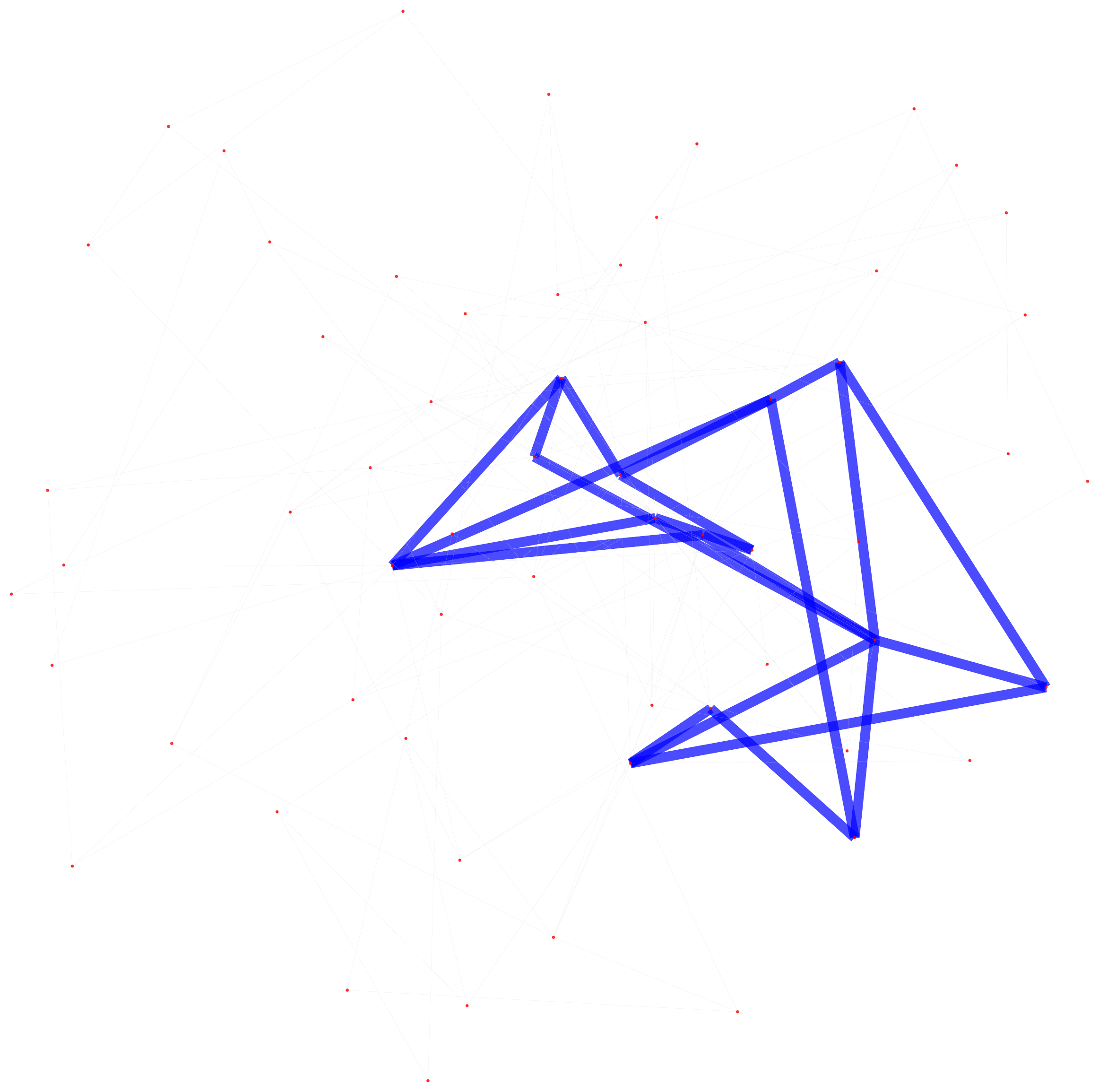}}\hspace{1mm}
\subfloat[  { [240, 1]} ]{\label{fig: 4_240_1}\includegraphics[width=0.10\textwidth]{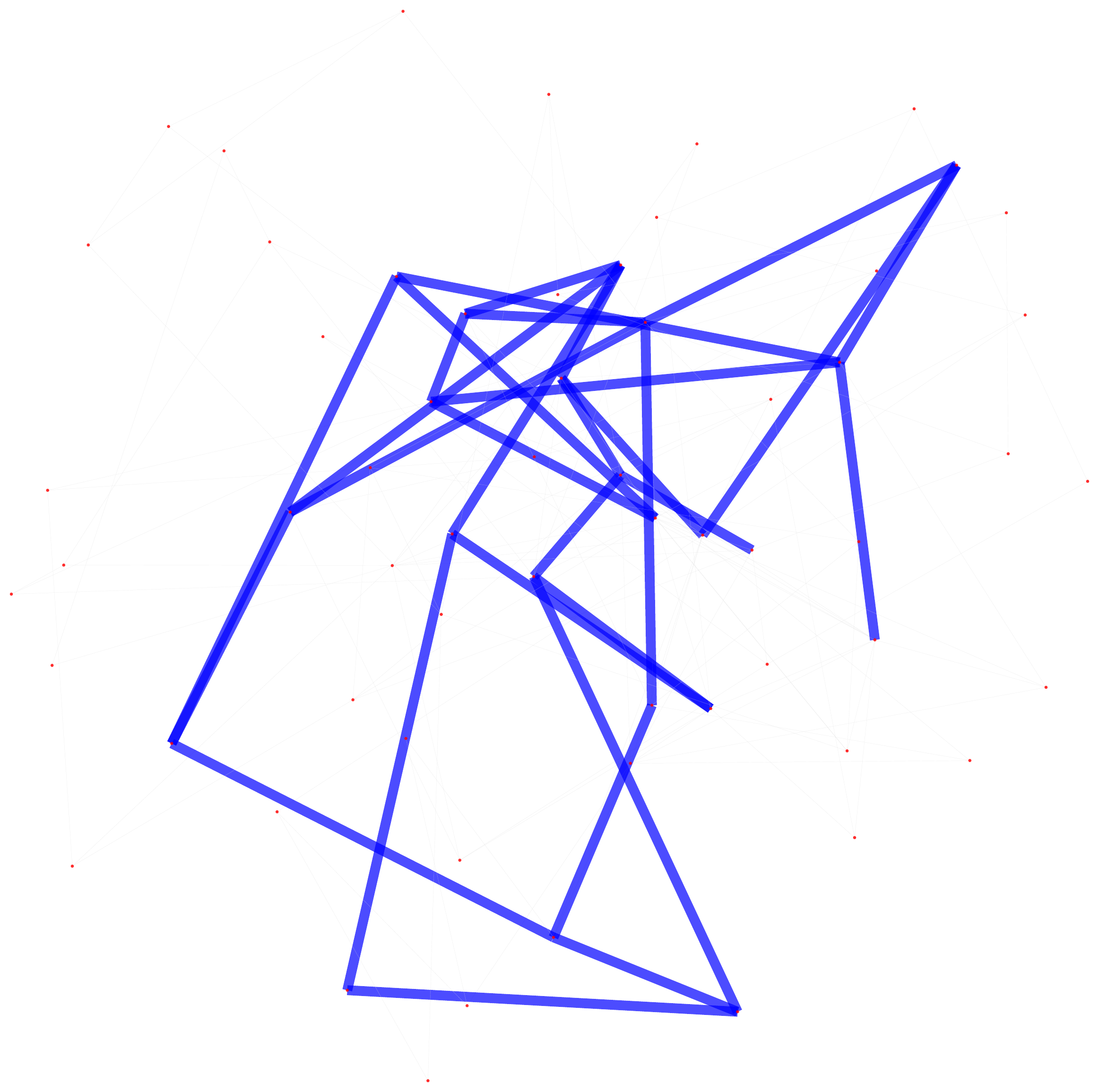}}\hspace{1mm}
\subfloat[  { [240, 2]} ]{\label{fig: 4_240_2}\includegraphics[width=0.10\textwidth]{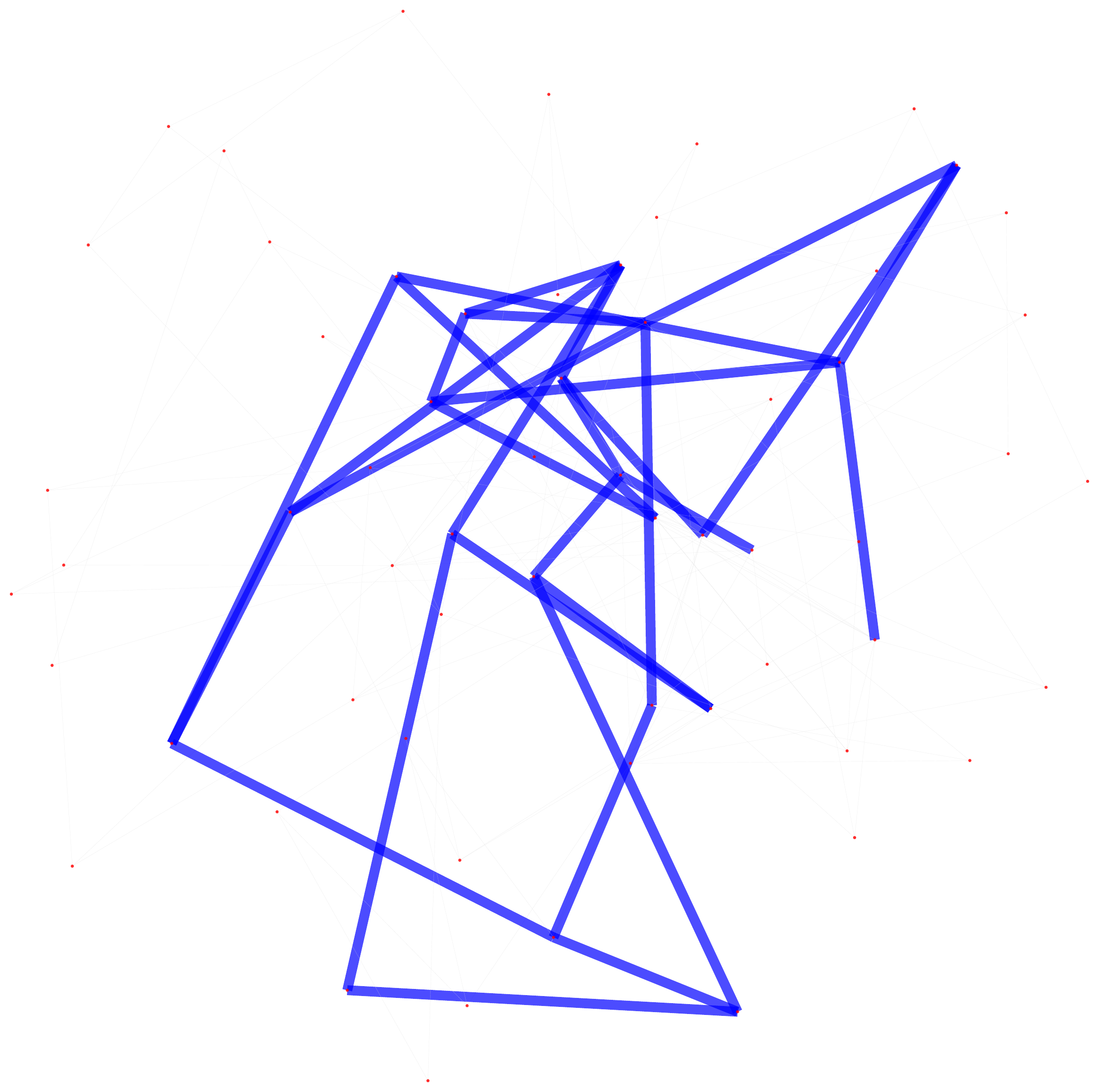}}\hspace{1mm}
\subfloat[  Target ]{\label{fig: 4_240_t}\includegraphics[width=0.10\textwidth]{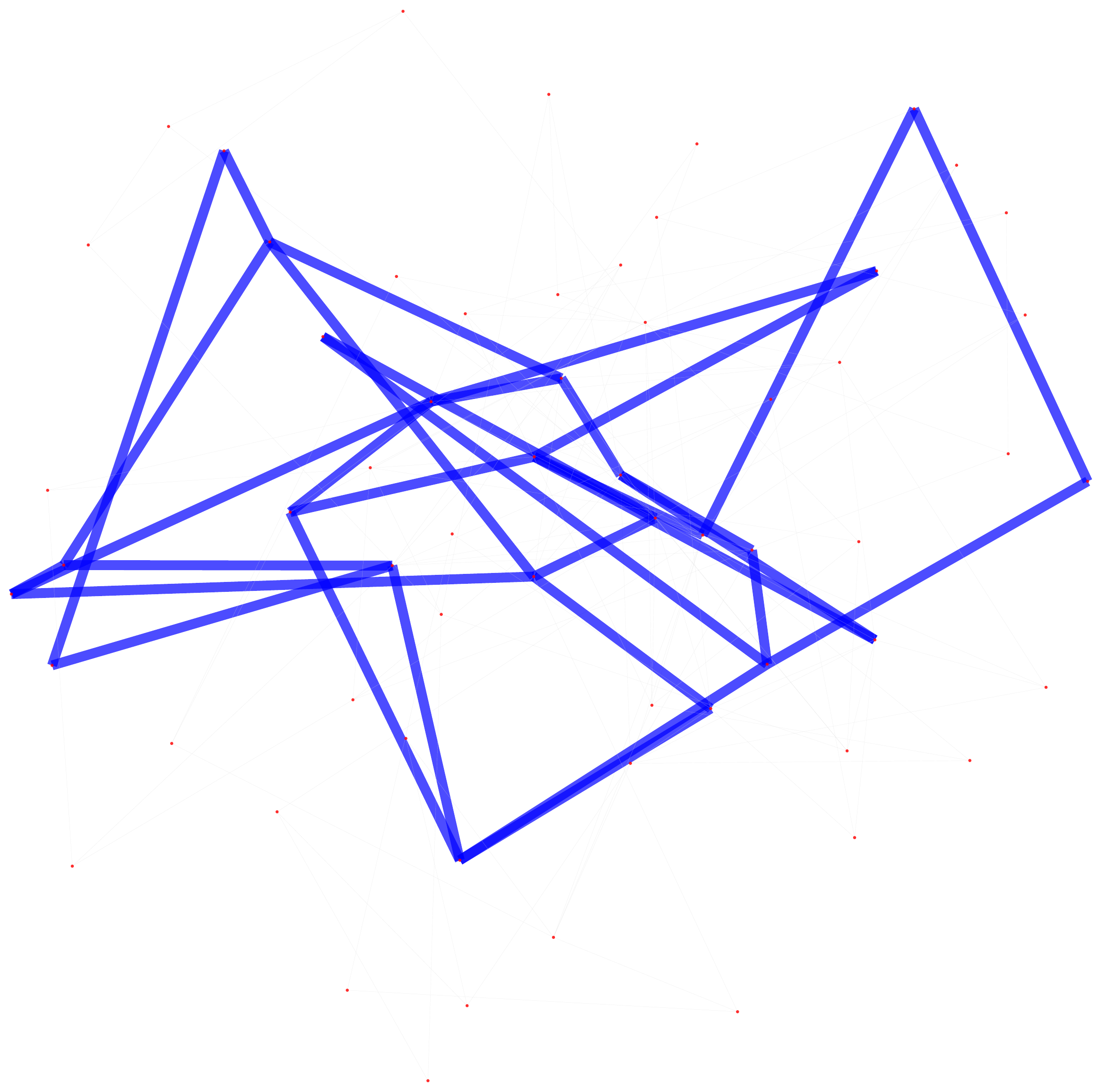}}\hspace{0mm}


\subfloat[ {[240, 0]} ]{\label{fig: log_2_8_8}\includegraphics[width=0.08\textwidth]{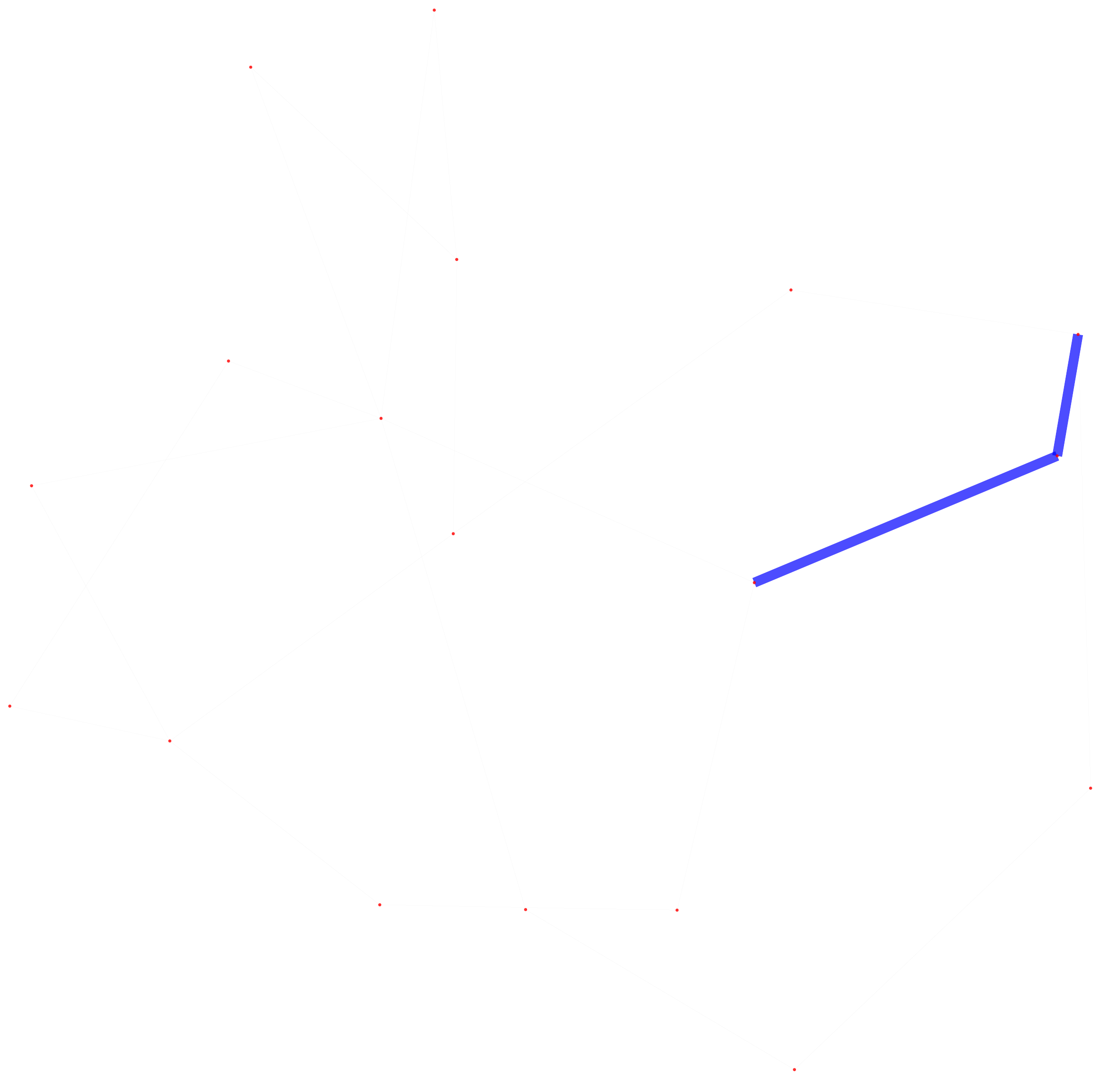}}\hspace{1mm}
\subfloat[ {[240, 1]} ]{\label{fig: log_2_9_8}\includegraphics[width=0.08\textwidth]{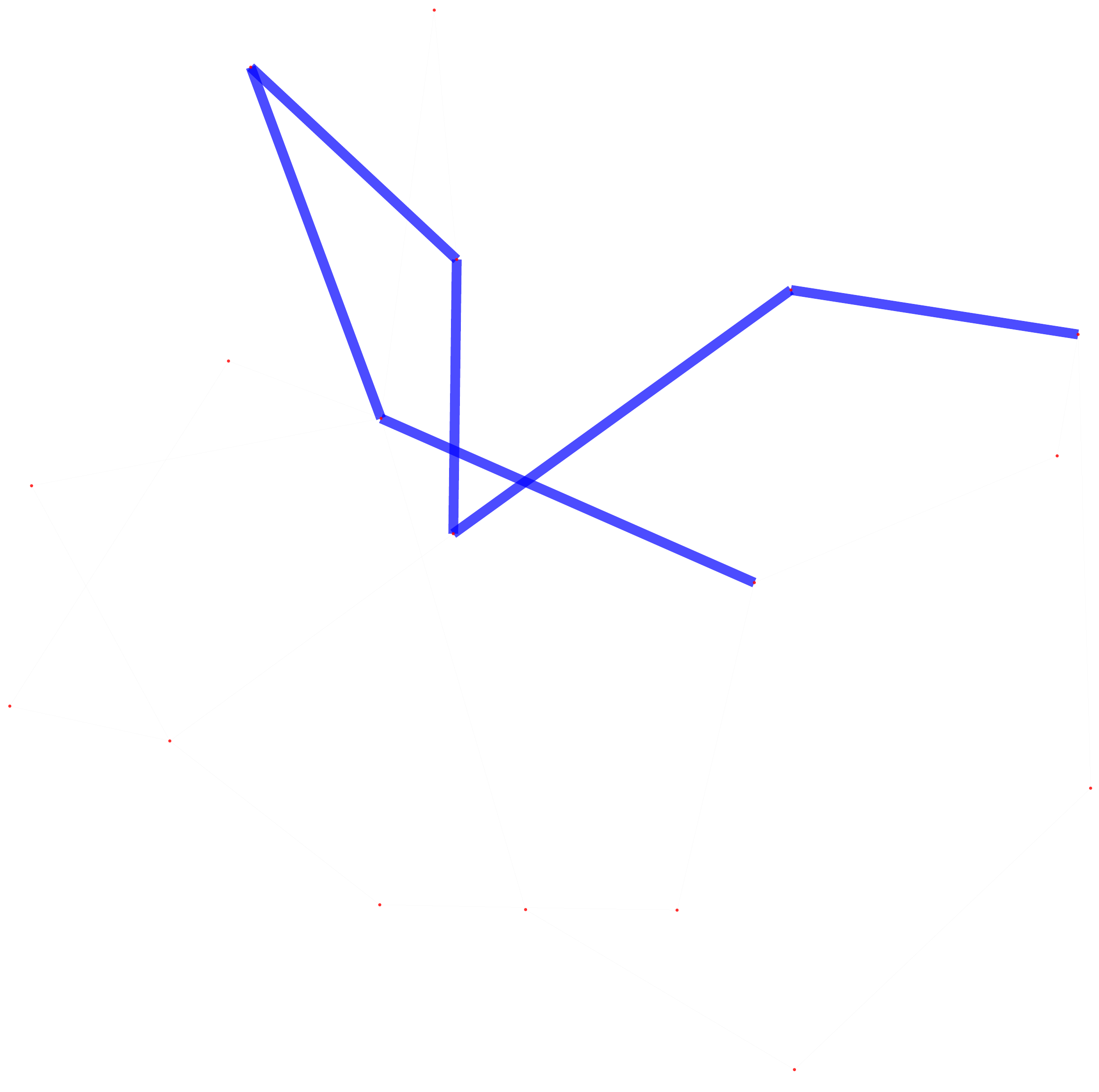}}\hspace{1mm}
\subfloat[ {[240, 2]} ]{\label{fig: log_2_10_8}\includegraphics[width=0.08\textwidth]{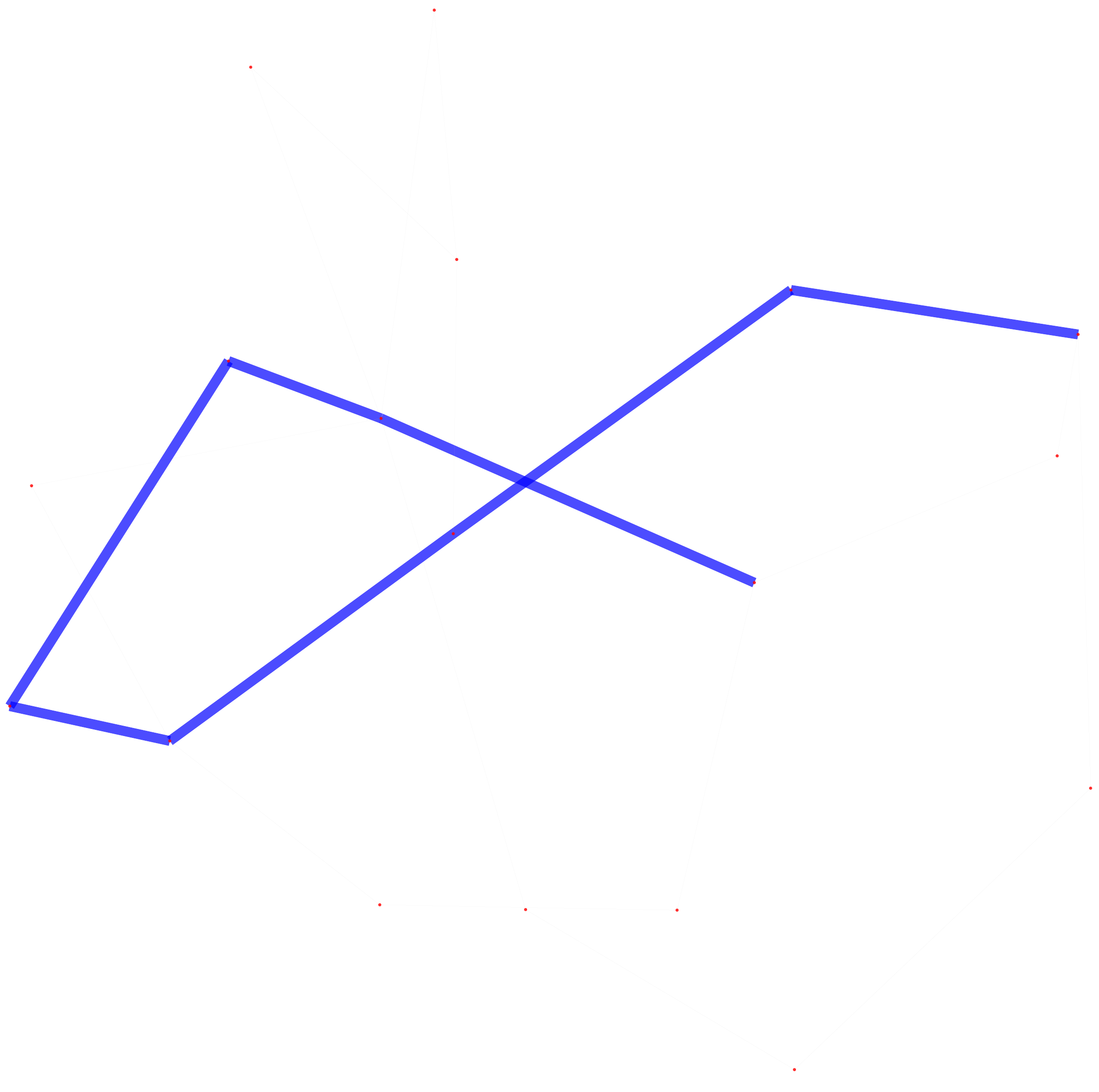}}\hspace{1mm}
\subfloat[ {[240, 5]} ]{\label{fig: log_2_12_8}\includegraphics[width=0.08\textwidth]{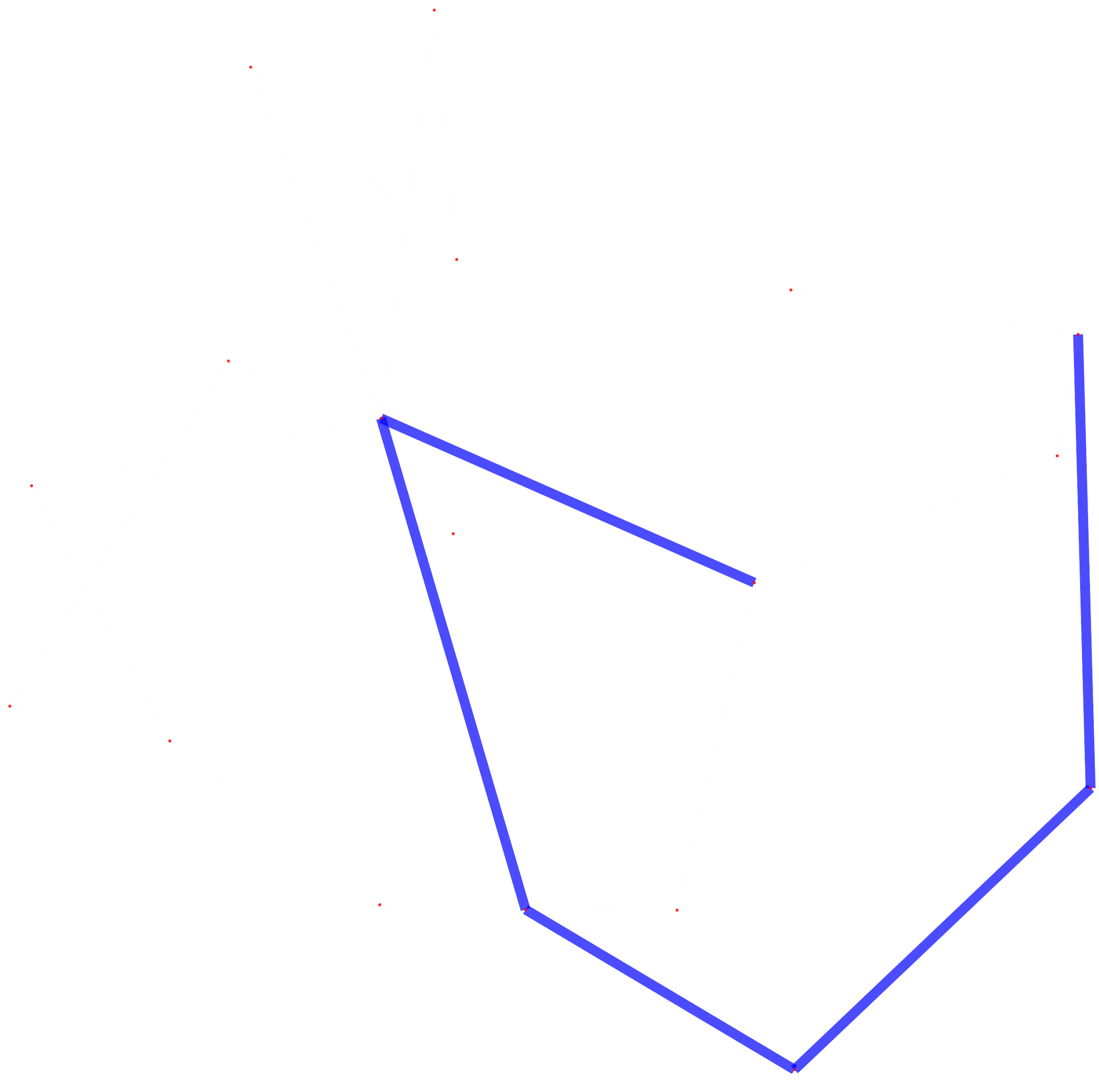}}\hspace{1mm}
\subfloat[Target]{\label{fig: log_2_7_8}\includegraphics[width=0.08\textwidth]{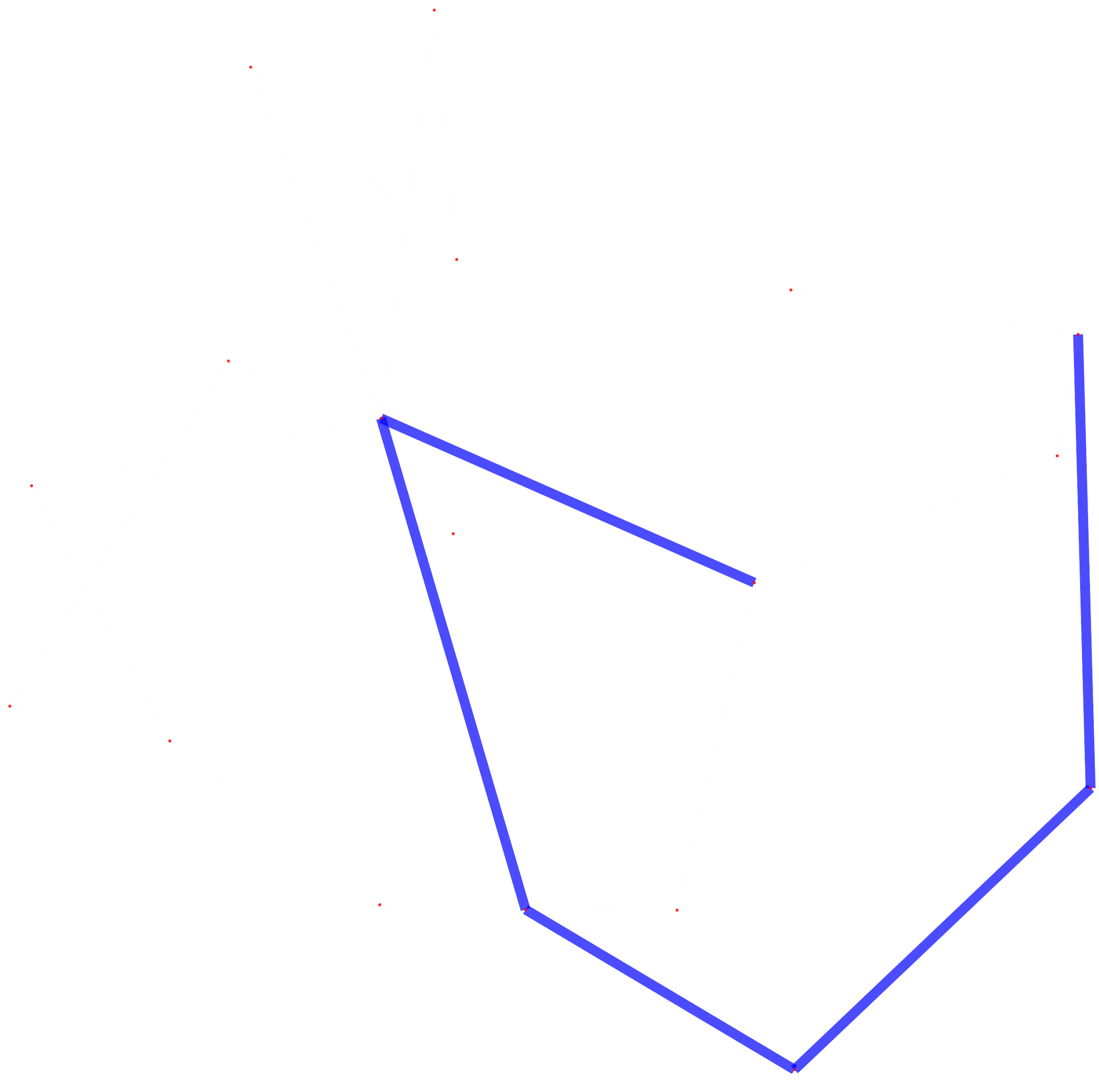}}\hspace{5mm}
\subfloat[  { [480, 0]} ]{\label{fig: 4_480_0}\includegraphics[width=0.10\textwidth]{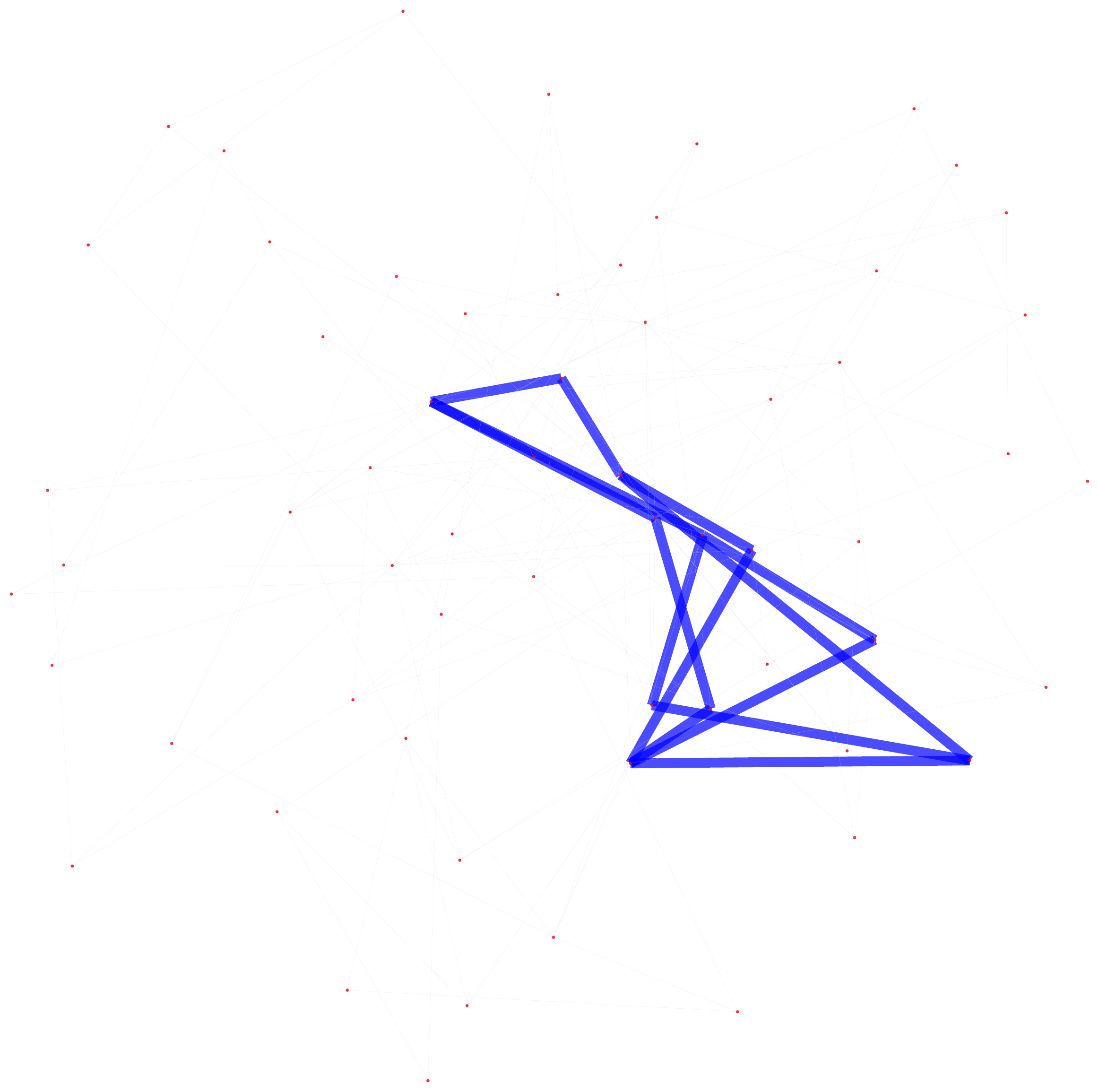}}\hspace{1mm}
\subfloat[  { [480, 1]} ]{\label{fig: 4_480_1}\includegraphics[width=0.10\textwidth]{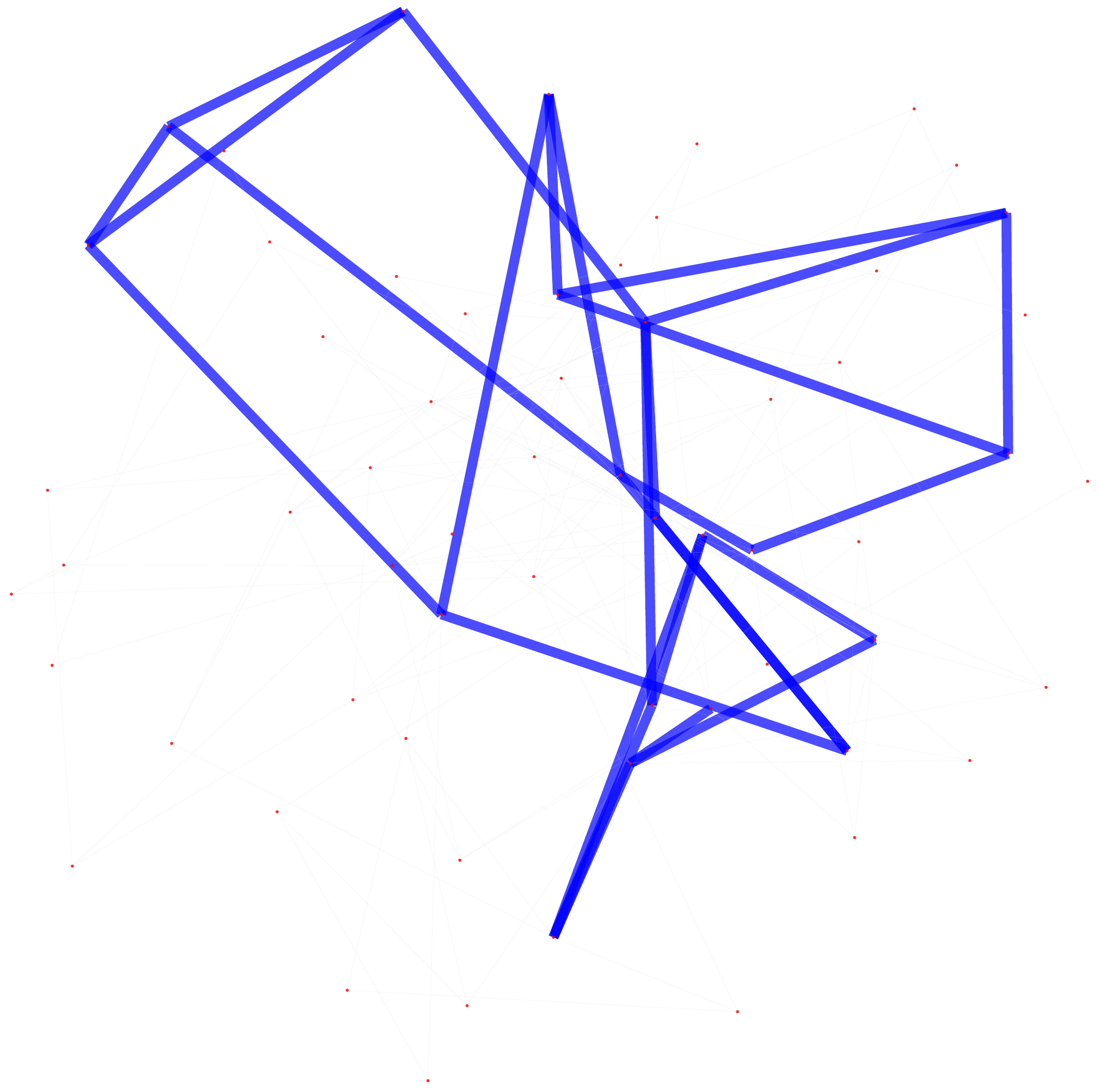}}\hspace{1mm}
\subfloat[  { [480, 2]} ]{\label{fig: 4_480_2}\includegraphics[width=0.10\textwidth]{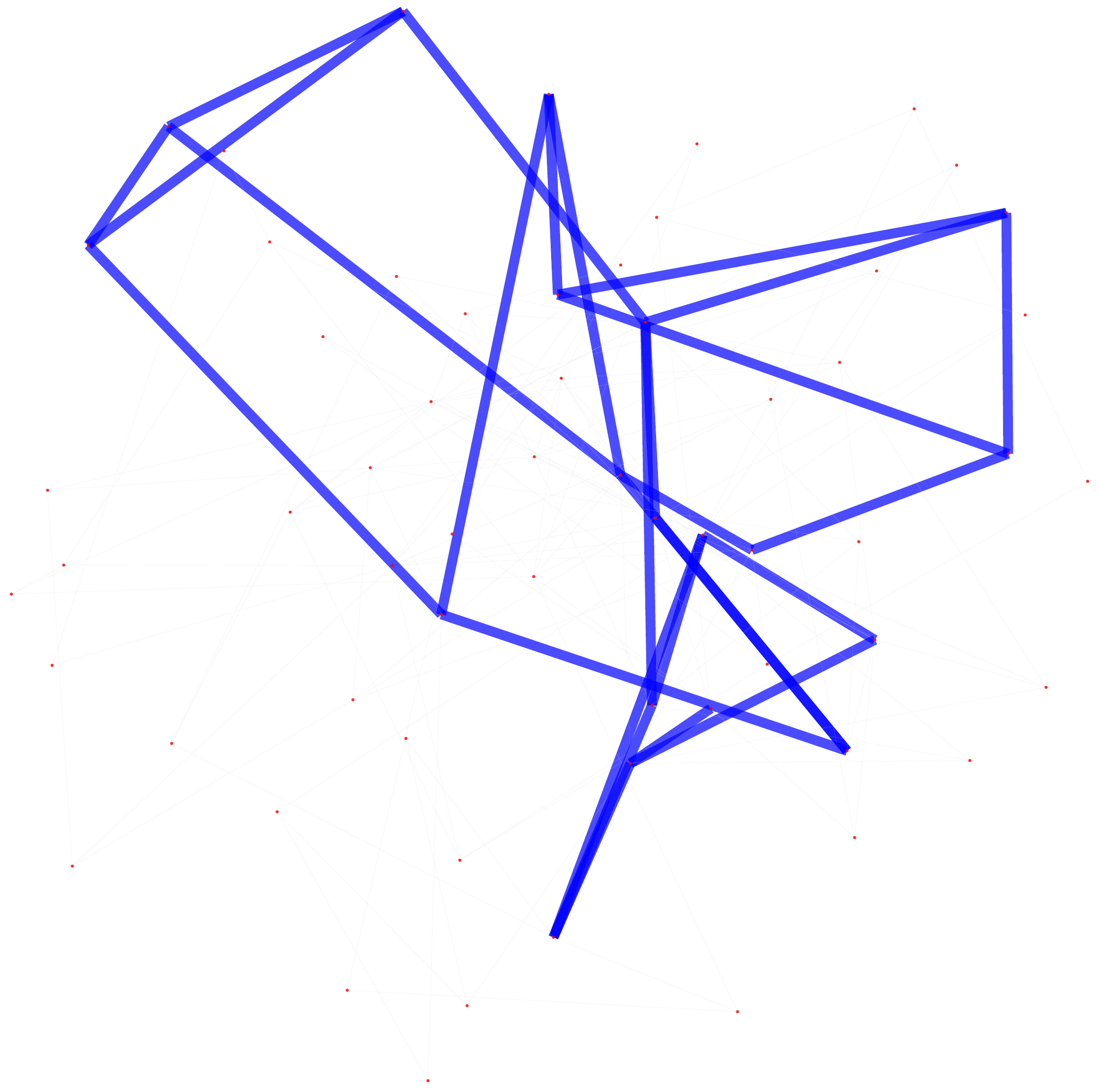}}\hspace{1mm}
\subfloat[  Target ]{\label{fig: 4_480_t}\includegraphics[width=0.10\textwidth]{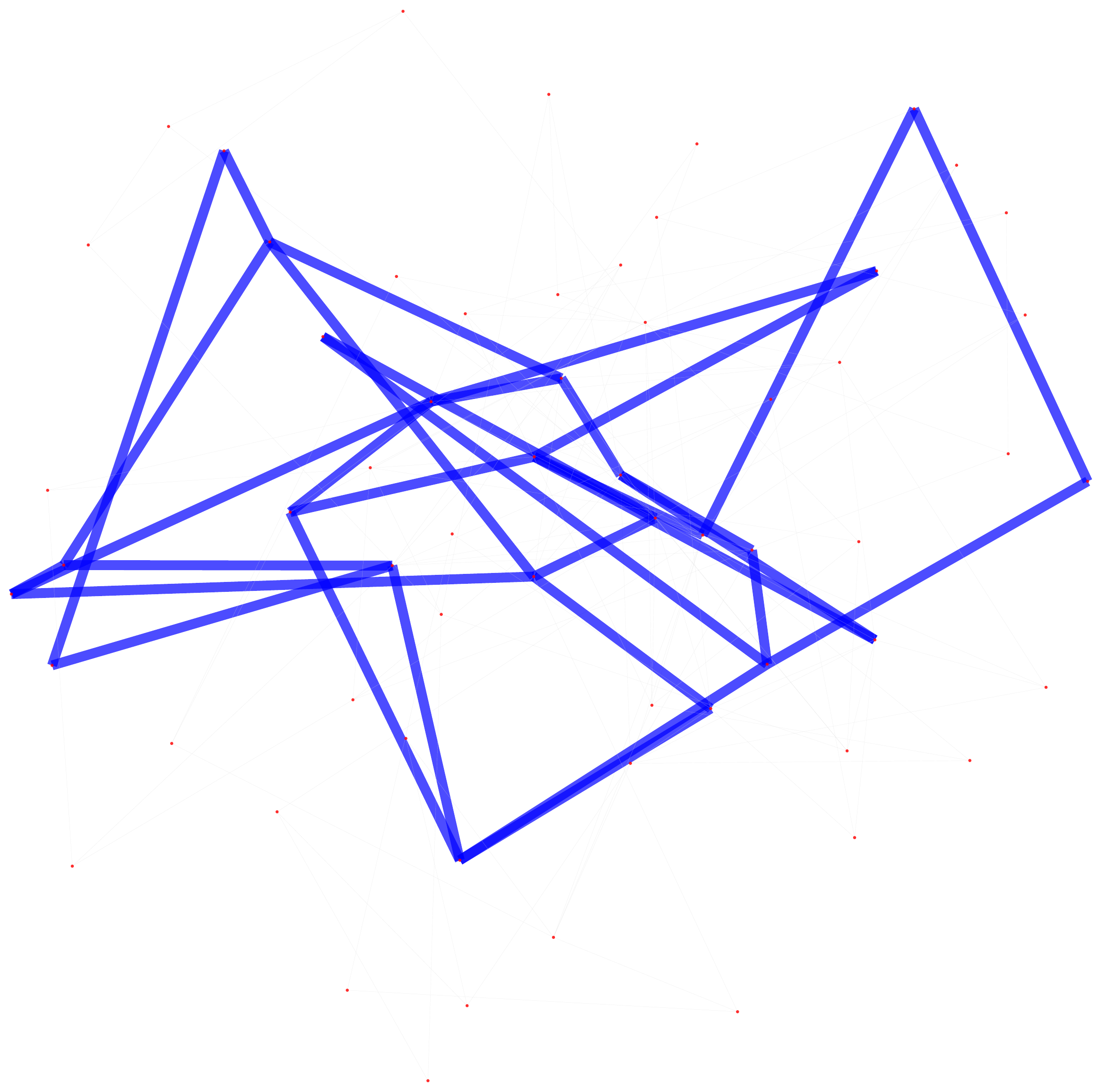}}\hspace{0mm}


\caption{\small The left (resp., right) shows the visualizations of the solution to one testing query for the Tree-to-Path (resp., Path-to-Tree) problem under QRTS-P on Kro. The figure labeled by $[a,b]$ shows the result with training size $a$ after $b$ iterations in the cutting plane algorithm (Appendix \ref{sup: cutting} for solving Eq. (\ref{eq: qrts_p_training}). Each row shows the results under one training size, where the last figure shows the optimal solution.}
\label{fig: visua}
\vspace{-4mm}
\end{figure}

\section{Future Directions}
\label{sec: conc}
Although we observed that the approximation ratio becomes better with the increase in training size and training iteration, it is not always the case that the solution converges to the optimal one -- for example, Figure \ref{fig: visua}-Right and the visualizations in Appendix \ref{sup: exp}. This is reasonable because two solutions may have similar costs but with very different edge sets. Depending on the needs of the agents, other metrics can be adopted, which may require new designs of the hypothesis space and training methods. Another important future direction is to enhance the proposed method through (deep) representation learning. 


\subsubsection{\ackname} This project is supported in part by National Science Foundation
under Award Career IIS-2144285.

\bibliographystyle{splncs04}
\bibliography{PAKDD}

\begin{thebibliography}{10}
\providecommand{\url}[1]{\texttt{#1}}
\providecommand{\urlprefix}{URL }
\providecommand{\doi}[1]{https://doi.org/#1}

\bibitem{barabasi1999emergence}
Barab{\'a}si, A.L., Albert, R.: Emergence of scaling in random networks. science  \textbf{286}(5439),  509--512 (1999)

\bibitem{bengio1997using}
Bengio, Y.: Using a financial training criterion rather than a prediction criterion. International journal of neural systems  \textbf{8}(04),  433--443 (1997)

\bibitem{bertsimas2020predictive}
Bertsimas, D., Kallus, N.: From predictive to prescriptive analytics. Management Science  \textbf{66}(3),  1025--1044 (2020)

\bibitem{chen2015deepdriving}
Chen, C., Seff, A., Kornhauser, A., Xiao, J.: Deepdriving: Learning affordance for direct perception in autonomous driving. In: Proceedings of the IEEE international conference on computer vision. pp. 2722--2730 (2015)

\bibitem{demetrescu2008implementation}
Demetrescu, C., Goldberg, A.V., Johnson, D.S.: Implementation challenge for shortest paths. Encyclopedia of Algorithms  \textbf{15}, ~54 (2008)

\bibitem{edwards1954theory}
Edwards, W.: The theory of decision making. Psychological bulletin  \textbf{51}(4), ~380 (1954)

\bibitem{ford2015beware}
Ford, B., Nguyen, T., Tambe, M., Sintov, N., Fave, F.D.: Beware the soothsayer: From attack prediction accuracy to predictive reliability in security games. In: Decision and Game Theory for Security: 6th International Conference, GameSec 2015, London, UK, November 4-5, 2015, Proceedings 6. pp. 35--56. Springer (2015)

\bibitem{hofmann2008kernel}
Hofmann, T., Sch{\"o}lkopf, B., Smola, A.J.: Kernel methods in machine learning  (2008)

\bibitem{hwang1992steiner}
Hwang, F.K., Richards, D.S.: Steiner tree problems. Networks  \textbf{22}(1),  55--89 (1992)

\bibitem{joachims2009cutting}
Joachims, T., Finley, T., Yu, C.N.J.: Cutting-plane training of structural svms. Machine learning  \textbf{77}(1),  27--59 (2009)

\bibitem{kochenderfer2015decision}
Kochenderfer, M.J.: Decision making under uncertainty: theory and application. MIT press (2015)

\bibitem{lai2011weibull}
Lai, C., Murthy, D., Xie, M.: Weibull distributions. Wiley Interdisciplinary Reviews: Computational Statistics  \textbf{3}(3),  282--287 (2011)

\bibitem{lehmann2006theory}
Lehmann, E.L., Casella, G.: Theory of point estimation. Springer Science \& Business Media (2006)

\bibitem{leskovec2010kronecker}
Leskovec, J., Chakrabarti, D., Kleinberg, J., Faloutsos, C., Ghahramani, Z.: Kronecker graphs: an approach to modeling networks. Journal of Machine Learning Research  \textbf{11}(2) (2010)

\bibitem{lucchi2013learning}
Lucchi, A., Li, Y., Fua, P.: Learning for structured prediction using approximate subgradient descent with working sets. In: Proc. of the CVPR (2013)

\bibitem{muller2014pystruct}
M{\"u}ller, A.C., Behnke, S.: Pystruct: learning structured prediction in python. J. Mach. Learn. Res.  \textbf{15}(1),  2055--2060 (2014)

\bibitem{murphy2012machine}
Murphy, K.P.: Machine learning: a probabilistic perspective. MIT press (2012)

\bibitem{provost2013data}
Provost, F., Fawcett, T.: Data science and its relationship to big data and data-driven decision making. Big data  \textbf{1}(1),  51--59 (2013)

\bibitem{rajkumar2014online}
Rajkumar, A., Agarwal, S.: Online decision-making in general combinatorial spaces. Advances in Neural Information Processing Systems  \textbf{27} (2014)

\bibitem{suthaharan2016support}
Suthaharan, S., Suthaharan, S.: Support vector machine. Machine learning models and algorithms for big data classification: thinking with examples for effective learning pp. 207--235 (2016)

\bibitem{tokdar2010importance}
Tokdar, S.T., Kass, R.E.: Importance sampling: a review. Wiley Interdisciplinary Reviews: Computational Statistics  \textbf{2}(1),  54--60 (2010)

\bibitem{tong2021usco}
Tong, G.: Usco-solver: Solving undetermined stochastic combinatorial optimization problems. NeurIPS  \textbf{34},  1646--1659 (2021)

\bibitem{vazirani2001approximation}
Vazirani, V.V.: Approximation algorithms, vol.~1. Springer (2001)

\bibitem{watts1998collective}
Watts, D.J., Strogatz, S.H.: Collective dynamics of ‘small-world’networks. nature  \textbf{393}(6684),  440--442 (1998)

\end{thebibliography}

\newpage

\newpage
\appendix

\section*{Query-decision Regression  between Shortest Path and Minimum Steiner Tree (Technical Appendix)}
\section{Proofs}
\label{sup: proof}

\subsection{Proof of Theorem \ref{theorem: approximation}}
To prove the theorem, it suffices to prove that for each $\epsilon \geq 0$ and $\delta \in (0,1)$, there exists an $N \in \mathbb{Z}$ depending on $\epsilon$ and $\delta$, such that we have 
\begin{align}
\Pr_{g_i \sim  \D_{\G}^{em}}\Big[ \Delta \leq \epsilon \Big] \geq 1-\delta    
\end{align} 
when $K \geq N$. This follows from the following lemma.

\begin{lemma}
For each $\epsilon \geq 0$ and $\delta \in (0,1)$, let $\epsilon_1^*$ and $\epsilon_2^*$ be the solution to
\begin{align*}
&\min g(\epsilon_1,\epsilon_2) \\
\text{~s.t.~} \hspace{0.5cm} & g(\epsilon_1,\epsilon_2) = \max(\frac{1}{\epsilon_1^2},\frac{1}{\epsilon_2^2}\ln \frac{1}{\delta}) \\
\hspace{0.5cm} & \epsilon_1+\epsilon_2 =\epsilon \\
\hspace{0.5cm} &\epsilon_1\geq 0, \epsilon_2 \geq 0.
\end{align*}
When $K$ is no less than $g(\epsilon_1^*,\epsilon_2^*)\cdot e^{2D_{\infty}} \cdot \max f_T^2$, with probability least $1-\delta$ over the selection of $\{g_i\}$, there exists $\vec{w}=(w_1,...,w_K)$ such that 

{\small\begin{align*}
&\norm{ \E_{g \sim  \D_{\G}} [f_T(x,y,g)] - \sum_{i=1}^{K} w_i f_T(x,y,g_i) } \\
= & \sqrt{\int_{\X_T \times \Y_T} \Big(\E_{g \sim  \D_{\G}} [f_T(x,y,g)] - \sum_{i=1}^{K} w_i f_T(x,y,g_i)\Big)^2 d \D }\\
\leq &  \epsilon
\end{align*}}%
where $D_{\infty} = \max_{g} \frac{\D_{\G}^{em}[g]}{\D_{\G}[g]}$ is the $\infty$-order Rényi divergence between $\D_{\G}$ and $\D_{\G}^{em}$
\end{lemma}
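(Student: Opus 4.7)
The natural weights to try are the importance sampling weights $w_i \define \frac{1}{K}\cdot\frac{\D_{\G}[g_i]}{\D_{\G}^{em}[g_i]}$, since Eq.~(\ref{eq: key}) immediately yields the pointwise unbiasedness $\E_{g_i \sim \D_{\G}^{em}}\big[\sum_{i=1}^K w_i f_T(x,y,g_i)\big]=F_{f_T,\D_\G}(x,y)$ for every $(x,y)$. Let $\hat F(x,y) \define \sum_i w_i f_T(x,y,g_i)$ and think of the target quantity $\Psi \define \|\hat F - F_{f_T,\D_\G}\|$ as a real-valued random variable on the product space $(\D_{\G}^{em})^{\otimes K}$. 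The plan is to control $\Psi$ by (i) bounding $\E[\Psi]$ via a variance calculation driven by the Rényi divergence, and (ii) bounding the fluctuation $\Psi - \E[\Psi]$ by a bounded-differences concentration inequality. The split $\epsilon=\epsilon_1+\epsilon_2$ in the lemma statement is precisely the allocation of error budget between these two contributions, and the two terms inside $g(\epsilon_1,\epsilon_2)$ correspond to the two sufficient conditions on $K$.

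\textbf{Step 1 (expected squared error).} Since $\D_{\G}[g]/\D_{\G}^{em}[g] \le e^{D_\infty}$, each summand is bounded by $e^{D_\infty}\max f_T / K$ in absolute value. By Fubini and independence of the $g_i$'s,
\begin{align*}
\E\big[\|\hat F - F_{f_T,\D_\G}\|^2\big]
= \int \mathrm{Var}\big(\hat F(x,y)\big)\,d\D
\le \frac{1}{K}\int \E_{g\sim \D_{\G}^{em}}\!\Big[\Big(\tfrac{\D_{\G}[g]}{\D_{\G}^{em}[g]}\Big)^{\!2}f_T(x,y,g)^2\Big]d\D
\le \frac{e^{2D_\infty}\max f_T^2}{K}.
\end{align*}
Jensen's inequality then gives $\E[\Psi]\le e^{D_\infty}\max f_T/\sqrt{K}$, so requiring the right-hand side to be at most $\epsilon_1$ forces $K\ge (1/\epsilon_1^2)\cdot e^{2D_\infty}\max f_T^2$, matching the first entry of $g(\epsilon_1,\epsilon_2)$.

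\textbf{Step 2 (concentration of $\Psi$).} View $\Psi$ as a function $\Psi(g_1,\dots,g_K)$. Replacing a single $g_i$ by $g_i'$ changes $\hat F$ only through the $i$-th term, and by the triangle inequality for $\|\cdot\|$ the change in $\Psi$ is at most $\|w_i f_T(\cdot,\cdot,g_i) - w_i f_T(\cdot,\cdot,g_i')\|\le \tfrac{2e^{D_\infty}\max f_T}{K}$ (using that $\D$ is a probability measure on $\X_T\times\Y_T$). McDiarmid's inequality then yields
\begin{align*}
\Pr\big[\Psi \ge \E[\Psi] + \epsilon_2\big] \le \exp\Big(-\frac{K\,\epsilon_2^2}{2\,e^{2D_\infty}\max f_T^2}\Big),
\end{align*}
so setting the right-hand side to $\delta$ gives the second threshold $K\ge (1/\epsilon_2^2)\ln(1/\delta)\cdot e^{2D_\infty}\max f_T^2$. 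Combining with Step~1, when $K$ exceeds $g(\epsilon_1,\epsilon_2)\cdot e^{2D_\infty}\max f_T^2$ one has $\Psi\le \epsilon_1+\epsilon_2=\epsilon$ with probability at least $1-\delta$. Optimizing over admissible $(\epsilon_1,\epsilon_2)$ gives the stated $(\epsilon_1^*,\epsilon_2^*)$.

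\textbf{Anticipated obstacle.} The routine part is the variance bound; the delicate part is turning the pointwise concentration into a concentration of the function-space norm. The cleanest route is the bounded-differences argument above applied to $\Psi$ itself (rather than $\Psi^2$), since the latter gives a $1/\epsilon_2^4$ dependence that would not match the stated rate. One must be careful that the bounded-differences constant scales like $1/K$ (not $1/\sqrt{K}$), which is what makes the exponent in McDiarmid come out to $K\epsilon_2^2$ rather than $\epsilon_2^2$. A secondary subtlety is that $\D$ in the norm definition is arbitrary: the argument goes through as long as $\D$ is a probability measure, which is assumed in the theorem statement.
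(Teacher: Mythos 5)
Your proposal follows essentially the same route as the paper's proof: the same importance-sampling weights $w_i=\frac{1}{K}\cdot\frac{\D_{\G}[g_i]}{\D_{\G}^{em}[g_i]}$, the same split of the error budget into a bound on $\E[\Delta]$ (via the variance/Jensen calculation, giving the $1/\epsilon_1^2$ branch) and a McDiarmid bounded-differences concentration of the norm itself (giving the $\frac{1}{\epsilon_2^2}\ln\frac{1}{\delta}$ branch). The only discrepancies are immaterial constants (your triangle-inequality bounded-difference constant $2/K$ versus the paper's $\sqrt{2}/K$), so the argument is correct and matches the paper.
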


\begin{proof}
We prove this lemma by showing that the desired $w_i$ is $\frac{\D_{\G}^{em}[g_i]}{K \D_{\G}[g_i]}$. Let us define $h_{\{g_i\}}(x,y)\define \sum_{i} \frac{\D_{\G}^{em}[g_i]}{K \D_{\G}[g_i]} f_T(x,y,g_i)$ and 
{\small\begin{align}
\Delta_{\{g_i\}}\define \sqrt{\int_{\X_T \times \Y_T} \Big(\E_{g \sim  \D_{\G}} [f_T(x,y,g)] - h_{\{g_i\}}(x,y)\Big)^2 d \D }.
\end{align}}%
Through elementary calculation, the expectation of $\Delta_{\{g_i\}}$ can be bounded as follows. 
\begin{align*}
    &\E_{g_i \sim \D_{\G}^{em}} [\Delta_{\{g_i\}}] \\
    \leq &\frac{1}{\sqrt{K}}\max_{g} \frac{\D_{\G}^{em}[g]}{\D_{\G}[g]}\cdot \max f_T(x,y,g) \\
    =& \frac{e^{D_{\infty}}}{\sqrt{K}}  \cdot \max f_T(x,y,g) \leq \epsilon_1^*.
\end{align*}
In addition, we noticed that for any two collections $\{g_i^1\}$ and $\{g_i^2\}$ that differ by at most one element, the change in $\Delta_{\{g_i\}}$ can be bounded by 
\begin{align*}
    &|\Delta_{\{g_i^1\}}-\Delta_{\{g_i^2\}}|  \\
    =&|\norm{ \E_{g \sim  \D_{\G}} [f_T(x,y,g)] - h_{\{g_i^1\}}(x,y) }_{\D^*} \\
    & \hspace{2cm} -\norm{ \E_{g \sim  \D_{\G}} [f_T(x,y,g)] - h_{\{g_i^2\}}(x,y) }_{\D}|\\
    \leq & \norm{h_{\{g_i^1\}}(x,y)- h_{\{g_i^2\}}(x,y)}_{\D}\\
    \leq & \max_{g} \sqrt{\int_{\X \times \Y} 2 \Big(\frac{\D_{\G}^{em}[g]}{K \D_{\G}[g]} f_T(x,y,g) \Big)^2 d \D }\\
    \leq & \frac{\sqrt{2}}{K}\max_{g} \frac{\D_{\G}^{em}[g]}{\D_{\G}[g]}\cdot \max f_T(x,y,g)\\
     = & \frac{\sqrt{2}}{K} \exp(D_{\infty}(\D_{\G}^{em} || \D_{\G}))\cdot \max f_T(x,y,g)
\end{align*}
The following arguments complete the proof.
\begin{align*}
&\Pr_{g_i\sim  \D_{\G}^{em} }[\Delta_{\{g_i\}}\geq \epsilon] \\
= & \Pr_{g_i  \sim \D_{\G}^{em}}[\Delta_{\{g_i\}} - \epsilon_1^* \geq \epsilon_2^*] \\
\leq & \Pr_{g_i \D_{\G}^{em} \sim }[\Delta_{\{g_i\}} - \E_{g_i \sim \D_{\G}^{em}} [\Delta_{\{g_i\}}] \geq \epsilon_2^*]\\
\leq&\exp\Big(\frac{-2K^2 (\epsilon_2^*)^2}{\big(\sqrt{2}\max_{g} \frac{\D_{\G}^{em}[g]}{\D_{\G}[g]}\cdot \max f_T \big)^2}\Big) \leq \delta
\end{align*}
\end{proof}

\subsection{Proof of Theorem \ref{theorem: consistency}}


Let $y_{x}^*$ be the optimal solution to $\min_{y \in \Y_T} F_{f_T,\D}(x,y)$, and $y_{x}^{\beta}$ be the $\beta$-approximation to $\min_{y \in \Y_T} h_{\vec{w},\{g_i\}}(x,y)$ . Suppose that for each $x$ we have 
\begin{align*}
\big|h_{\vec{w},\{g_i\}}(x,y_{x}^*)-F_{f_T,\D}(x,y_{x}^*)\big|\leq \epsilon \cdot F_{f_T,\D}(x,y_{x}^*)    
\end{align*}
and
\begin{align*}
\big|h_{\vec{w},\{g_i\}}(x,y_{x}^{\beta})-F_{f_T,\D}(x,y_{x}^{\beta}) \big|\leq \epsilon \cdot F_{f_T,\D}(x,y_{x}^{\beta}).    
\end{align*}
This implies that
\begin{align*}
 (1+\epsilon)F_{f_T,\D}(x,y_{x}^*)   \geq & h_{\vec{w},\{g_i\}}(x,y_{x}^*) \\
\geq & \frac{1}{\beta} h_{\vec{w},\{g_i\}}(x,y_{x}^{\beta})  \\
\geq & \frac{1-\epsilon}{\beta} F_{f_T,\D}(x,y_{x}^{\beta}).
\end{align*}
Therefore, Theorem \ref{theorem: consistency} follows directly from the following lemma.
\begin{lemma}
\label{lemma: pointwise}
With probability at least $1-\delta$ over the sampling of $\{g_1,...,g_K\}$ from $\D_{\G}^{em}$, there exists $\vec{w}$ such that for each $x \in \X_T$ and $y \in \Y_{T}$, we have 
\begin{align}
|h_{\vec{w},\{g_i\}}(x,y)-F_{f_T,\D_{\G}}(x,y)|\leq \epsilon \cdot F_{f_T,\D}(x,y)    
\end{align}
provided that $K$ is no less than
\begin{align*}
    \frac{2(C_1+\epsilon)(\ln 2 + \ln |\X_T| +\ln |\Y_T| -\ln \delta)}{C_2 \epsilon^2},
\end{align*}
where $C_1 =\max_{x,y} F_{f_T,\D_{\G}}(x,y) \cdot \exp(D_{\infty})$, $C_2 = \min_x F_{f_T,\D}(x,y_x^*)$, and  $D_{\infty} = \max_{g} \frac{\D_{\G}^{em}[g]}{\D_{\G}[g]}$.
\end{lemma}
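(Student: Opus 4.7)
}
The plan is to exhibit a concrete choice of $\vec{w}$ via importance sampling, apply a scalar concentration inequality pointwise to each $(x,y)$, and then close the argument with a union bound over $\X_T \times \Y_T$. Concretely, I would set
\[
 w_i \;\define\; \frac{\D_{\G}[g_i]}{K\,\D_{\G}^{em}[g_i]},
\]
so that for every fixed $(x,y)$ the score
\[
 h_{\vec{w},\{g_i\}}(x,y) \;=\; \frac{1}{K}\sum_{i=1}^{K} \frac{\D_{\G}[g_i]}{\D_{\G}^{em}[g_i]}\, f_T(x,y,g_i)
\]
is the empirical mean of $K$ iid copies of the random variable
$Z(x,y) \define \frac{\D_{\G}[g]}{\D_{\G}^{em}[g]}\,f_T(x,y,g)$ with $g \sim \D_{\G}^{em}$; by the standard importance-sampling identity, $\E[Z(x,y)] = F_{f_T,\D_{\G}}(x,y)$.

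Next, I would bound $Z(x,y)$ in $[0, M]$ with $M \leq \exp(D_\infty)\max_g f_T(x,y,g) \leq C_1$, using the definition $D_\infty = \max_g \D_{\G}^{em}[g]/\D_{\G}[g]$ (after flipping the ratio as the definition in the lemma intends). A Bernstein/Chernoff-type inequality for bounded nonnegative variables with mean $\mu = F_{f_T,\D_{\G}}(x,y)$ then yields, for any $t>0$,
\[
 \Pr\Big[\,\big|h_{\vec{w},\{g_i\}}(x,y)-\mu\big| \geq t\,\Big] \;\leq\; 2\exp\!\Big(-\tfrac{K t^2}{2(C_1 + t)\,\mu}\Big).
\]
Setting $t = \epsilon\,\mu$ converts the absolute deviation into a relative one, and using $\mu \geq C_2$ absorbs the dependence on $(x,y)$ into the constant in the denominator, giving a per-pair failure probability of at most $2\exp\!\big(-\tfrac{K C_2 \epsilon^2}{2(C_1+\epsilon)}\big)$.

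Finally I would take a union bound over all $|\X_T|\cdot|\Y_T|$ pairs, demand that the resulting probability be at most $\delta$, and solve for $K$; this yields exactly
\[
 K \;\geq\; \frac{2(C_1 + \epsilon)\bigl(\ln 2 + \ln|\X_T| + \ln|\Y_T| - \ln\delta\bigr)}{C_2\,\epsilon^2},
\]
as claimed. The main obstacle is bookkeeping rather than any deep step: one must (i) pick the concentration inequality whose tail factor matches the $(C_1+\epsilon)$ in the denominator (a Bennett/Bernstein bound fits, while plain Hoeffding gives $C_1^2$ instead), and (ii) carefully verify that the single choice of $\vec{w}$ above works simultaneously for all $(x,y)$, so that the union bound can be applied on the event rather than on the weights. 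Both checks are straightforward once the importance-sampling reduction is in place.
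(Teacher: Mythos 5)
Your proposal matches the paper's proof essentially step for step: the same importance-sampling weights $w_i = \D_{\G}[g_i]/(K\,\D_{\G}^{em}[g_i])$, the same unbiasedness observation, a pointwise multiplicative Chernoff/Bernstein bound using the envelope $C_1$ and the lower bound $C_2$ on the mean, and a union bound over $\X_T\times\Y_T$. Your two "bookkeeping" caveats (the flipped ratio in the definition of $D_\infty$, and the fact that the single weight vector is independent of $(x,y)$ so the union bound applies to the events) are also the right ones to flag; the paper handles them implicitly.
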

\begin{proof}
 For each collection $\{g_1,...,g_K\}$, let us denote the associated weight as $w_i=\frac{\D_{\G}[g_i]}{K\D_{\G}^{em}[g_i]}$. For each $x \in \X_T$, $y \in \Y_{T}$ and $i$, we notice that $\E_{g_i \sim \D_{\G}^{em}}[w_i f_T(x,y, g_i)] = F_{f_T,\D}(x,y)$. Therefore, with the standard concentration inequalities (e.g., Chernoff bound), we have 
\begin{align*}
&\Pr\Big[|h_{\vec{w},\{g_i\}}(x,y)-F_{f_T,\D}(x,y)|\leq \epsilon \cdot F_{f_T,\D}(x,y)\Big]  \\ 
\leq & 2 \exp(\frac{-K F_{f_T,\D}(x,y) (\epsilon/C_1)^2}{C_1(2+\epsilon/C_1)}) \\
\leq &2 \exp(\frac{-K \min_x F_{f_T,\D}(x,y_x^*) (\epsilon/C_1)^2}{C_1(2+\epsilon/C_1)}) \\
=&  2 \exp(\frac{-K C_2 (\epsilon/C_1)^2}{C_1(2+\epsilon/C_1)})
\end{align*}
By the selection of $K$, the above probability is no larger than $\frac{\delta}{|\X_T|\cdot |\Y_T|}$. Taking the union bound over $\X_T$ and $\Y_T$, the above result suggests that $w_i$ is the sought-after weight.
   
\end{proof}

\begin{algorithm}[tp!]
	\caption{One-slack Cutting Plane}\label{alg: cutting}
	\begin{algorithmic}[1]
		\State \textbf{Input:} $(M_1,P_1),...,(M_n,P_n),C,\epsilon, \alpha$;
		\State $\mathcal{W}\leftarrow \emptyset$;
		\Repeat
		\State Solve the QP over constraints $\mathcal{W}$:
		\begin{align*}		\hspace{1cm}(\vec{w},\xi)\leftarrow &  \argmin   \frac{1}{2} \norm{\vec{w}}^2_2+C\cdot \xi_i  \\
		& \text{s.t.}   \forall (\overline{P}_1,...\overline{P}_n) \in \mathcal{W}:\\ 
		&   \frac{1}{n}\vec{w}{\tran} \sum_{i=1}^{n} \big(\vec{G}(M_i,P_i)-  \vec{G}(M_i,\overline{P}_i) \big) \\
  & \hspace{0.5cm}\geq  \frac{\alpha}{n}\cdot \sum_{i=1}^{n} L(P_i,\overline{P}_i)-\xi;\\ 
		&    \vec{w} \geq 0.
		\end{align*}
		\For {$i=1,...,n$}
		\State $\hat{P}_i \leftarrow \argmin_{|S|\leq k}  -\vec{w}{\tran} \vec{G}(M_i,S)$; 
		\EndFor 
		\State $\mathcal{W}\leftarrow \mathcal{W} \cup \{(\hat{P}_1,...\hat{P}_n)\}$;
		\Until $\frac{\alpha}{n}\cdot \sum_{i=1}^{n} L(P_i,\hat{P}_i)-\frac{1}{n}\vec{w}{\tran} \sum_{i=1}^{n} \big(\vec{G}(M_i,P_i)-  \vec{G}(M_i,\hat{P_i} \big)\leq \xi+\epsilon$
	\end{algorithmic}
	\vspace{-1mm}
\end{algorithm}

\section{One-slack Cutting Plane}
\label{sup: cutting}
For the optimization problems in Eqs. (\ref{eq: qrts_p_training}) and (\ref{eq: qrts_d_training}), we adopt the one-slack cutting plane algorithm \cite{joachims2009cutting} for the large-margin training, which is implemented through PyStruct \cite{muller2014pystruct}. A generic pseudo-code is given in  Algorithm \ref{alg: cutting}, where the kernel function $\vec{G}$ corresponds to $h_{\vec{w},\{g_i\}}$ under our context and $L$ is the zero-one loss.

\newpage

\section{Experiments}
\label{sup: exp}

\subsection{Settings}
More details about the adopted graphs can be found in Table \ref{table: data}. Our experiment was executed on Amazon EC2 C5 Instance with 96 virtual CPUs and a memory of 192G. All the experiments were able to finish within a reasonable time, although denser graphs may take longer time.

		


\begin{table}[t]
	\renewcommand{\arraystretch}{1.2} 
	\small
	\caption{\textbf{Datasets.} }
	\centering
	\label{table: data}
	\begin{tabular}{  @{}  l   @{\hspace{1mm}}   l @{\hspace{1mm}}  l @{\hspace{1mm}}  l@{\hspace{0mm}} l @{\hspace{0mm}}  l @{\hspace{2mm}}  l }
  & Nodes &    Edges  & References   	\\
		
		\midrule
   Kro &    1024  & 2655  & \cite{leskovec2010kronecker}	\\
		Col & 512& 	1000		&  \cite{demetrescu2008implementation}   \\
		
		 BA & 1000  & 	  500	&  \cite{barabasi1999emergence}    \\

   WS-dense & 1000  & 	 10000	&  \cite{watts1998collective}      \\

   WS-sparse &  1000 & 	 5000	&  \cite{watts1998collective}     \\

		\bottomrule
	\end{tabular}
\end{table}

\subsection{Results on WS}
The results on WS-sparse and WS-dense can be found in Tables \ref{table: WS_sparse} and \ref{table: WS_dense}. Overall, we have similar observations mentioned in the main paper. In particular, for Path-to-Tree, the models can successfully learn from large datasets; in addition, QRTS-P is still comparable to QRTS-PD when the training size is large, which is different from most other cases. 

\subsection{More visualizations}
More visualizations can be found in Figures \ref{fig: more_path_0}-\ref{fig: more_tree_4}. In general, we see that with a better weight being learned, QRTS-P can find solutions with lower costs even with more edges.

\subsection{Minor Observations}
\begin{itemize}
\item \textbf{Computation burdens should not be overlooked.} While a larger sample set or a higher model dimension can -- in theory -- result in better performance, they also increase the computation burden in the larger-margin training, thereby not necessarily improving the generalization effect. Indeed, with overly many samples, QRTS-P can even result in larger ratios (e.g., Kro and BA in Table \ref{table: TP_1}). Similar arguments apply to Path-to-Tree on WS-dense (Table \ref{table: WS_dense} in Appendix \ref{sup: exp}), where we observe that increasing the model dimension in QRTS-PD can hurt the performance in some cases. 

\item \textbf{Simple baselines can occasionally produce good solutions.} While Unit is in general less powerful than the proposed methods, it does perform well in certain corner cases. For example, for Tree-to-Path on BA, Unit has a very good performance when the sample size is $60$. One plausible reason is that BA is relatively small and thus the graph structure has a higher influence on the optimal decision. 

    \item \textbf{The scale of the ratio varies over datasets.} We can see that the absolute scale of the ratio may vary drastically over different datasets, which is primarily due to the graph density and connectivity pattern. For the cases with large ratios (e.g., Path-to-Tree on WS-dense), the efficacy of QRTS-PD is very significant, and additionally, increasing the sample size is more useful than increasing the model dimension. We have similar observations on WS-sparse (Table \ref{table: WS_sparse}).
    \item \textbf{Importance learning is necessary.} The core part of QRTS-P utilizes the cutting-plane algorithm for solving Eq. (\ref{eq: qrts_p_training}), where the parameter $\vec{w}$ is randomly initialized and then iteratively updated through the alternation between convex optimization and loss-augmented inference. To justify that such a training process is effective, we compare the generalization ratio resulted by the $\vec{w}$ before and after the training process. This can rule out the scenario where a random $\vec{w}$ can result in good solutions. The results on Kro are given in Table \ref{table: further}; it is clear that the training of QRTS-P can evidently increase the generalization performance.
\end{itemize}

\begin{table*}[!t]
	\renewcommand{\arraystretch}{1.2} 
	\caption{\textbf{Results on WS-sparse.} The top results are highlighted.}
	\label{table: WS_sparse}
	\centering
	\scalebox{1.05}{\tiny
		\begin{tabular}{@{}  l@{\hspace{2mm}} r@{\hspace{3mm}} r @{\hspace{1mm}} r @{\hspace{1mm}}r @{\hspace{2mm}} r @{\hspace{1mm}} r @{\hspace{1mm}} r @{\hspace{2mm}} r @{\hspace{1mm}} r @{\hspace{1mm}} r    @{}}
			\toprule

			\multicolumn{2}{@{}l}{\underline{\textbf{Path-to-Tree}}}   & \multicolumn{3}{c}{QRTS-PD$^{-}$} & \multicolumn{3}{c}{QRTS-PD-1} & \multicolumn{3}{@{}c}{QRTS-PD-3} \\
			
			\multirow{1}{*}{\makecell{Train Size}}  & \multirow{1}{*}{QRTS-P} & \multicolumn{1}{c}{60}  &   \multicolumn{1}{c}{240} &\multicolumn{1}{c}{480} &\multicolumn{1}{c}{60}  &   \multicolumn{1}{c}{240} &\multicolumn{1}{c}{480}  & \multicolumn{1}{c}{60}  &   \multicolumn{1}{c}{240} &\multicolumn{1}{c}{480}  \\
			\midrule
			60  & 13E3  & 13E3 & 14E3 & 19E3 & 11E3 & 14E3 & 17E3  & 16E3 & 19E3 & 14E3\\
			
			240 &  7E3 & 6E3 & 5E3 & 7E3  & 8E3 & 6E3 & 7E3 & 5E3 & 7E3 & 8E3\\
			
			2400 &  1E3& \textbf{0.8E3 }& 5E3 & 7E3 &1E3 &\textbf{0.8E3} & 1E3 & 2E3 & \textbf{0.7E3} & \textbf{0.8E3} \\
			& & \multicolumn{3}{c}{Unit: 44E3   } & \multicolumn{3}{c}{Random: 109E3}\\
			
			\midrule
			\midrule

			\multicolumn{2}{@{}l}{\underline{\textbf{Tree-to-Path}}}   & \multicolumn{3}{c}{QRTS-PD$^{-}$} & \multicolumn{3}{c}{QRTS-PD-1} & \multicolumn{3}{@{}c}{QRTS-PD-3} \\
			
			\multirow{1}{*}{\makecell{Train Size}}  & \multirow{1}{*}{QRTS-P} & \multicolumn{1}{c}{60}  &   \multicolumn{1}{c}{240} &\multicolumn{1}{c}{480} &\multicolumn{1}{c}{60}  &   \multicolumn{1}{c}{240} &\multicolumn{1}{c}{480}  & \multicolumn{1}{c}{60}  &   \multicolumn{1}{c}{240} &\multicolumn{1}{c}{480}  \\
			\midrule
			60  & 26 \tiny{(4)} & 26 \tiny{(4.2)} & 25 \tiny{(2.4)} & 22 \tiny{(3.6)} & 23 \tiny{(4.1)} & 24 \tiny{(3.2)} & 22 \tiny{(4.9)} & 18 \tiny{(2.0)} & 17 \tiny{(4.0)} & 19 \tiny{(2.2)}\\
			
			240 &  26 \tiny{(2.1)} & 24 \tiny{(6.1)}& 23 \tiny{(8.1)}& 23 \tiny{(5.8)}& 27 \tiny{(5.6)}& 23 \tiny{(4.2)}& 21 \tiny{(9.3)}& 20 \tiny{(3.9)}& 20 \tiny{(5.2)}& 18 \tiny{(2.5)}\\
			
			2400 & 17 \tiny{(3.7)}& 19 \tiny{(4.1)}& \textbf{15 \tiny{(3.1)}}& \textbf{14 \tiny{(2.1)}}& 16 \tiny{(1.3)}& 18 \tiny{(3.9)} & 15 \tiny{(3.6)} & \textbf{14 \tiny{(1.8)}}& 17 \tiny{(1.2)}& \textbf{16\tiny{(1.3)}}\\
			
			& & \multicolumn{3}{c}{Unit: 25.1 \tiny{(6.3)}  } & \multicolumn{3}{c}{Random: 34.3 \tiny{(4.1)}}\\

			\bottomrule

		\end{tabular}%
	} 
\end{table*}

\begin{table*}[!t]
	\renewcommand{\arraystretch}{1.2} 
	\caption{\textbf{Results on WS-dense.} The top results are highlighted.}
	\label{table: WS_dense}
	\centering
	\scalebox{1.05}{\tiny
		\begin{tabular}{@{}  l@{\hspace{2mm}} r@{\hspace{3mm}} r @{\hspace{1mm}} r @{\hspace{1mm}}r @{\hspace{2mm}} r @{\hspace{1mm}} r @{\hspace{1mm}} r @{\hspace{2mm}} r @{\hspace{1mm}} r @{\hspace{1mm}} r    @{}}
			\toprule

			\multicolumn{2}{@{}l}{\underline{\textbf{Path-to-Tree}}}   & \multicolumn{3}{c}{QRTS-PD$^{-}$} & \multicolumn{3}{c}{QRTS-PD-1} & \multicolumn{3}{@{}c}{QRTS-PD-3} \\
			
			\multirow{1}{*}{\makecell{Train Size}}  & \multirow{1}{*}{QRTS-P} & \multicolumn{1}{c}{60}  &   \multicolumn{1}{c}{240} &\multicolumn{1}{c}{480} &\multicolumn{1}{c}{60}  &   \multicolumn{1}{c}{240} &\multicolumn{1}{c}{480}  & \multicolumn{1}{c}{60}  &   \multicolumn{1}{c}{240} &\multicolumn{1}{c}{480}  \\
			\midrule
			60  &  55E3  & 62E3  &  55E3  & 53E  & 67E3 & 42E3 &83E3 & 54E3 & 47E3 &58E3\\
                240 &   28E3   & 19E3& 21E3 & 22E3 & 18E3 &19E3 & 21E3 &26E3 & 14E3 & 30E3 \\
                2400 &   20E3& \textbf{2E3}  & 3E3  & \textbf{2E3}  &10E3 & \textbf{1E3} & 15E3 & \textbf{0.5E3} & \textbf{0.6E3} & \textbf{0.1E3}\\
			& & \multicolumn{3}{c}{Unit: 130E3  \tiny{(5E3)}  } & \multicolumn{3}{c}{Rand: 685E3  \tiny{(3E3)}}\\
			
			\midrule
			\midrule

			\multicolumn{2}{@{}l}{\underline{\textbf{Tree-to-Path}}}   & \multicolumn{3}{c}{QRTS-PD$^{-}$} & \multicolumn{3}{c}{QRTS-PD-1} & \multicolumn{3}{@{}c}{QRTS-PD-3} \\
			
			\multirow{1}{*}{\makecell{Train Size}}  & \multirow{1}{*}{QRTS-P} & \multicolumn{1}{c}{60}  &   \multicolumn{1}{c}{240} &\multicolumn{1}{c}{480} &\multicolumn{1}{c}{60}  &   \multicolumn{1}{c}{240} &\multicolumn{1}{c}{480}  & \multicolumn{1}{c}{60}  &   \multicolumn{1}{c}{240} &\multicolumn{1}{c}{480}  \\
			\midrule
			60  &  143 \tiny{(23)} & 132 \tiny{(18)} &140 \tiny{(15)}  & 113 \tiny{(11)}  & 149 \tiny{(13)}& 136 \tiny{(10)} & 124 \tiny{(9.0)} & 113 \tiny{(16)}& 141 \tiny{(10)} & 136 \tiny{(11)}\\

   240 &  80 \tiny{(12)} & 90 \tiny{(13)} &62 \tiny{(8.6)} & 76 \tiny{(7.7)}  & 91 \tiny{(7.0)}& 80 \tiny{(6.0)}& 83 \tiny{(9.0)}& 86 \tiny{(5.0)} &72 \tiny{(6.0)} & 92 \tiny{(8.0)}\\

   2400 &    \textbf{57} \tiny{(6.1)}  & 73 \tiny{(8.2)}  &72 \tiny{(7.3)}  &96 \tiny{(13)} & 86 \tiny{(11)}& 78 \tiny{(12)}& 72 \tiny{(10)} & \textbf{54} \tiny{(8.0)} & \textbf{41} \tiny{(4.0)} & \textbf{45} \tiny{(5.0)}  \\

 & & \multicolumn{3}{c}{Unit: 183 \tiny{(12)}  } & \multicolumn{3}{c}{Random: 583 \tiny{(32)} }\\
			\bottomrule		
		\end{tabular}%
	} 
\end{table*}

\begin{table}[tp!]
	\renewcommand{\arraystretch}{1.1} 
	\caption{\textbf{Results of QRTS-P.} For each training size, the results of five independent experiments are given. Each cell shows the performance ratios before and after training. }
	\centering
	\label{table: further}
	\begin{tabular}{  @{}  l   @{\hspace{3mm}}   l @{\hspace{5mm}}  c @{\hspace{3mm}}  c@{\hspace{3mm}} c @{\hspace{3mm}}  c @{\hspace{3mm}}  c @{\hspace{3mm}} c @{}  }
		\multicolumn{5}{@{}l}{\textbf{Path-to-Tree}} \\ 
		Train size &  &    \#1 &  \#2  &   \#3  &   \#4   & \#5 & \\ 
		\midrule
		\multirow{2}{*}{60} & after & 4.938&5.504&4.500&5.070&4.760&  \\ 
		
		 & before & 	5.007&5.585 &4.796 &5.317 &4.923 \\ 
		\midrule
		 \multirow{2}{*}{480} & after & 2.691&2.849&2.942&2.784&2.754 \\ 
		 & before & 4.757&5.196&5.177&5.292&4.809 \\ 
		 
		\bottomrule
    ~\\
  \multicolumn{5}{@{}l}{\textbf{Tree-to-Path}} \\ 
  Train size &  &    \#1 &  \#2  &   \#3  &   \#4   & \#5 & \\ 
		\midrule
		\multirow{2}{*}{60} & after & 1.382&1.337&1.413&1.438&1.271 \\
		
		 & before & 	1.639&1.645&1.609&1.686&1.555 \\
		\midrule
		 \multirow{2}{*}{480} & after &1.451&1.457&1.356&1.383&1.328\\
		 & before & 1.621&1.724&1.609&1.658&1.670\\
		 
  \bottomrule
	\end{tabular}
\end{table}

\begin{figure}[h]
\centering
\captionsetup[subfloat]{labelfont=scriptsize,textfont=scriptsize,labelformat=empty}
\subfloat[  { [30, 0]} ]{\label{fig: 0_30_0}\includegraphics[width=0.10\textwidth]{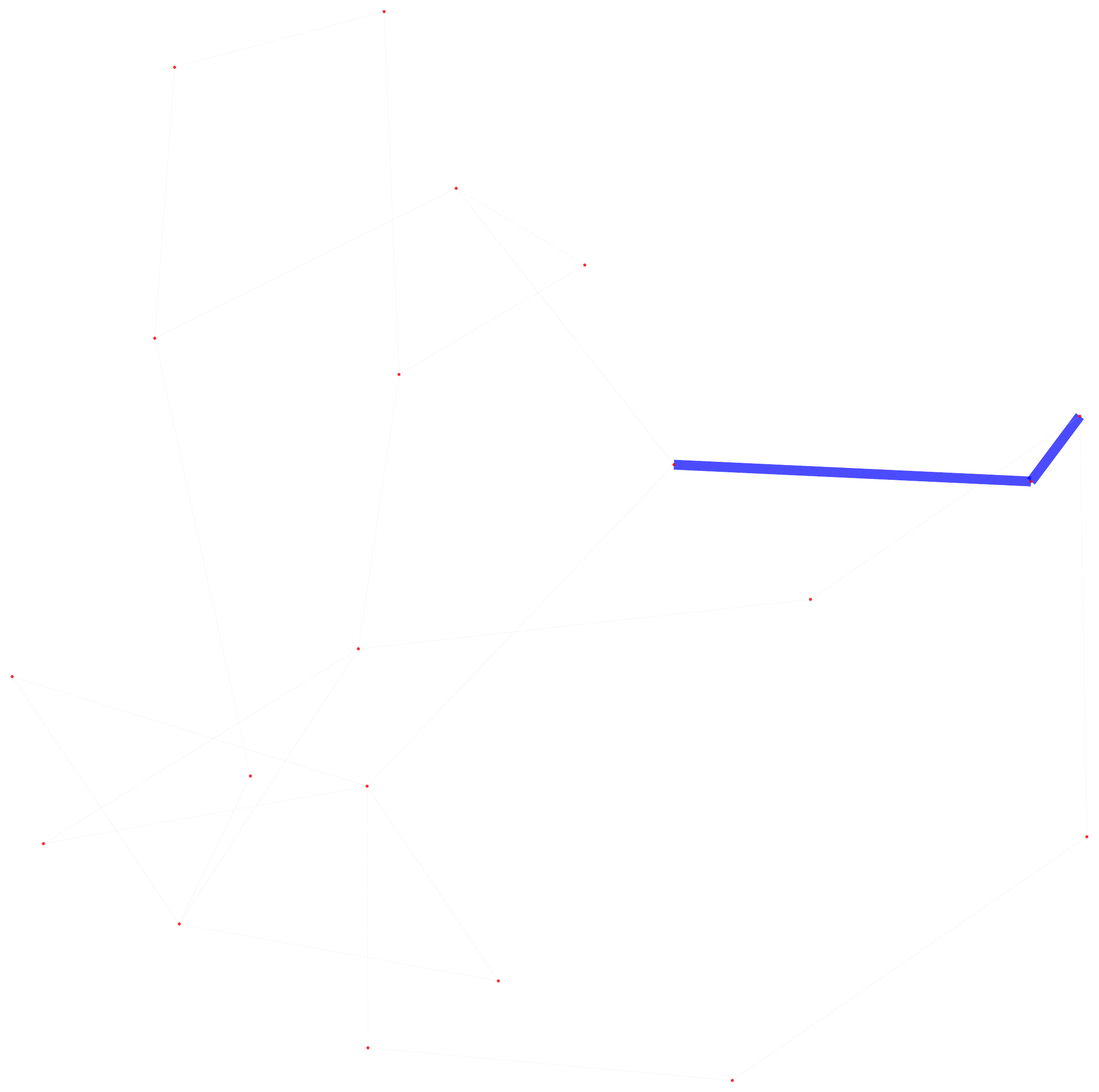}}\hspace{1mm}
\subfloat[  { [30, 1]} ]{\label{fig: 0_30_1}\includegraphics[width=0.10\textwidth]{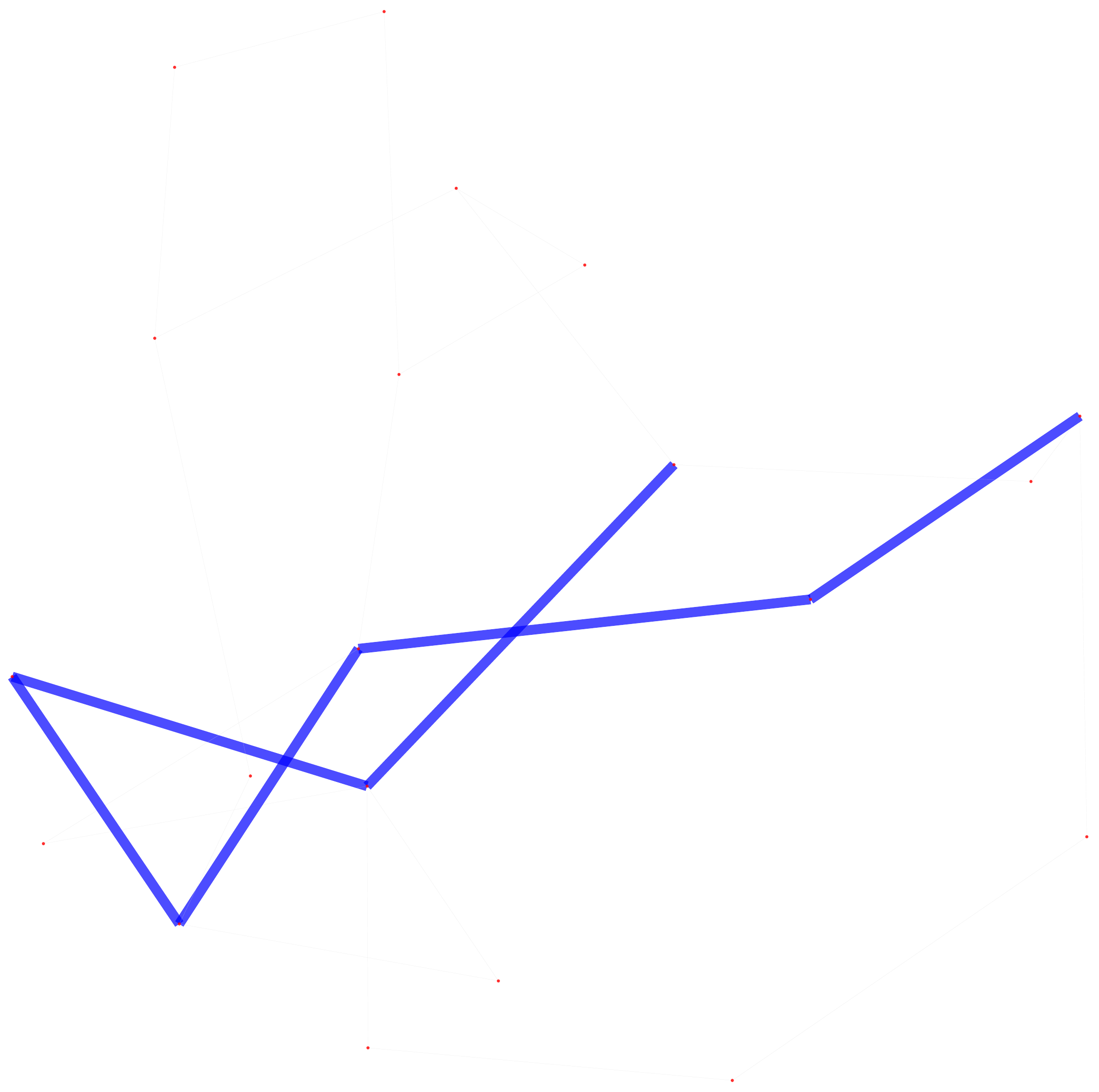}}\hspace{1mm}
\subfloat[  { [30, 4]} ]{\label{fig: 0_30_4}\includegraphics[width=0.10\textwidth]{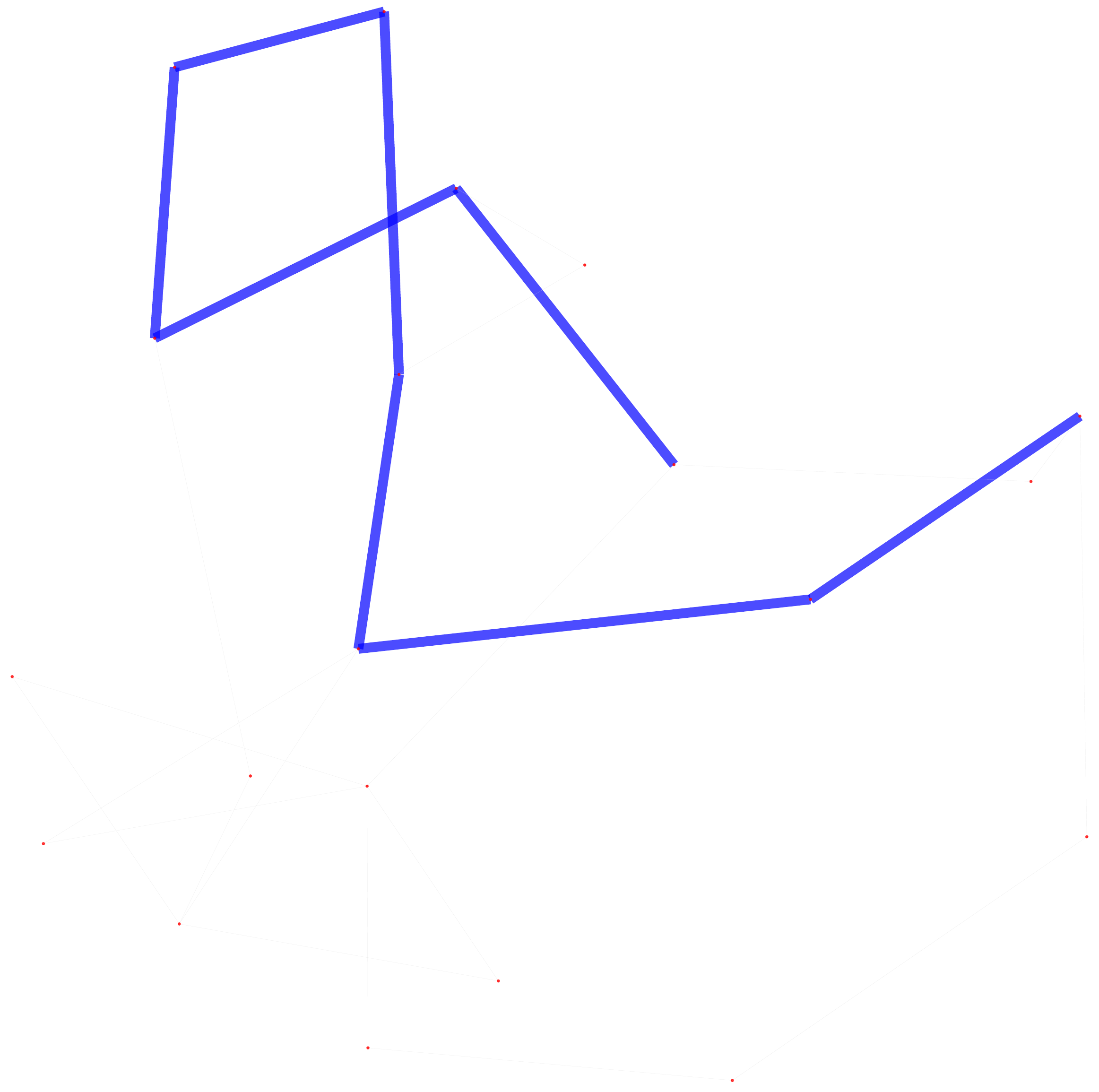}}\hspace{1mm}
\subfloat[  Target ]{\label{fig: 0_30_t}\includegraphics[width=0.10\textwidth]{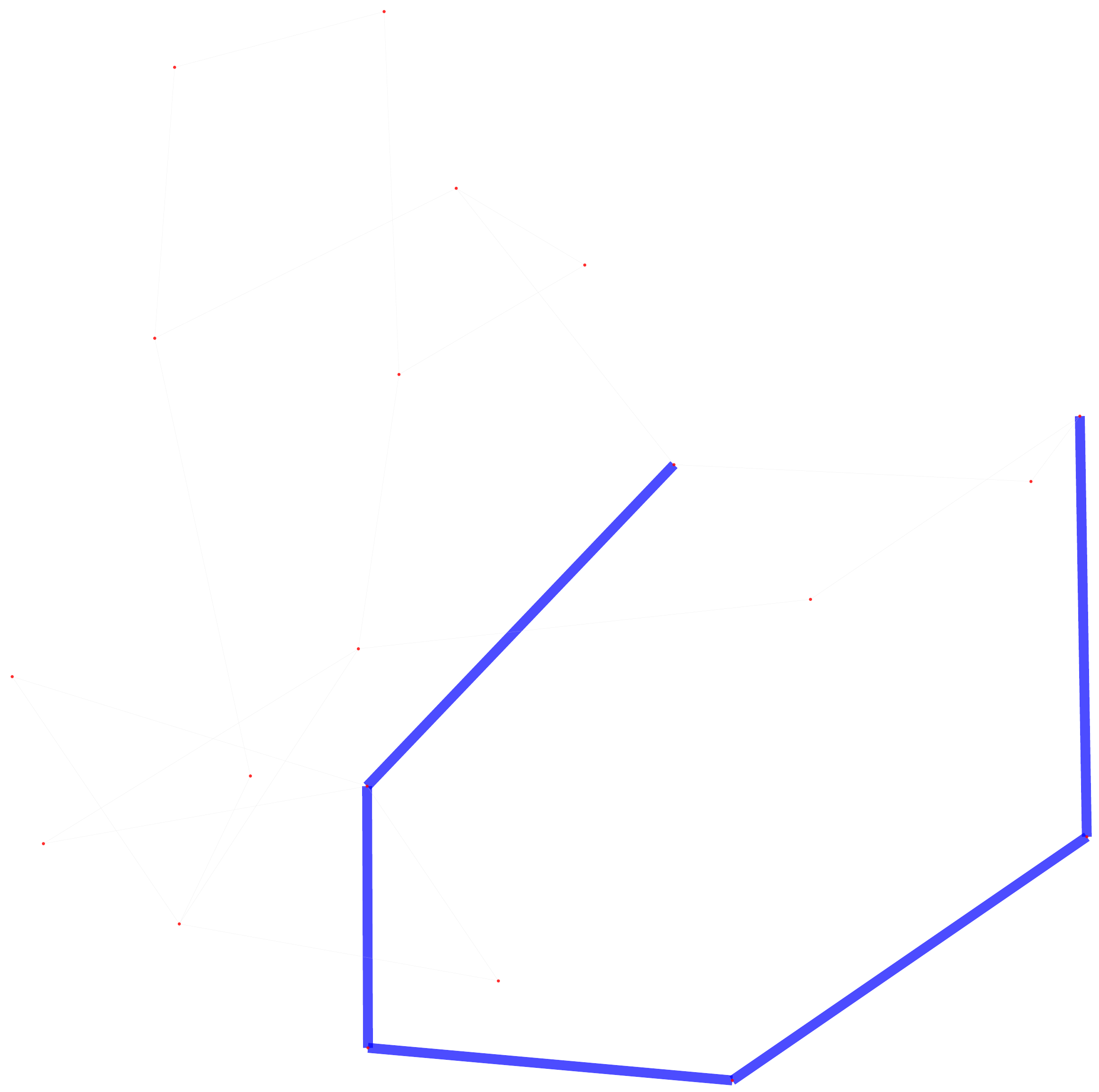}}\hspace{0mm}

\subfloat[  { [60, 0]} ]{\label{fig: 0_60_0}\includegraphics[width=0.10\textwidth]{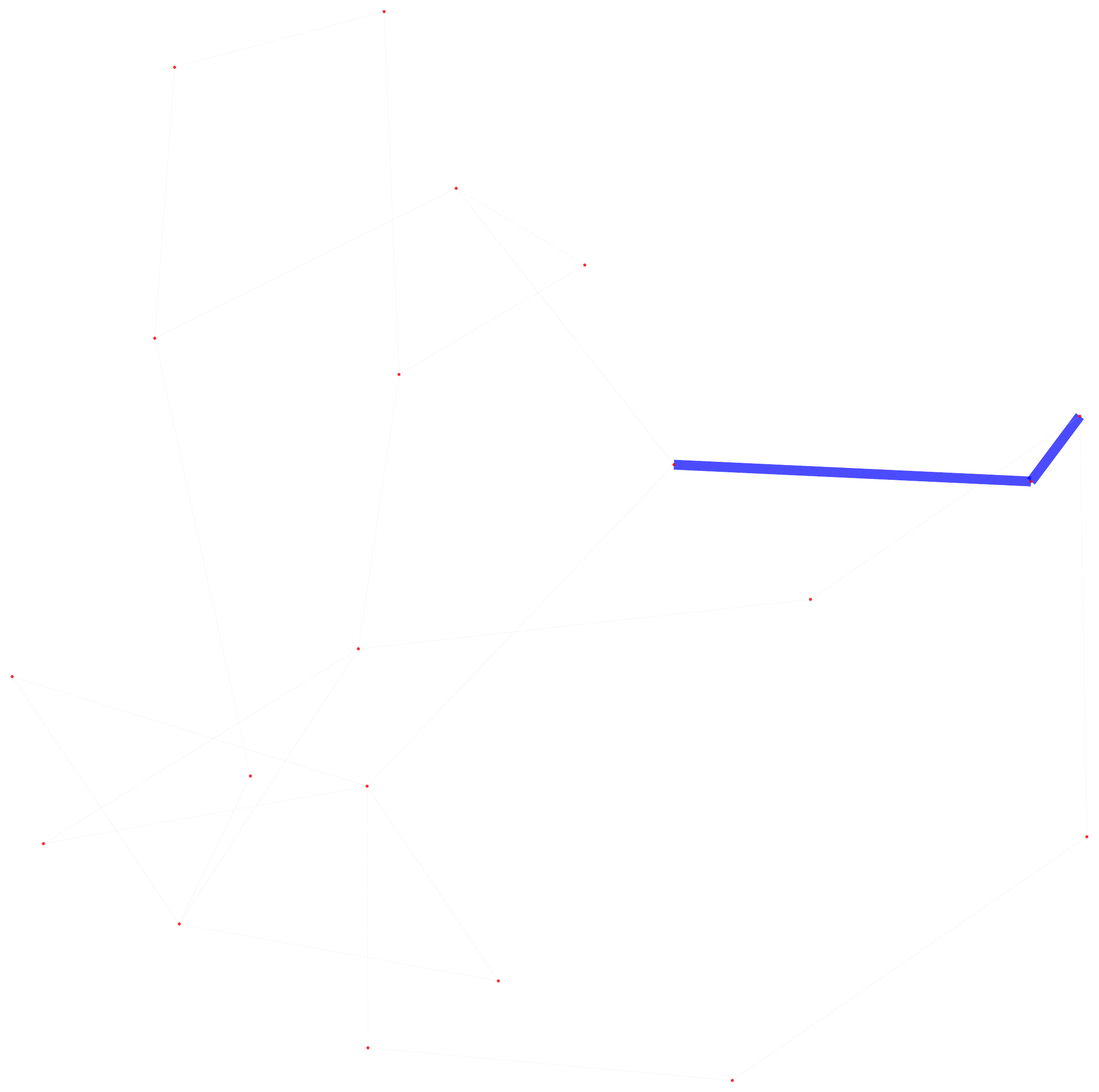}}\hspace{1mm}
\subfloat[  { [60, 1]} ]{\label{fig: 0_60_1}\includegraphics[width=0.10\textwidth]{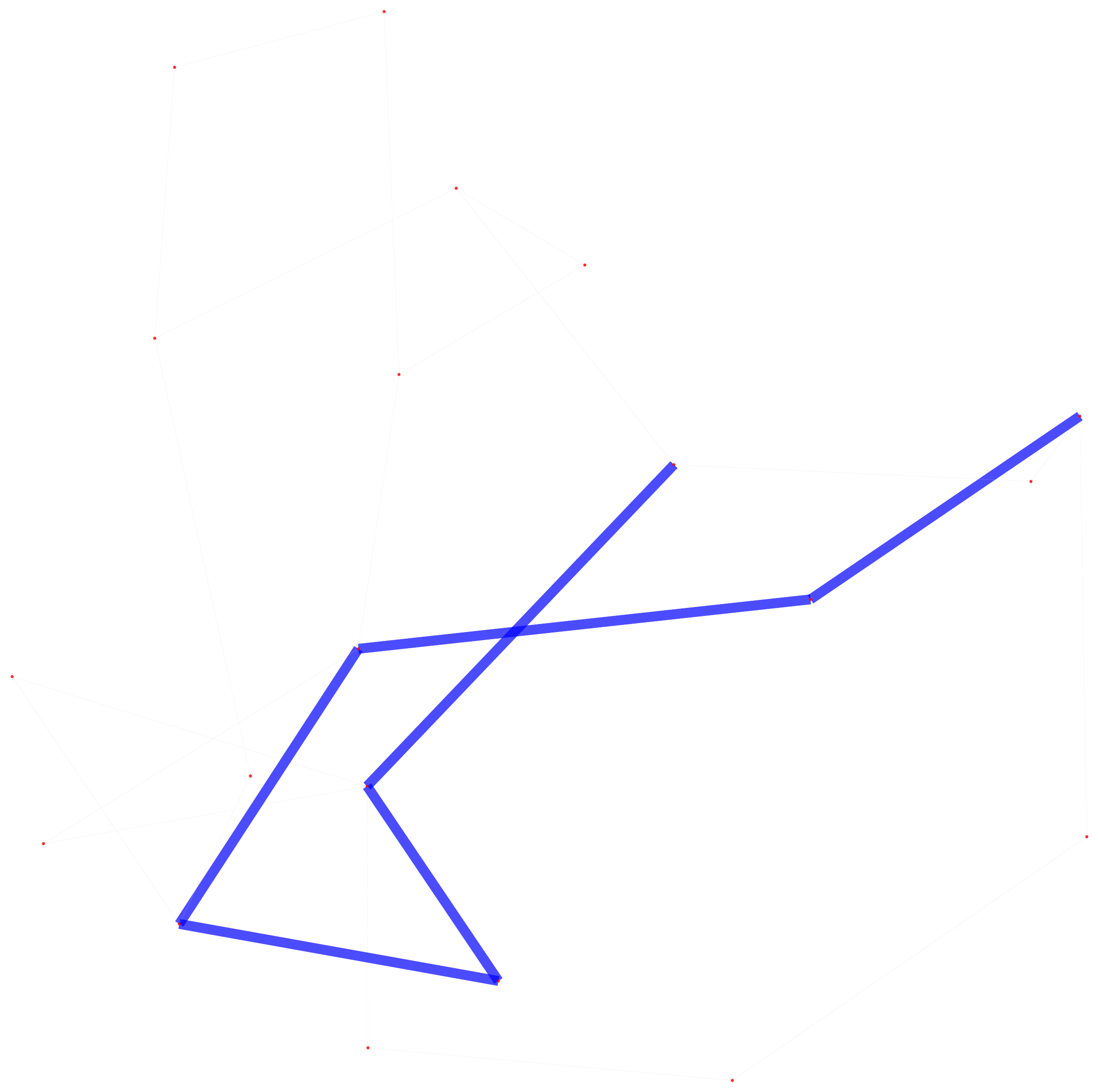}}\hspace{1mm}
\subfloat[  { [60, 4]} ]{\label{fig: 0_60_4}\includegraphics[width=0.10\textwidth]{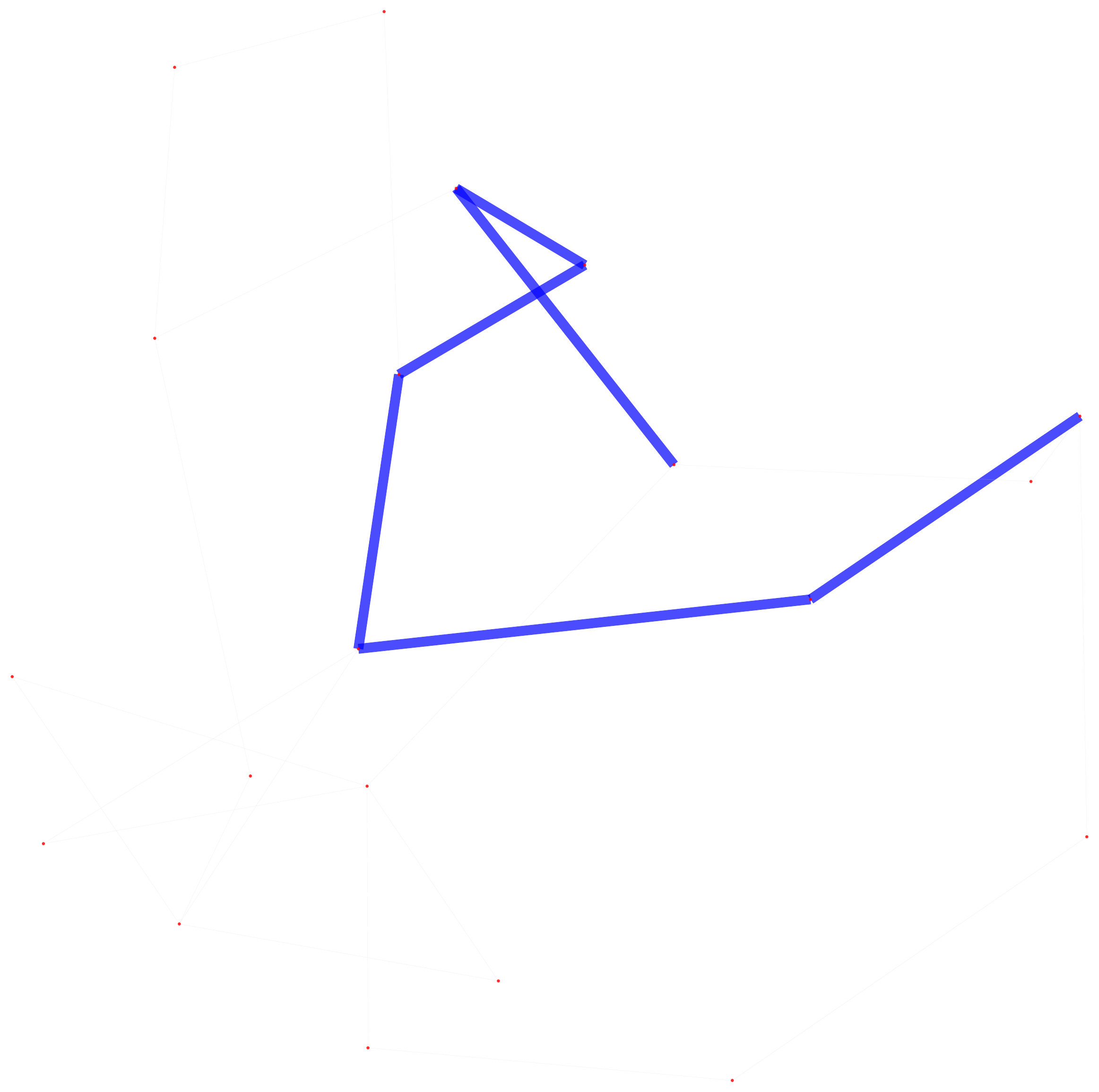}}\hspace{1mm}
\subfloat[  Target ]{\label{fig: 0_60_t}\includegraphics[width=0.10\textwidth]{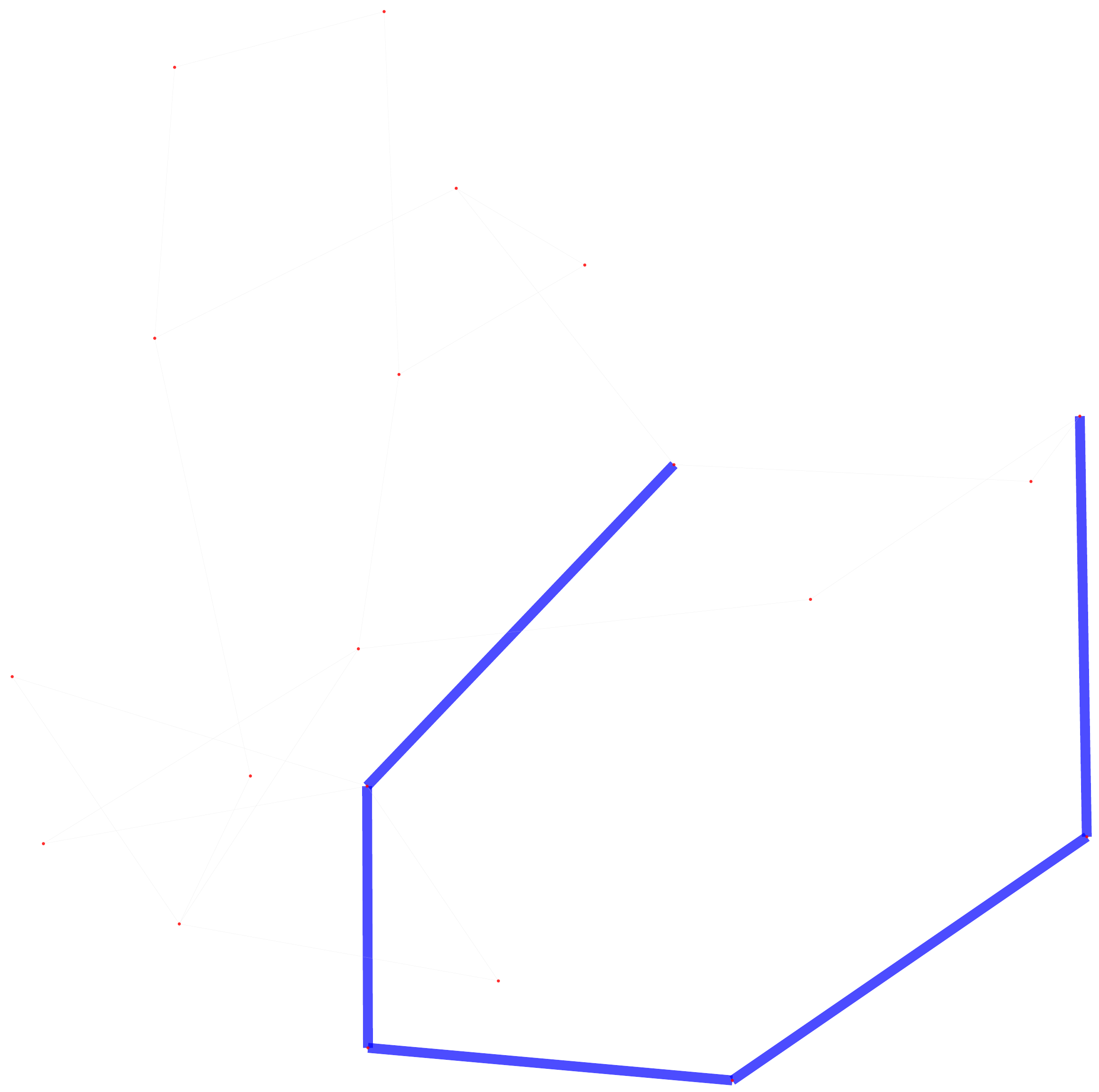}}\hspace{0mm}

\subfloat[  { [240, 0]} ]{\label{fig: 0_240_0}\includegraphics[width=0.10\textwidth]{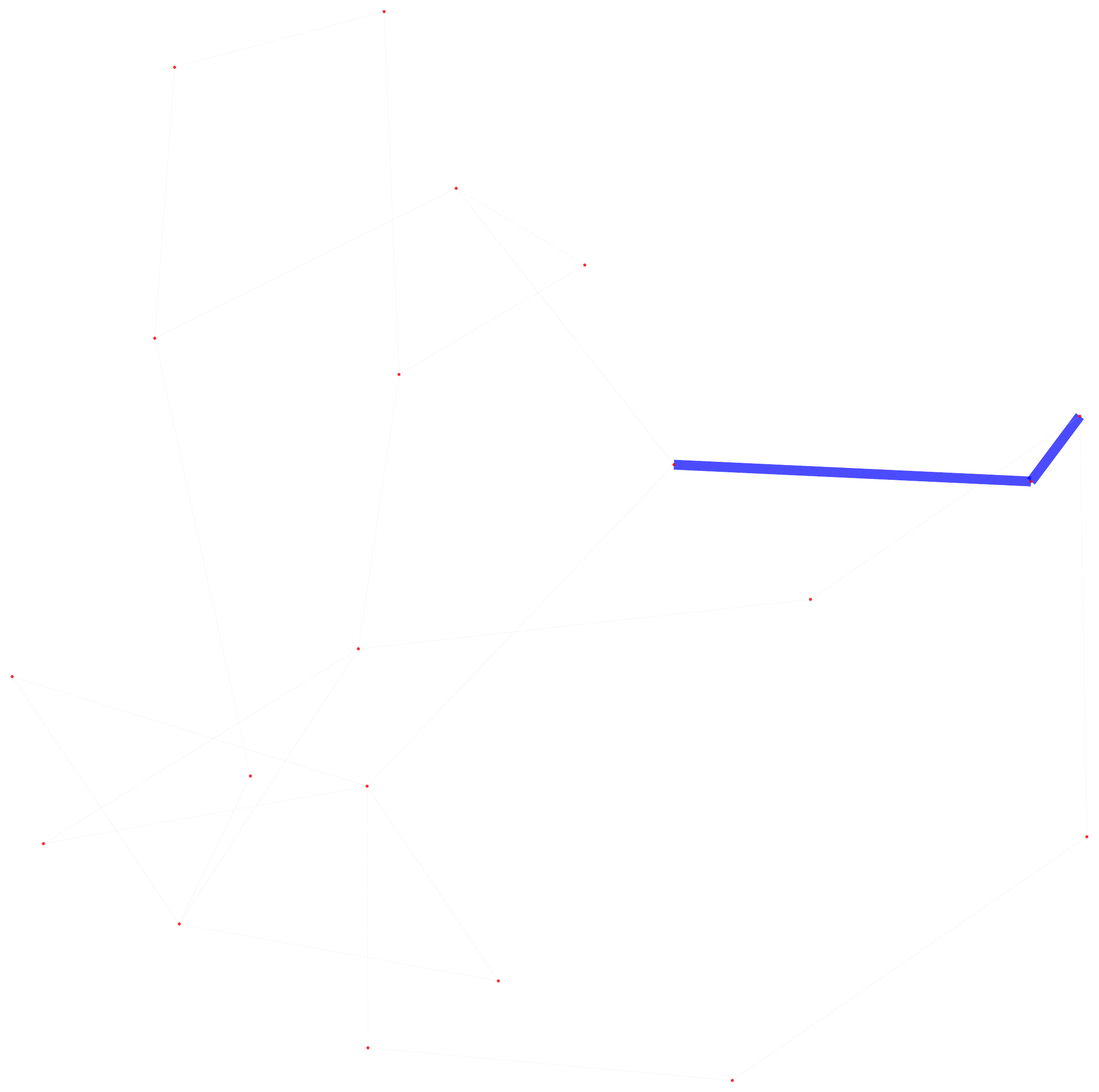}}\hspace{1mm}
\subfloat[  { [240, 1]} ]{\label{fig: 0_240_1}\includegraphics[width=0.10\textwidth]{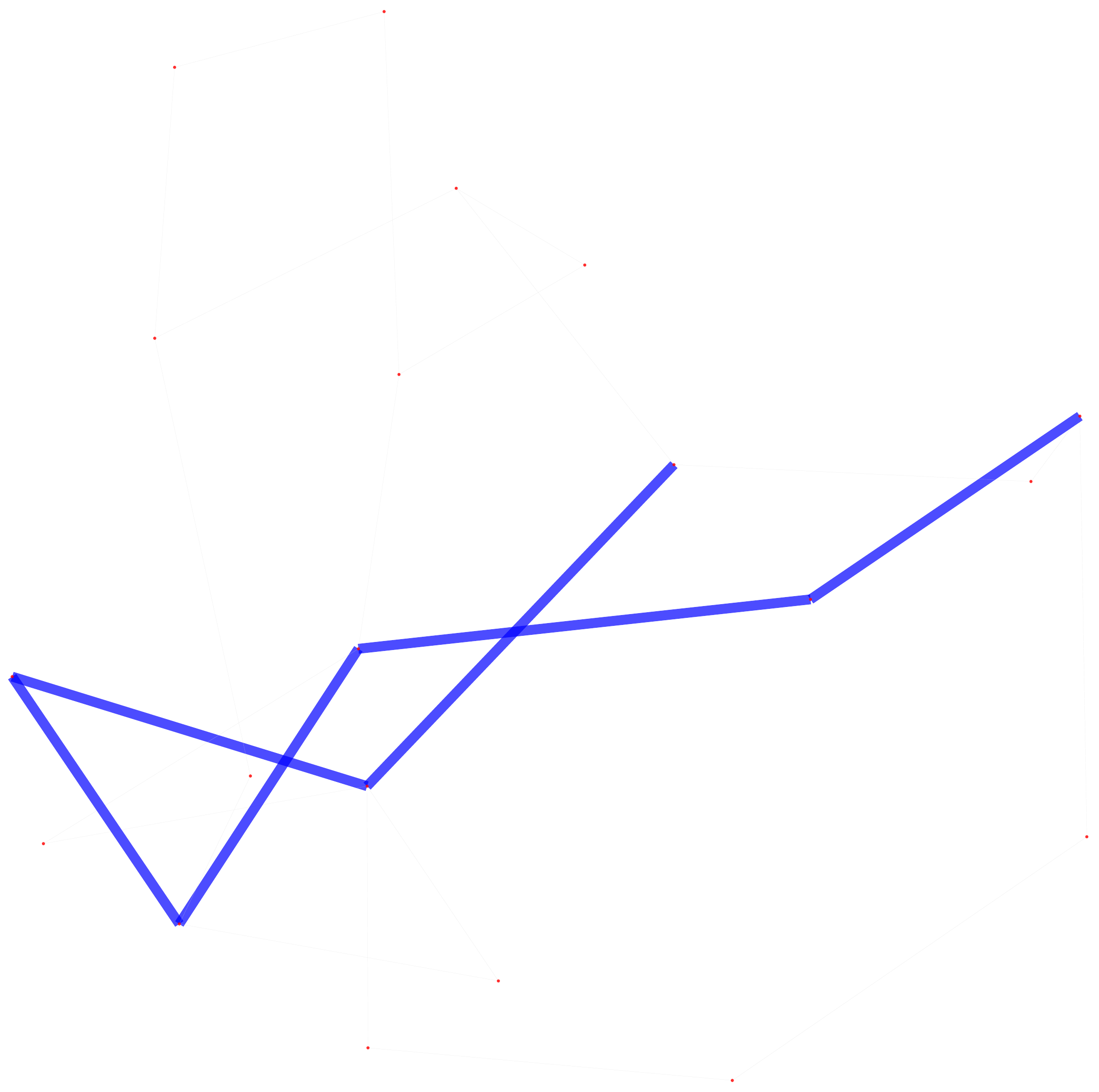}}\hspace{1mm}
\subfloat[  { [240, 4]} ]{\label{fig: 0_240_4}\includegraphics[width=0.10\textwidth]{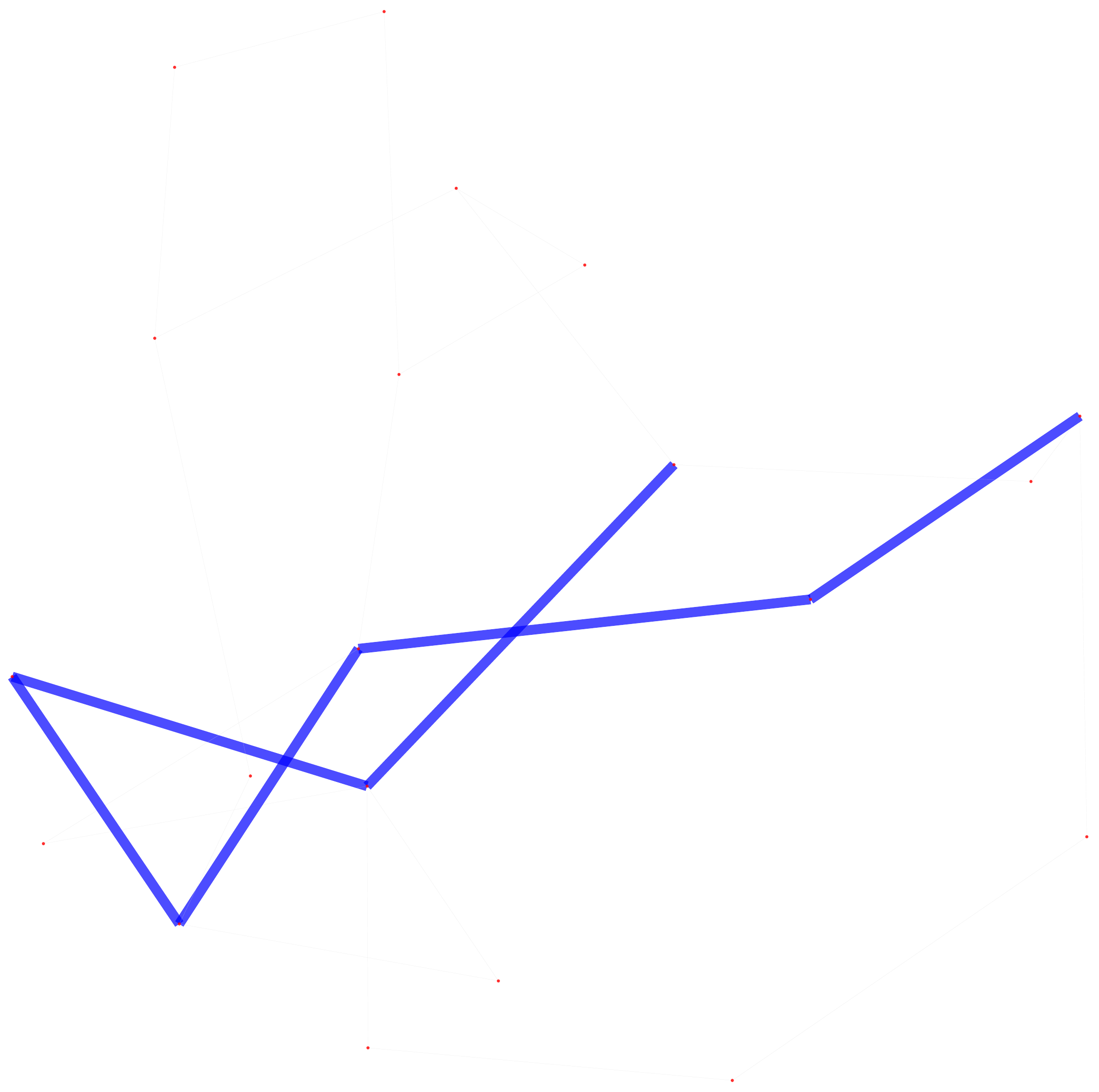}}\hspace{1mm}
\subfloat[  Target ]{\label{fig: 0_240_t}\includegraphics[width=0.10\textwidth]{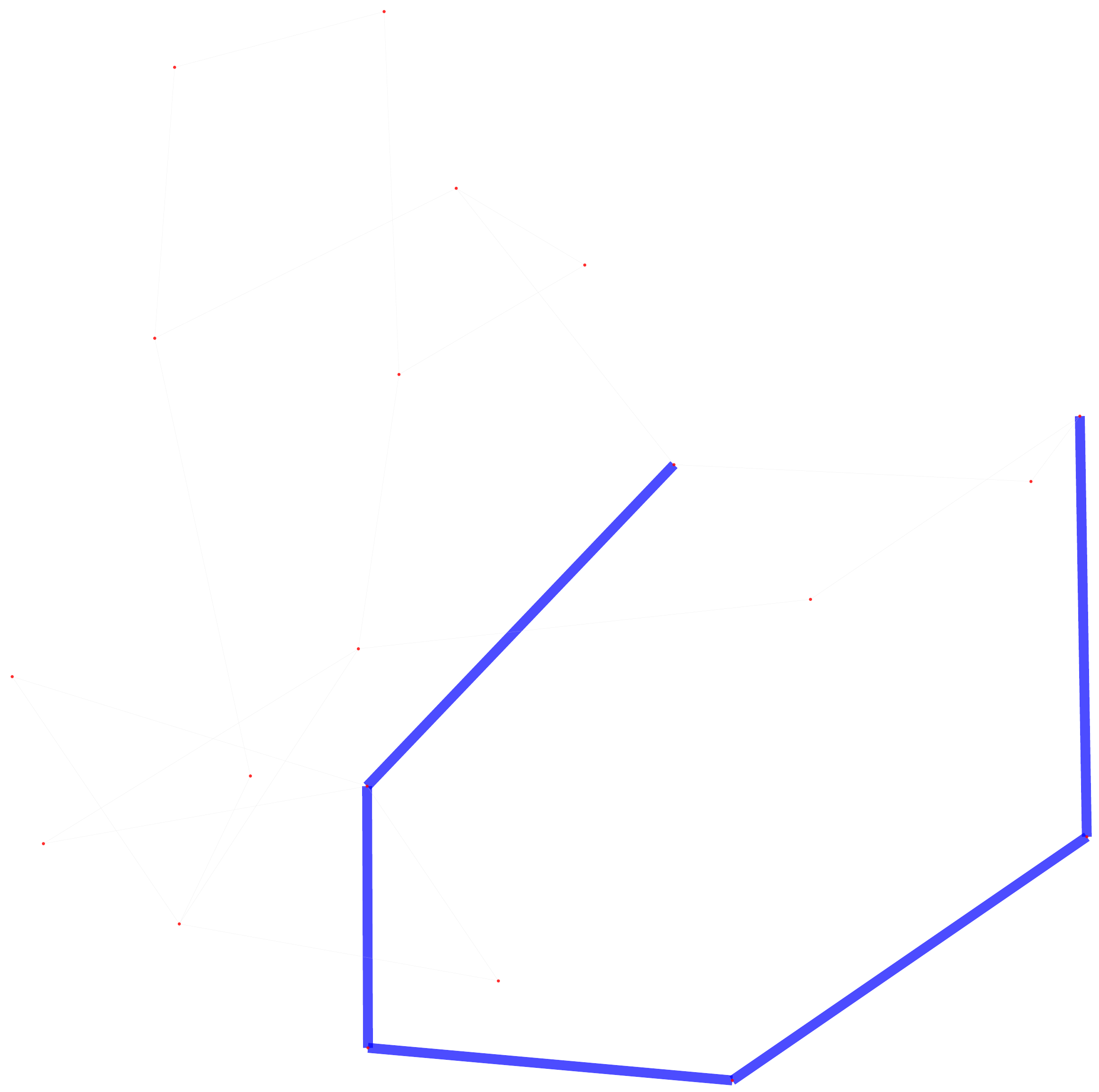}}\hspace{0mm}
\caption{\small Visualization of the results of QRTS-P for an example testing query for Tree-to-Path on Kro.}
\label{fig: more_path_0}
\end{figure}

\begin{figure}[t]
\centering
\captionsetup[subfloat]{labelfont=scriptsize,textfont=scriptsize,labelformat=empty}

\subfloat[  { [30, 0]} ]{\label{fig: path_1}\includegraphics[width=0.10\textwidth]{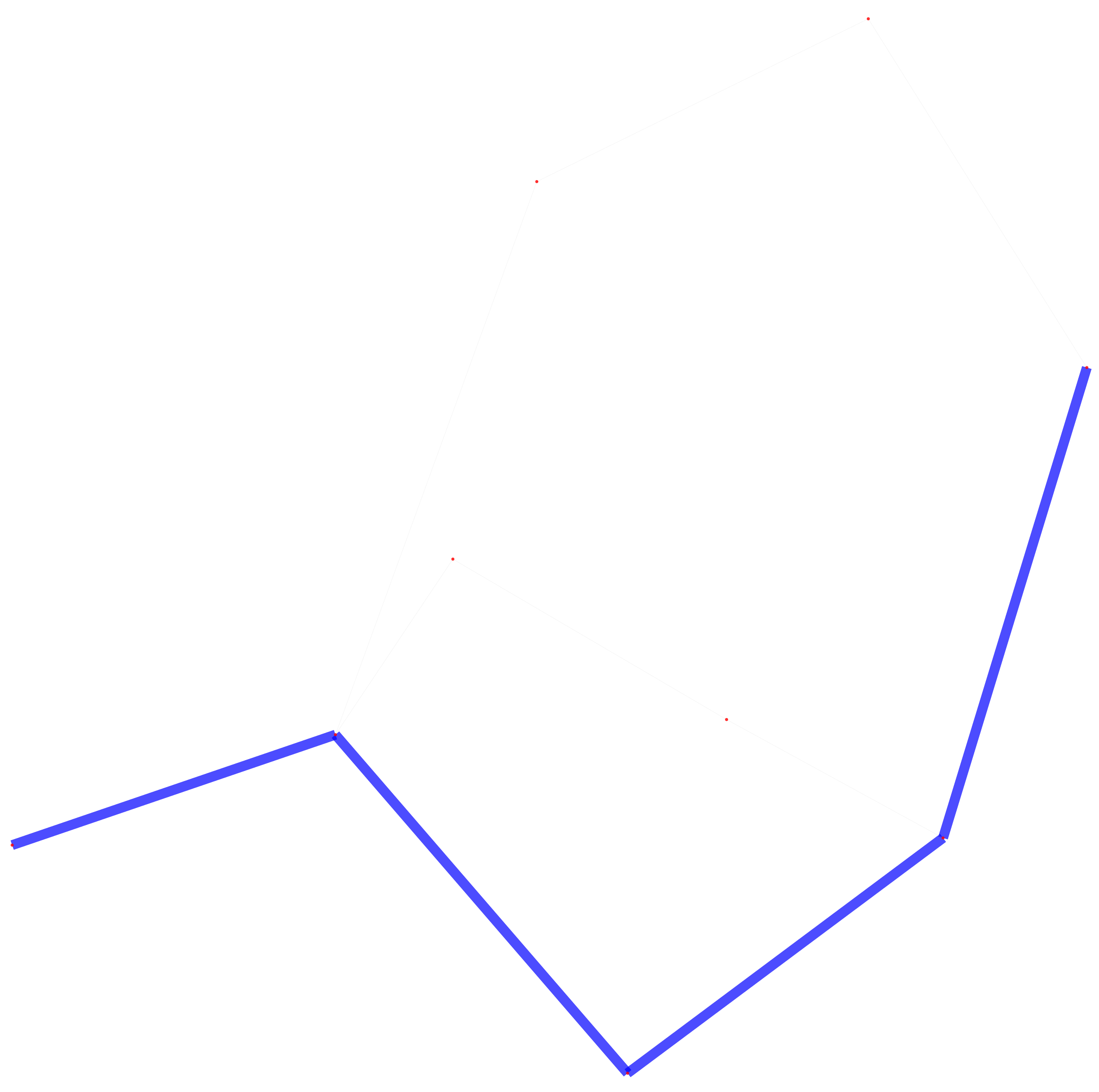}}\hspace{1mm}
\subfloat[  { [30, 1]} ]{\label{fig: path_1}\includegraphics[width=0.10\textwidth]{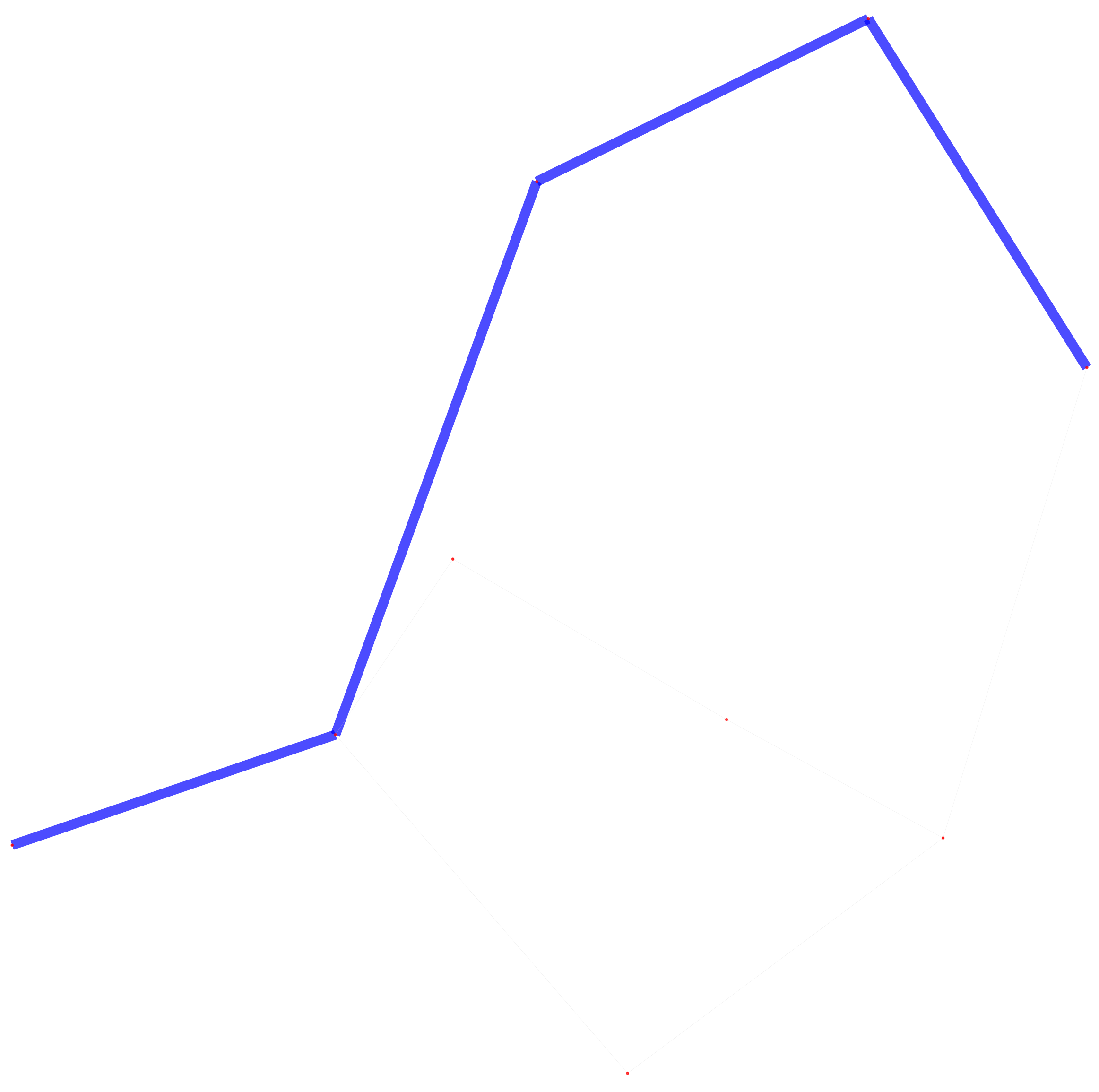}}\hspace{1mm}
\subfloat[  { [30, 4]} ]{\label{fig: path_1}\includegraphics[width=0.10\textwidth]{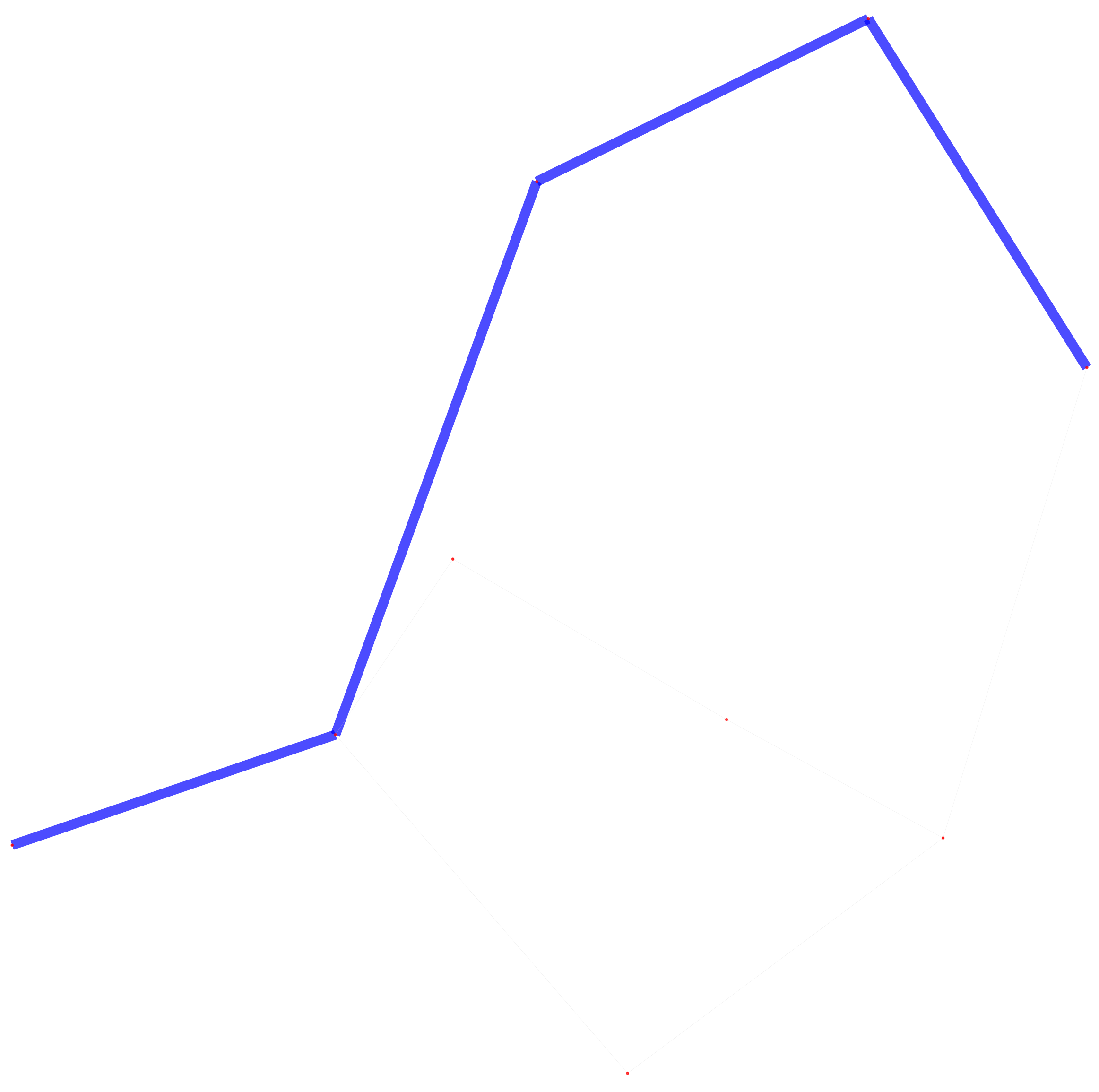}}\hspace{1mm}
\subfloat[  Target ]{\label{fig: path_1}\includegraphics[width=0.10\textwidth]{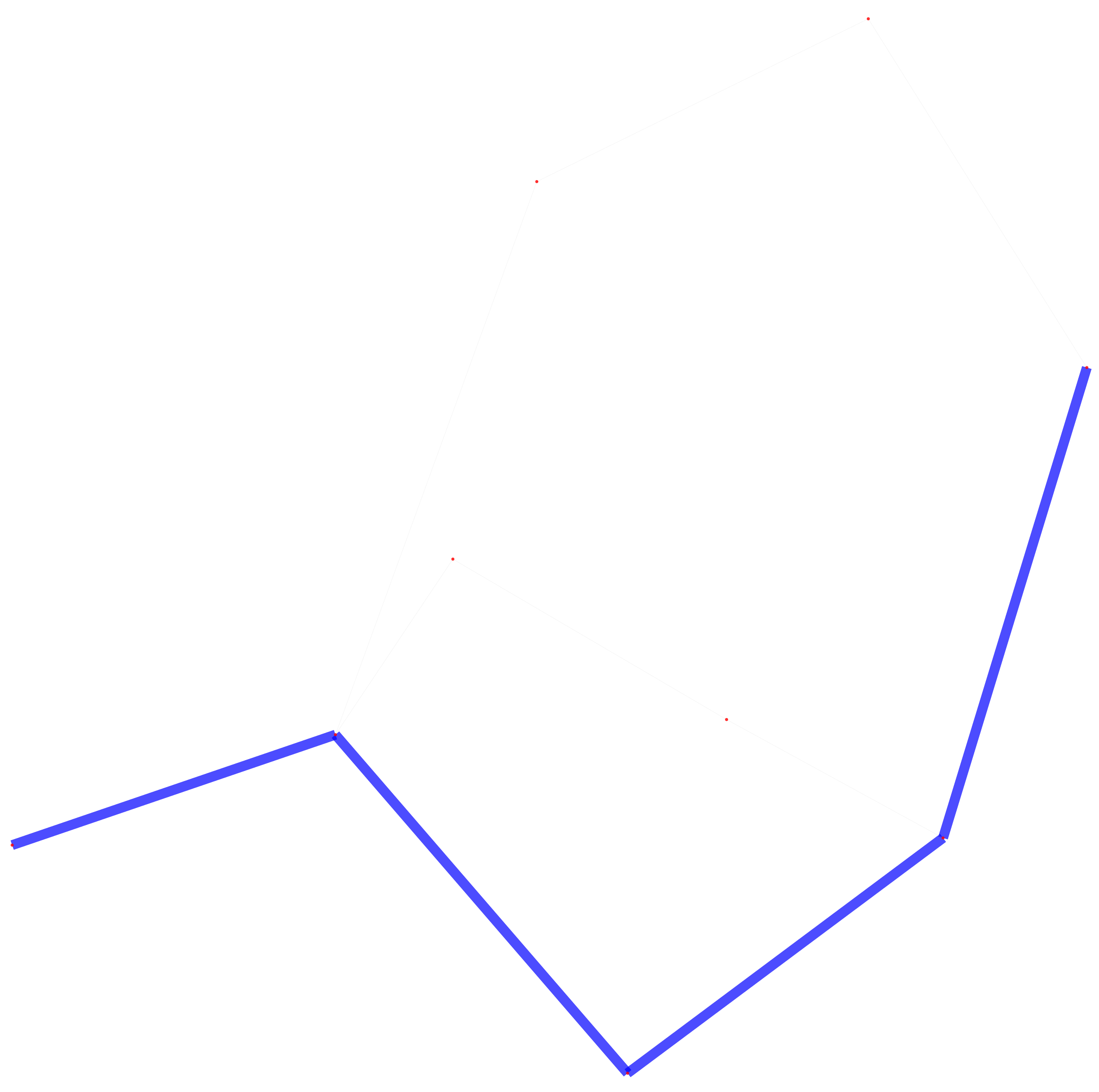}}\hspace{0mm}

\subfloat[  { [60, 0]} ]{\label{fig: path_1}\includegraphics[width=0.10\textwidth]{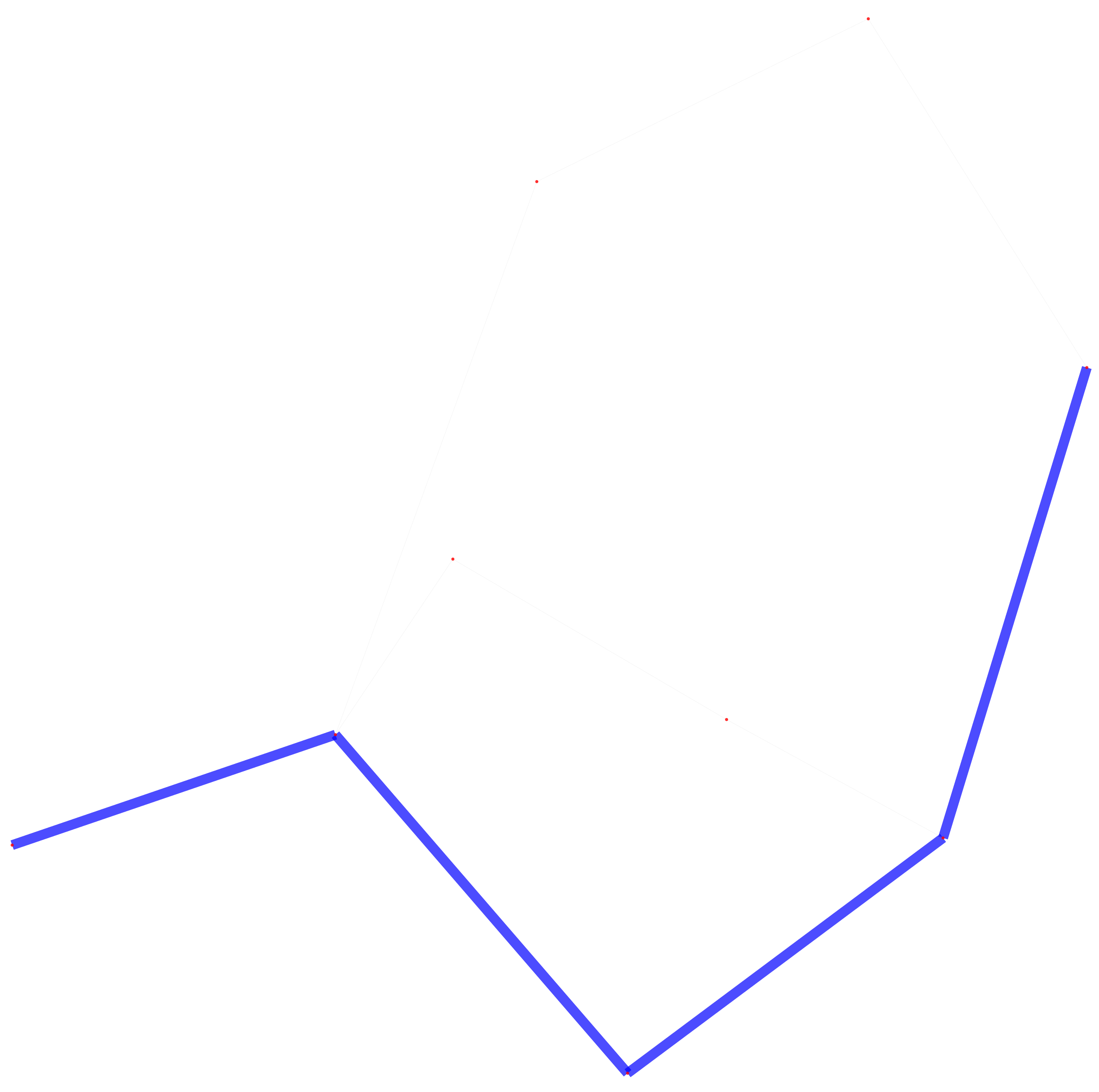}}\hspace{1mm}
\subfloat[  { [60, 1]} ]{\label{fig: path_1}\includegraphics[width=0.10\textwidth]{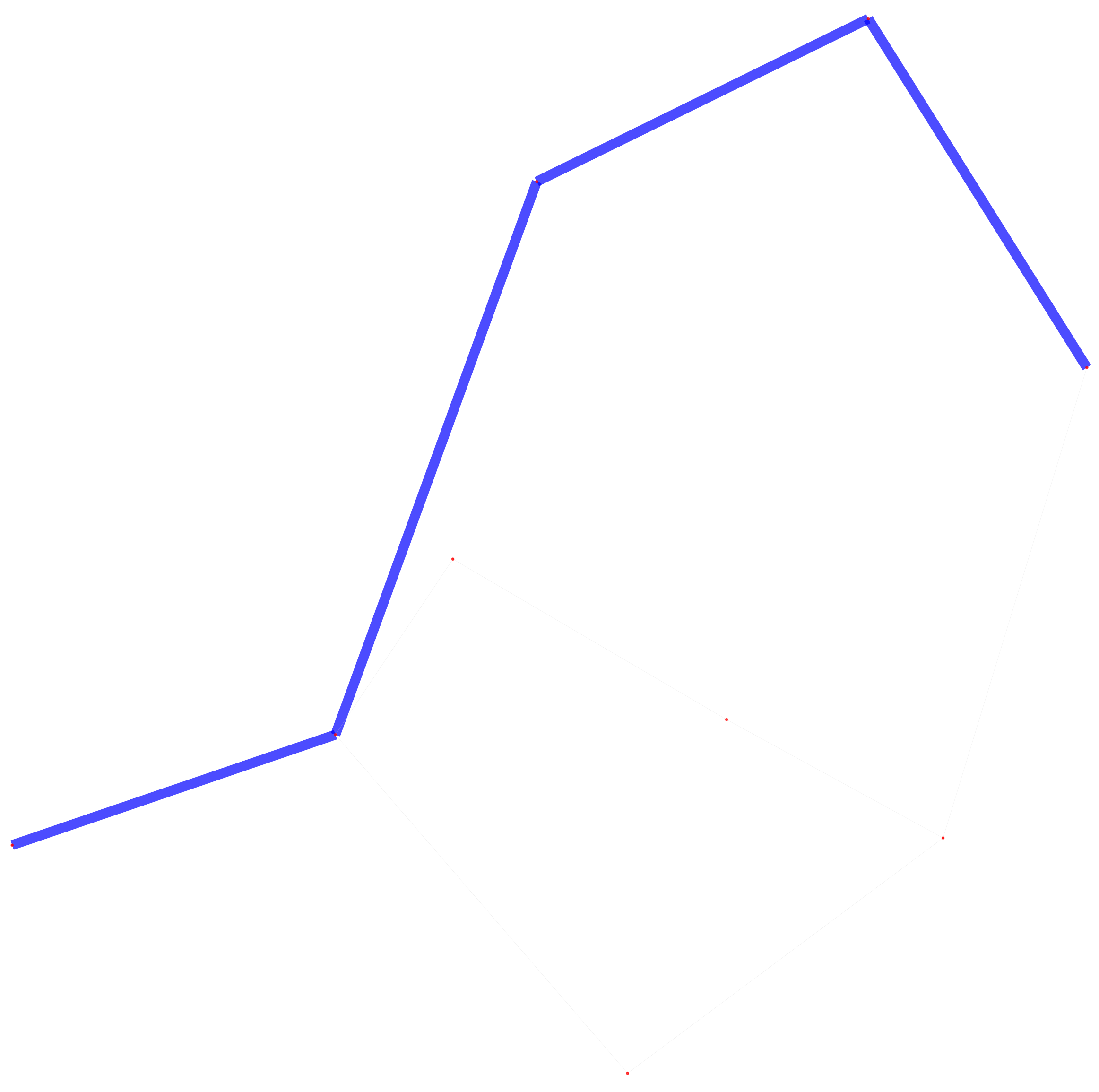}}\hspace{1mm}
\subfloat[  { [60, 4]} ]{\label{fig: path_1}\includegraphics[width=0.10\textwidth]{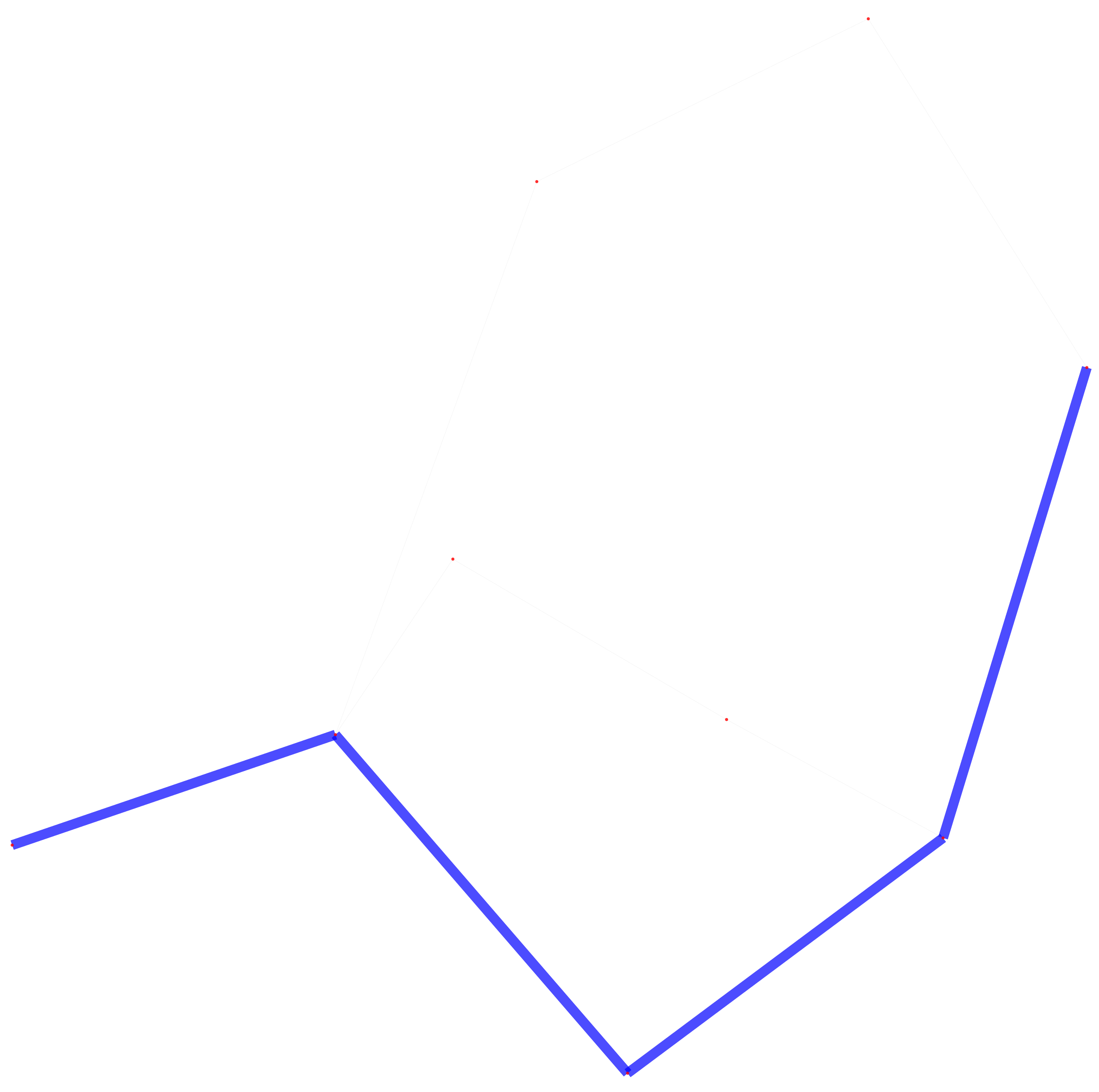}}\hspace{1mm}
\subfloat[  Target ]{\label{fig: path_1}\includegraphics[width=0.10\textwidth]{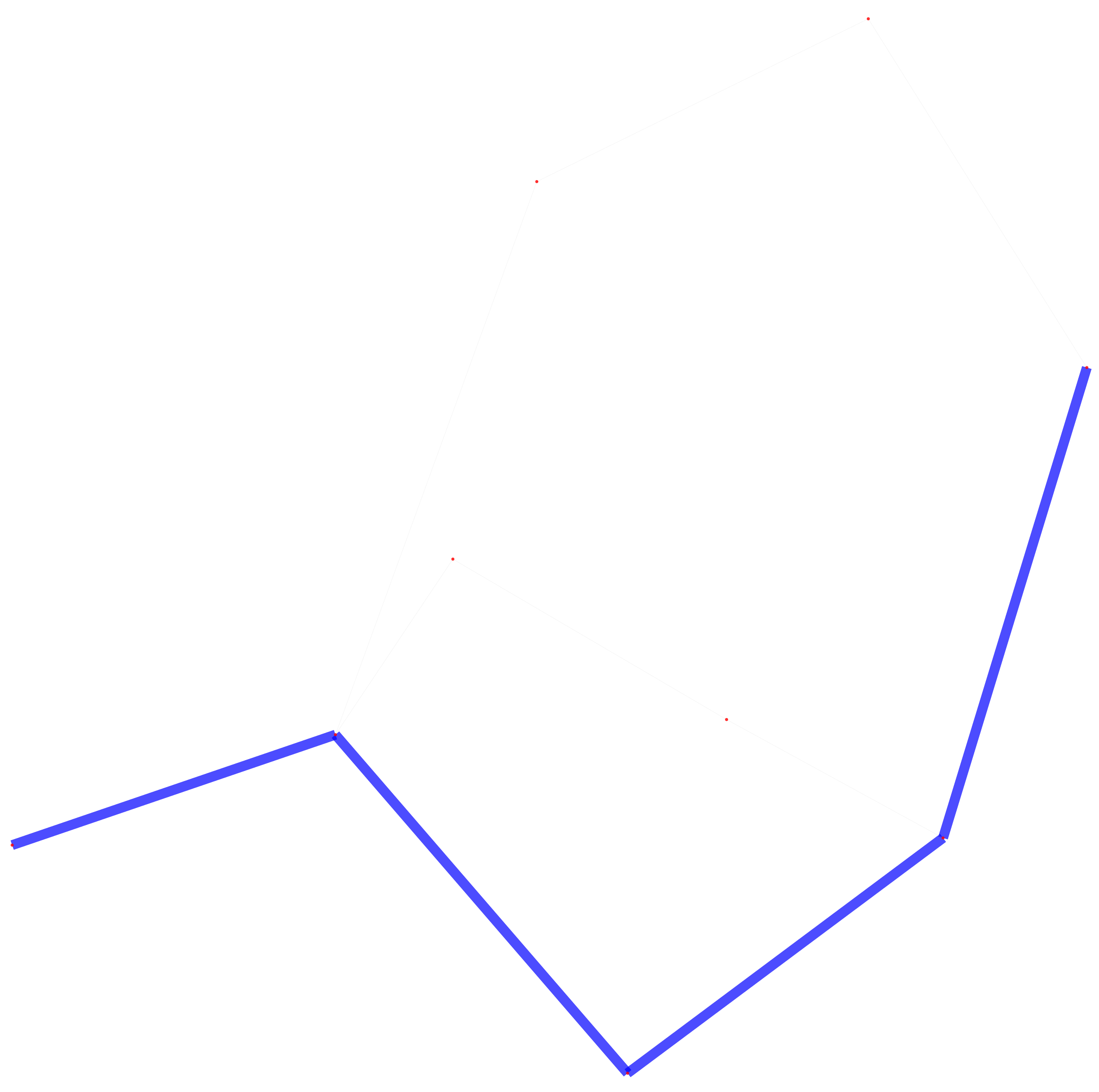}}\hspace{0mm}

\subfloat[  { [240, 0]} ]{\label{fig: path_1}\includegraphics[width=0.10\textwidth]{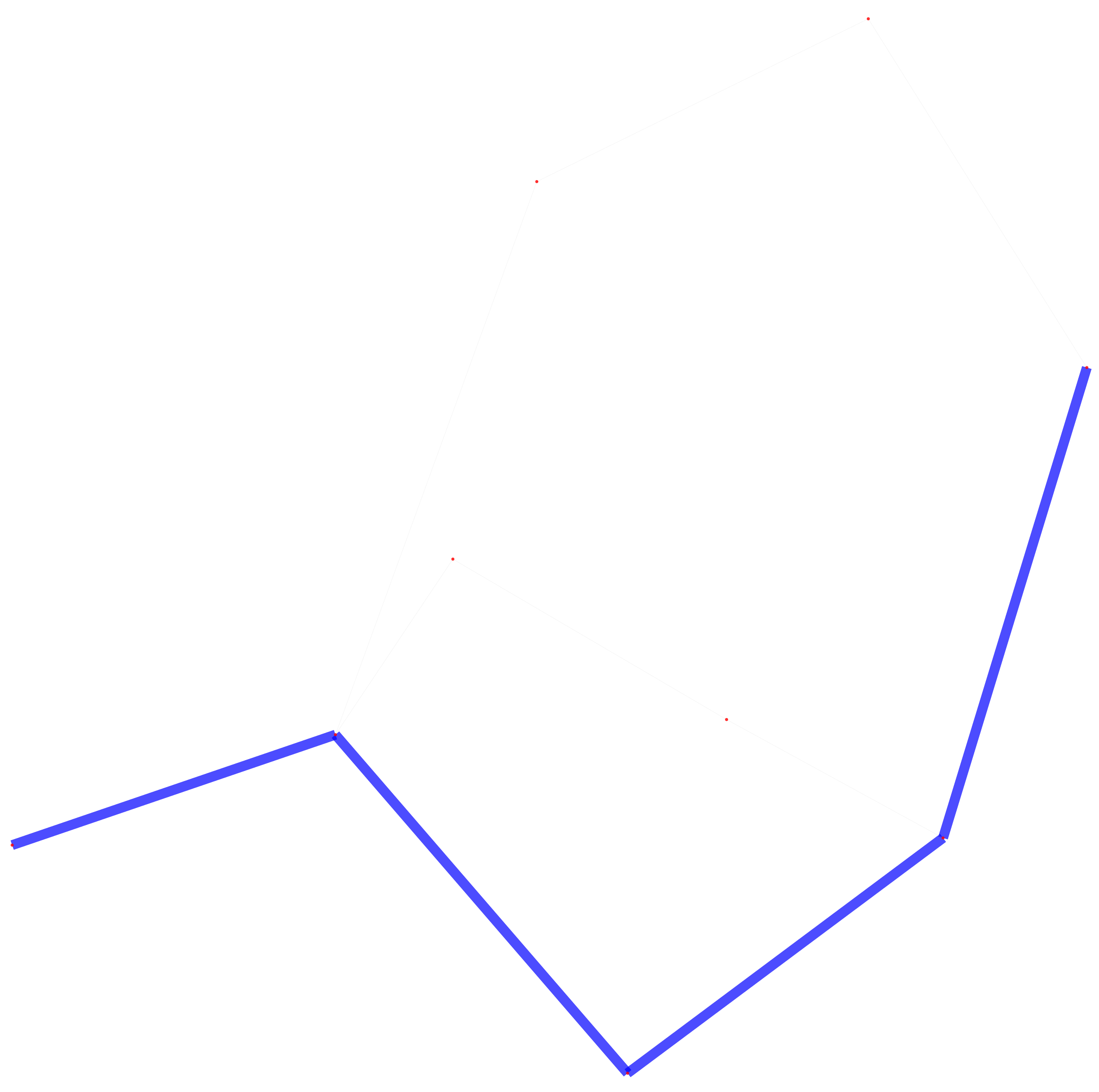}}\hspace{1mm}
\subfloat[  { [240, 1]} ]{\label{fig: path_1}\includegraphics[width=0.10\textwidth]{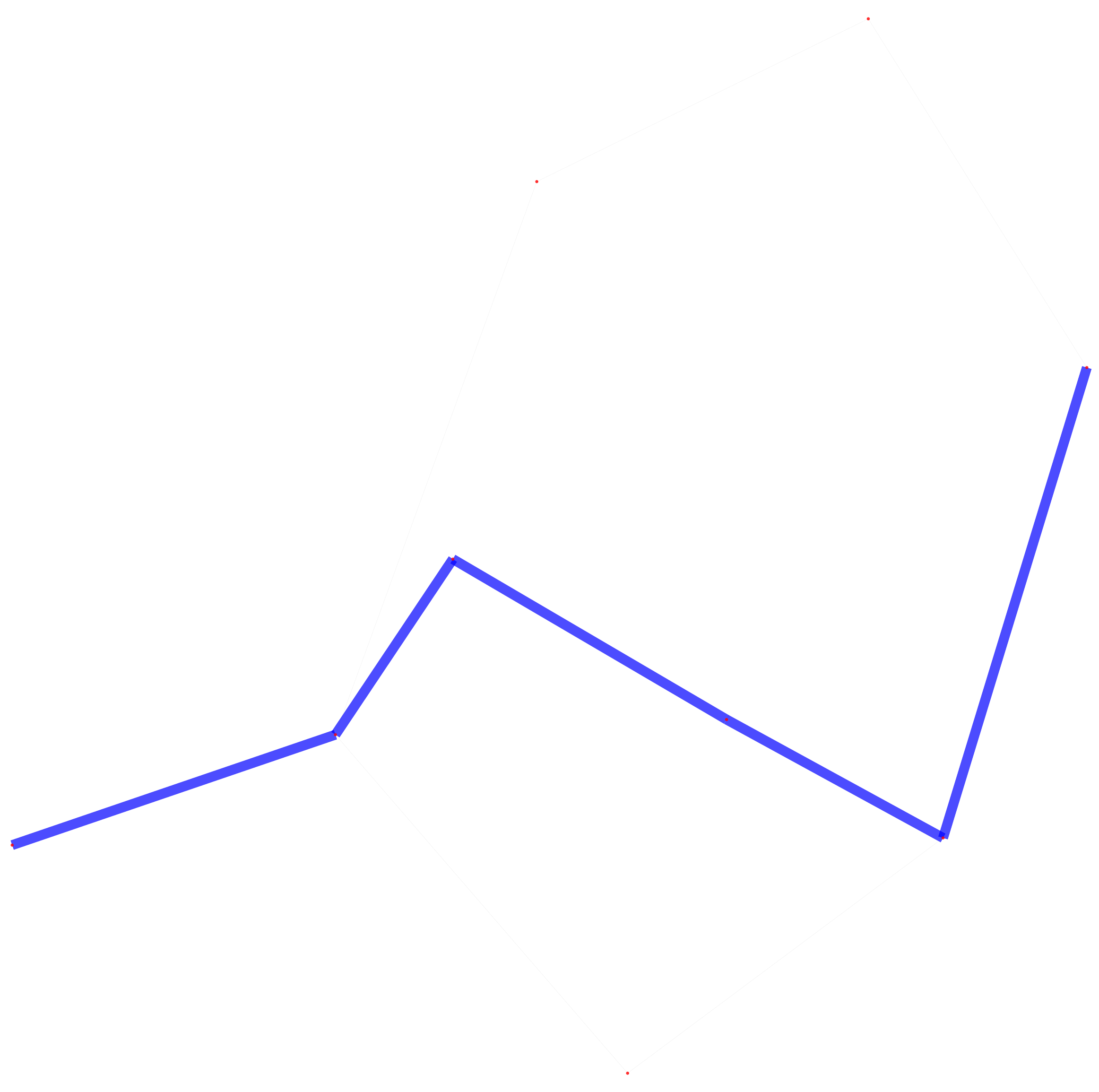}}\hspace{1mm}
\subfloat[  { [240, 4]} ]{\label{fig: path_1}\includegraphics[width=0.10\textwidth]{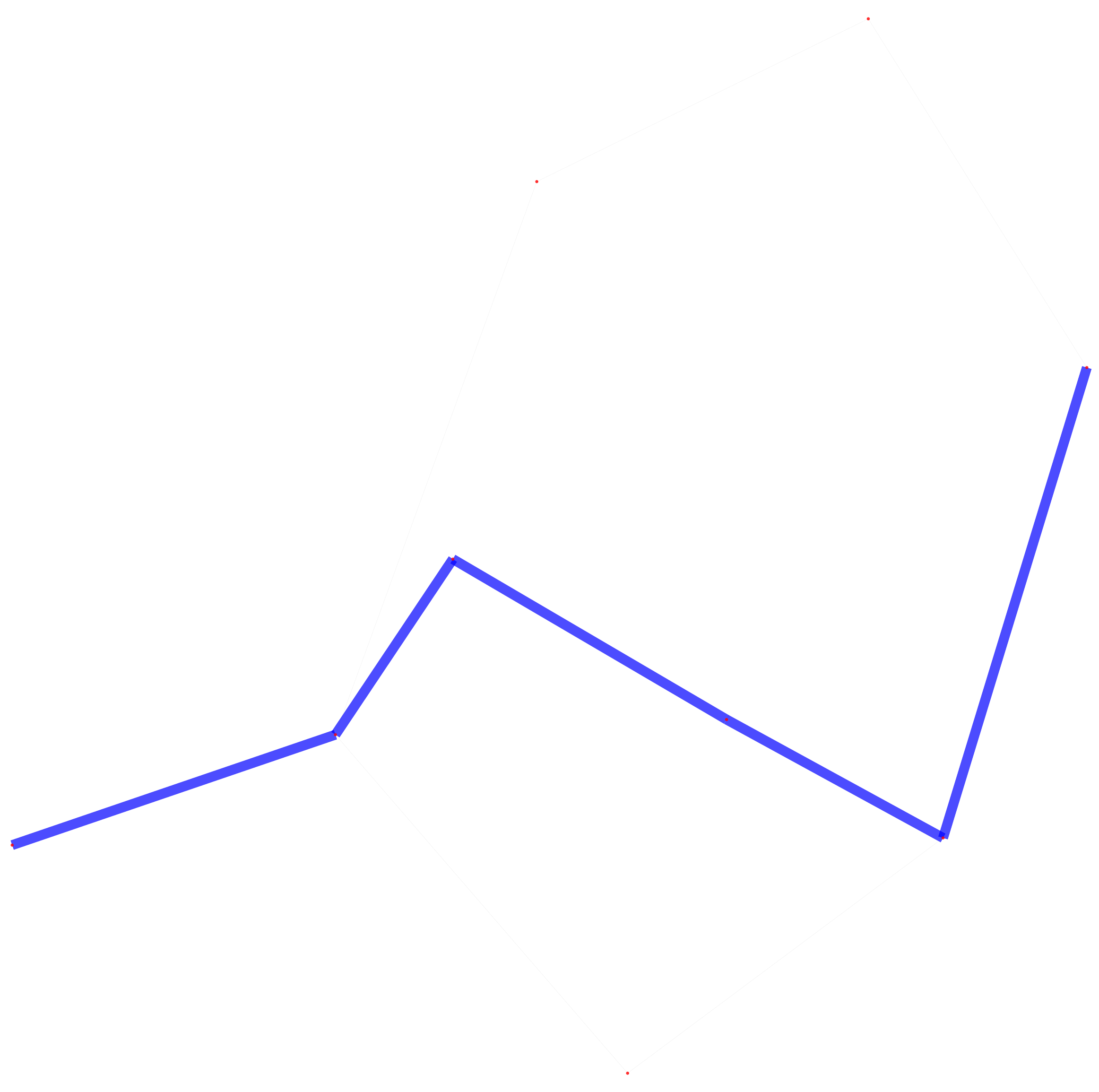}}\hspace{1mm}
\subfloat[  Target ]{\label{fig: path_1}\includegraphics[width=0.10\textwidth]{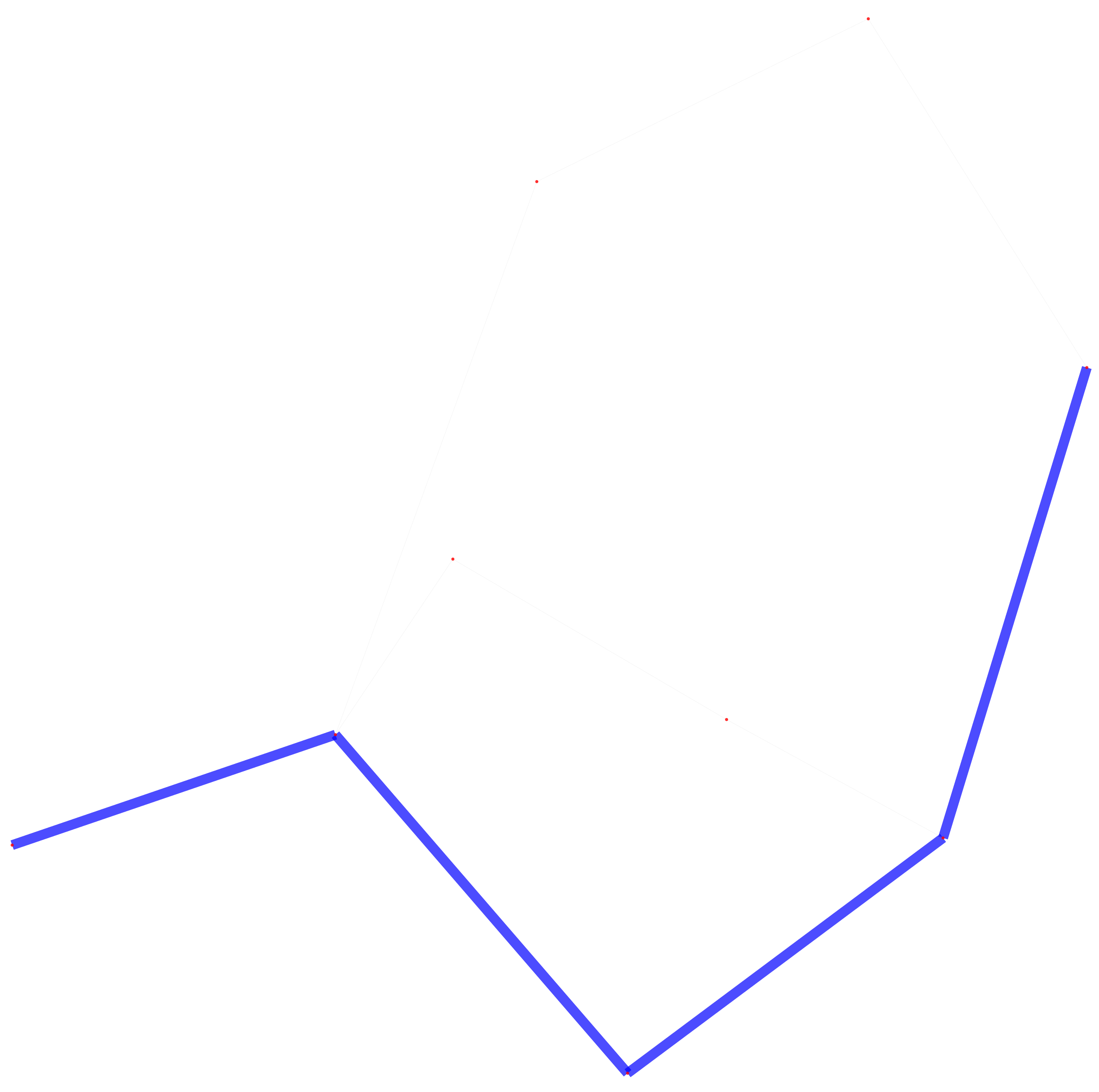}}\hspace{0mm}

\caption{\small Visualization of the results of QRTS-P for an example testing query for Tree-to-Path on Kro.}
\label{fig: more_path_1}
\end{figure}

\begin{figure}[h]
\centering
\captionsetup[subfloat]{labelfont=scriptsize,textfont=scriptsize,labelformat=empty}

\subfloat[  { [30, 0]} ]{\label{fig: path_1}\includegraphics[width=0.10\textwidth]{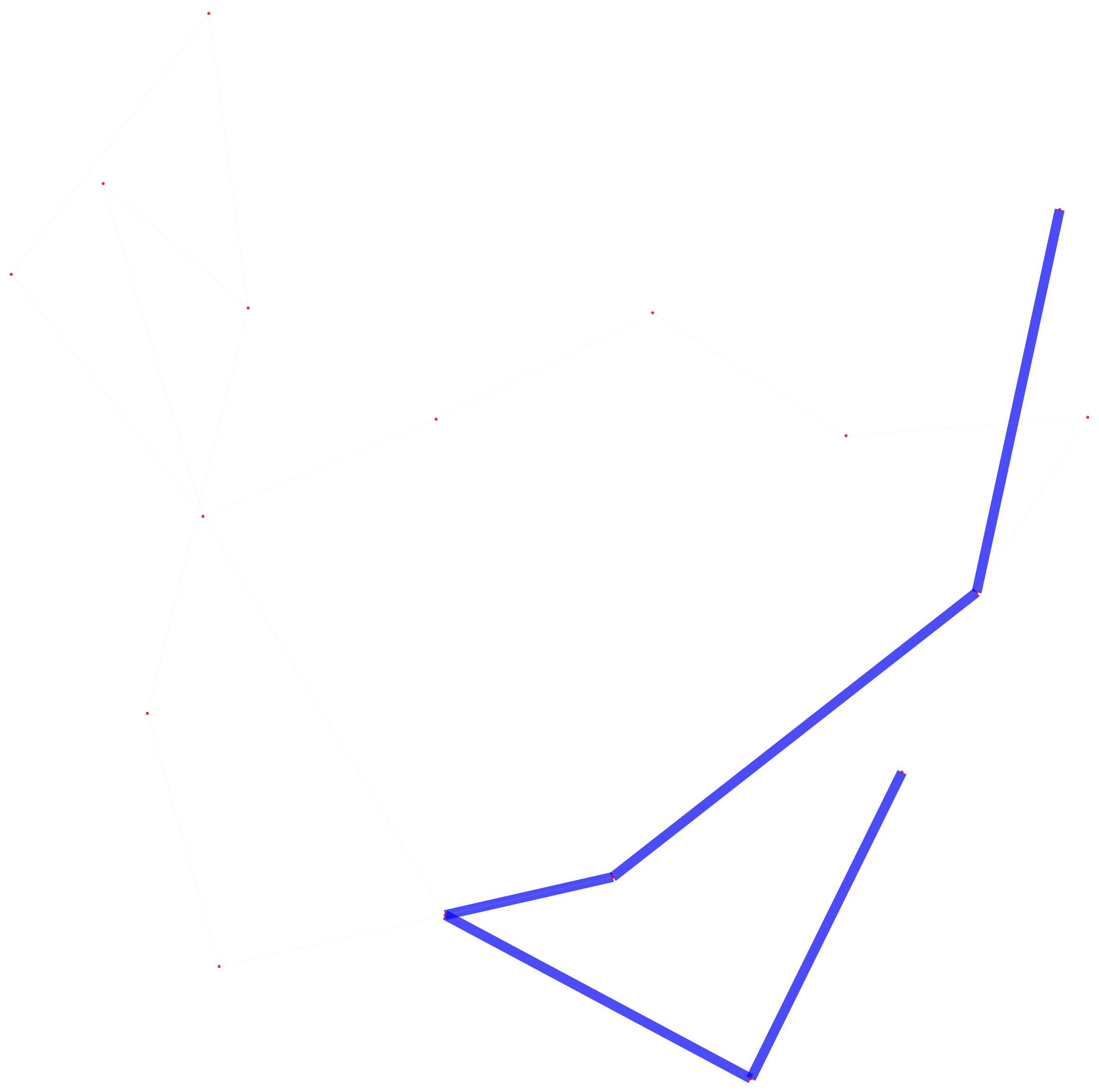}}\hspace{1mm}
\subfloat[  { [30, 1]} ]{\label{fig: path_1}\includegraphics[width=0.10\textwidth]{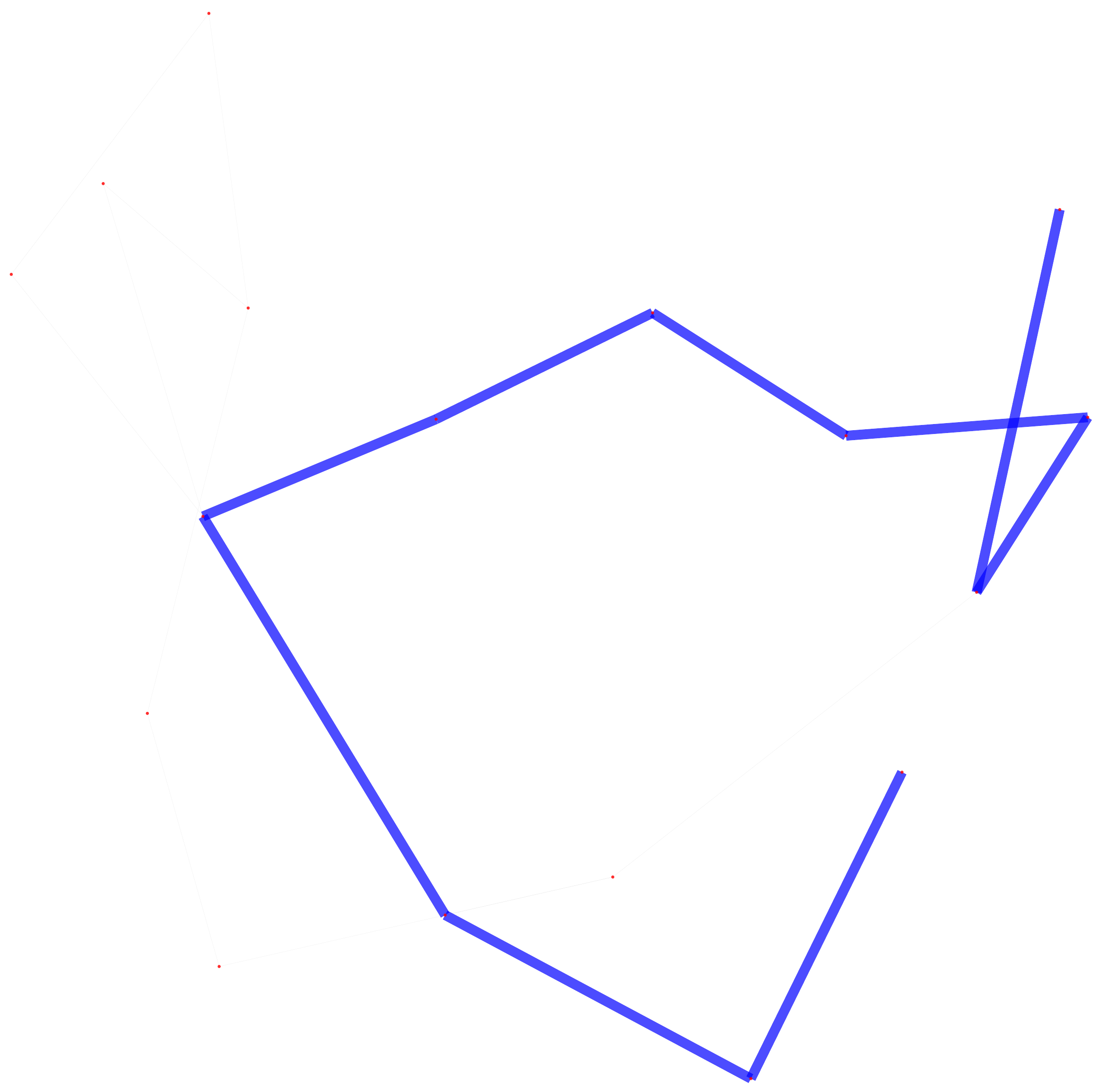}}\hspace{1mm}
\subfloat[  { [30, 4]} ]{\label{fig: path_1}\includegraphics[width=0.10\textwidth]{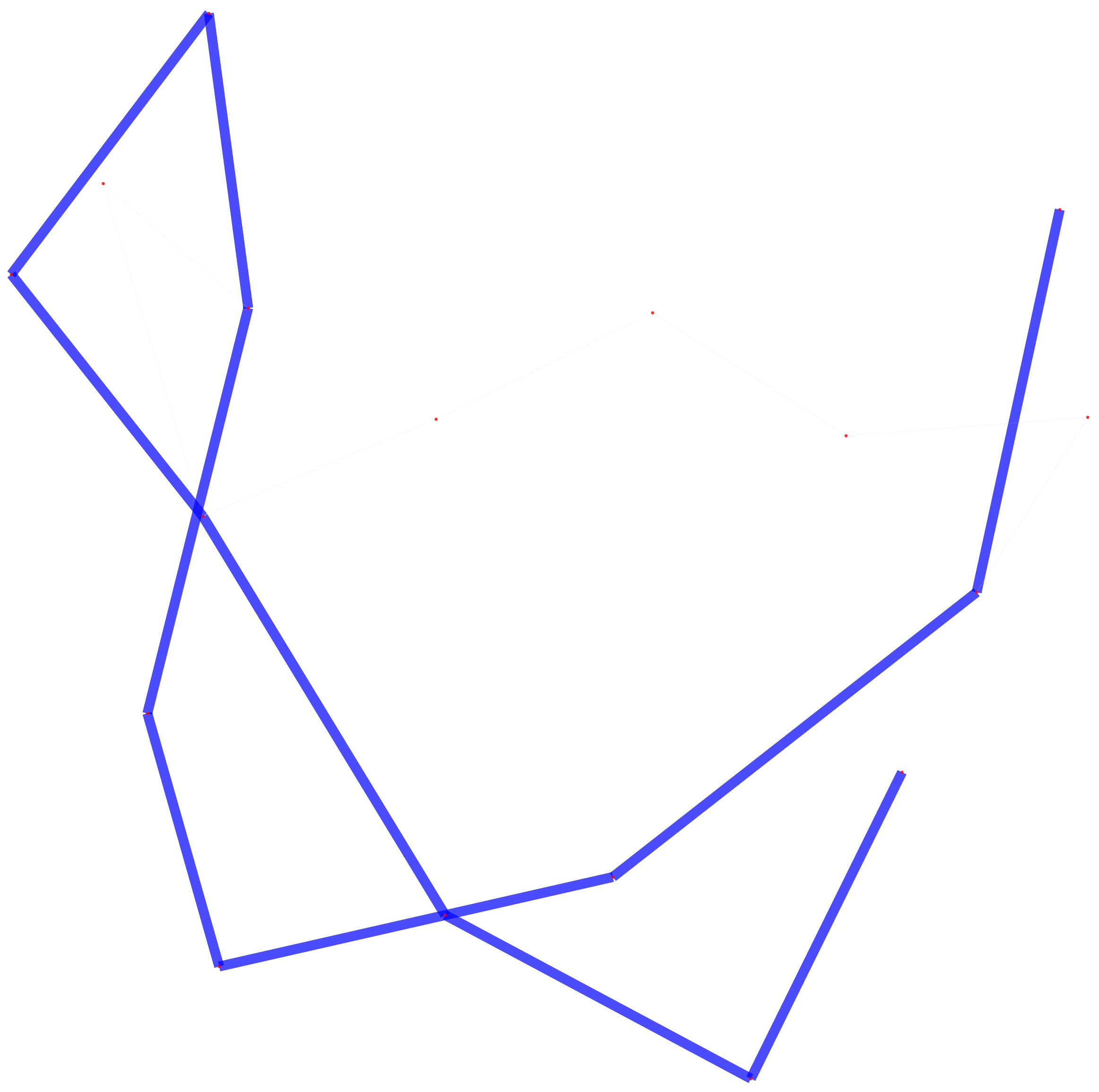}}\hspace{1mm}
\subfloat[  Target ]{\label{fig: path_1}\includegraphics[width=0.10\textwidth]{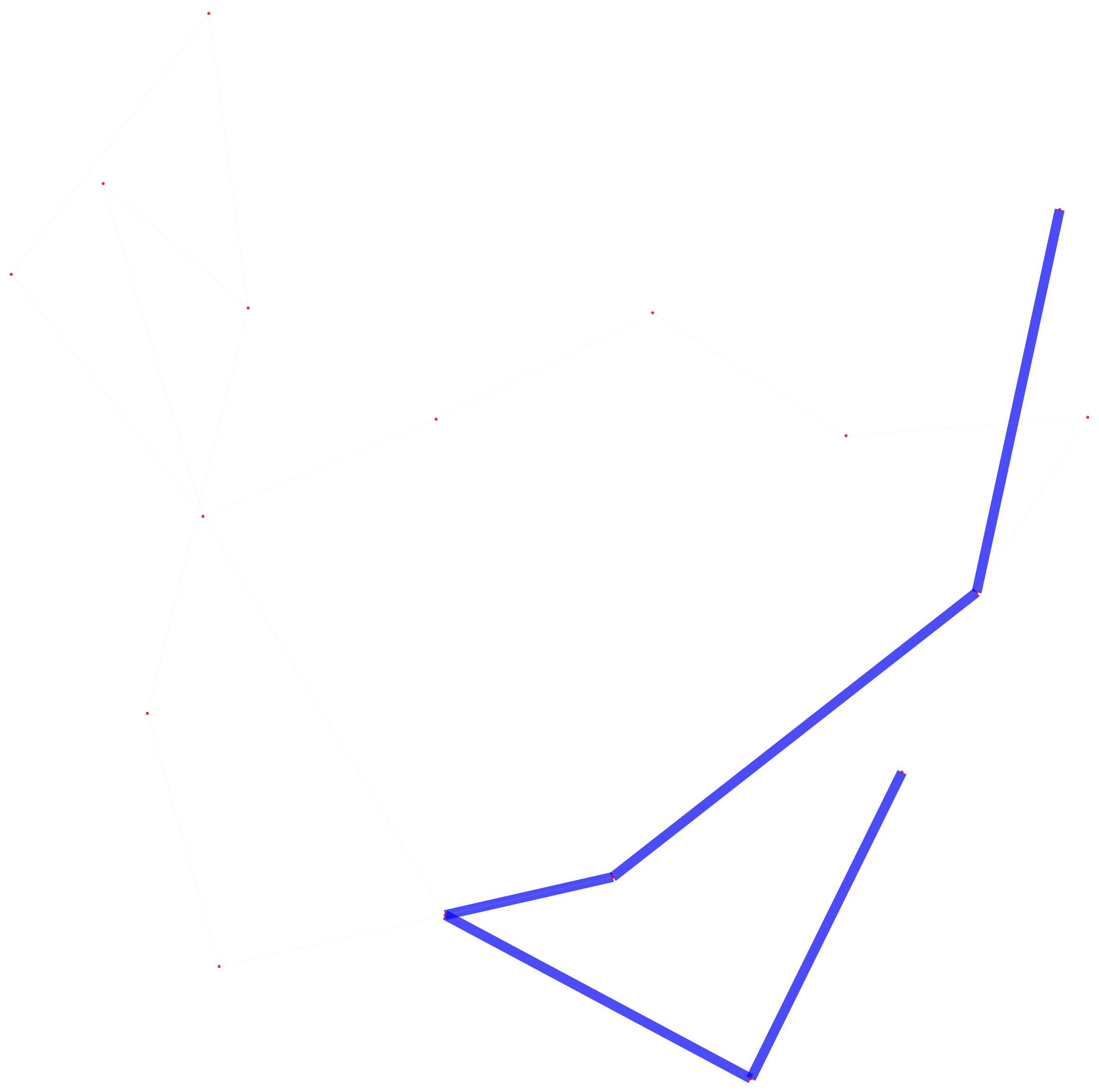}}\hspace{0mm}

\subfloat[  { [60, 0]} ]{\label{fig: path_1}\includegraphics[width=0.10\textwidth]{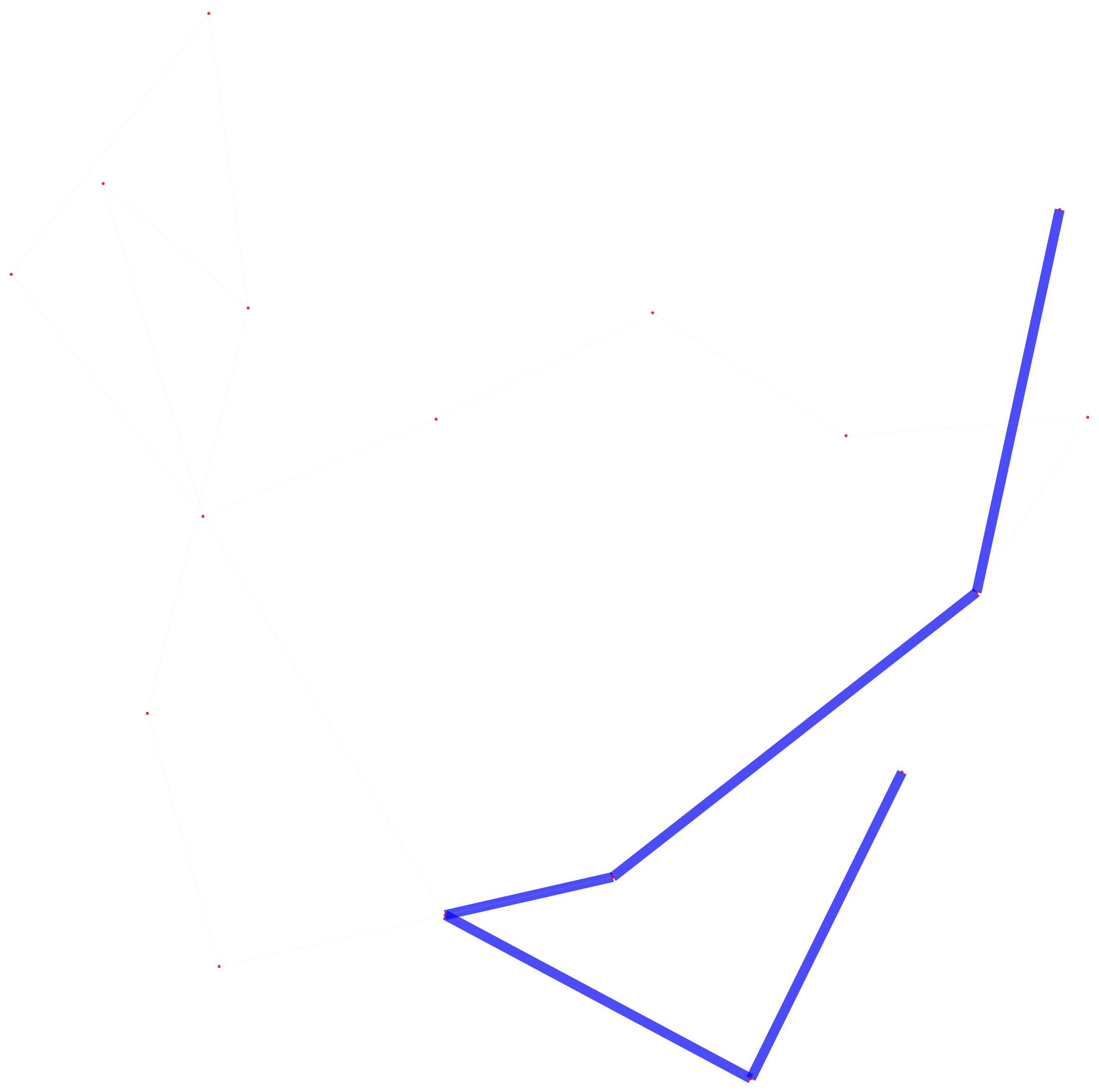}}\hspace{1mm}
\subfloat[  { [60, 1]} ]{\label{fig: path_1}\includegraphics[width=0.10\textwidth]{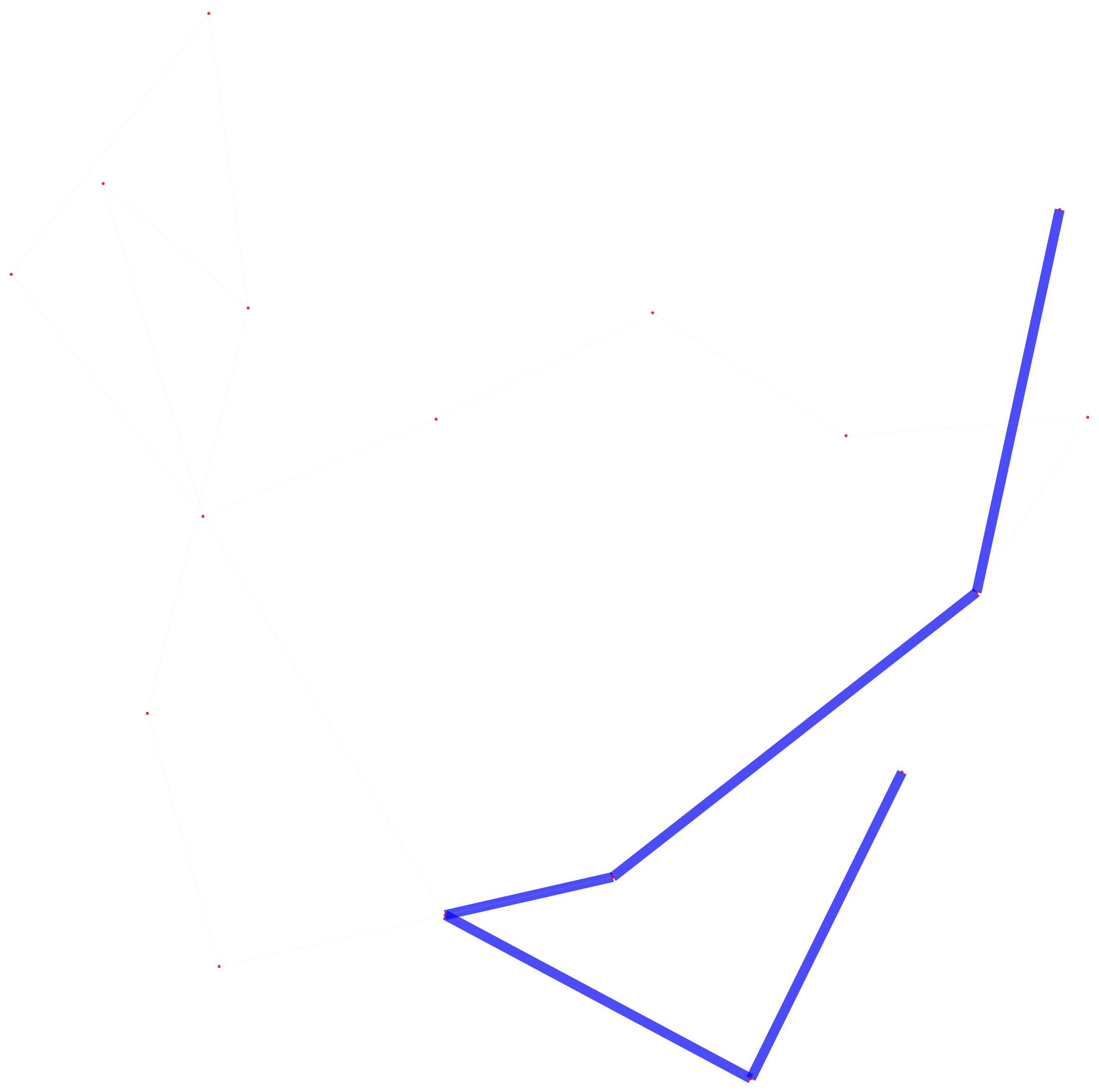}}\hspace{1mm}
\subfloat[  { [60, 4]} ]{\label{fig: path_1}\includegraphics[width=0.10\textwidth]{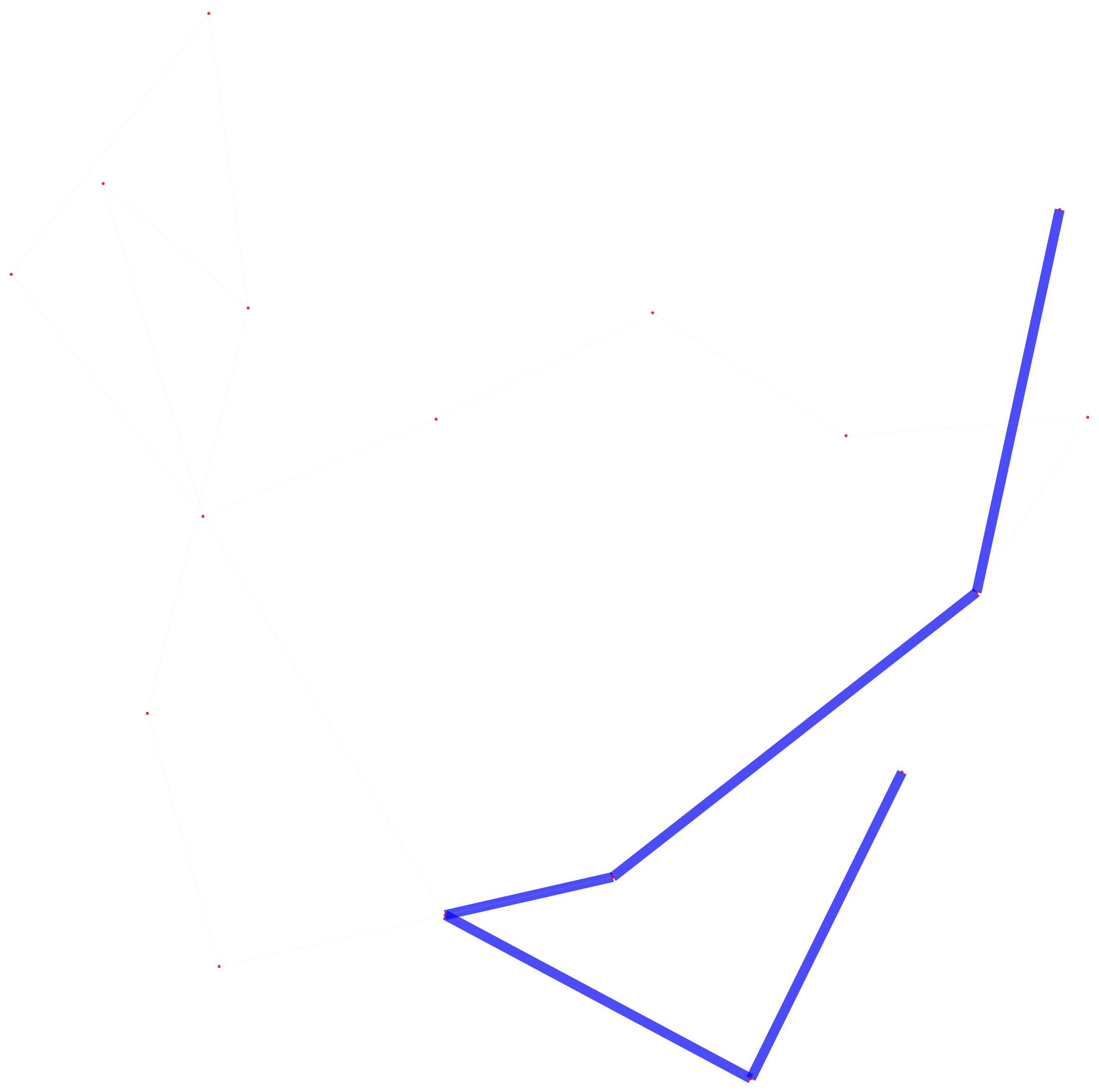}}\hspace{1mm}
\subfloat[  Target ]{\label{fig: path_1}\includegraphics[width=0.10\textwidth]{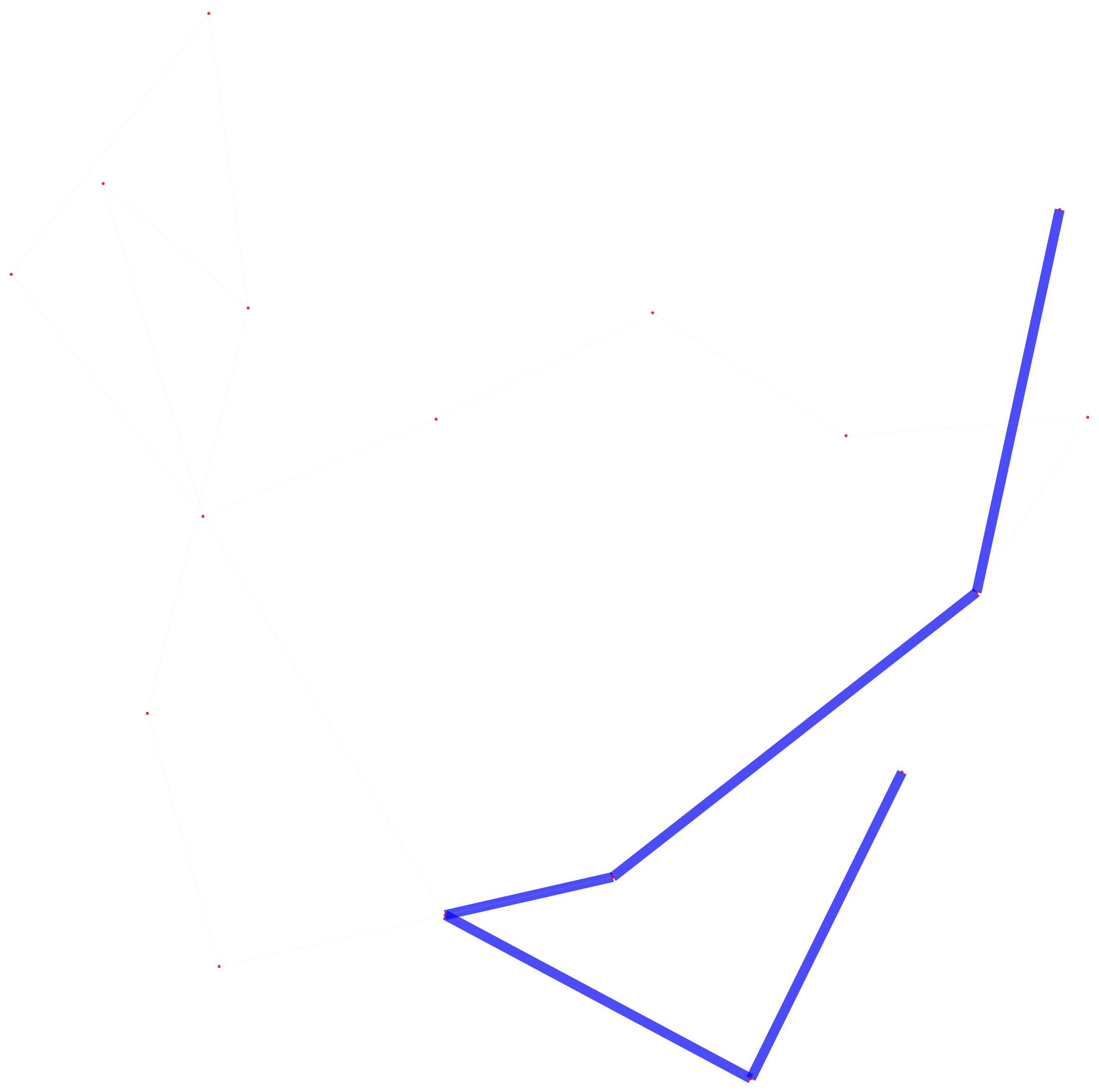}}\hspace{0mm}

\subfloat[  { [240, 0]} ]{\label{fig: path_1}\includegraphics[width=0.10\textwidth]{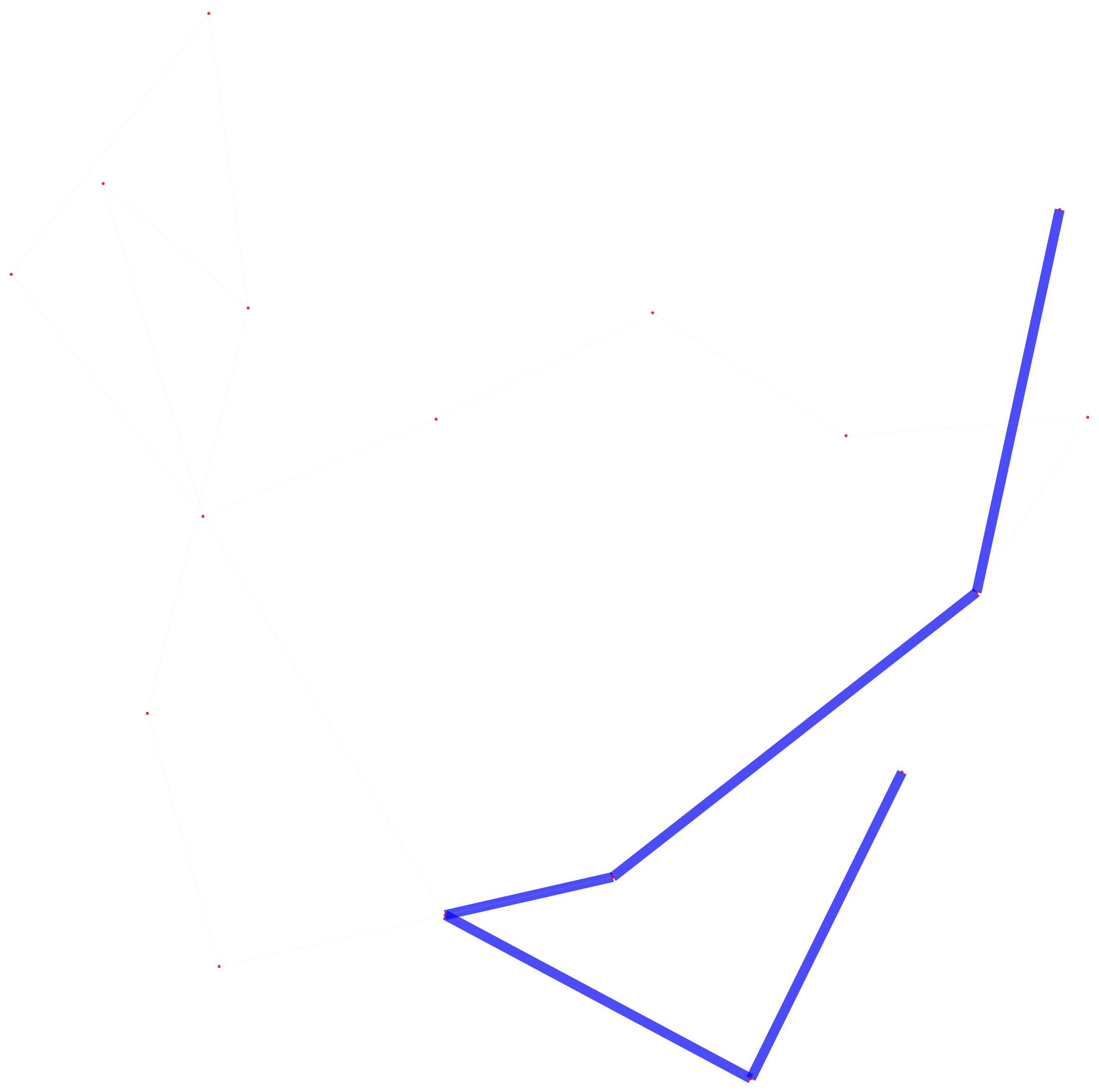}}\hspace{1mm}
\subfloat[  { [240, 1]} ]{\label{fig: path_1}\includegraphics[width=0.10\textwidth]{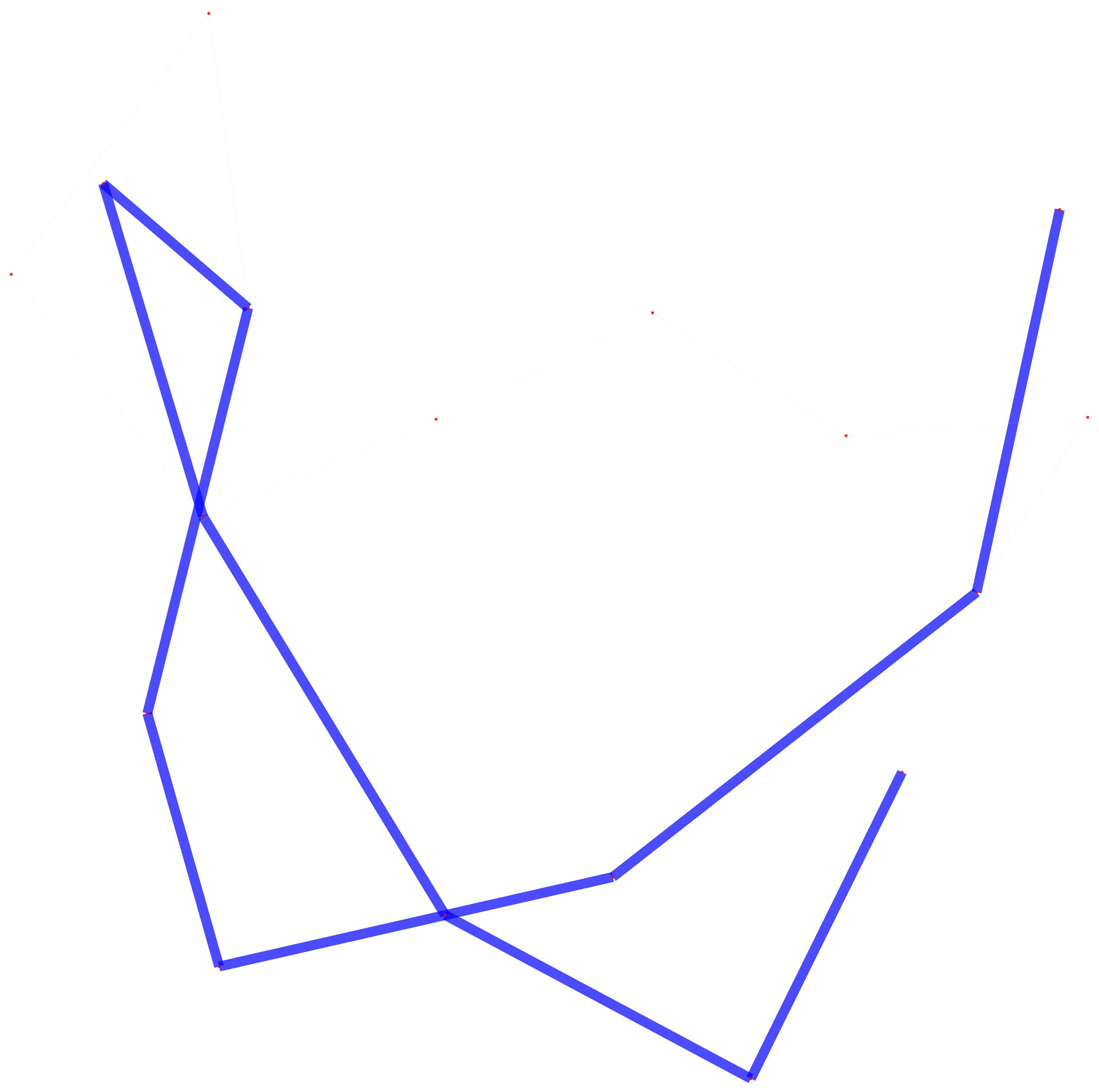}}\hspace{1mm}
\subfloat[  { [240, 4]} ]{\label{fig: path_1}\includegraphics[width=0.10\textwidth]{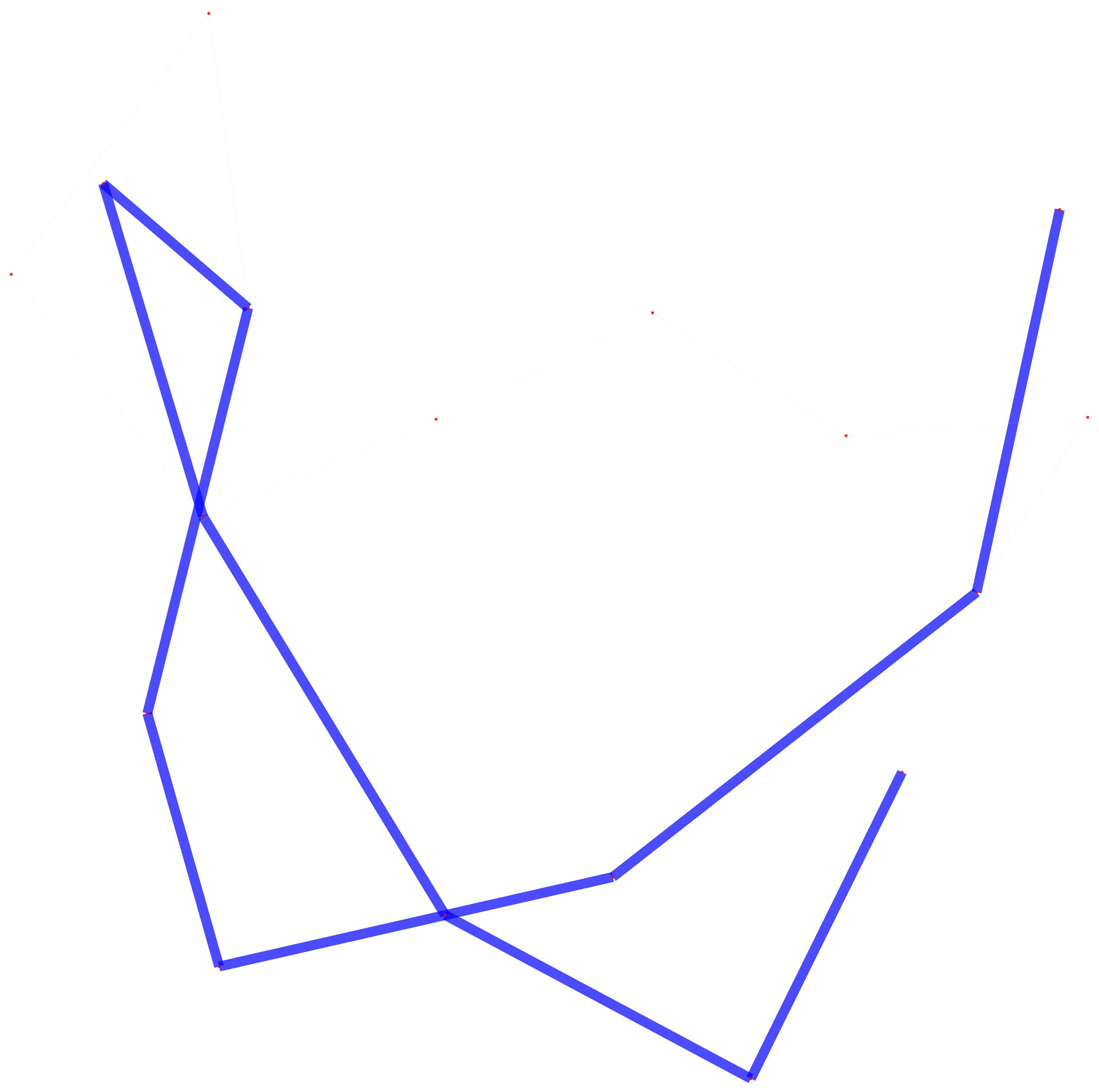}}\hspace{1mm}
\subfloat[  Target ]{\label{fig: path_1}\includegraphics[width=0.10\textwidth]{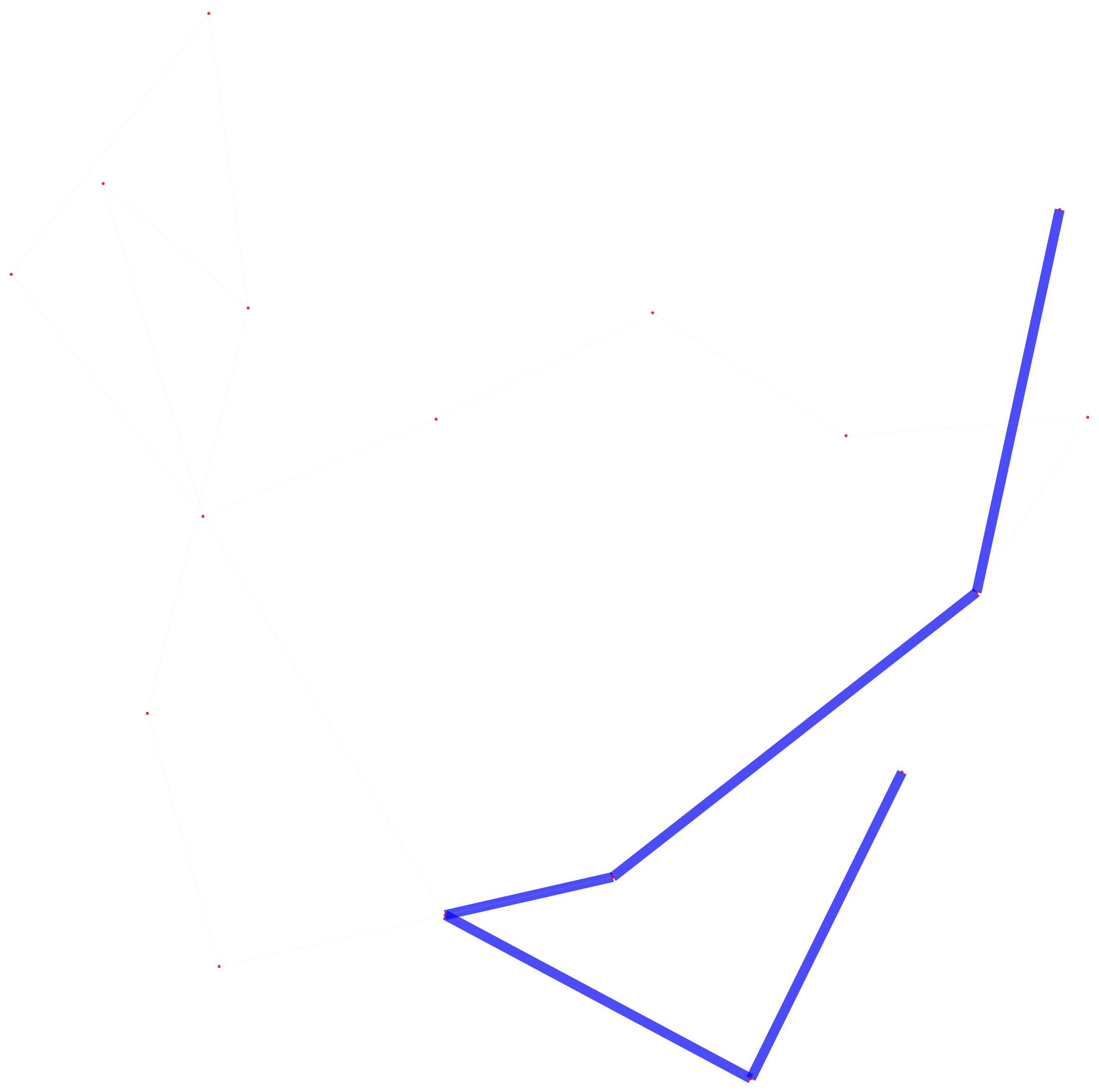}}\hspace{0mm}

\caption{\small Visualization of the results of QRTS-P for an example testing query for Tree-to-Path on Kro.}
\label{fig: more_path_2}
\end{figure}

\begin{figure}[h]
\centering
\captionsetup[subfloat]{labelfont=scriptsize,textfont=scriptsize,labelformat=empty}

\subfloat[  { [30, 0]} ]{\label{fig: path_1}\includegraphics[width=0.10\textwidth]{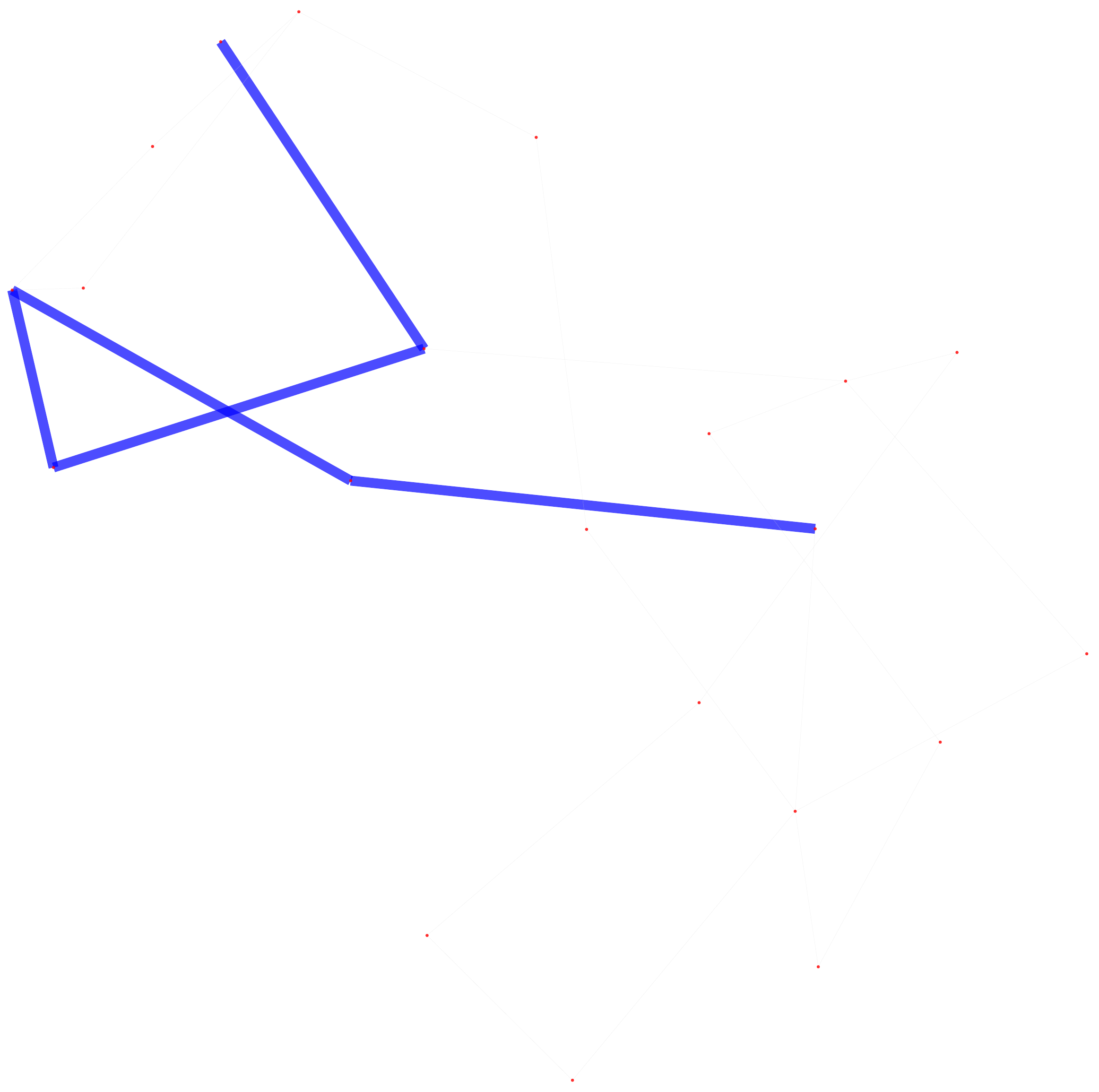}}\hspace{1mm}
\subfloat[  { [30, 1]} ]{\label{fig: path_1}\includegraphics[width=0.10\textwidth]{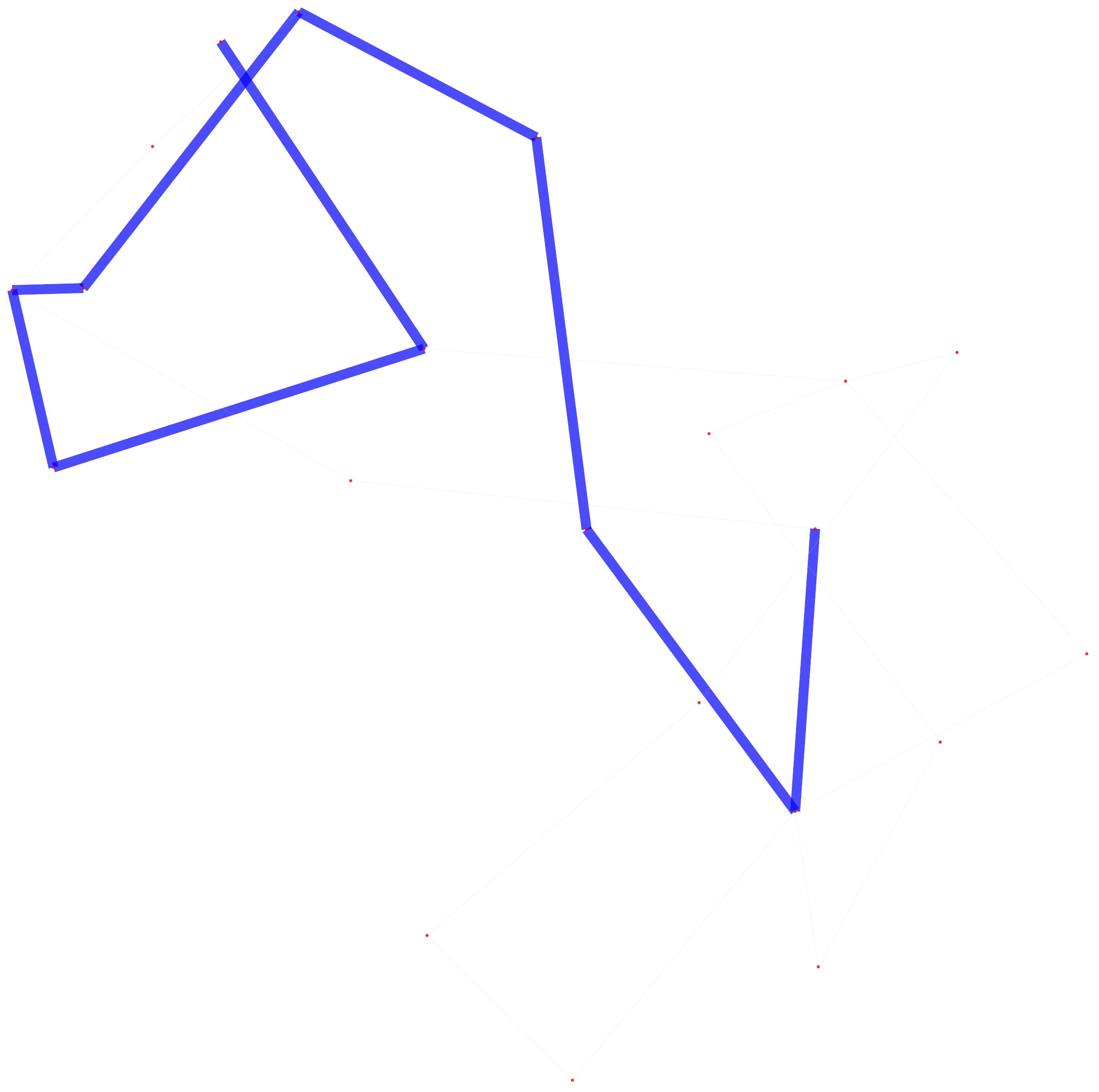}}\hspace{1mm}
\subfloat[  { [30, 4]} ]{\label{fig: path_1}\includegraphics[width=0.10\textwidth]{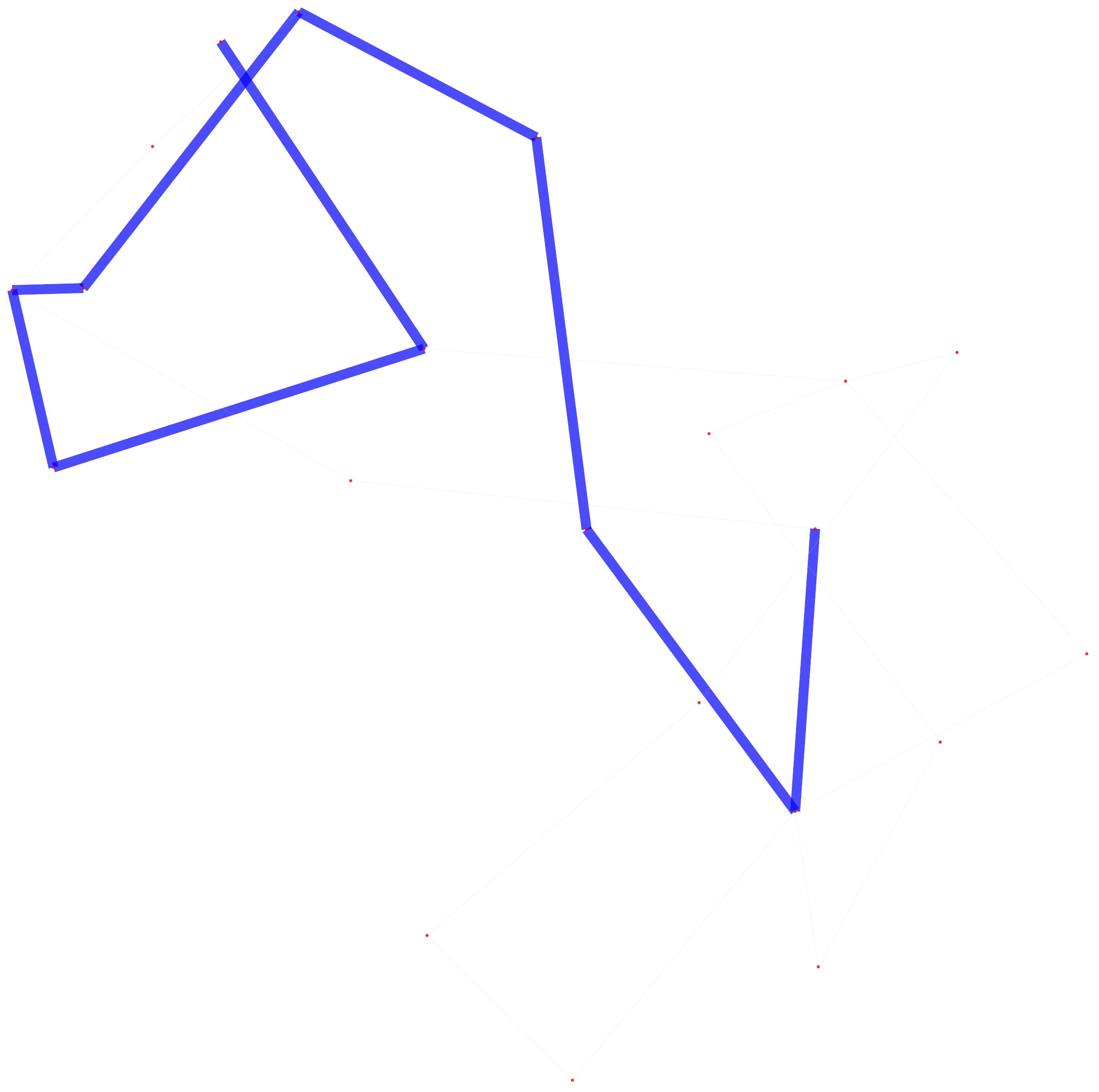}}\hspace{1mm}
\subfloat[  Target ]{\label{fig: path_1}\includegraphics[width=0.10\textwidth]{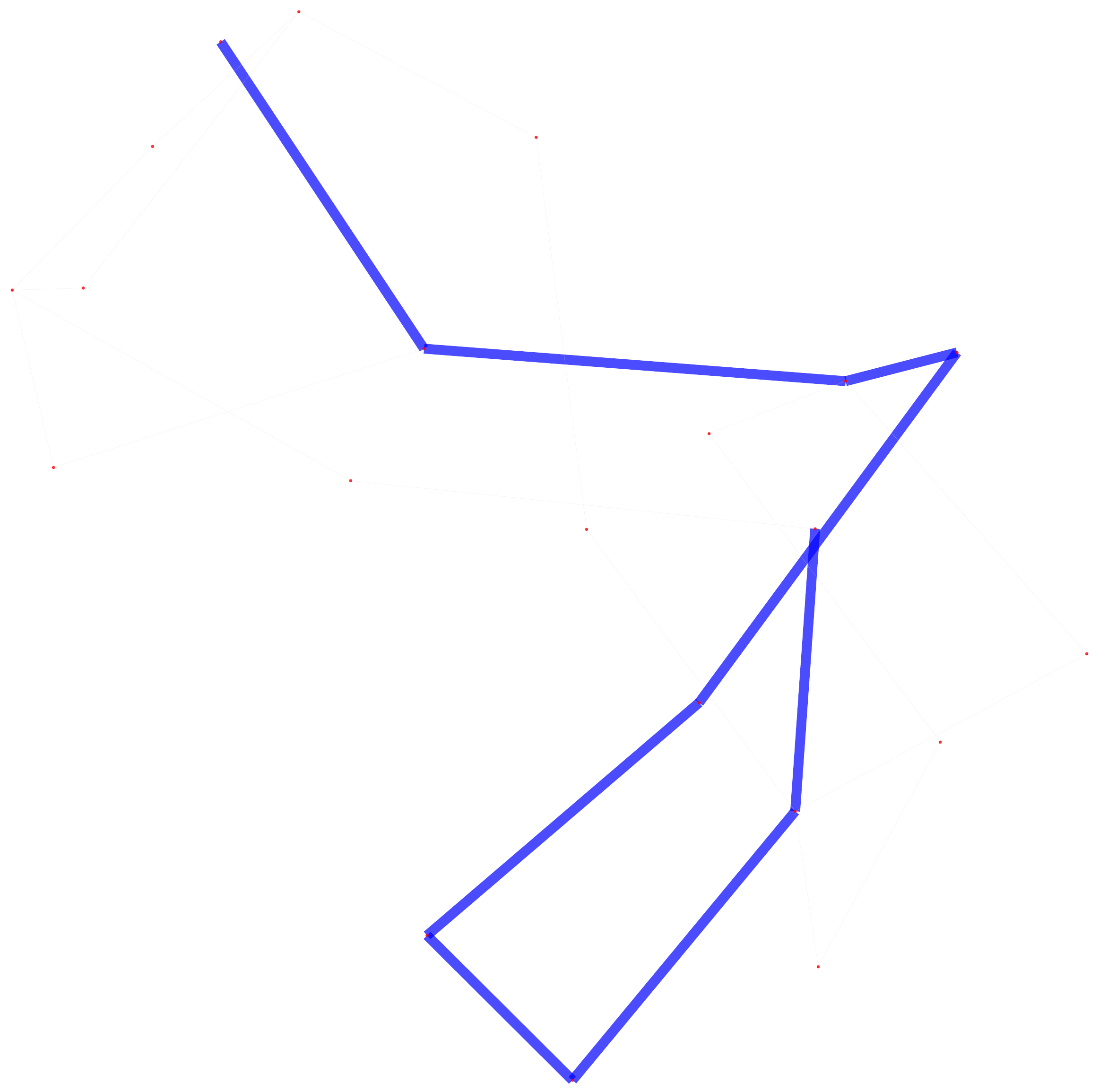}}\hspace{0mm}

\subfloat[  { [60, 0]} ]{\label{fig: path_1}\includegraphics[width=0.10\textwidth]{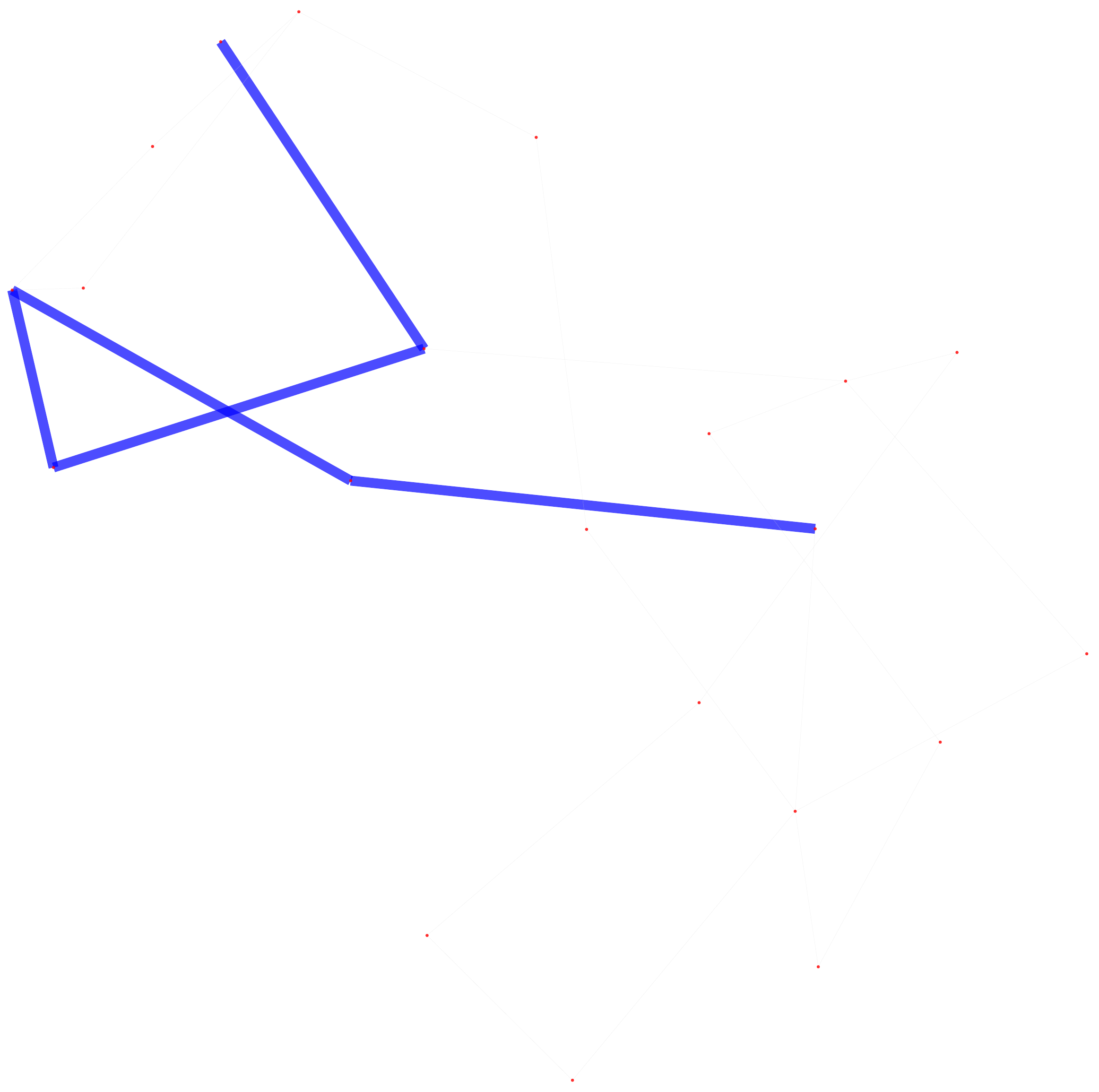}}\hspace{1mm}
\subfloat[  { [60, 1]} ]{\label{fig: path_1}\includegraphics[width=0.10\textwidth]{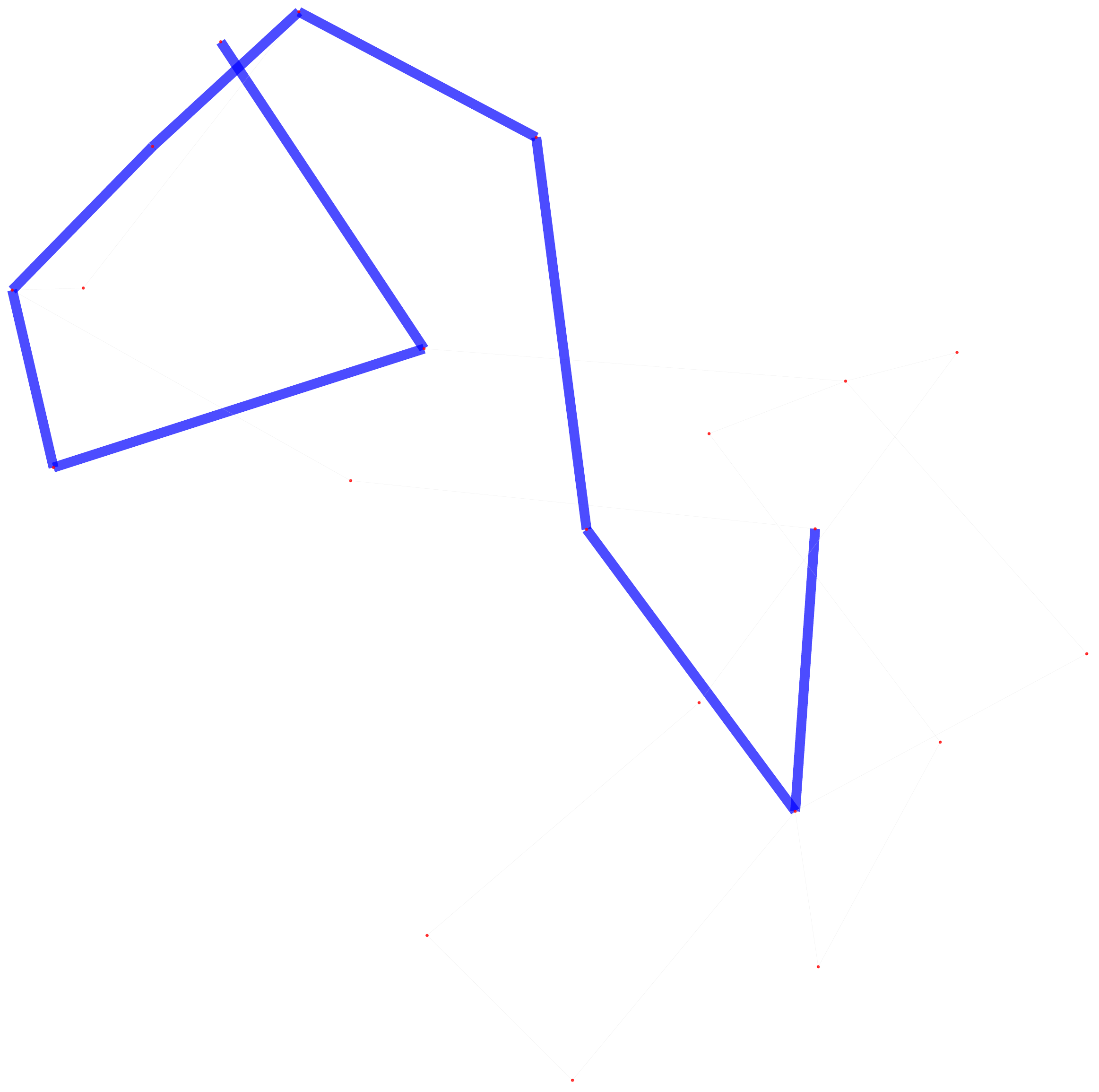}}\hspace{1mm}
\subfloat[  { [60, 4]} ]{\label{fig: path_1}\includegraphics[width=0.10\textwidth]{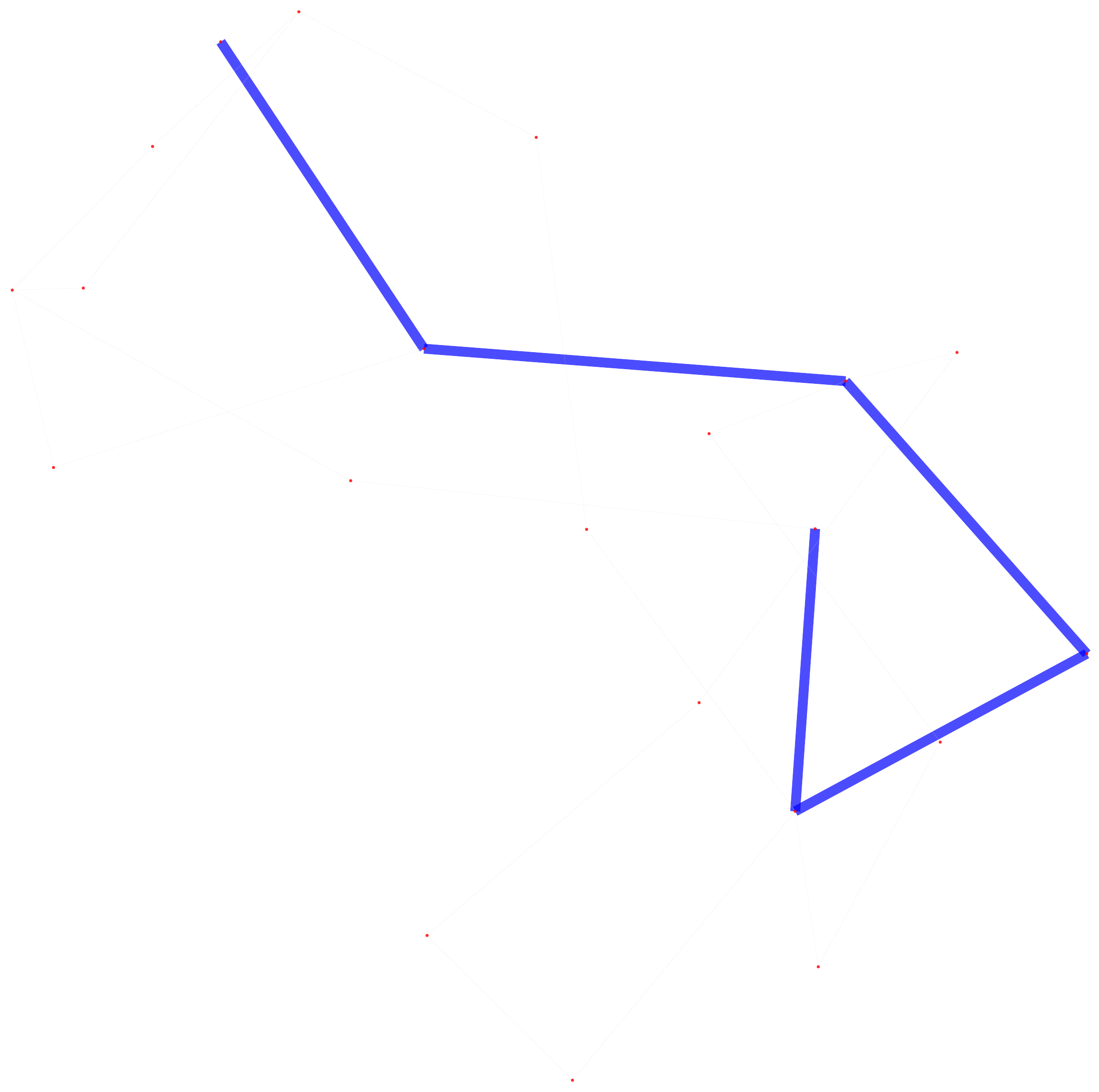}}\hspace{1mm}
\subfloat[  Target ]{\label{fig: path_1}\includegraphics[width=0.10\textwidth]{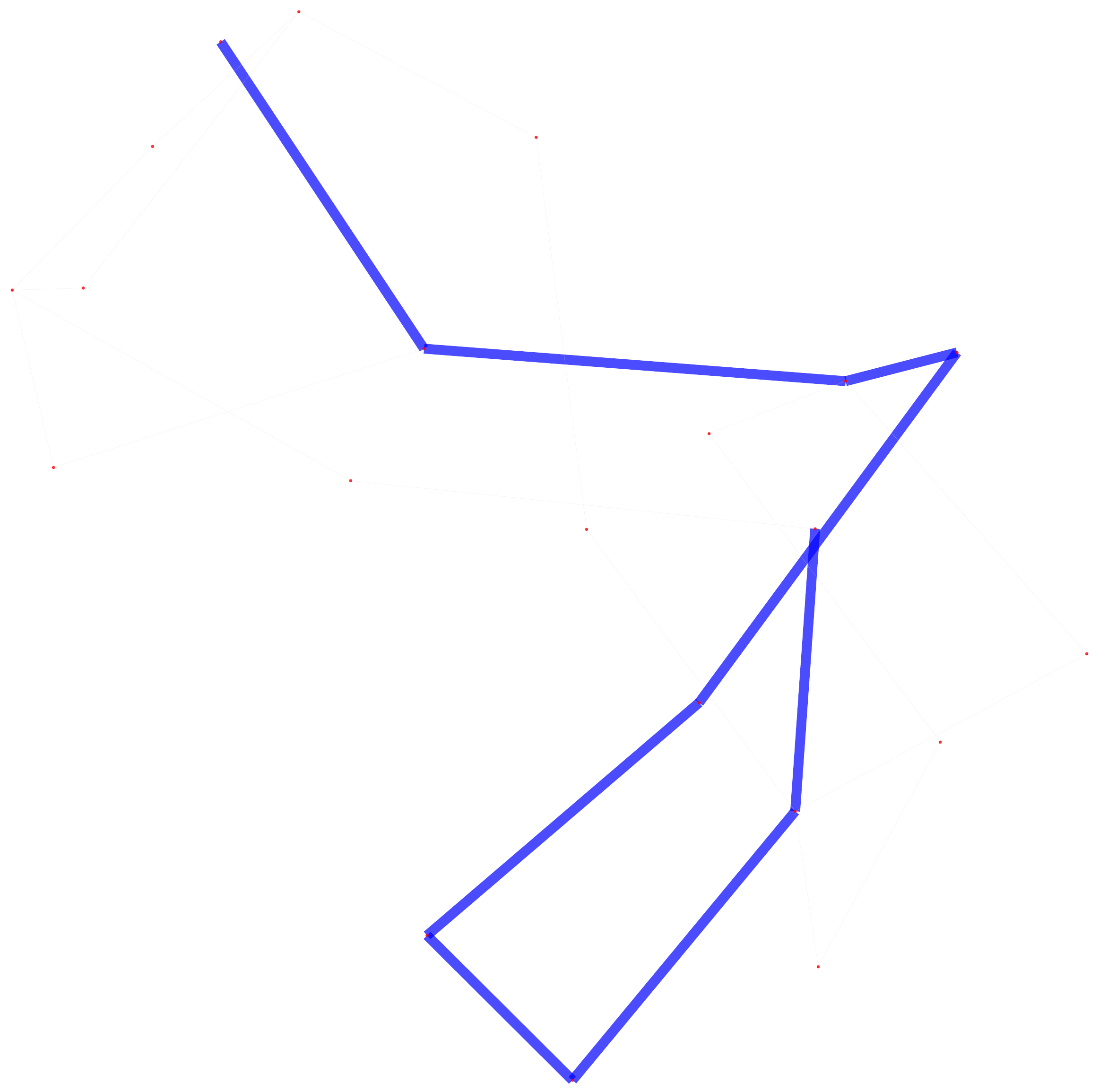}}\hspace{0mm}

\subfloat[  { [240, 0]} ]{\label{fig: path_1}\includegraphics[width=0.10\textwidth]{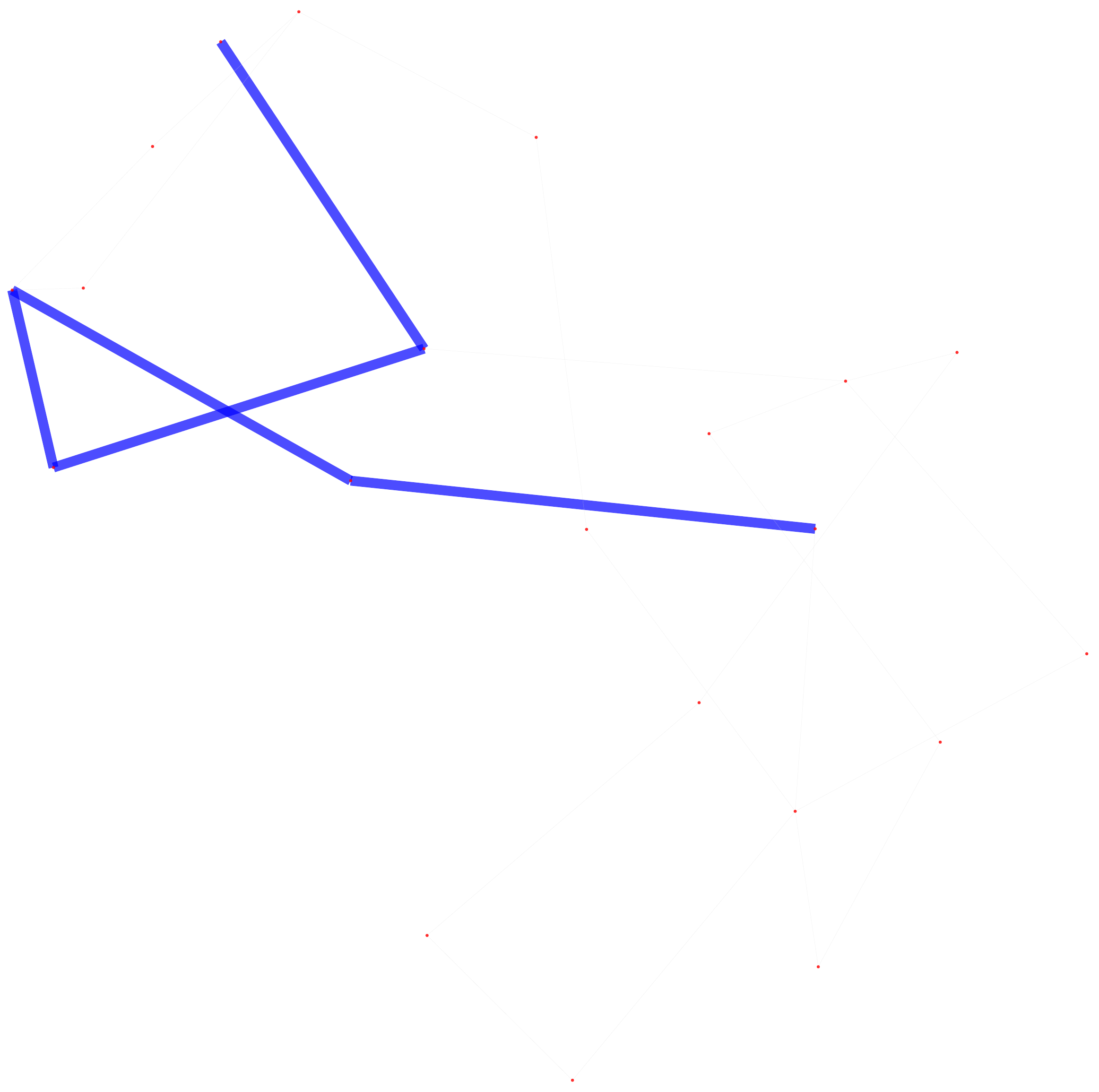}}\hspace{1mm}
\subfloat[  { [240, 1]} ]{\label{fig: path_1}\includegraphics[width=0.10\textwidth]{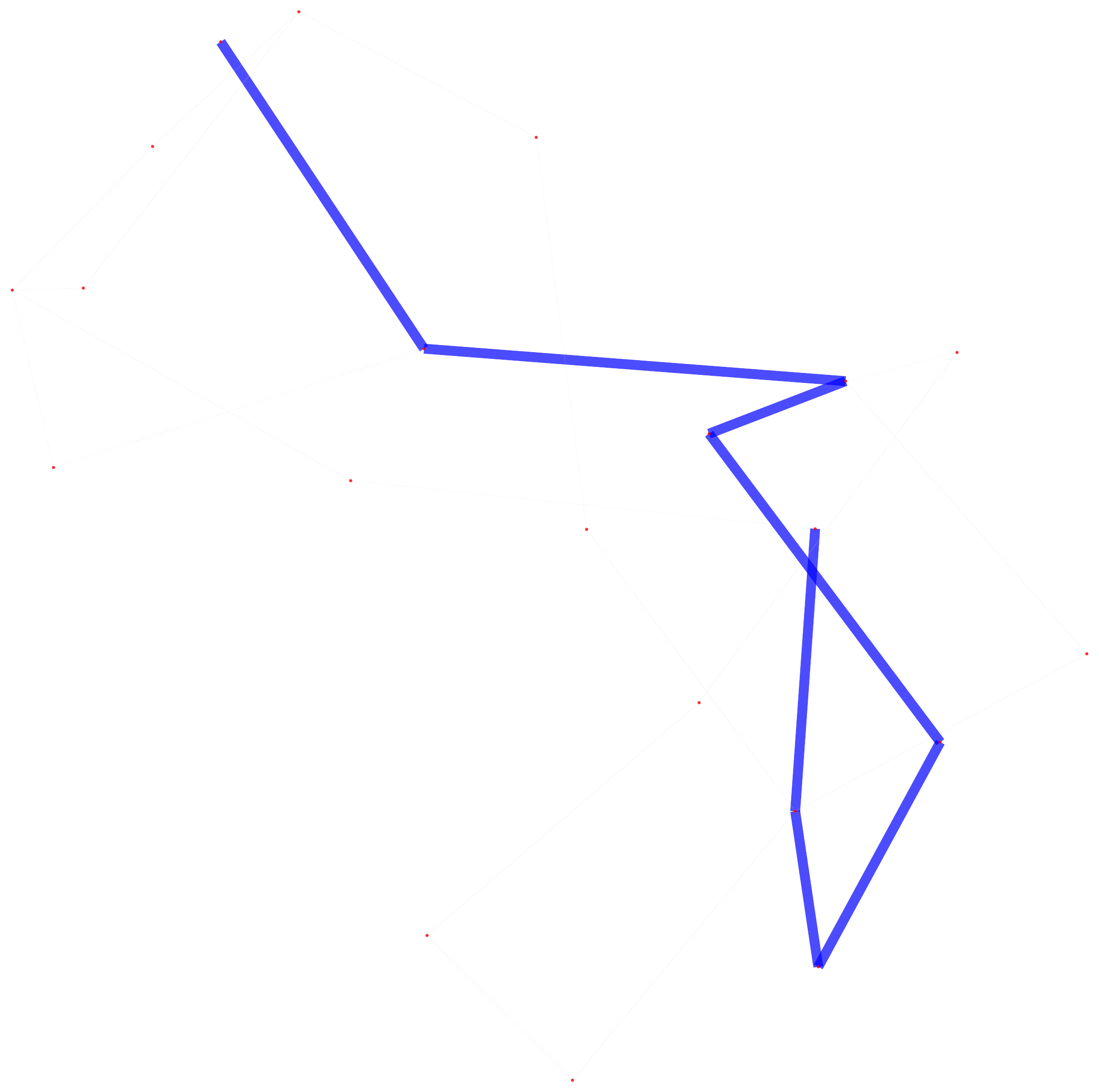}}\hspace{1mm}
\subfloat[  { [240, 4]} ]{\label{fig: path_1}\includegraphics[width=0.10\textwidth]{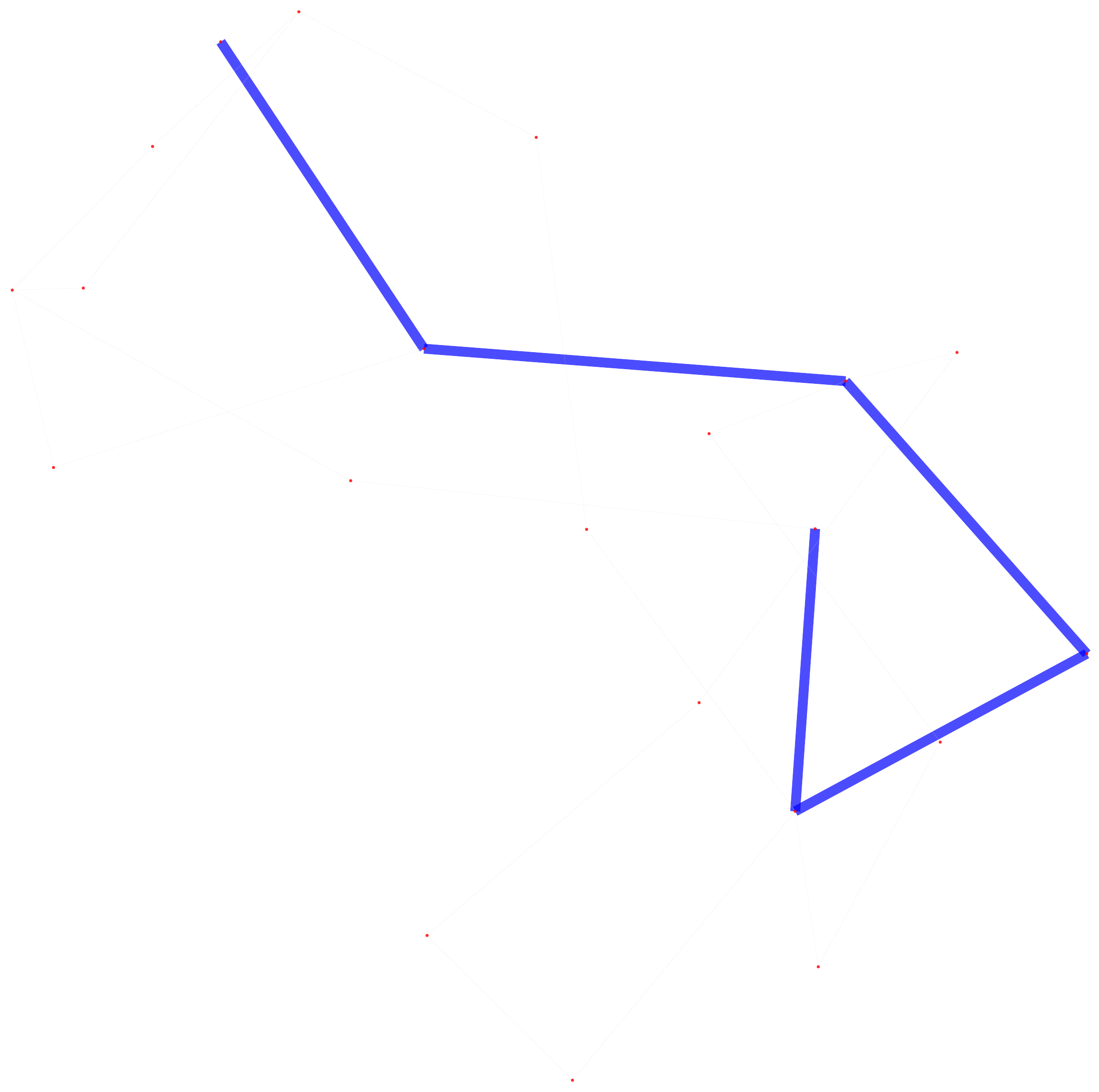}}\hspace{1mm}
\subfloat[  Target ]{\label{fig: path_1}\includegraphics[width=0.10\textwidth]{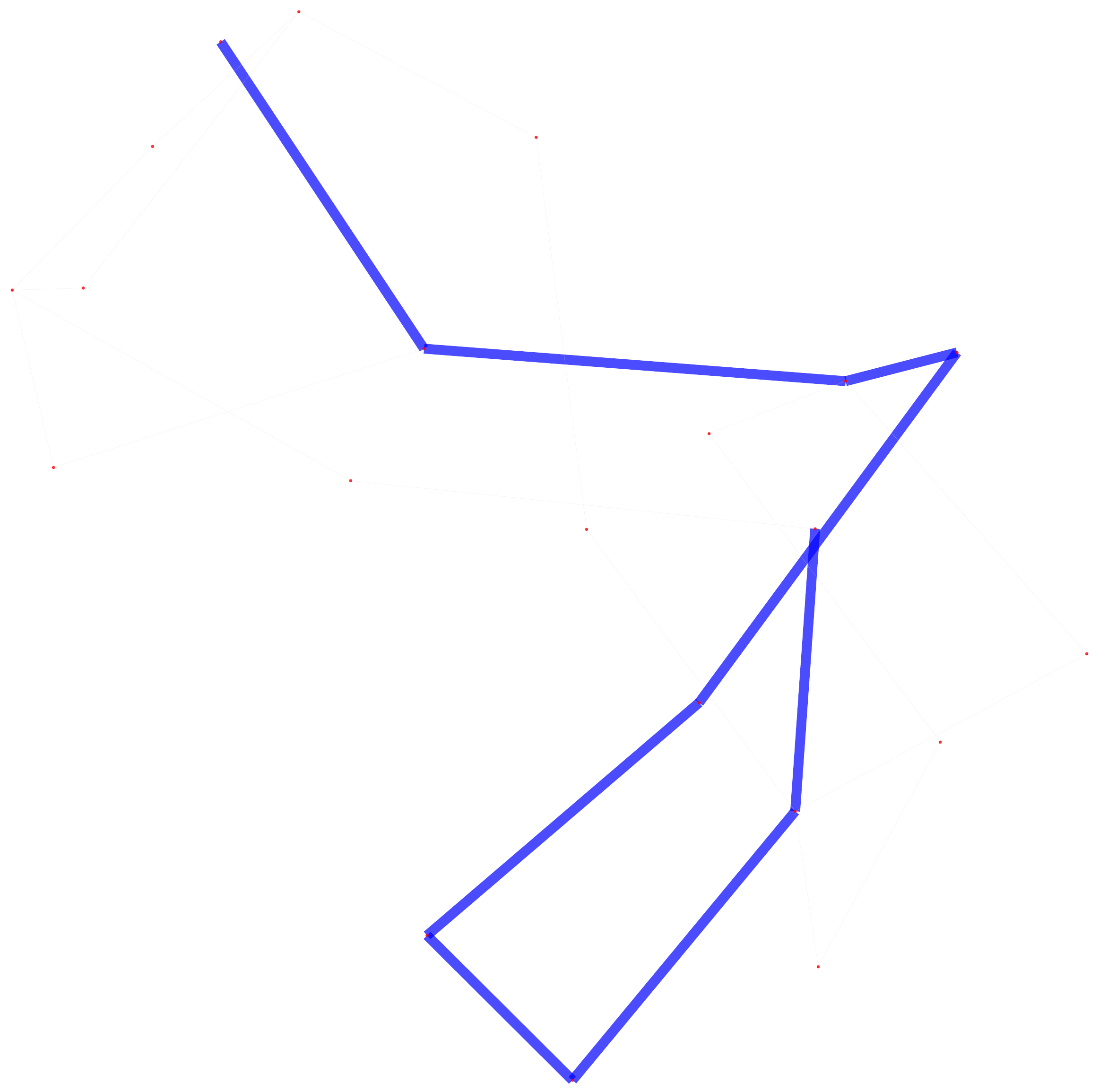}}\hspace{0mm}

\caption{\small Visualization of the results of QRTS-P for an example testing query for Tree-to-Path on Kro.}
\label{fig: more_path_3}
\end{figure}

\begin{figure}[h]
\centering
\captionsetup[subfloat]{labelfont=scriptsize,textfont=scriptsize,labelformat=empty}

\subfloat[  { [30, 0]} ]{\label{fig: path_1}\includegraphics[width=0.10\textwidth]{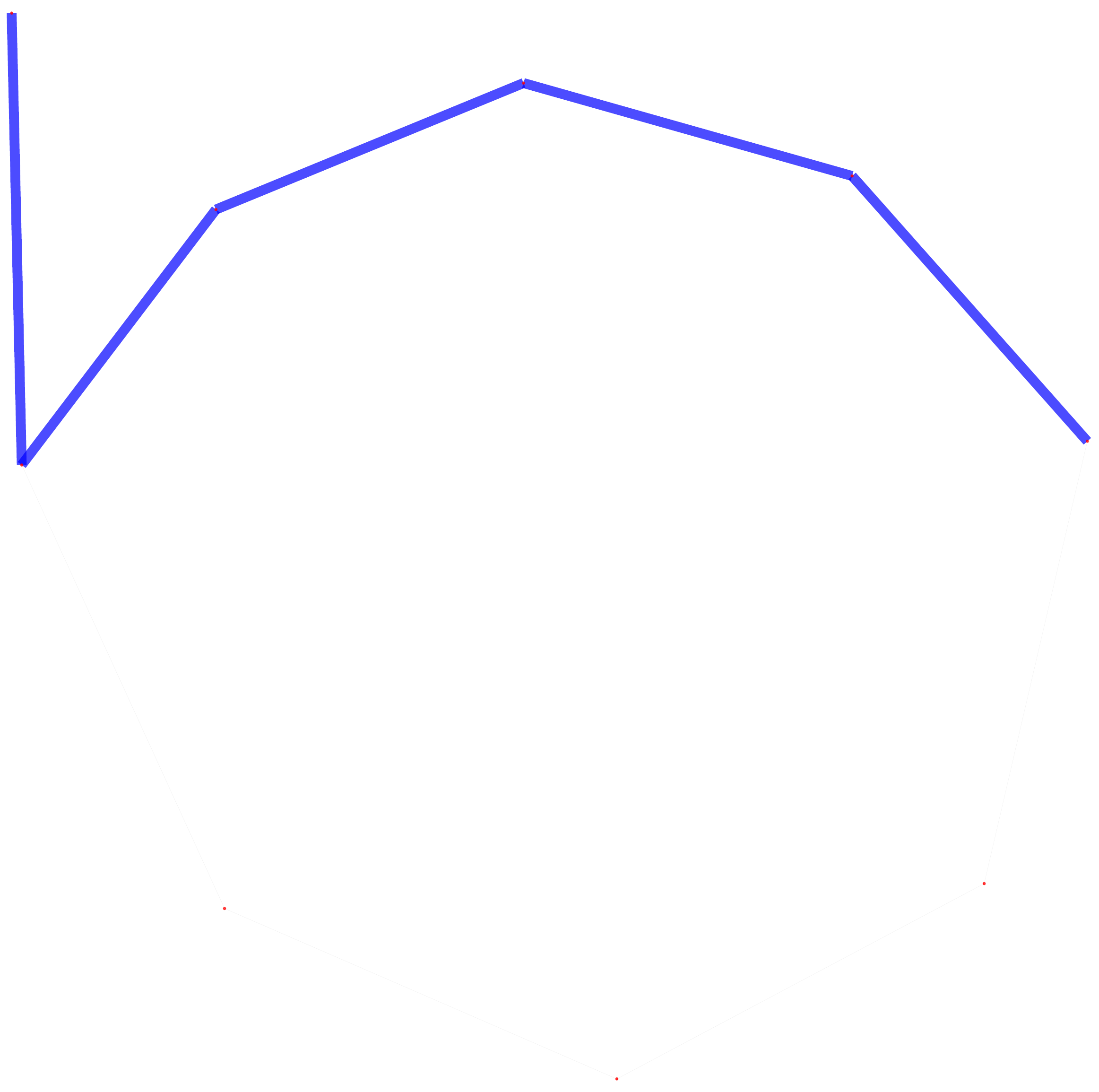}}\hspace{1mm}
\subfloat[  { [30, 1]} ]{\label{fig: path_1}\includegraphics[width=0.10\textwidth]{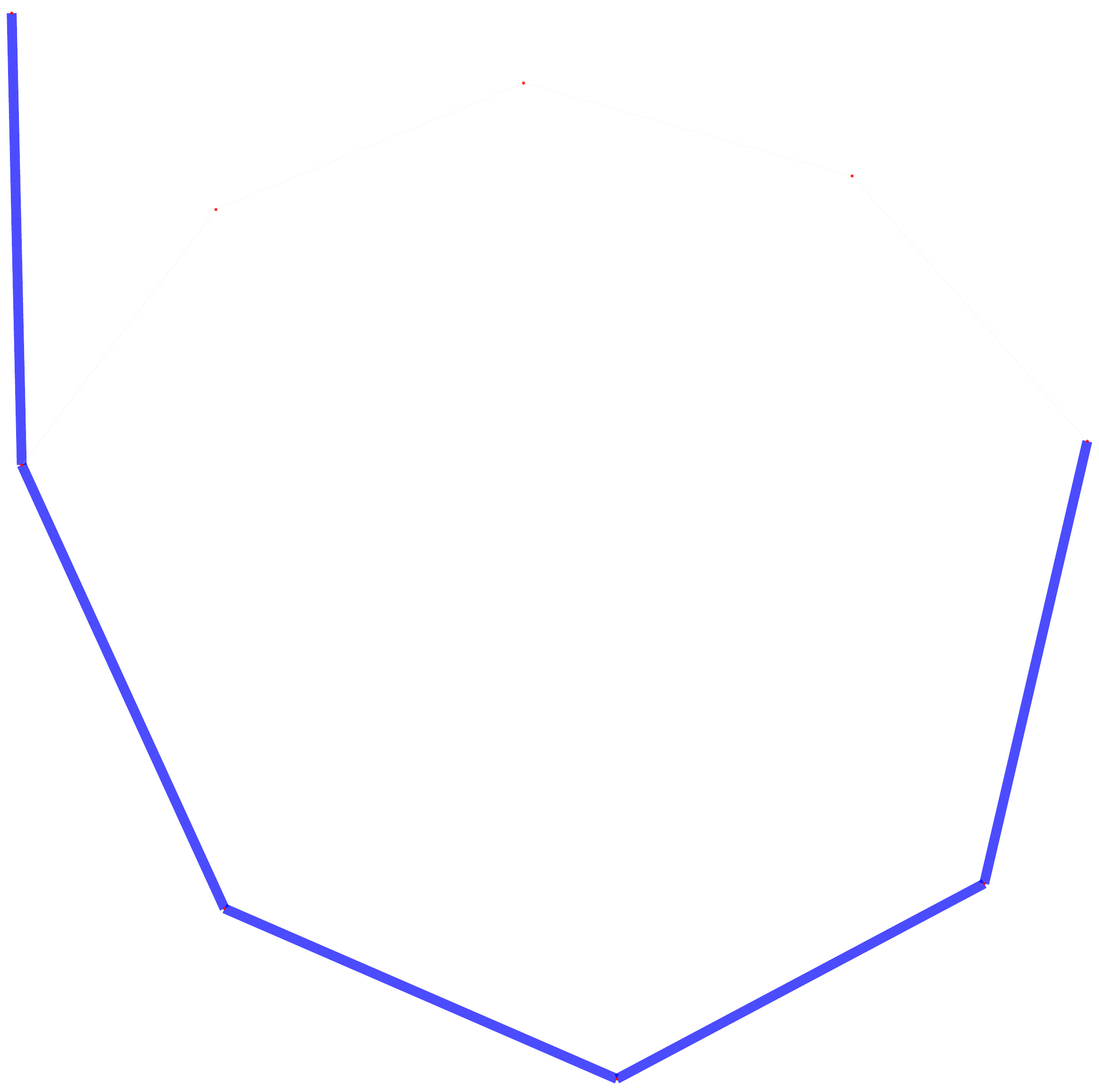}}\hspace{1mm}
\subfloat[  { [30, 4]} ]{\label{fig: path_1}\includegraphics[width=0.10\textwidth]{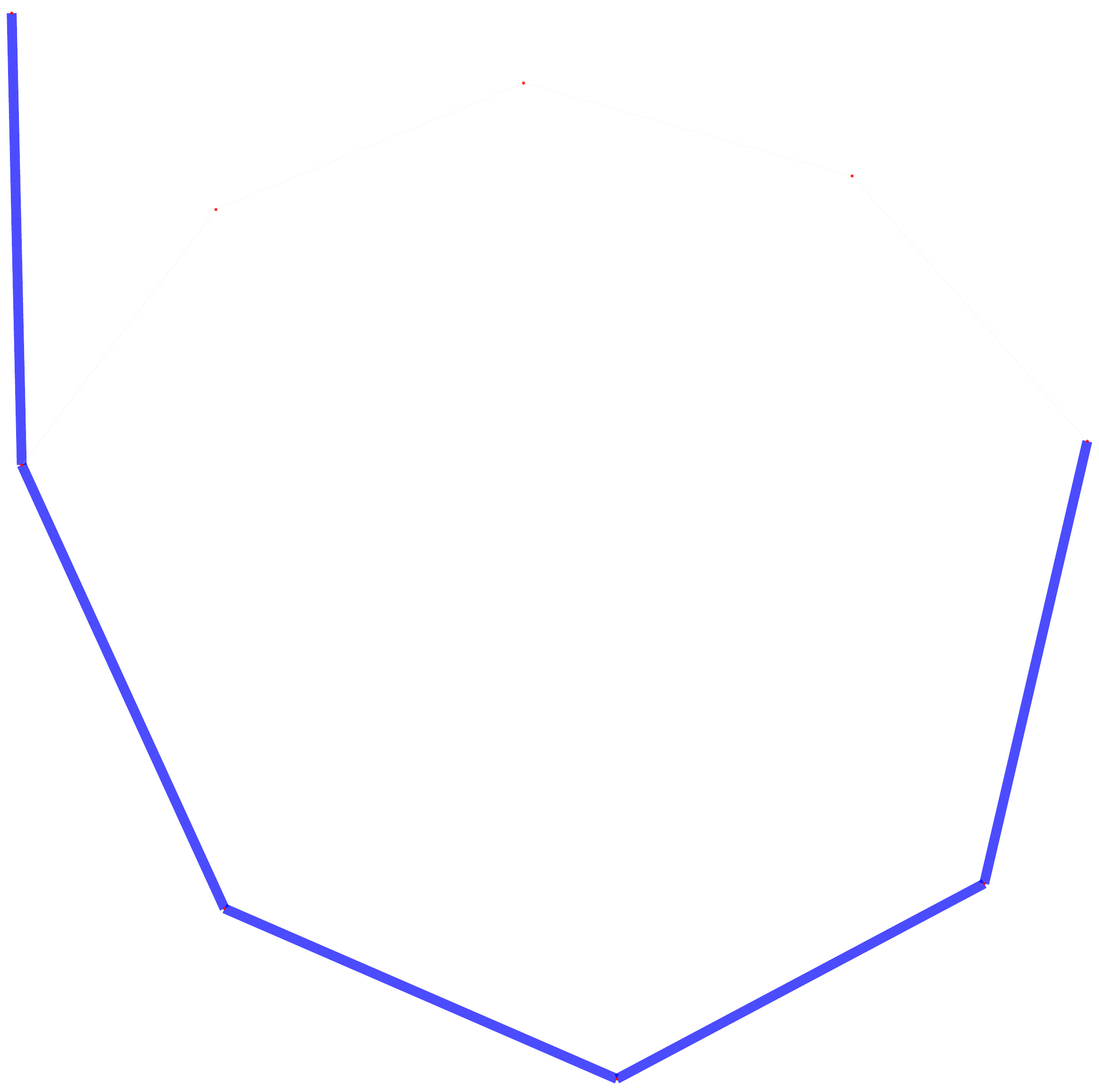}}\hspace{1mm}
\subfloat[  Target ]{\label{fig: path_1}\includegraphics[width=0.10\textwidth]{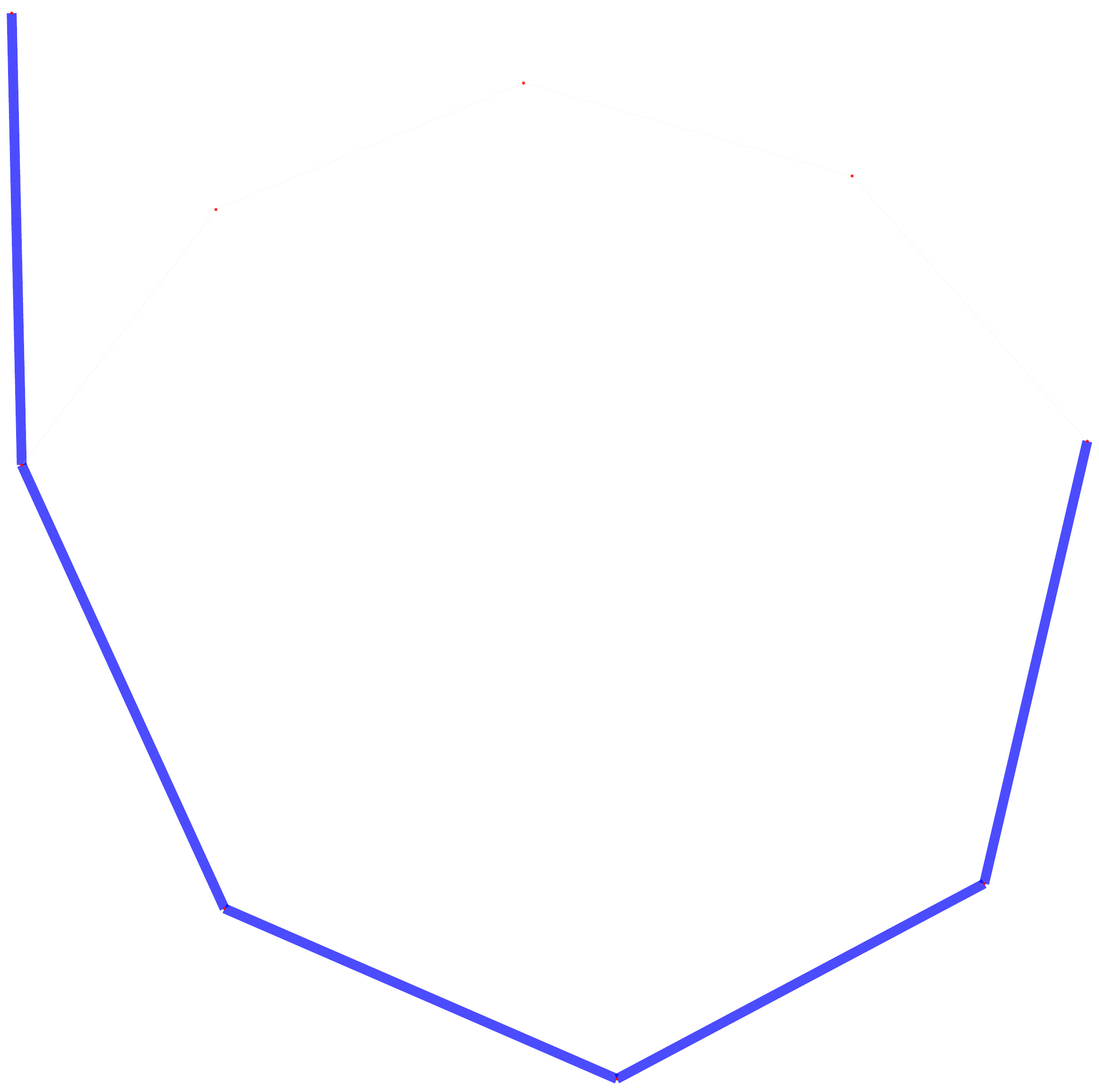}}\hspace{0mm}

\subfloat[  { [60, 0]} ]{\label{fig: path_1}\includegraphics[width=0.10\textwidth]{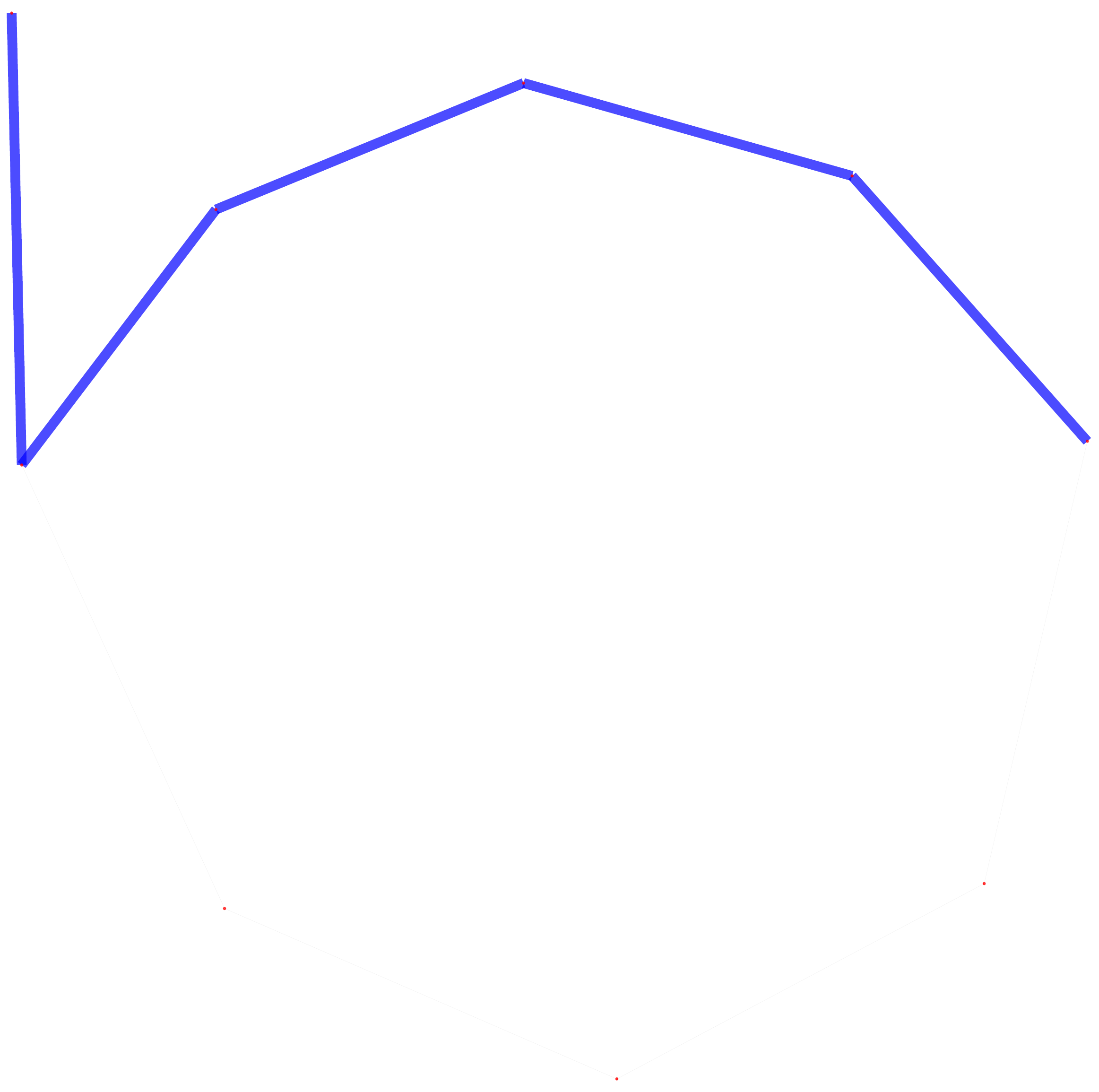}}\hspace{1mm}
\subfloat[  { [60, 1]} ]{\label{fig: path_1}\includegraphics[width=0.10\textwidth]{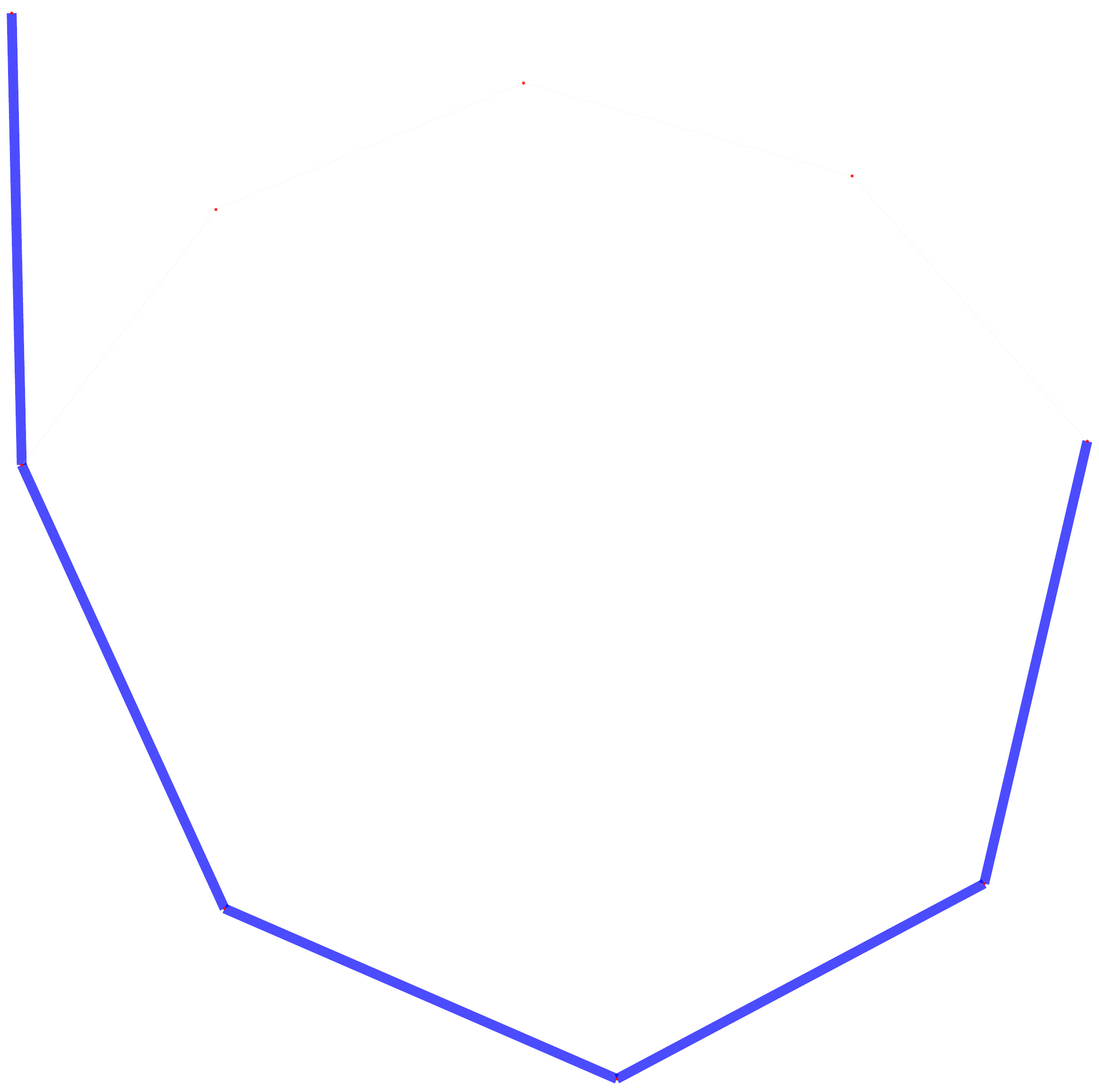}}\hspace{1mm}
\subfloat[  { [60, 4]} ]{\label{fig: path_1}\includegraphics[width=0.10\textwidth]{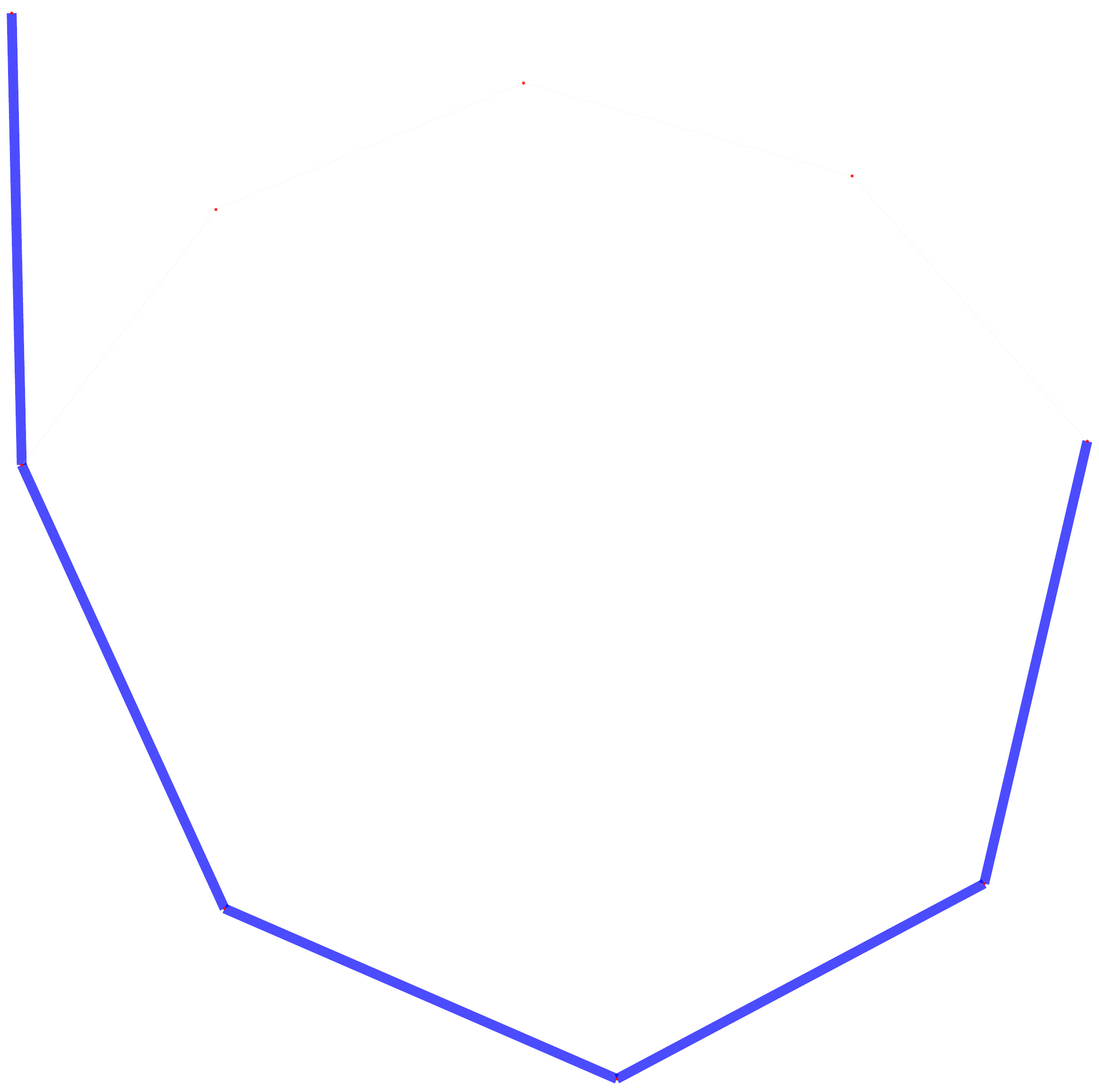}}\hspace{1mm}
\subfloat[  Target ]{\label{fig: path_1}\includegraphics[width=0.10\textwidth]{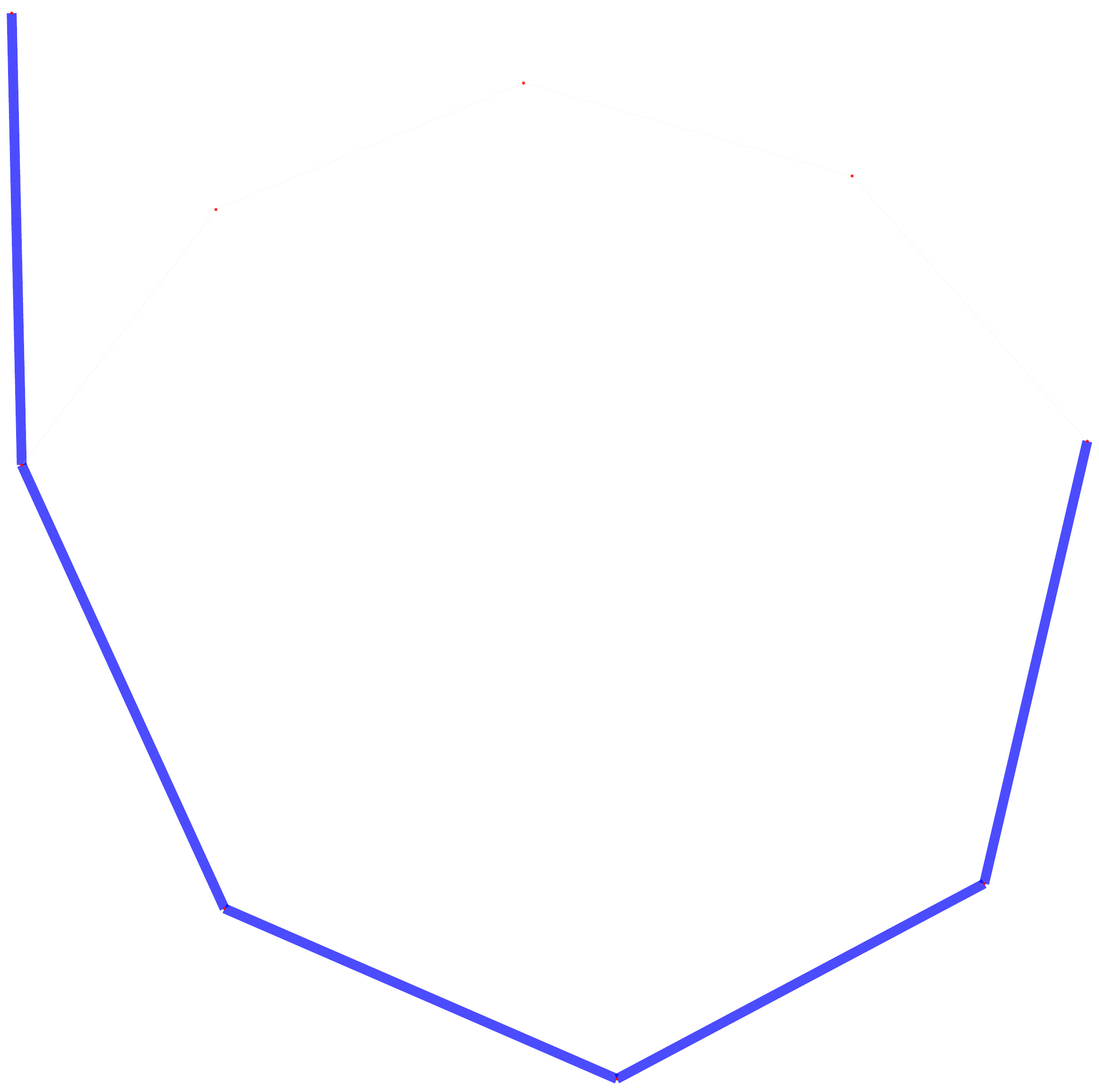}}\hspace{0mm}

\subfloat[  { [240, 0]} ]{\label{fig: path_1}\includegraphics[width=0.10\textwidth]{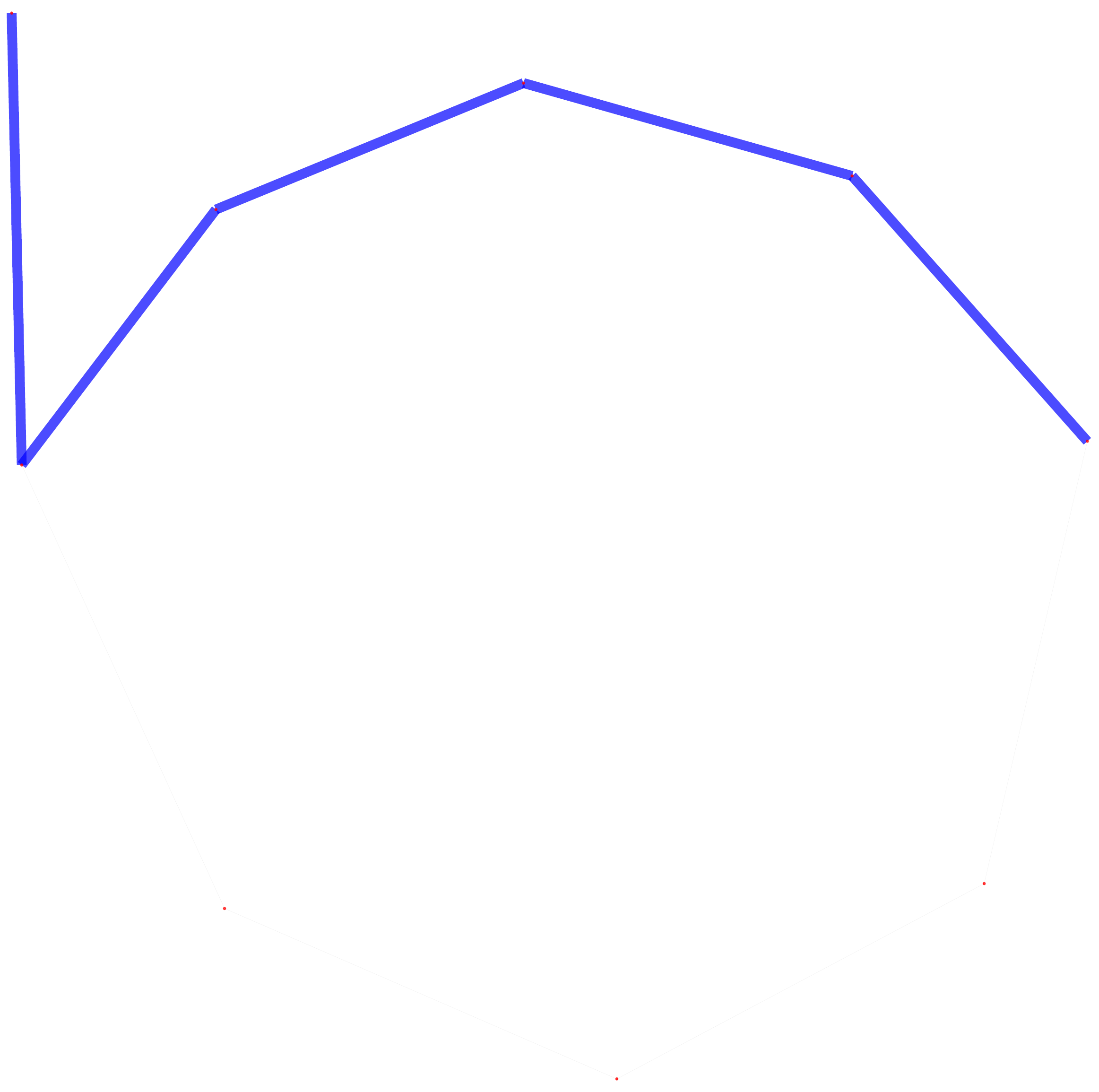}}\hspace{1mm}
\subfloat[  { [240, 1]} ]{\label{fig: path_1}\includegraphics[width=0.10\textwidth]{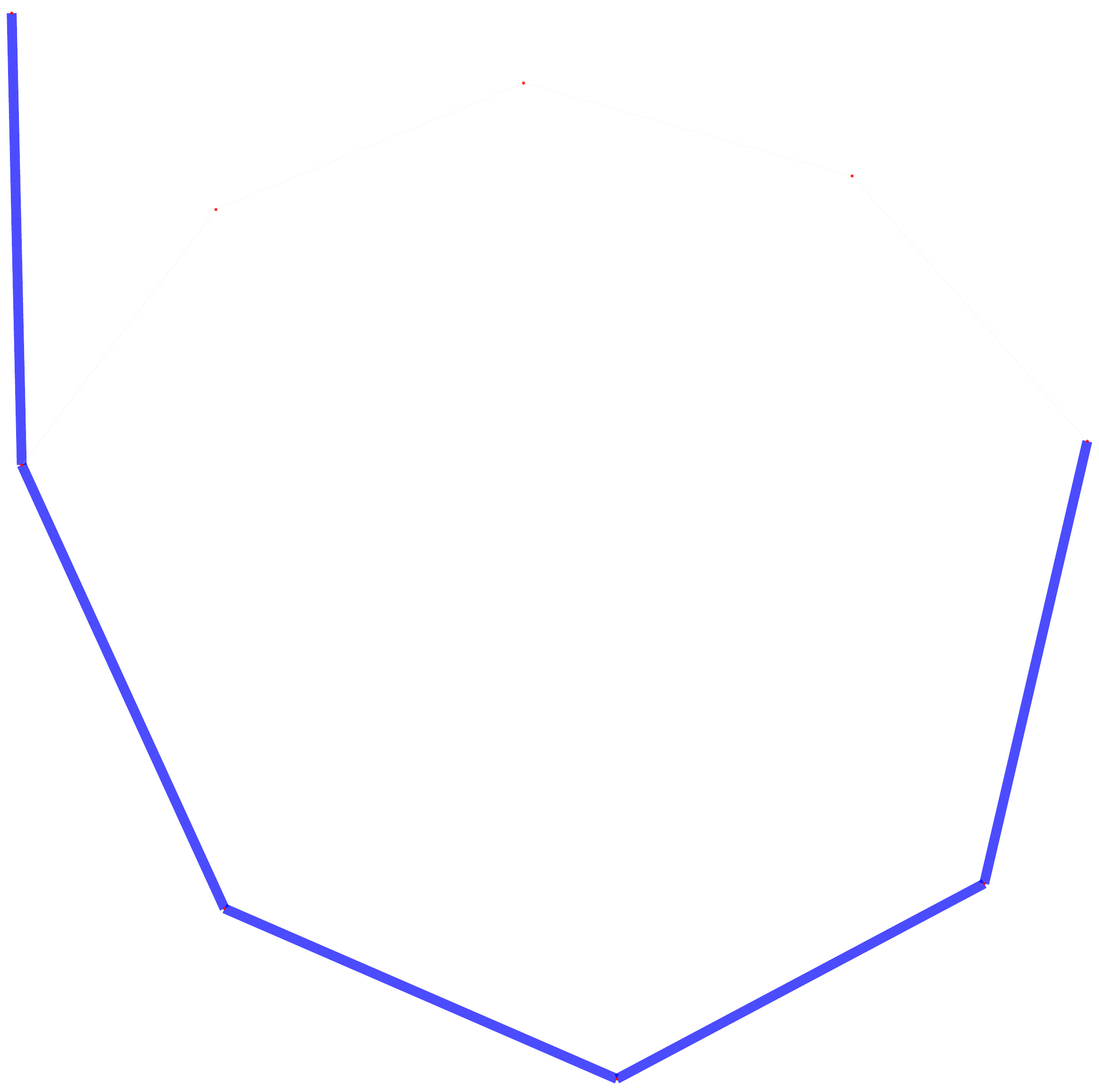}}\hspace{1mm}
\subfloat[  { [240, 4]} ]{\label{fig: path_1}\includegraphics[width=0.10\textwidth]{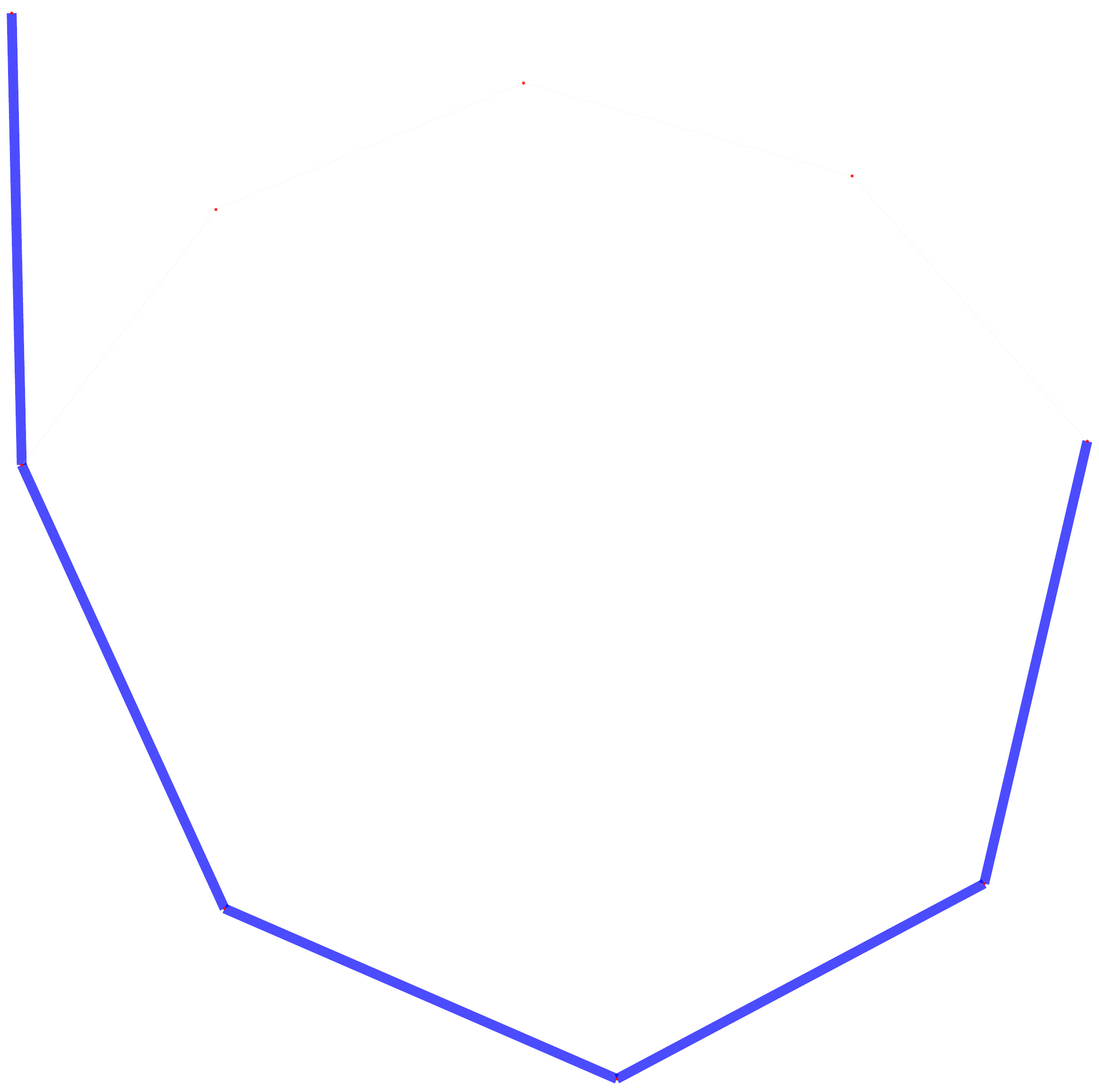}}\hspace{1mm}
\subfloat[  Target ]{\label{fig: path_1}\includegraphics[width=0.10\textwidth]{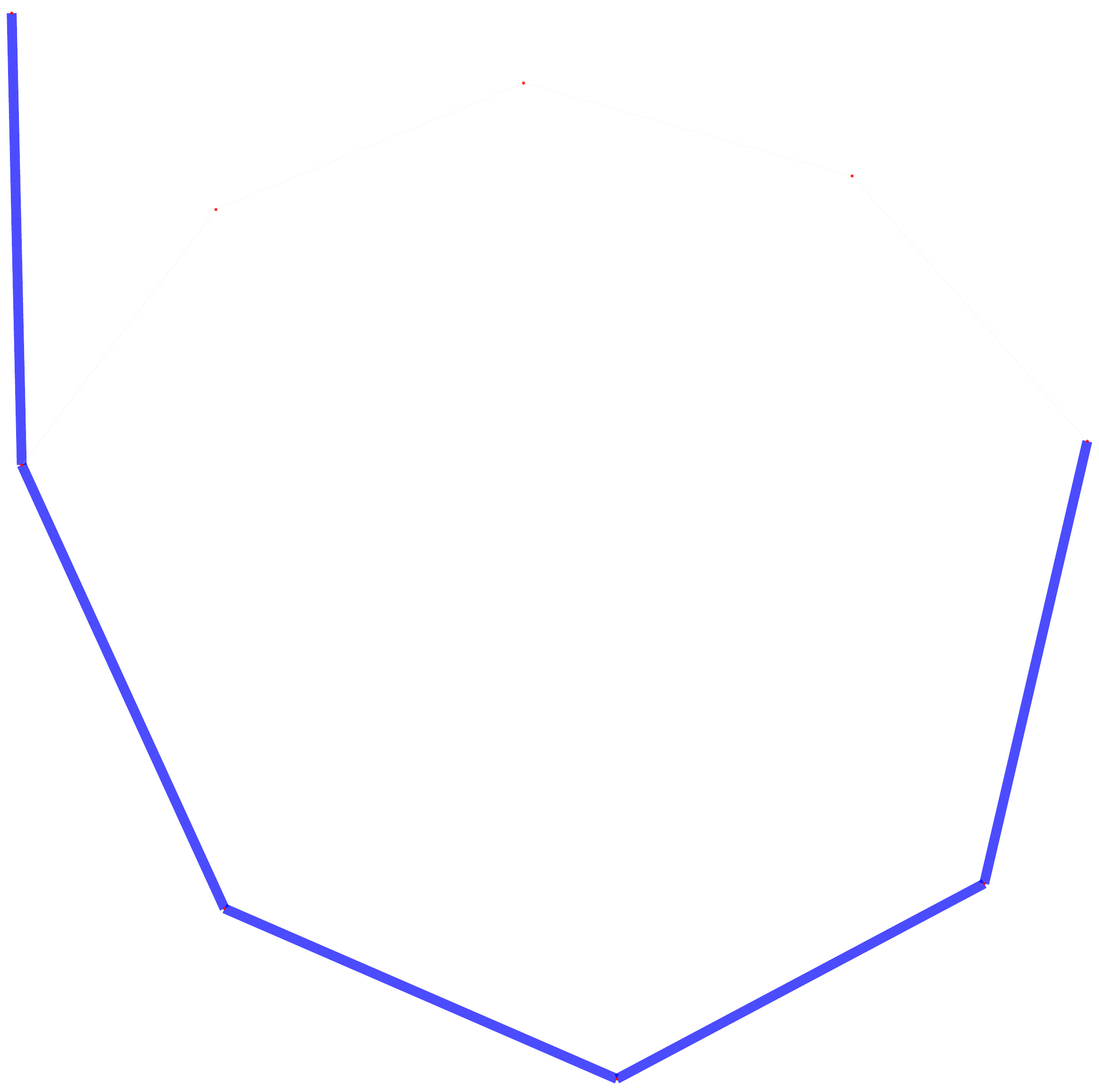}}\hspace{0mm}

\caption{\small Visualization of the results of QRTS-P for an example testing query for Tree-to-Path on Kro.}
\label{fig: more_path_4}
\end{figure}

\begin{figure}[h]
\centering
\captionsetup[subfloat]{labelfont=scriptsize,textfont=scriptsize,labelformat=empty}
\subfloat[  { [30, 0]} ]{\label{fig: path_1}\includegraphics[width=0.10\textwidth]{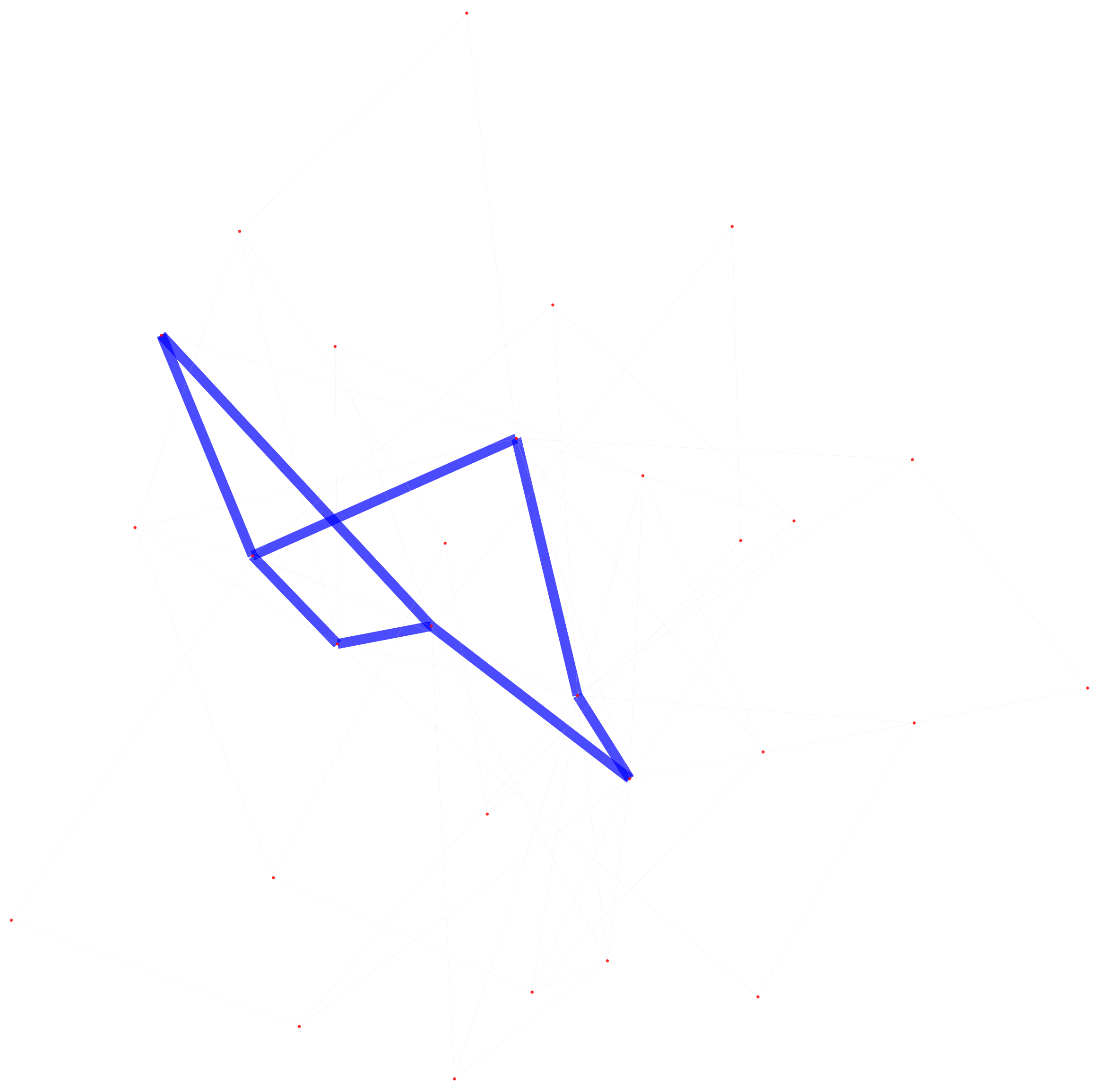}}\hspace{1mm}
\subfloat[  { [30, 1]} ]{\label{fig: path_1}\includegraphics[width=0.10\textwidth]{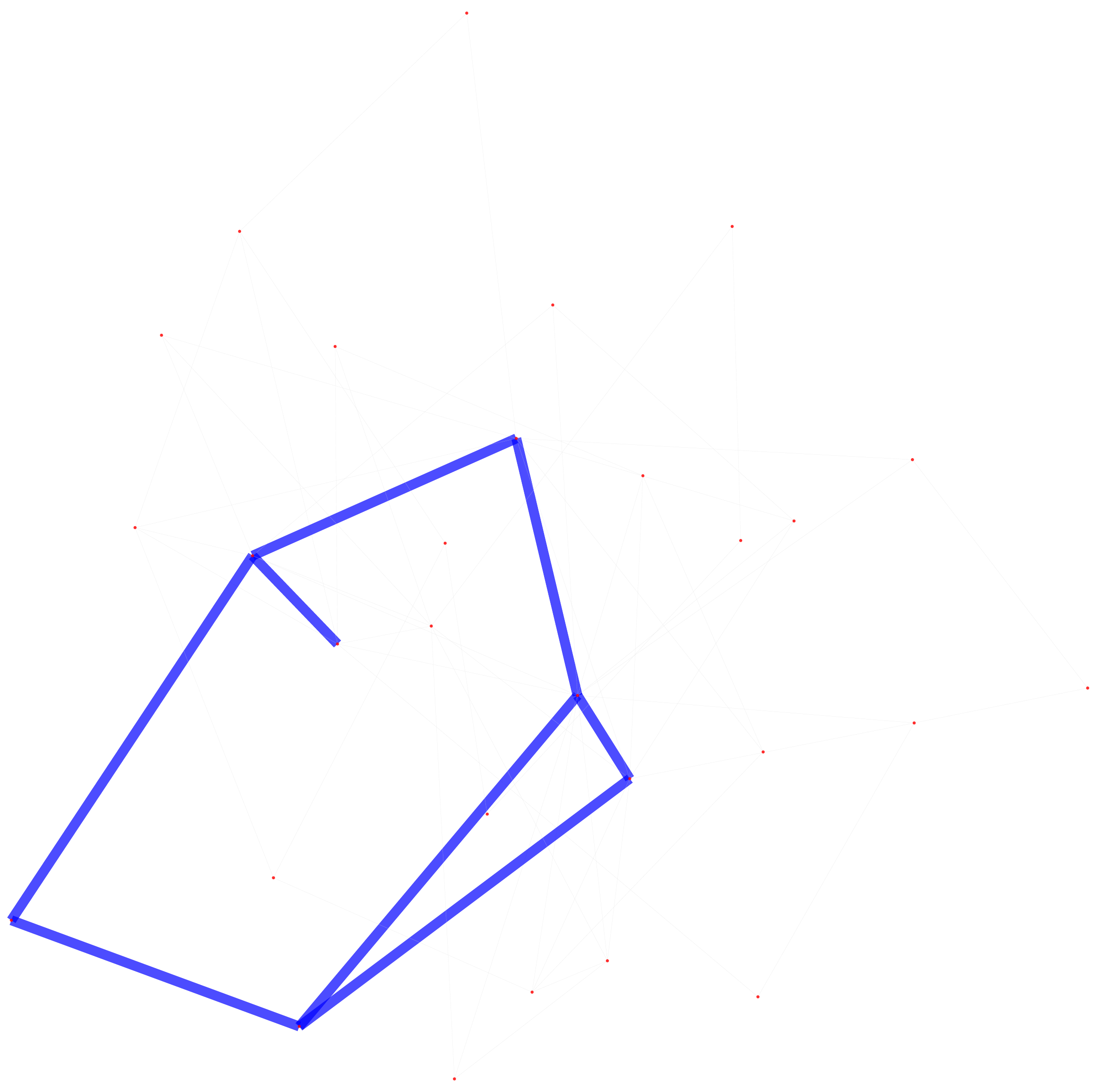}}\hspace{1mm}
\subfloat[  { [30, 2]} ]{\label{fig: path_1}\includegraphics[width=0.10\textwidth]{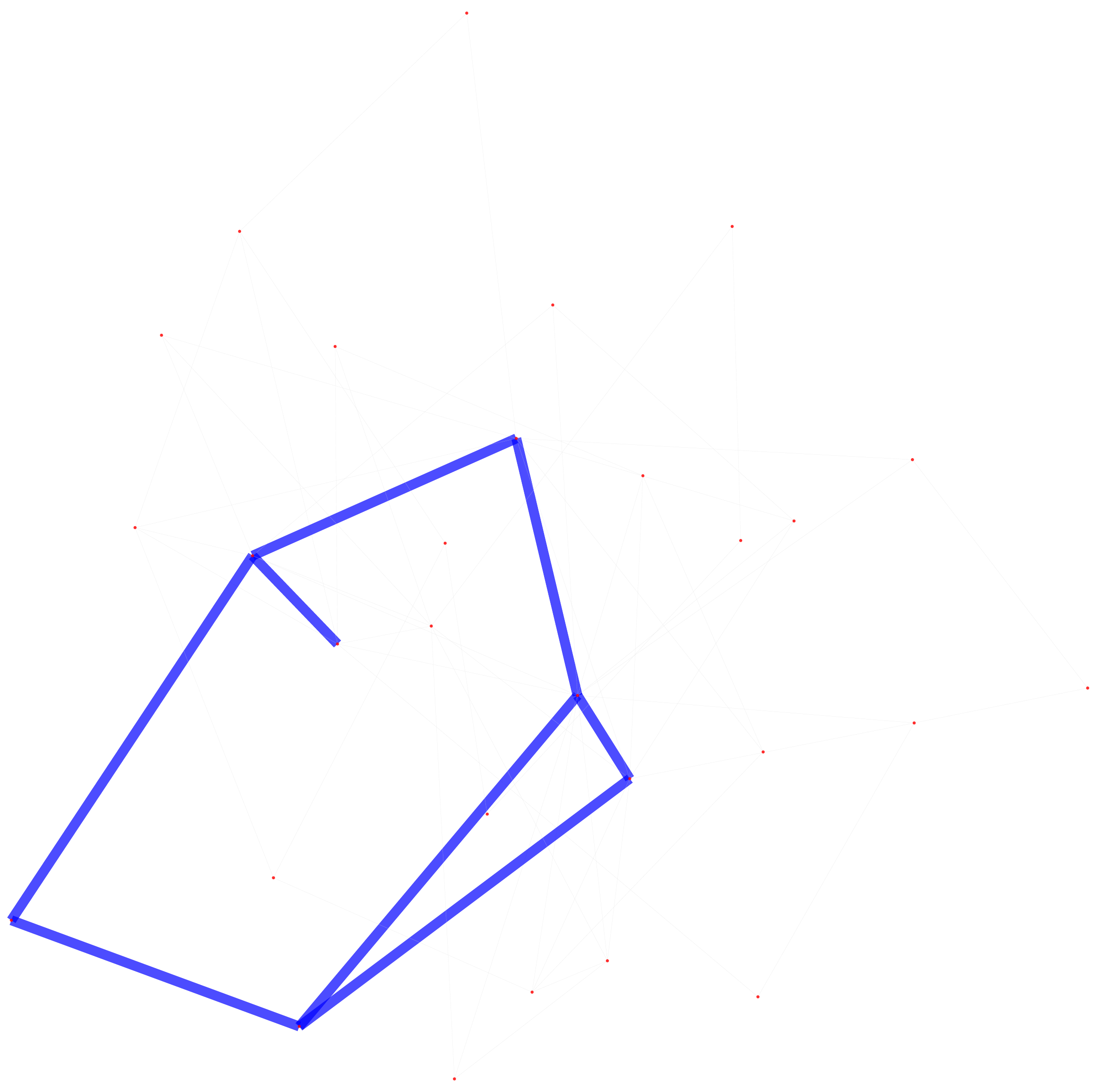}}\hspace{1mm}
\subfloat[  Target ]{\label{fig: path_1}\includegraphics[width=0.10\textwidth]{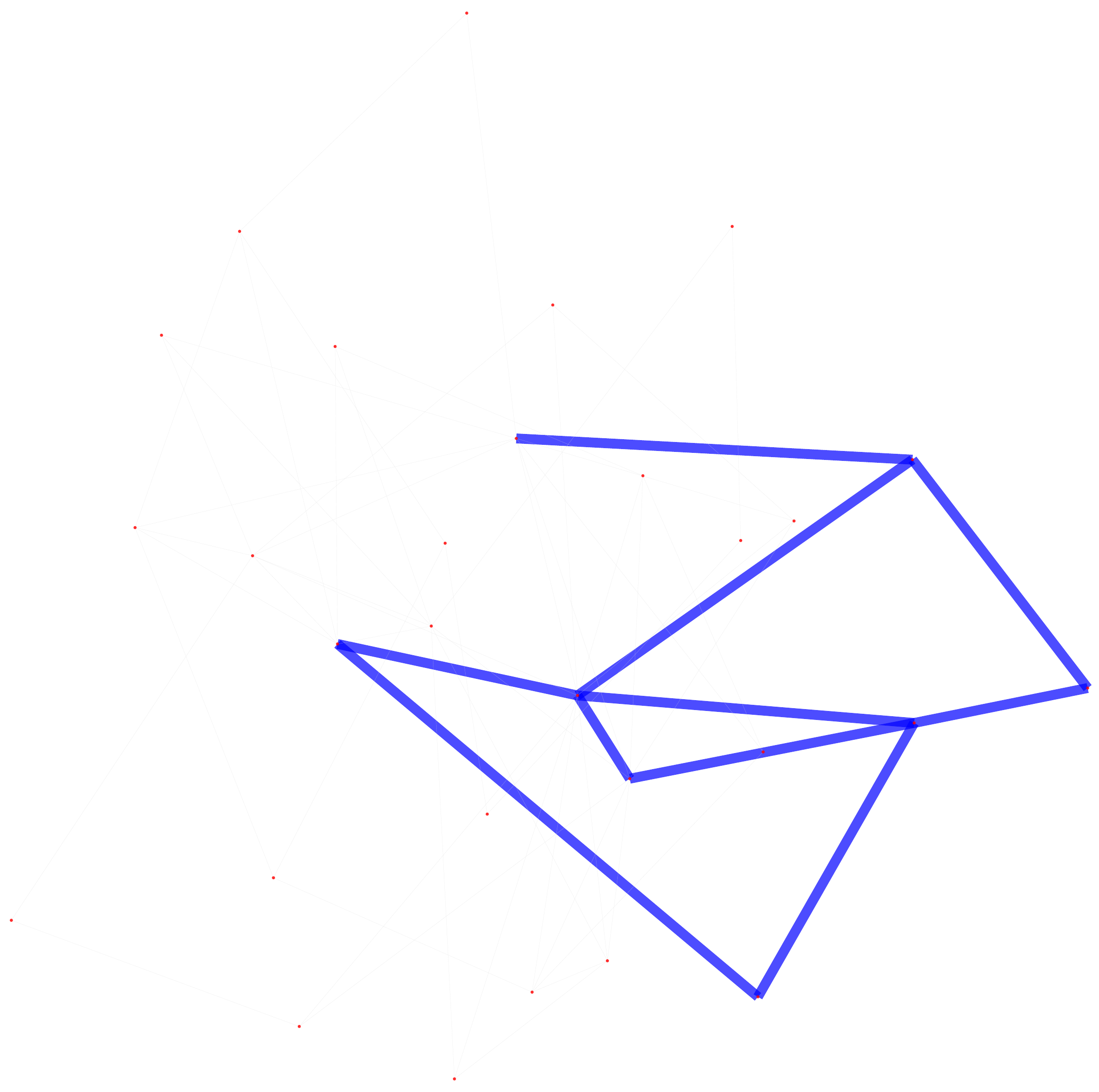}}\hspace{0mm}


\subfloat[  { [240, 0]} ]{\label{fig: path_1}\includegraphics[width=0.10\textwidth]{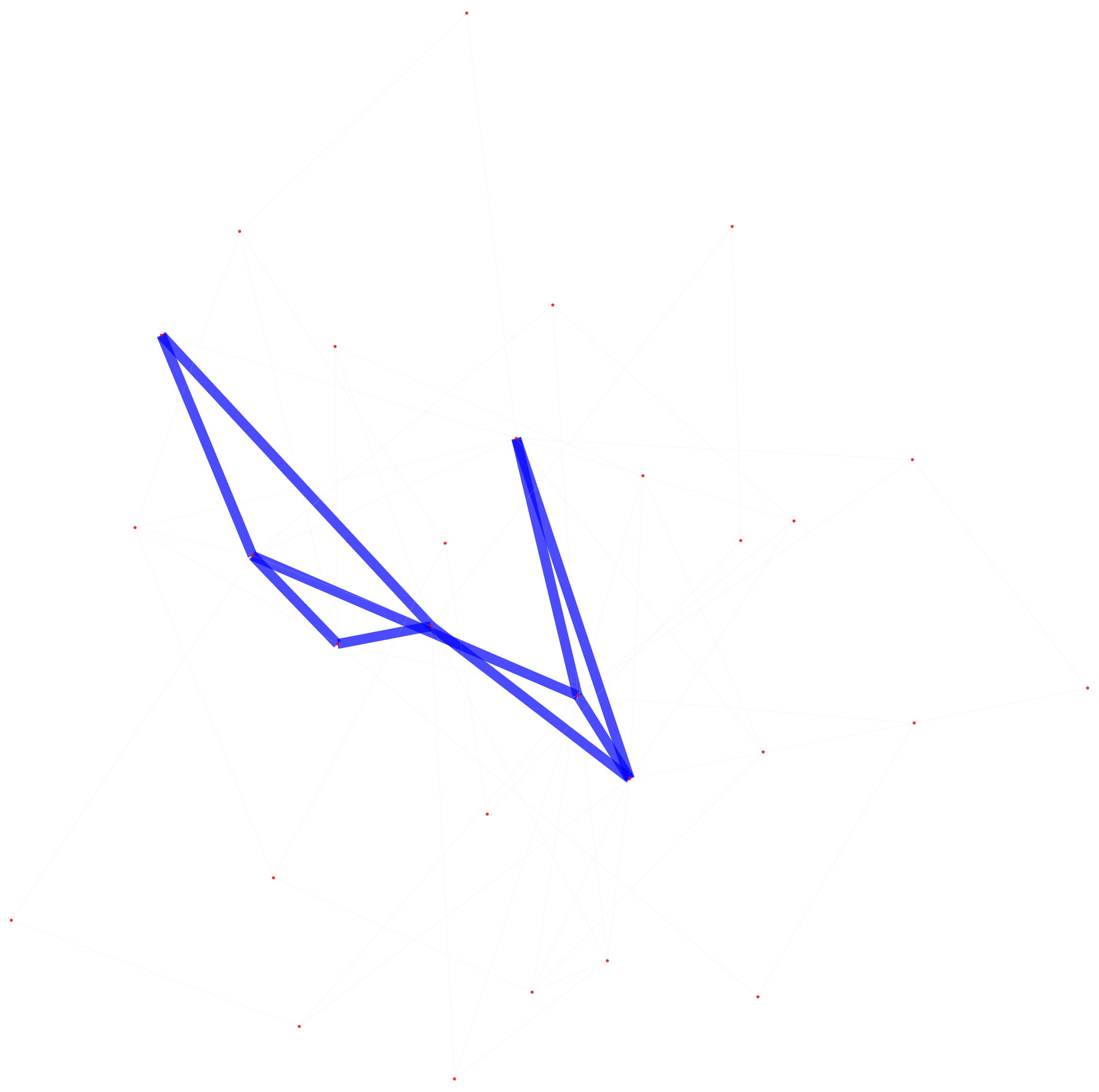}}\hspace{1mm}
\subfloat[  { [240, 1]} ]{\label{fig: path_1}\includegraphics[width=0.10\textwidth]{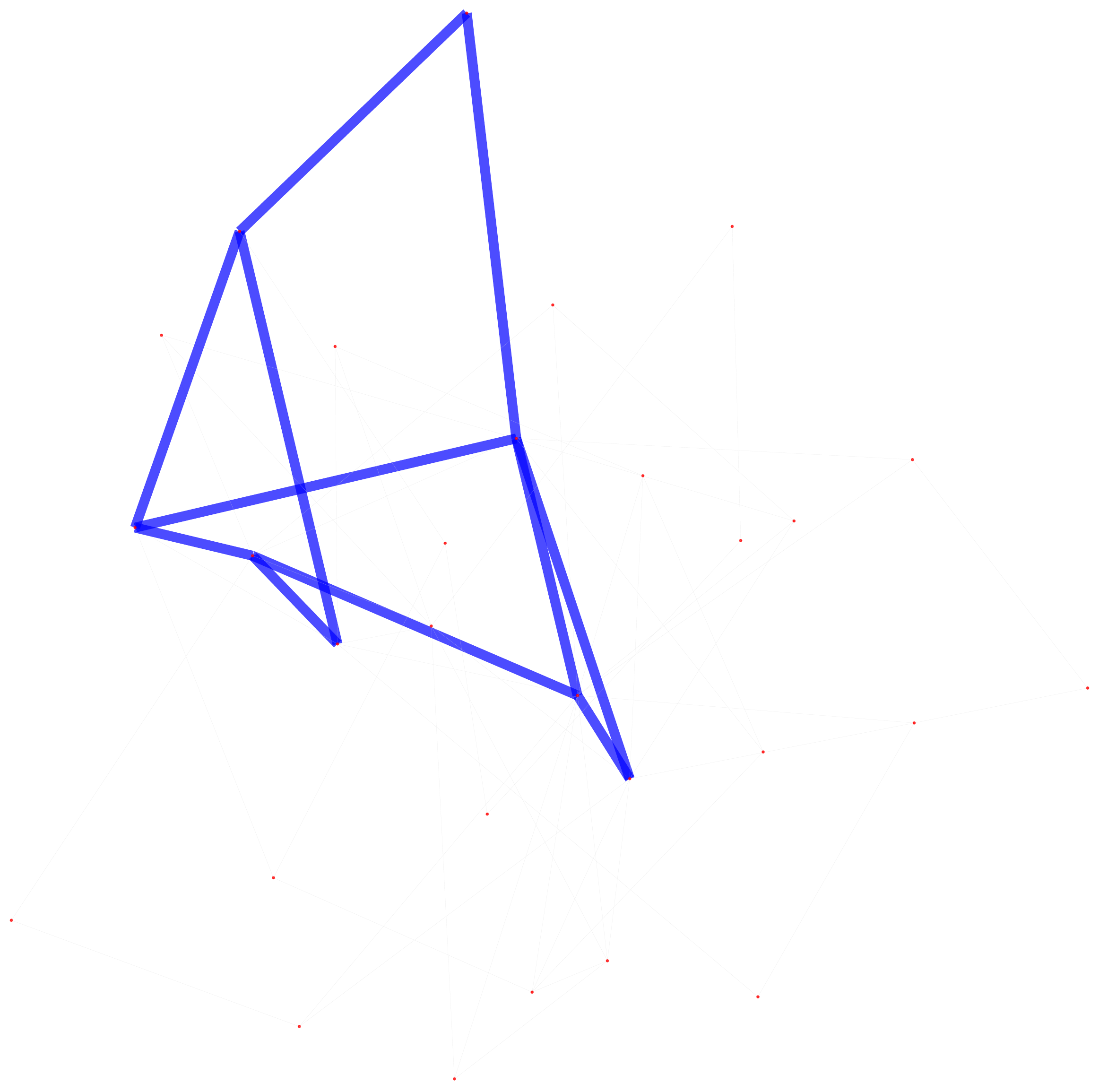}}\hspace{1mm}
\subfloat[  { [240, 2]} ]{\label{fig: path_1}\includegraphics[width=0.10\textwidth]{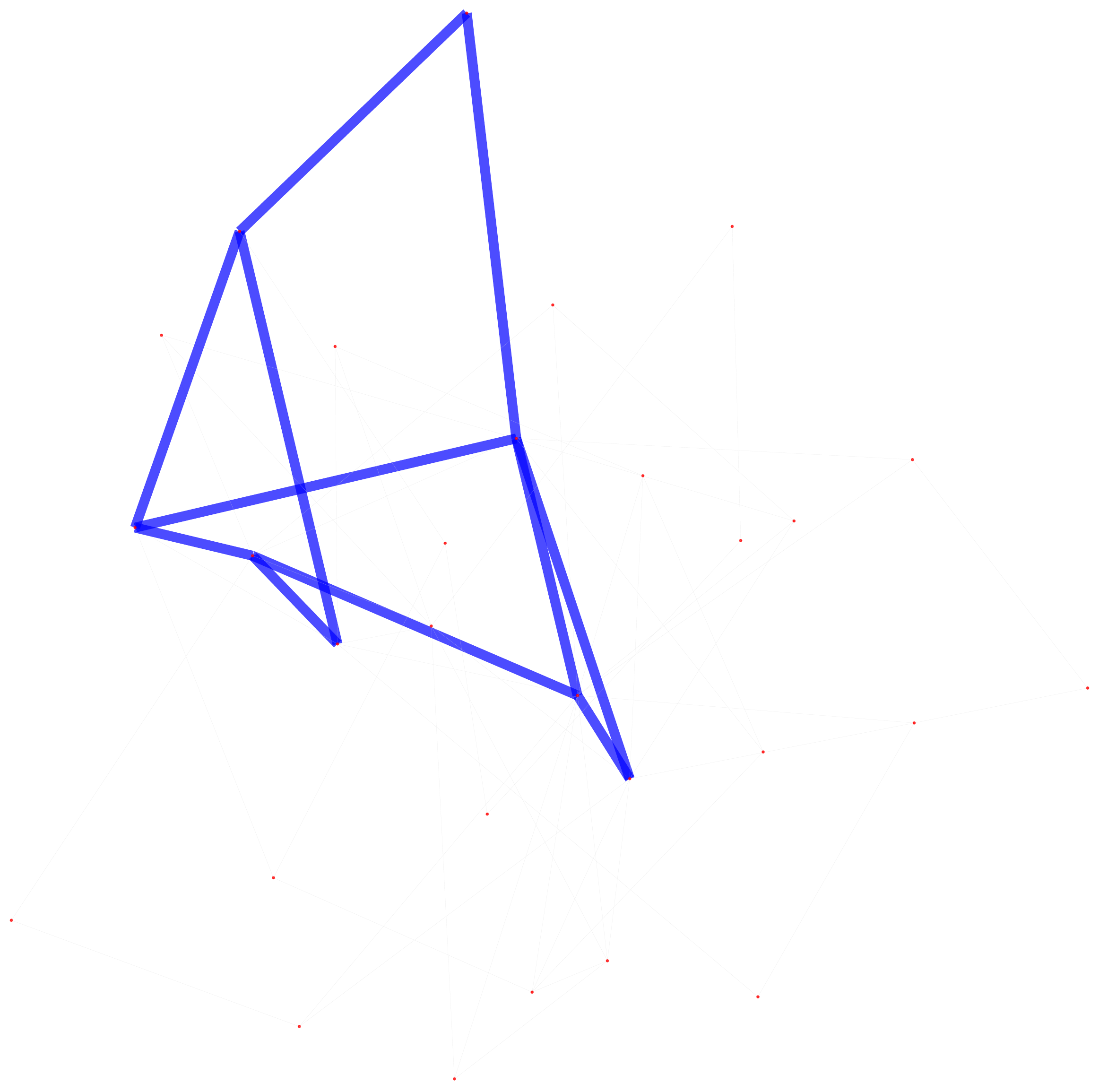}}\hspace{1mm}
\subfloat[  Target ]{\label{fig: path_1}\includegraphics[width=0.10\textwidth]{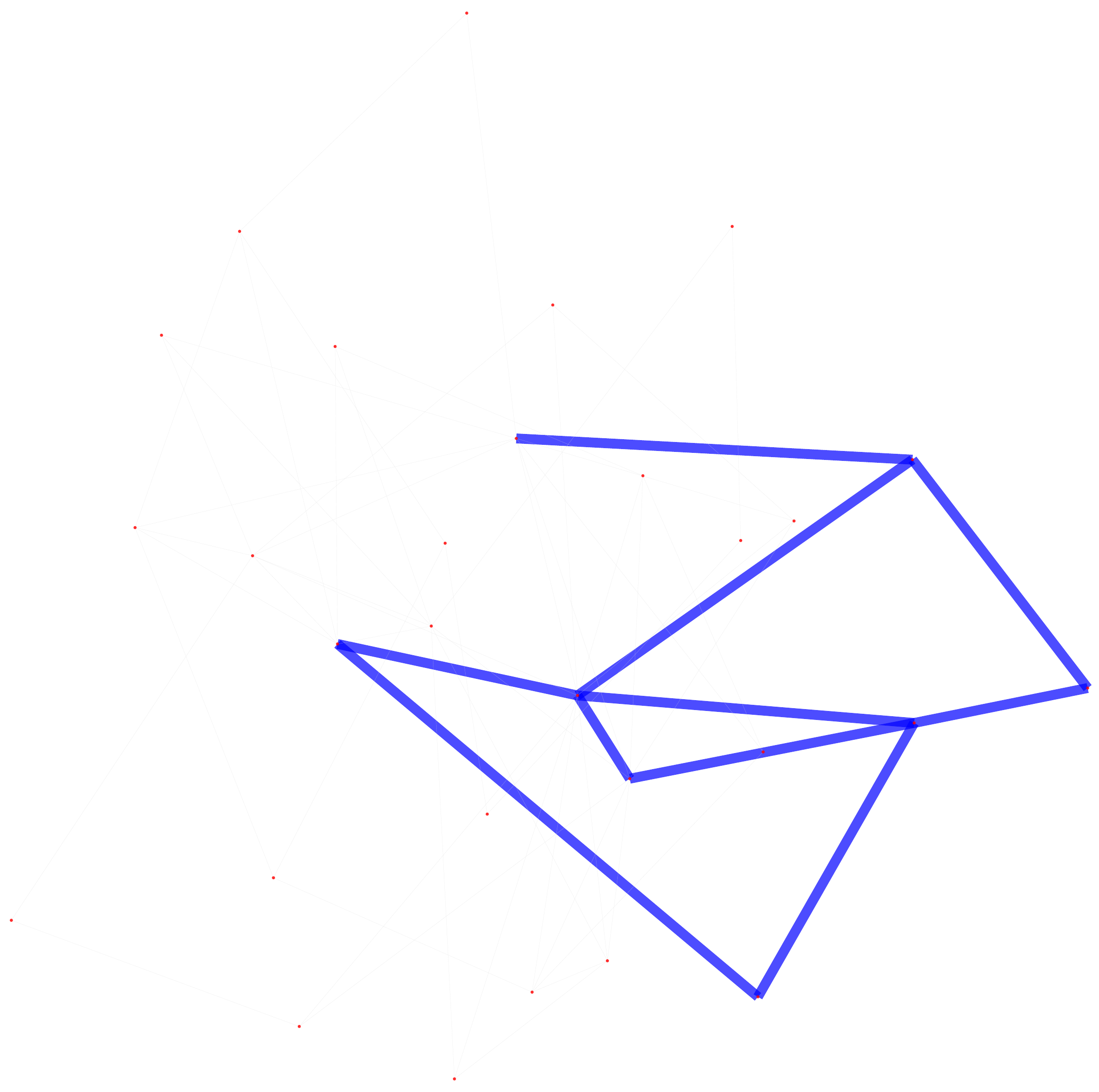}}\hspace{0mm}

\subfloat[  { [480, 0]} ]{\label{fig: path_1}\includegraphics[width=0.10\textwidth]{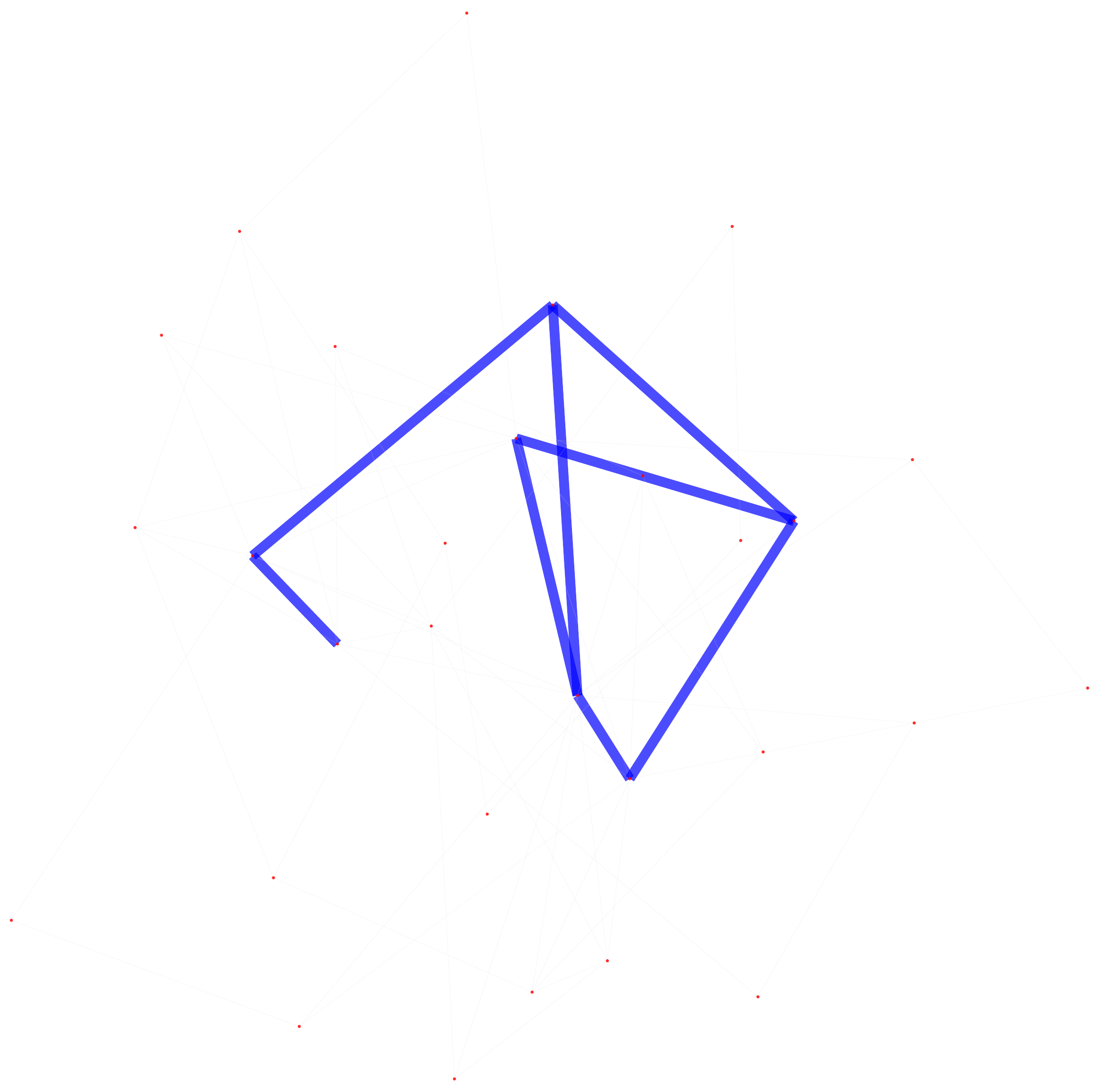}}\hspace{1mm}
\subfloat[  { [480, 1]} ]{\label{fig: path_1}\includegraphics[width=0.10\textwidth]{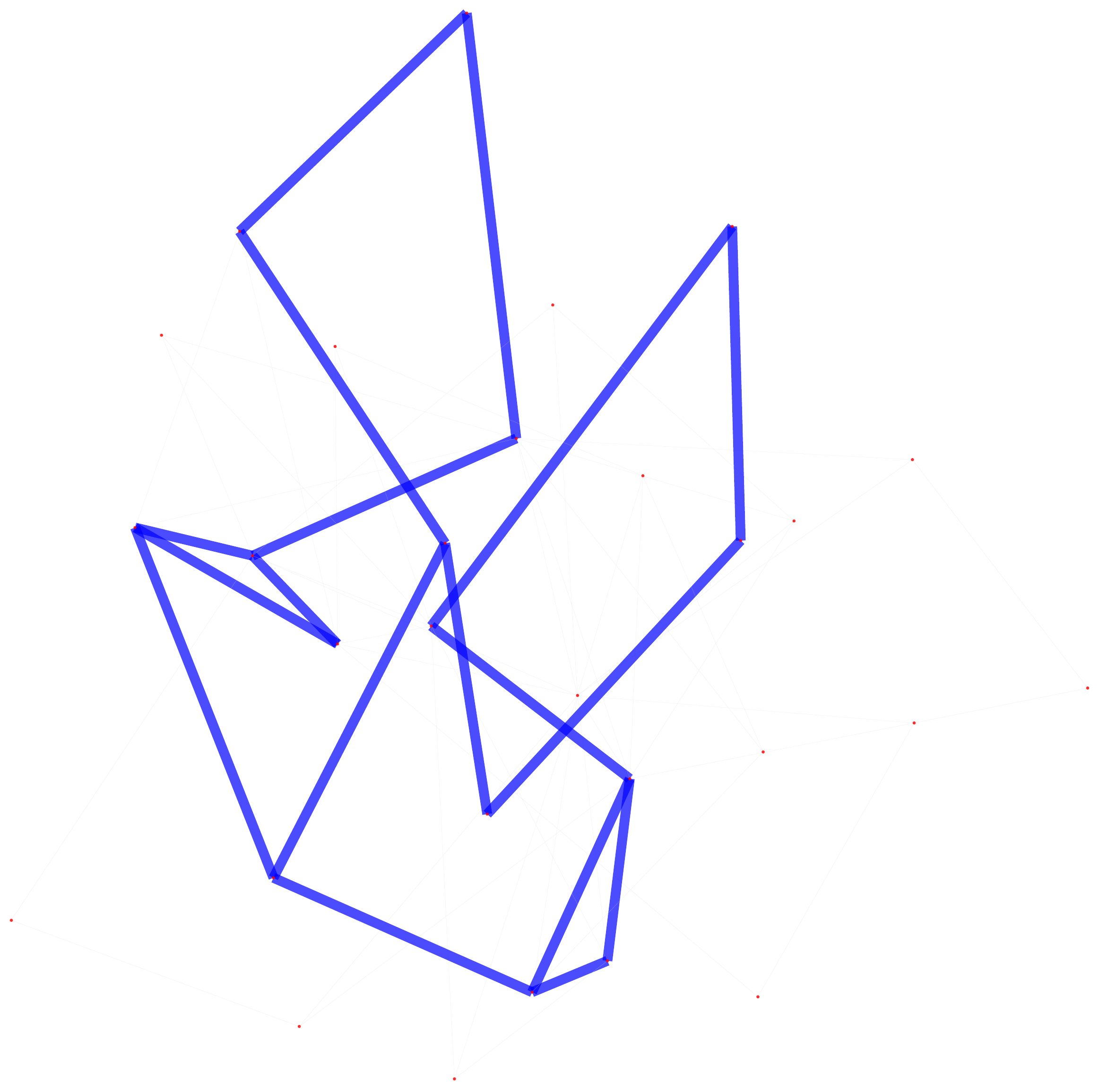}}\hspace{1mm}
\subfloat[  { [480, 2]} ]{\label{fig: path_1}\includegraphics[width=0.10\textwidth]{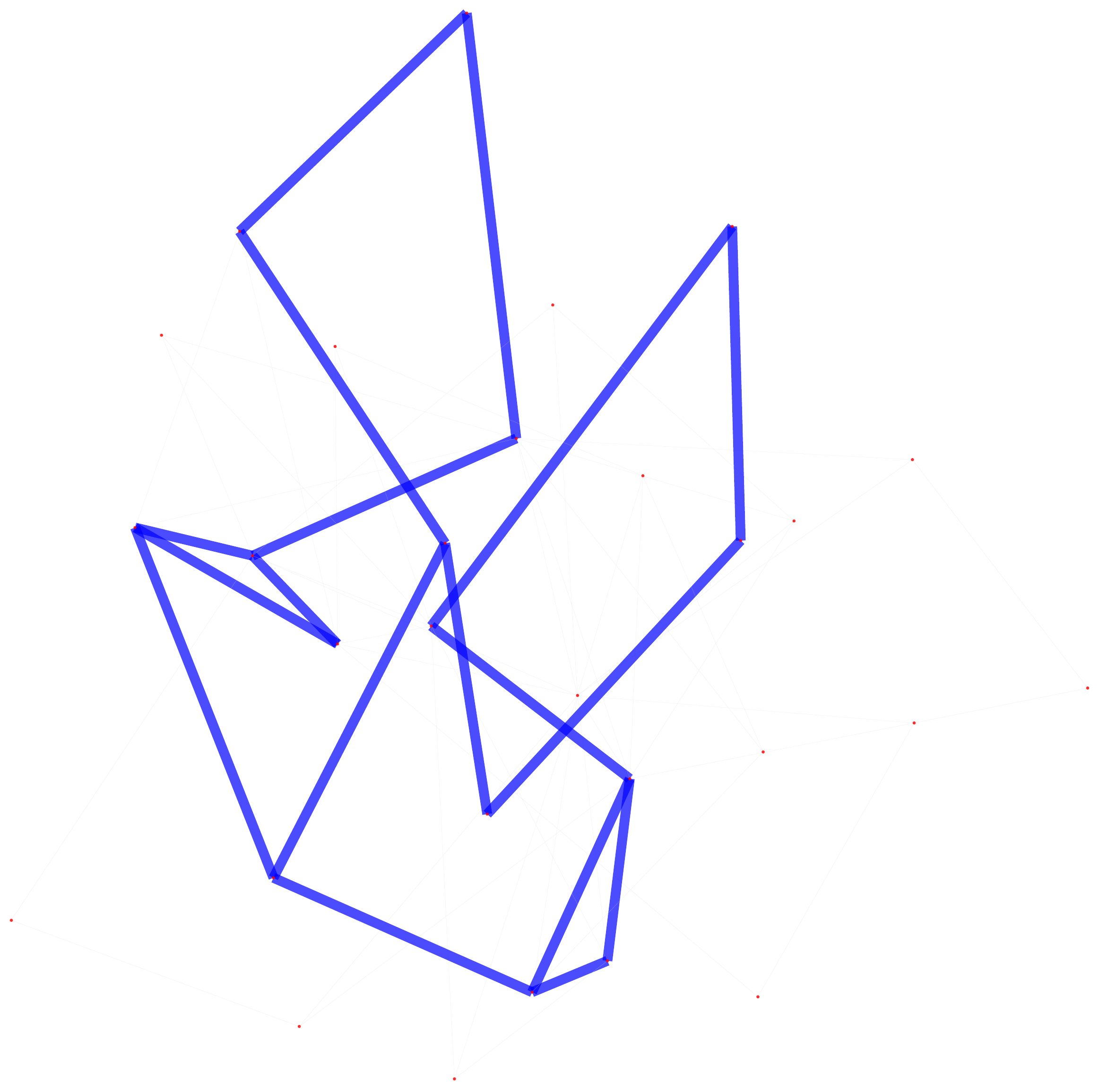}}\hspace{1mm}
\subfloat[  Target ]{\label{fig: path_1}\includegraphics[width=0.10\textwidth]{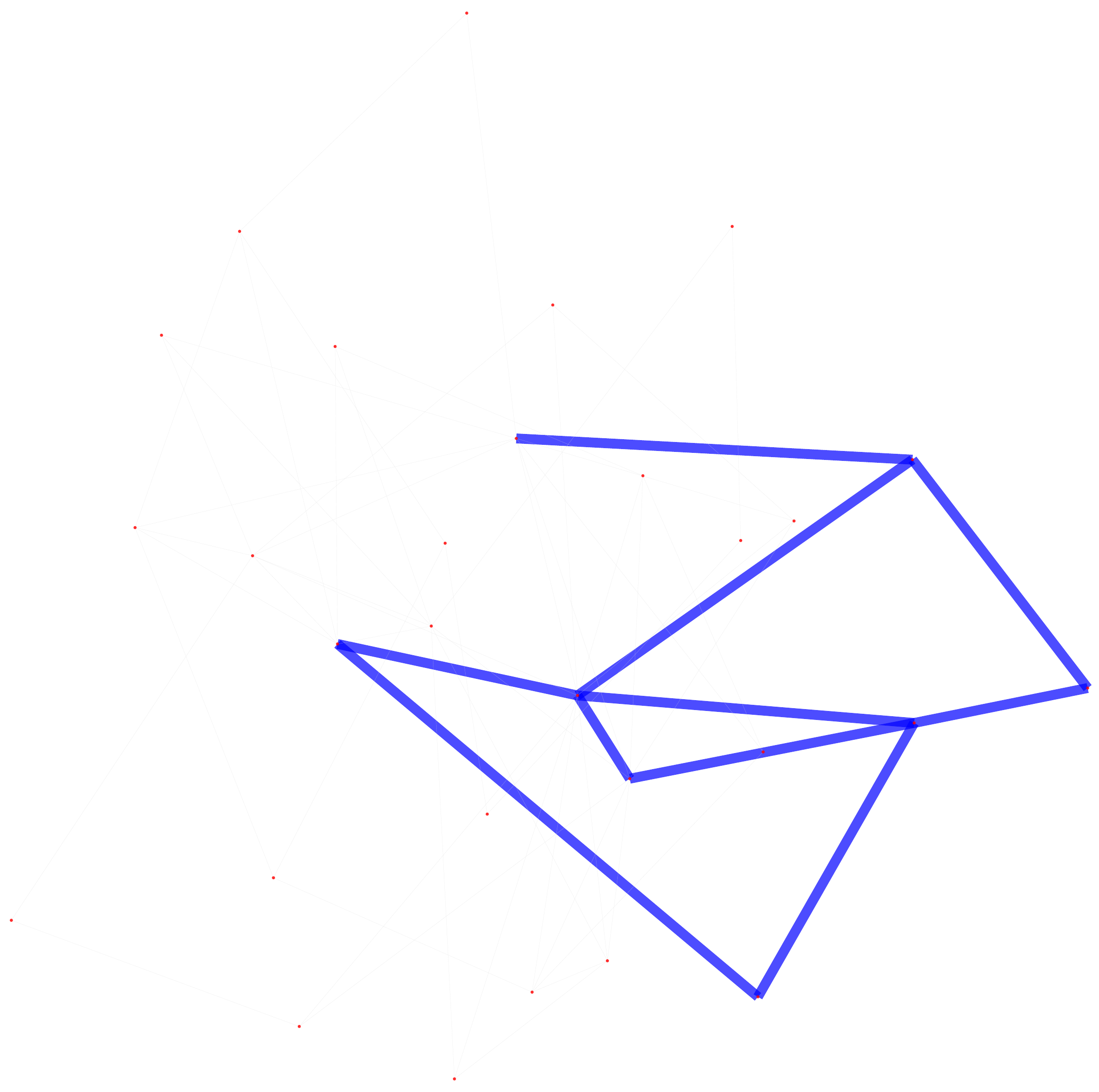}}\hspace{0mm}

\subfloat[  { [2400, 0]} ]{\label{fig: path_1}\includegraphics[width=0.10\textwidth]{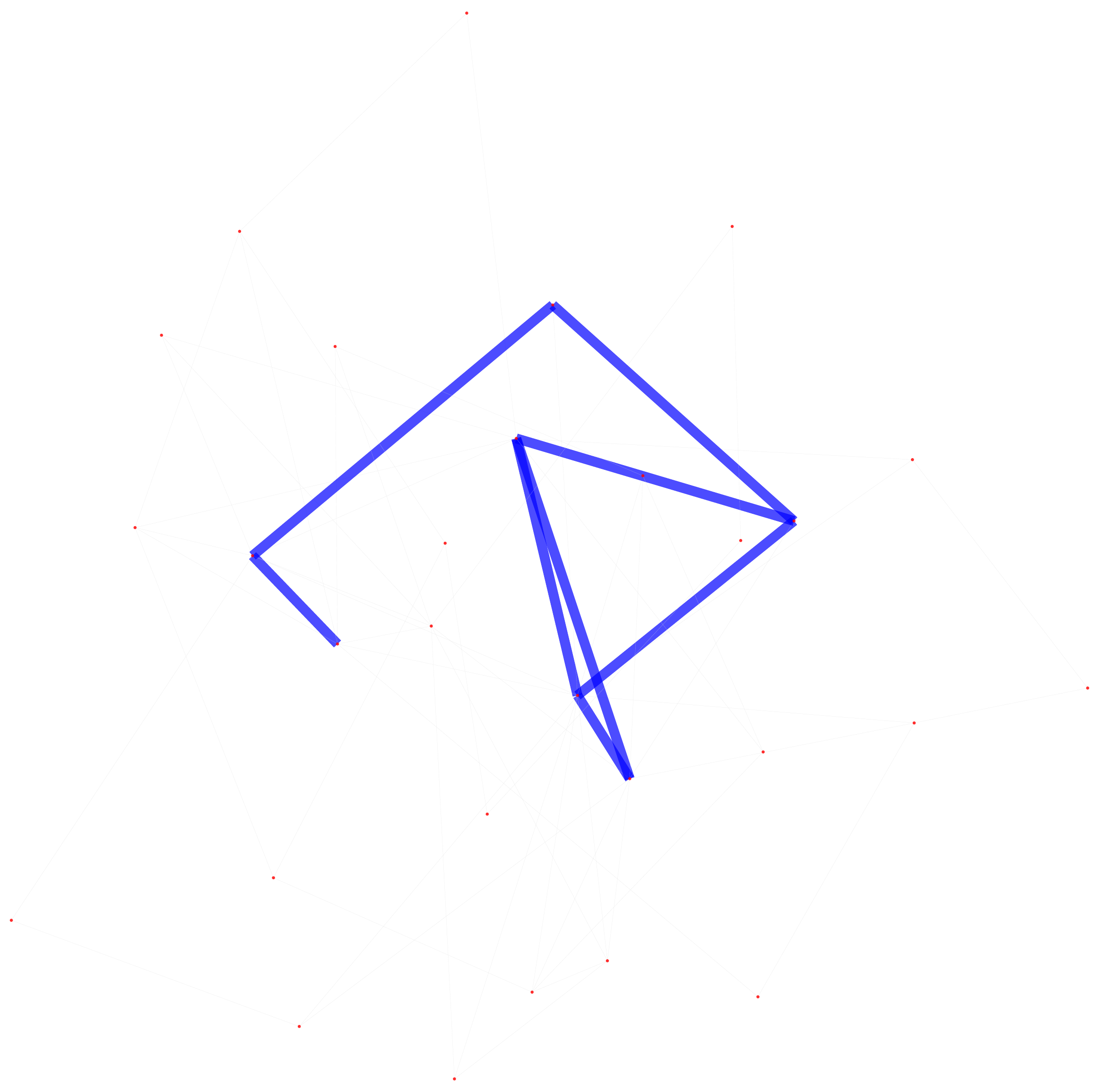}}\hspace{1mm}
\subfloat[  { [2400, 1]} ]{\label{fig: path_1}\includegraphics[width=0.10\textwidth]{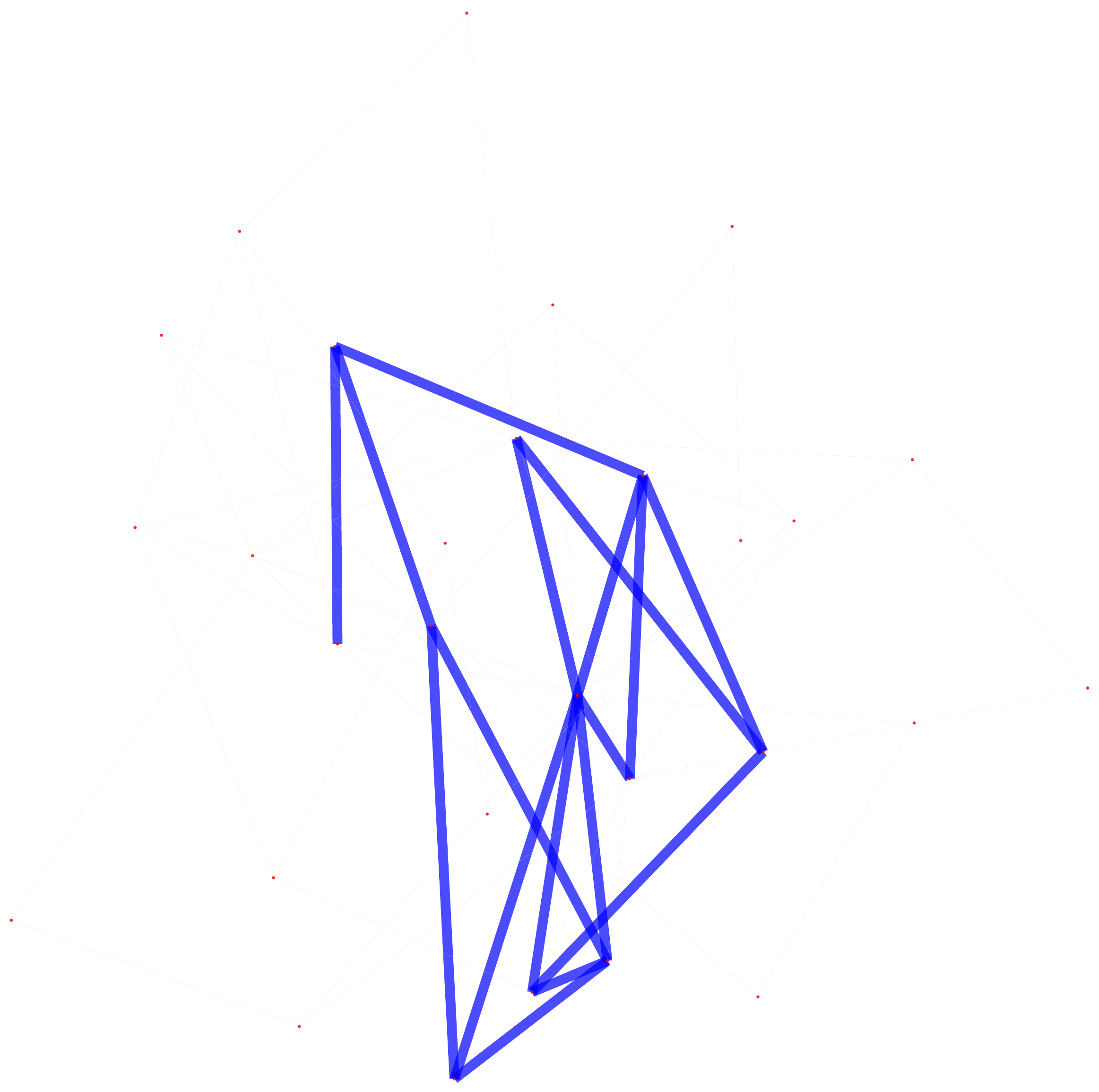}}\hspace{1mm}
\subfloat[  { [2400, 2]} ]{\label{fig: path_1}\includegraphics[width=0.10\textwidth]{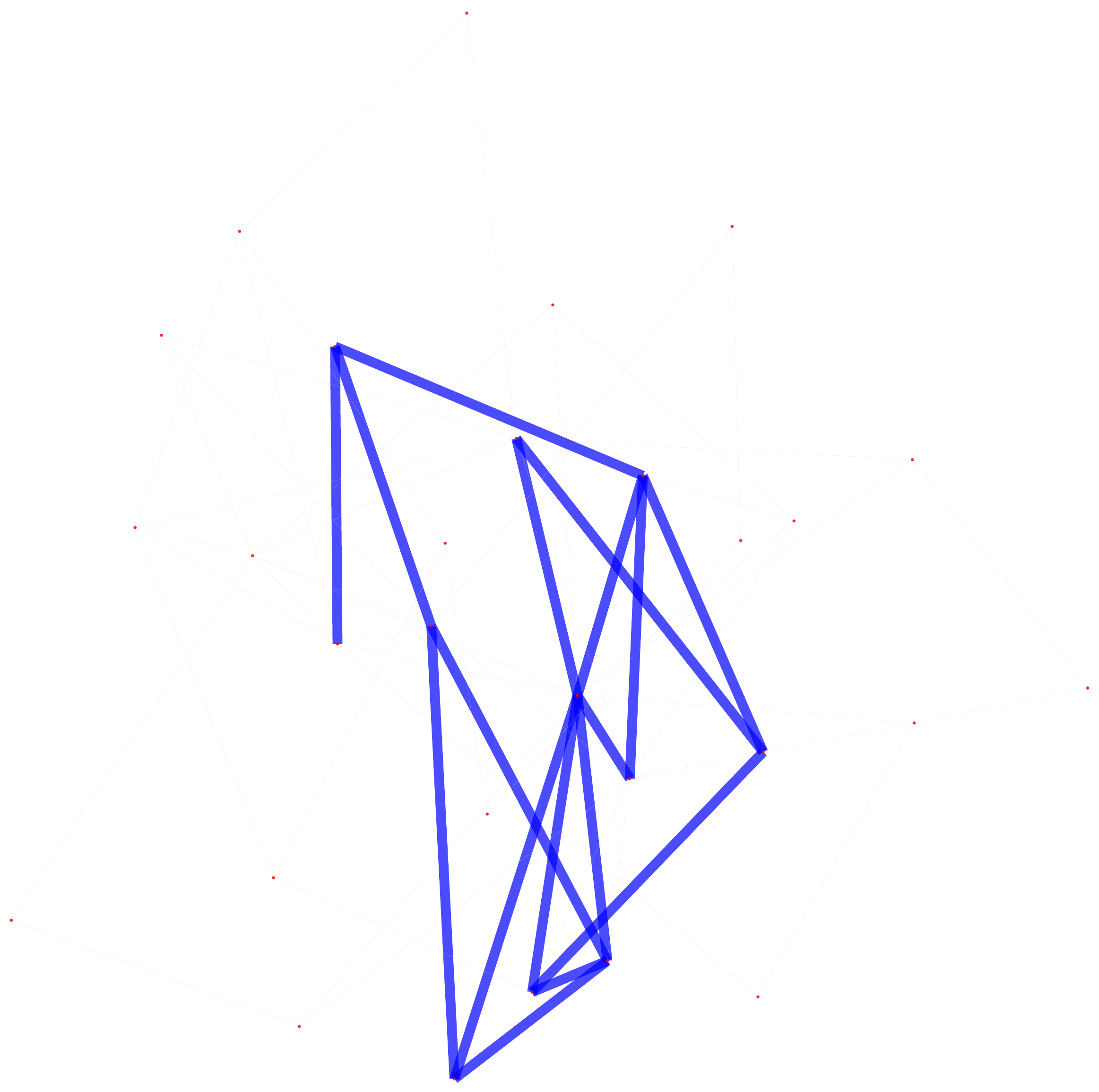}}\hspace{1mm}
\subfloat[  Target ]{\label{fig: path_1}\includegraphics[width=0.10\textwidth]{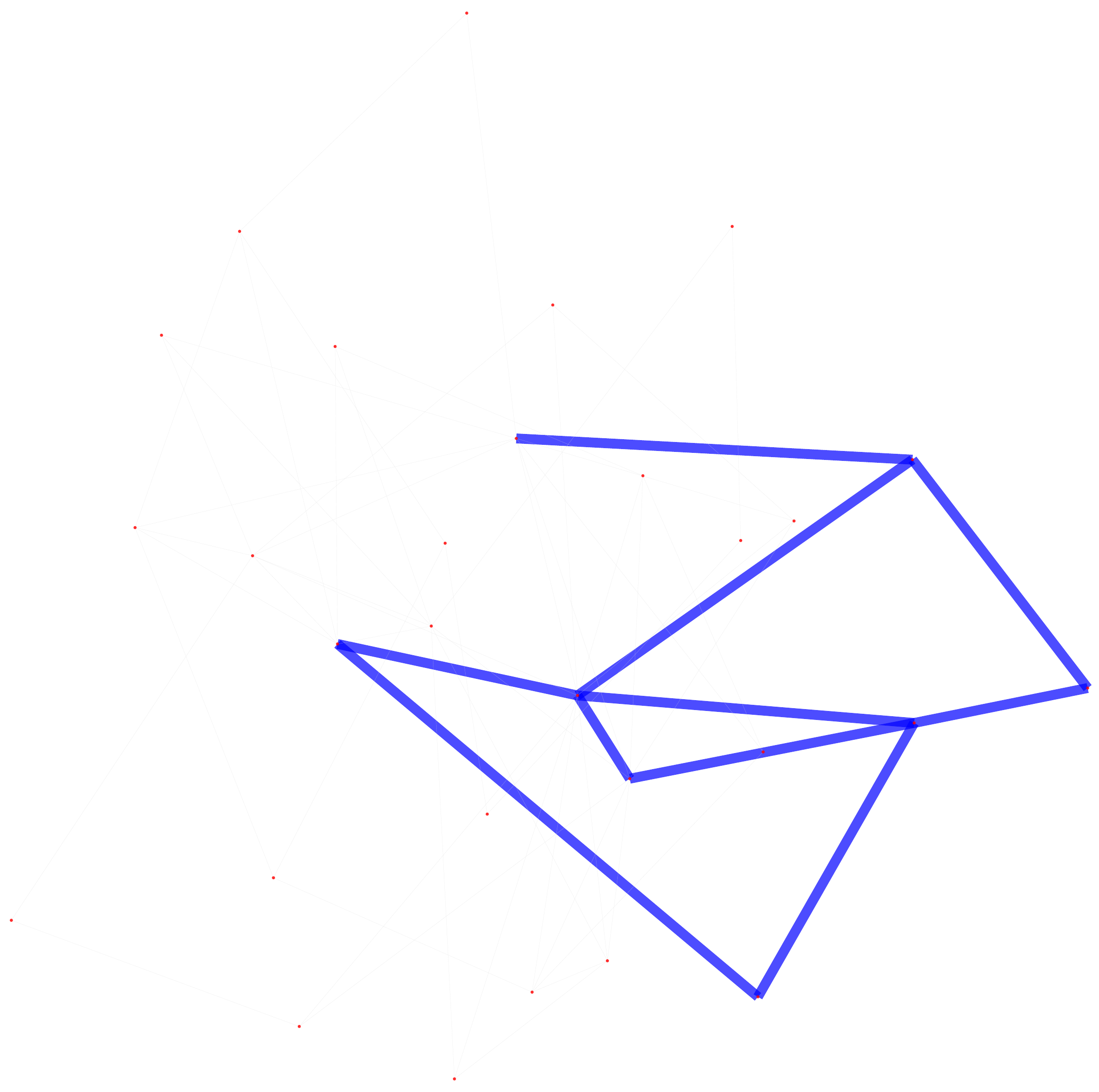}}\hspace{0mm}

\caption{\small Visualization of the results of QRTS-P for an example testing query for Path-to-Tree on Kro.}
\label{fig: more_tree_0}
\end{figure}

\begin{figure}[t]
\centering
\captionsetup[subfloat]{labelfont=scriptsize,textfont=scriptsize,labelformat=empty}
\subfloat[  { [30, 0]} ]{\label{fig: path_1}\includegraphics[width=0.10\textwidth]{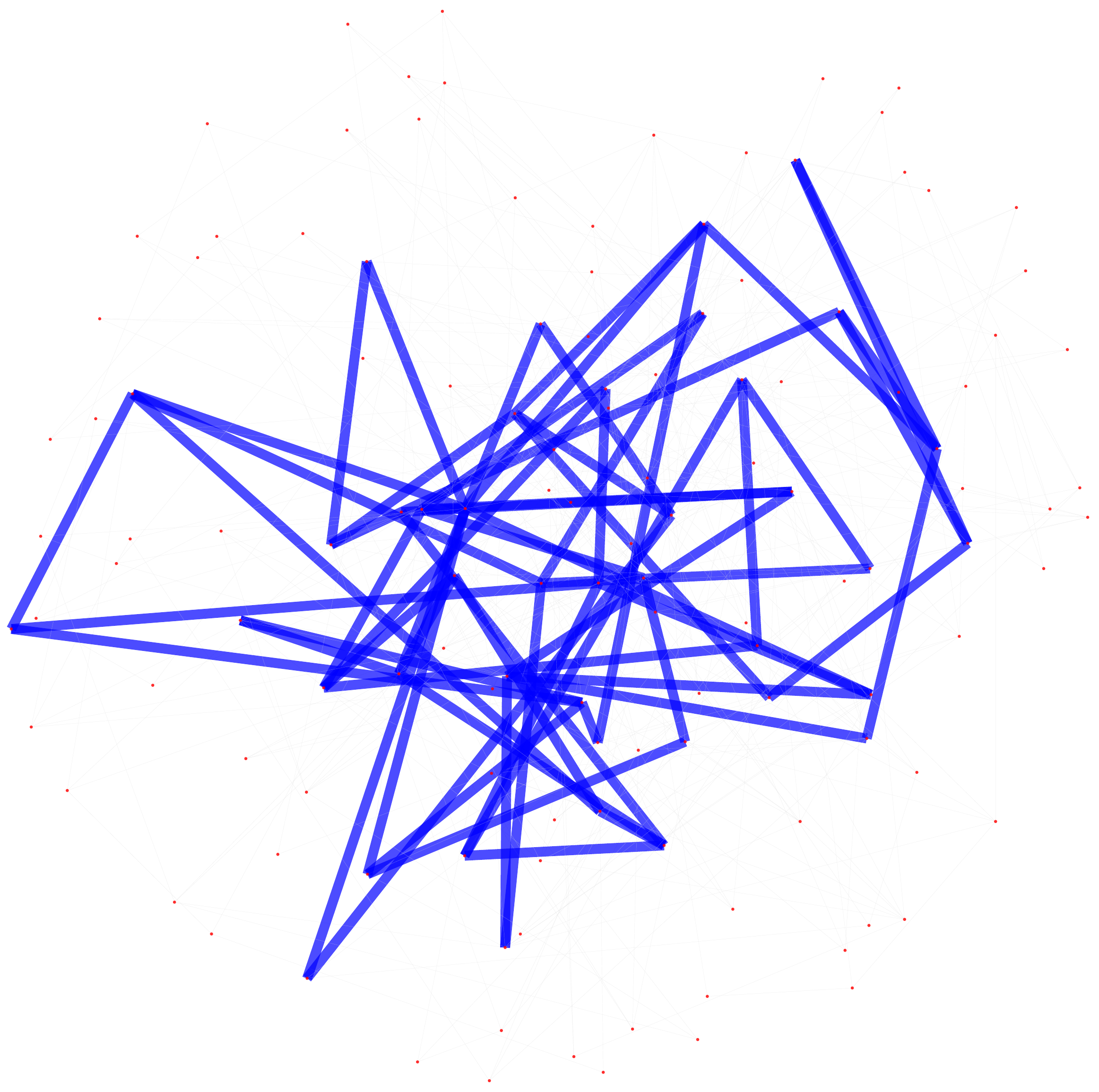}}\hspace{1mm}
\subfloat[  { [30, 1]} ]{\label{fig: path_1}\includegraphics[width=0.10\textwidth]{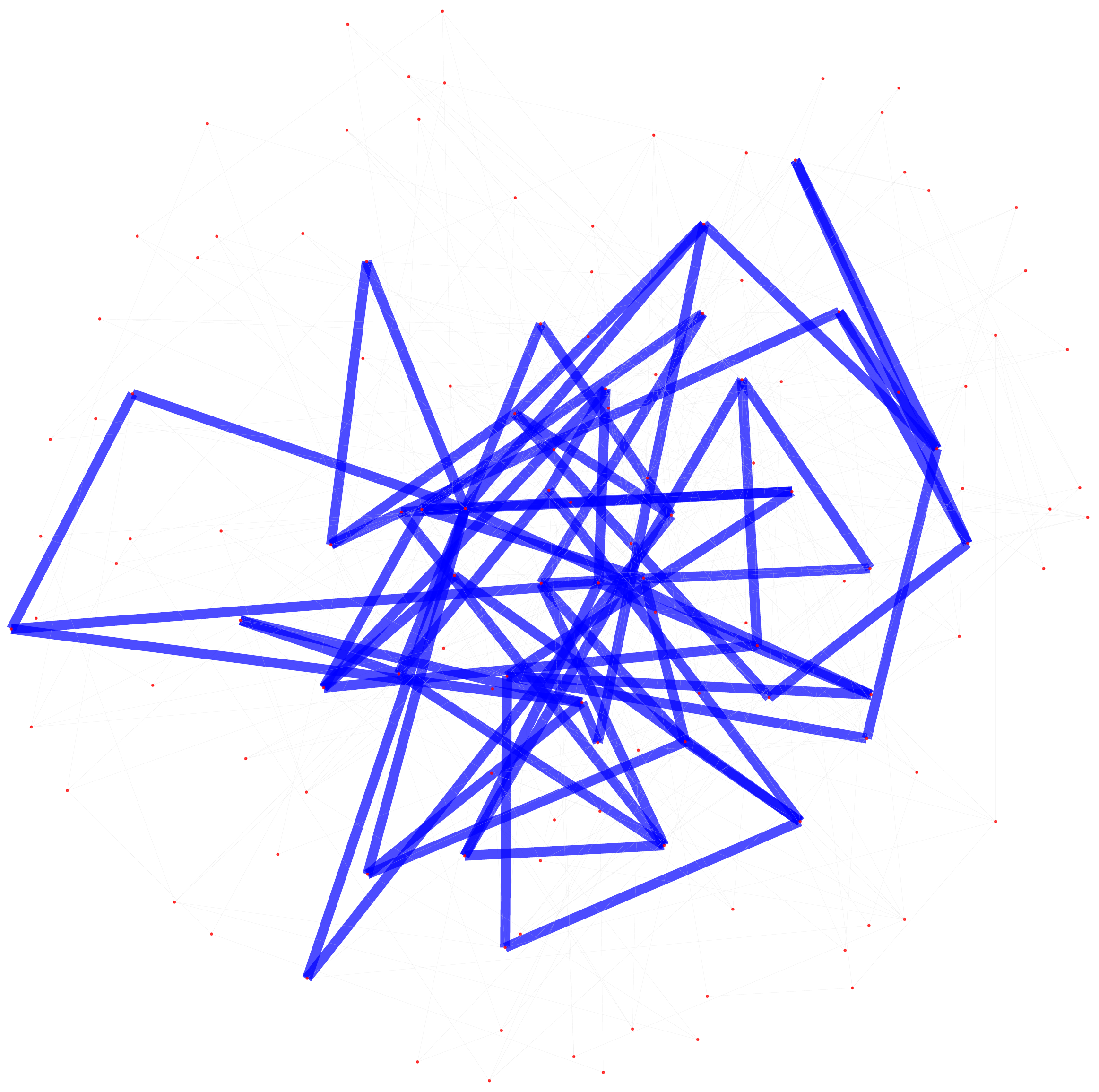}}\hspace{1mm}
\subfloat[  { [30, 2]} ]{\label{fig: path_1}\includegraphics[width=0.10\textwidth]{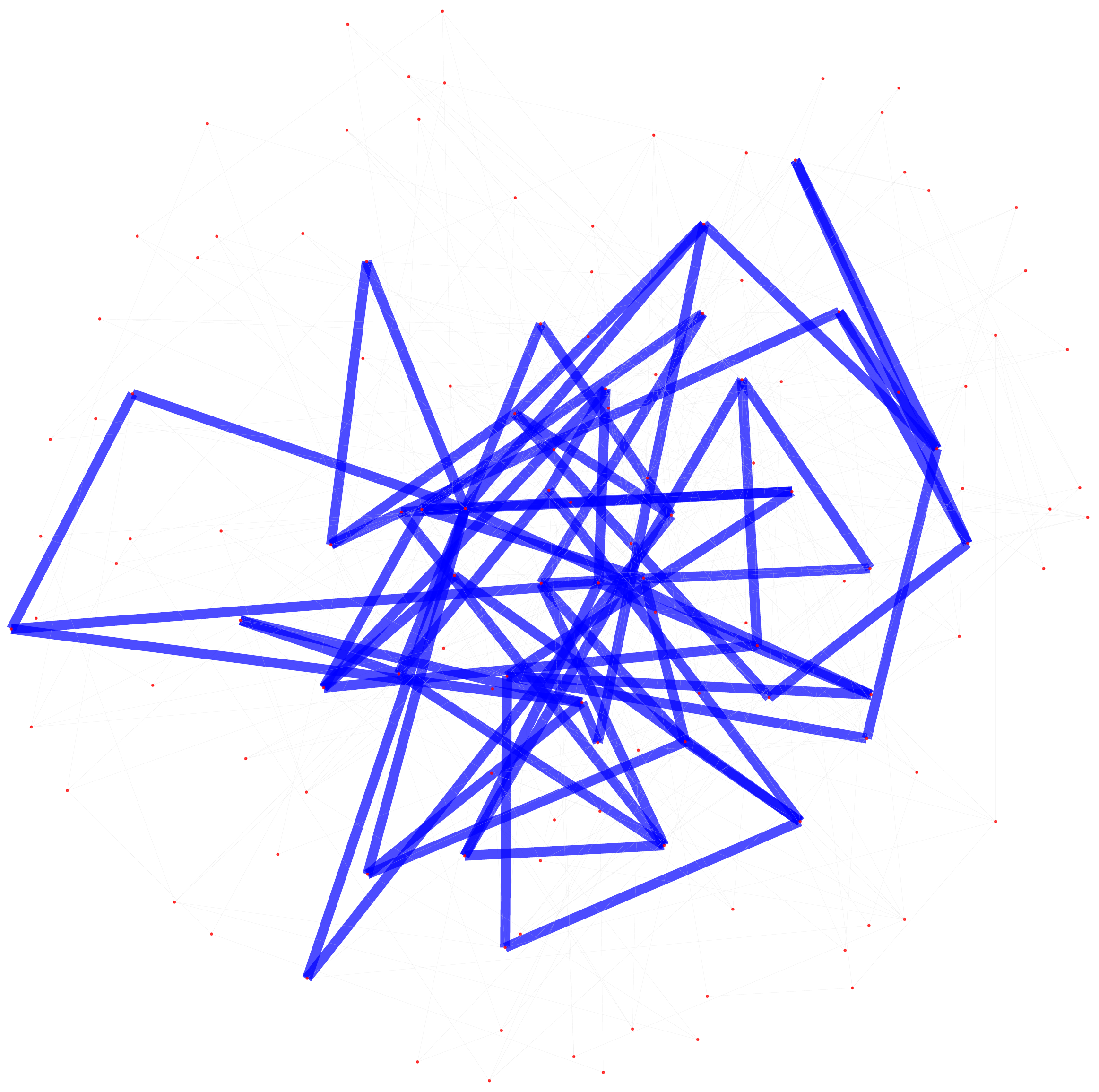}}\hspace{1mm}
\subfloat[  Target ]{\label{fig: path_1}\includegraphics[width=0.10\textwidth]{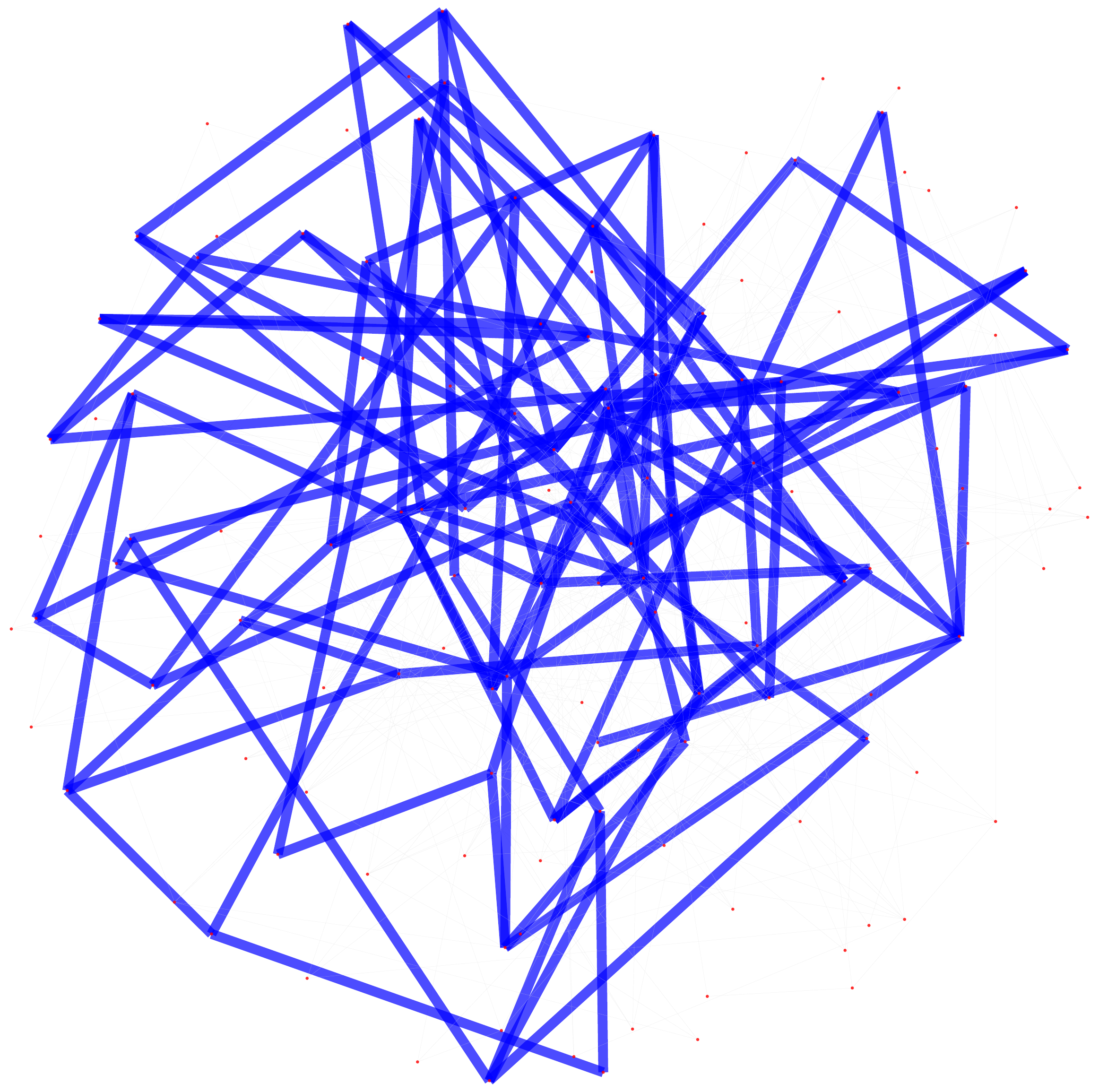}}\hspace{0mm}


\subfloat[  { [240, 0]} ]{\label{fig: path_1}\includegraphics[width=0.10\textwidth]{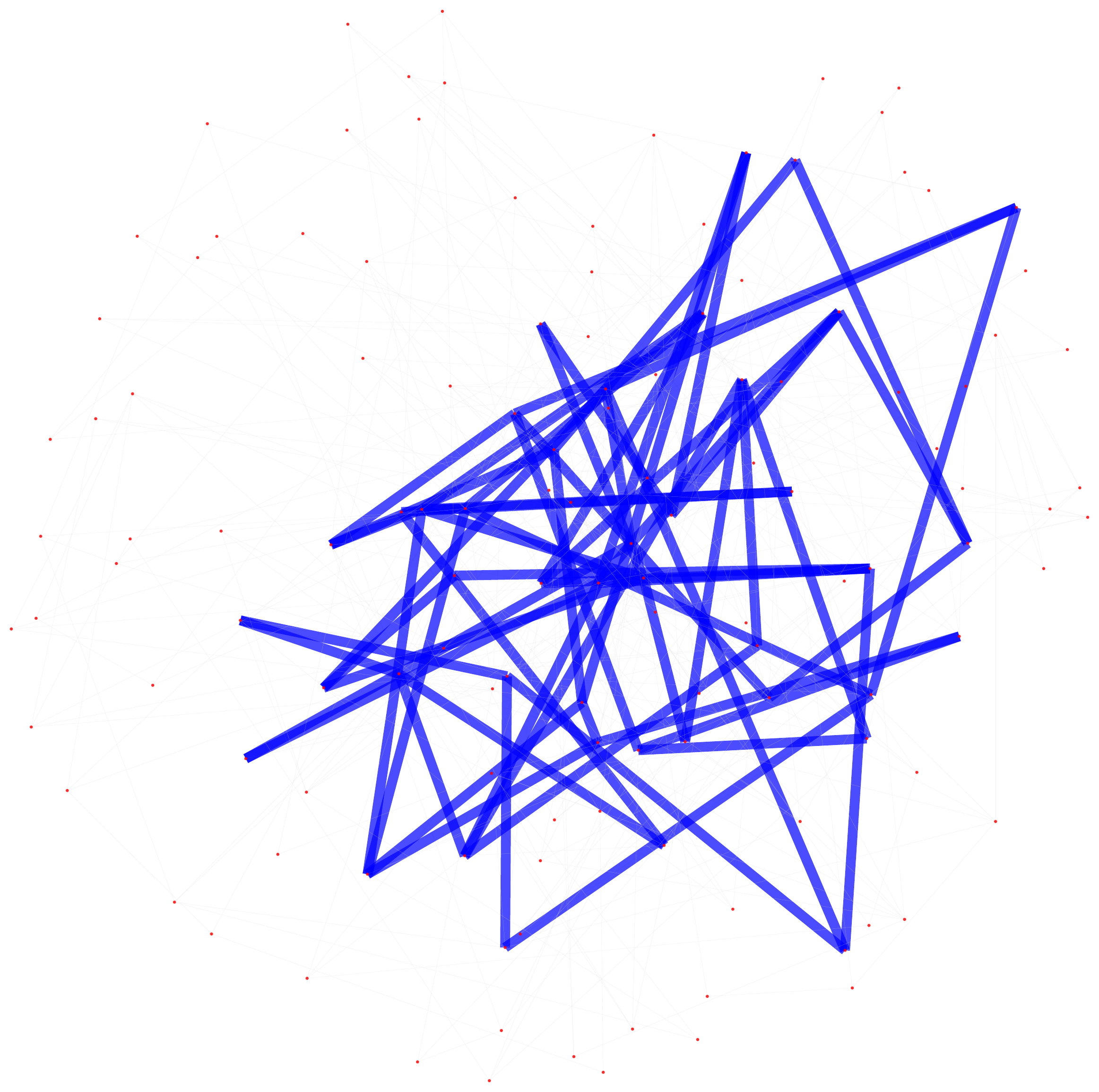}}\hspace{1mm}
\subfloat[  { [240, 1]} ]{\label{fig: path_1}\includegraphics[width=0.10\textwidth]{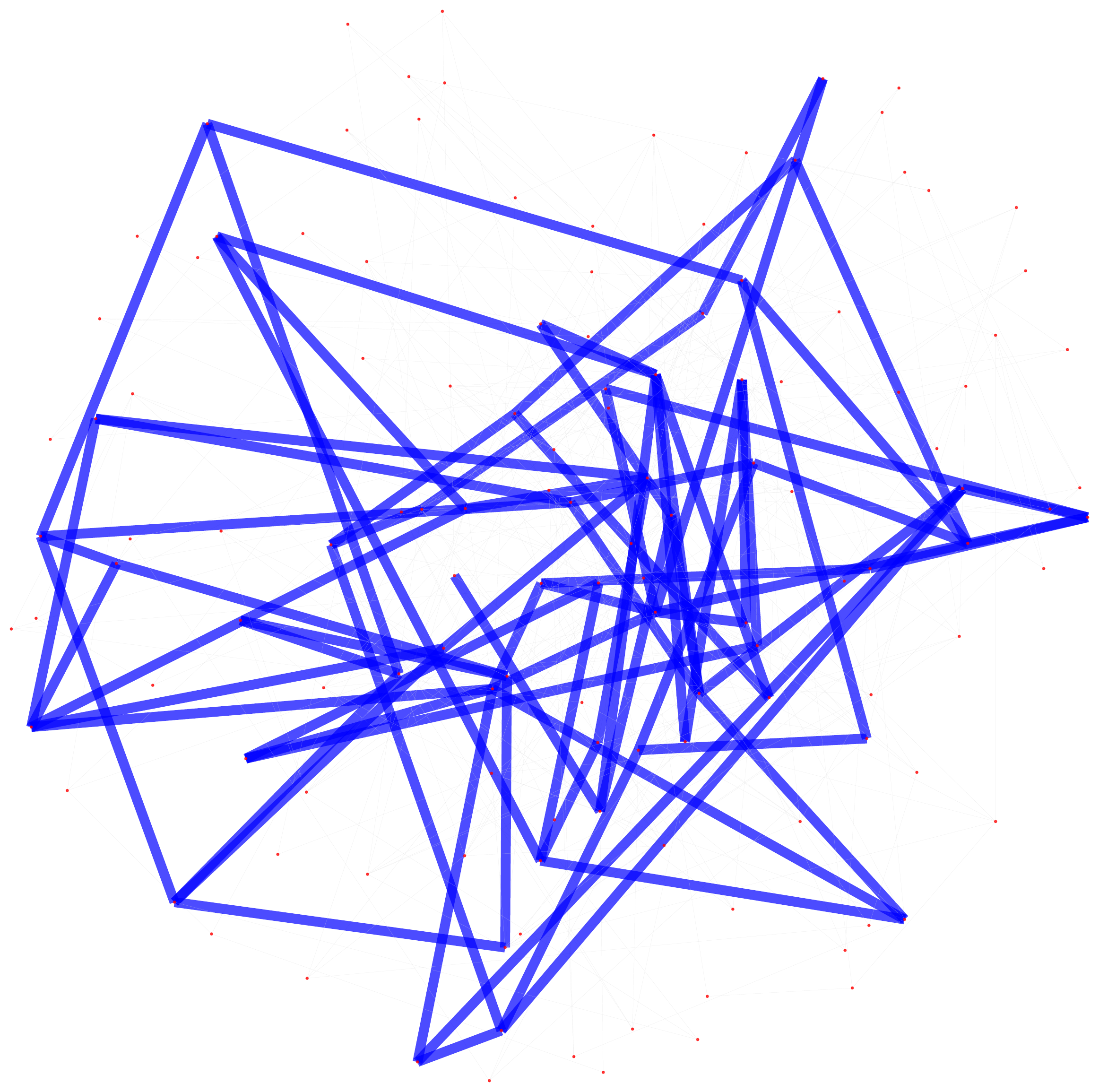}}\hspace{1mm}
\subfloat[  { [240, 2]} ]{\label{fig: path_1}\includegraphics[width=0.10\textwidth]{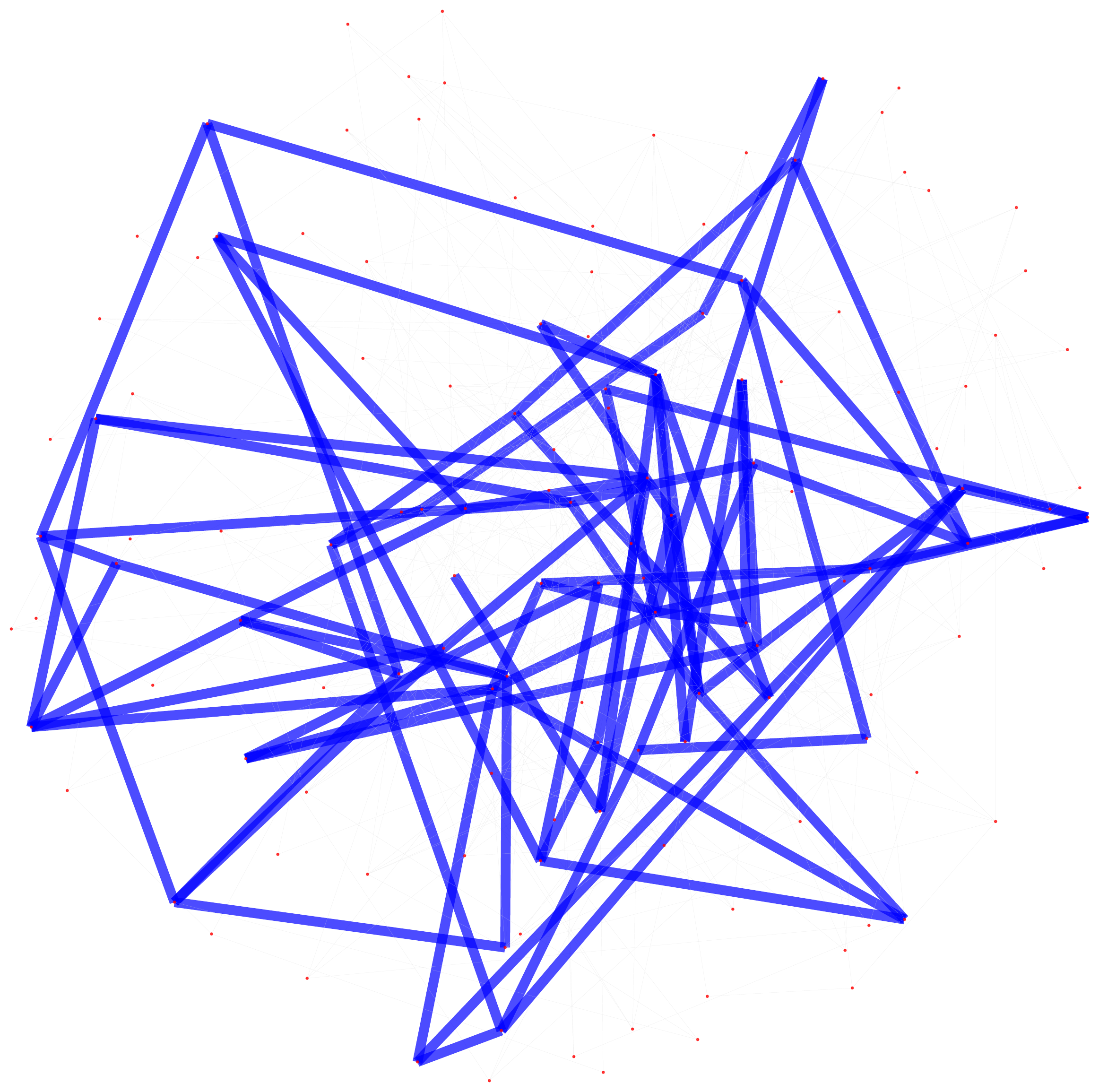}}\hspace{1mm}
\subfloat[  Target ]{\label{fig: path_1}\includegraphics[width=0.10\textwidth]{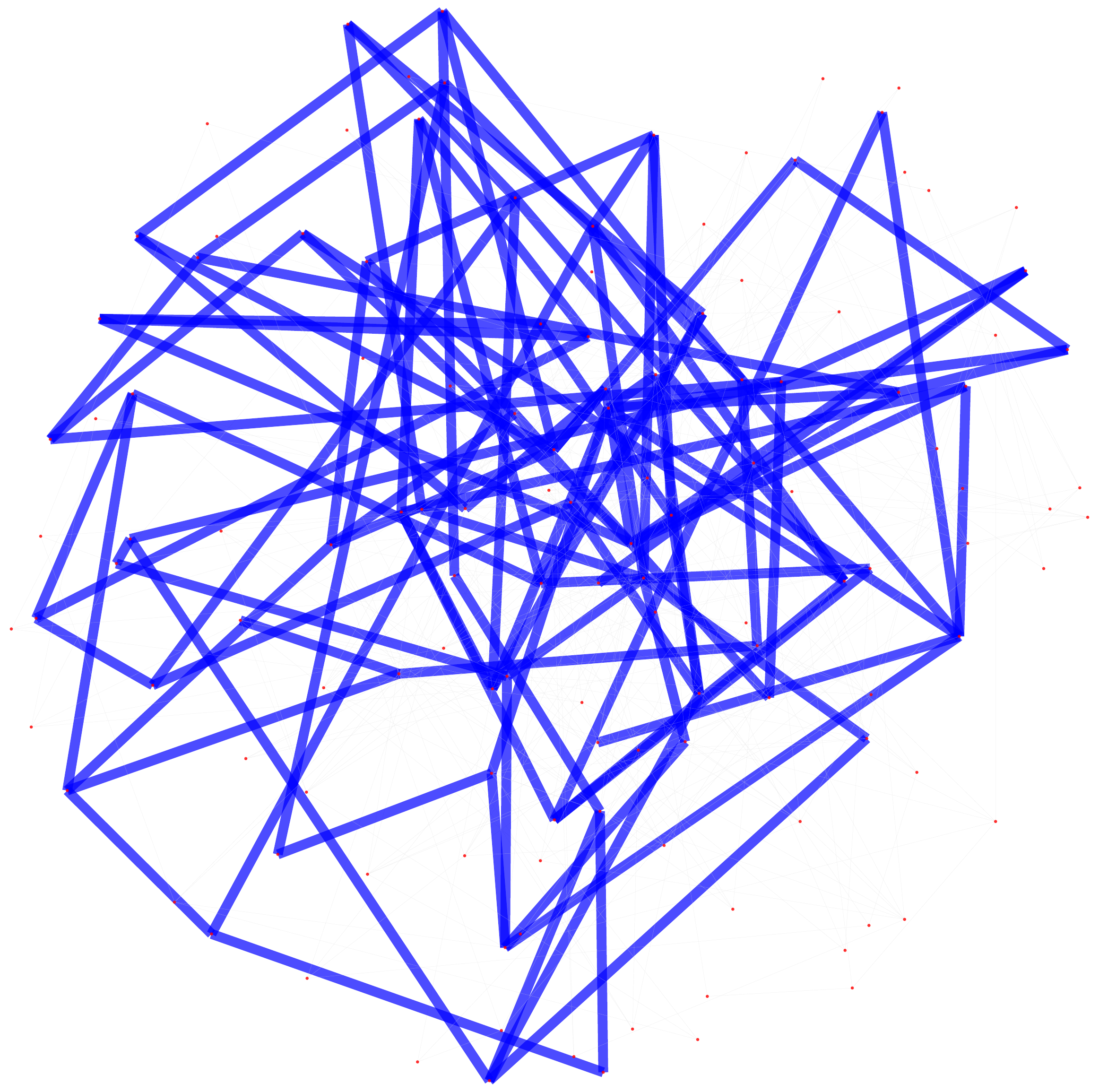}}\hspace{0mm}

\subfloat[  { [480, 0]} ]{\label{fig: path_1}\includegraphics[width=0.10\textwidth]{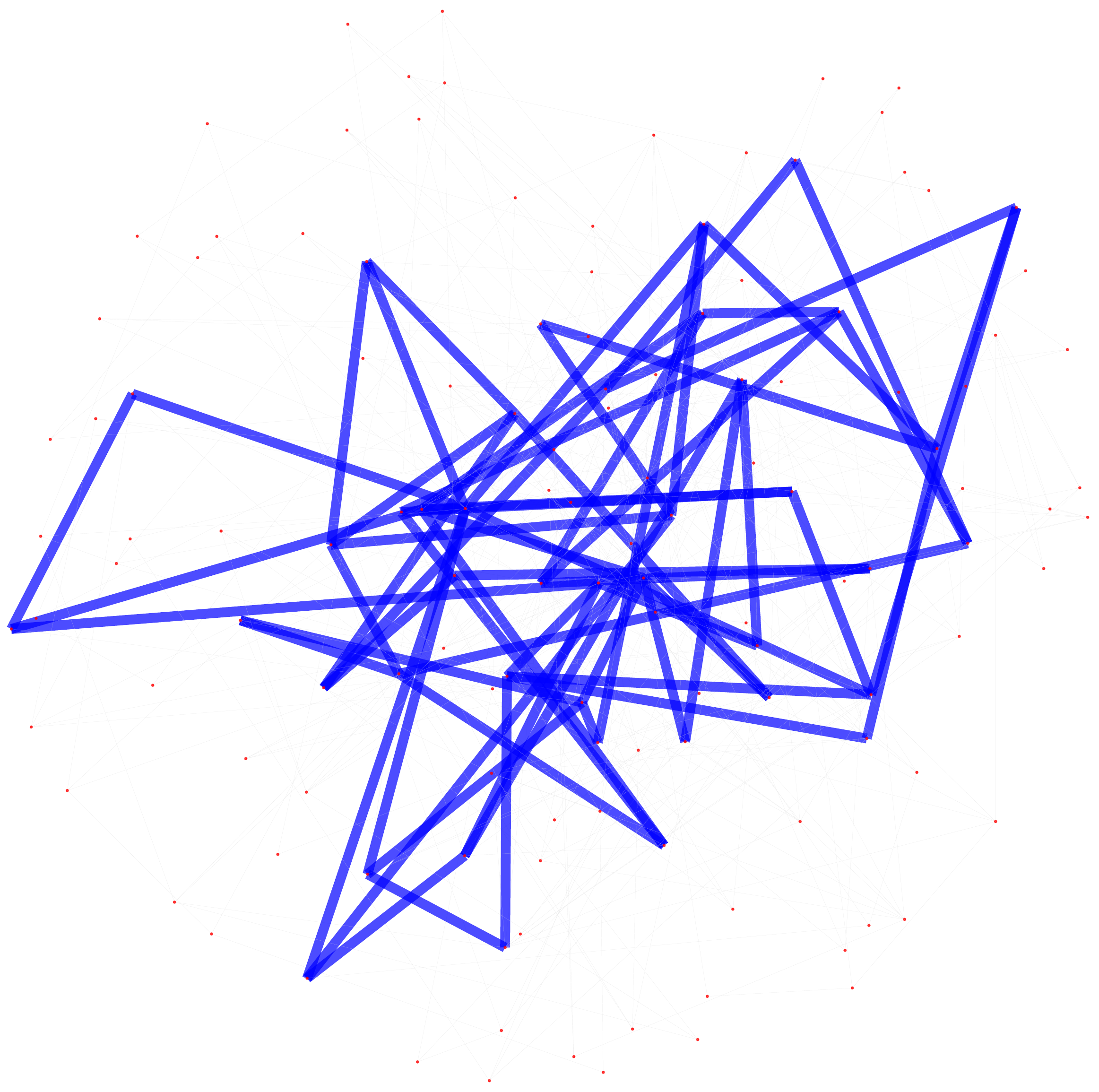}}\hspace{1mm}
\subfloat[  { [480, 1]} ]{\label{fig: path_1}\includegraphics[width=0.10\textwidth]{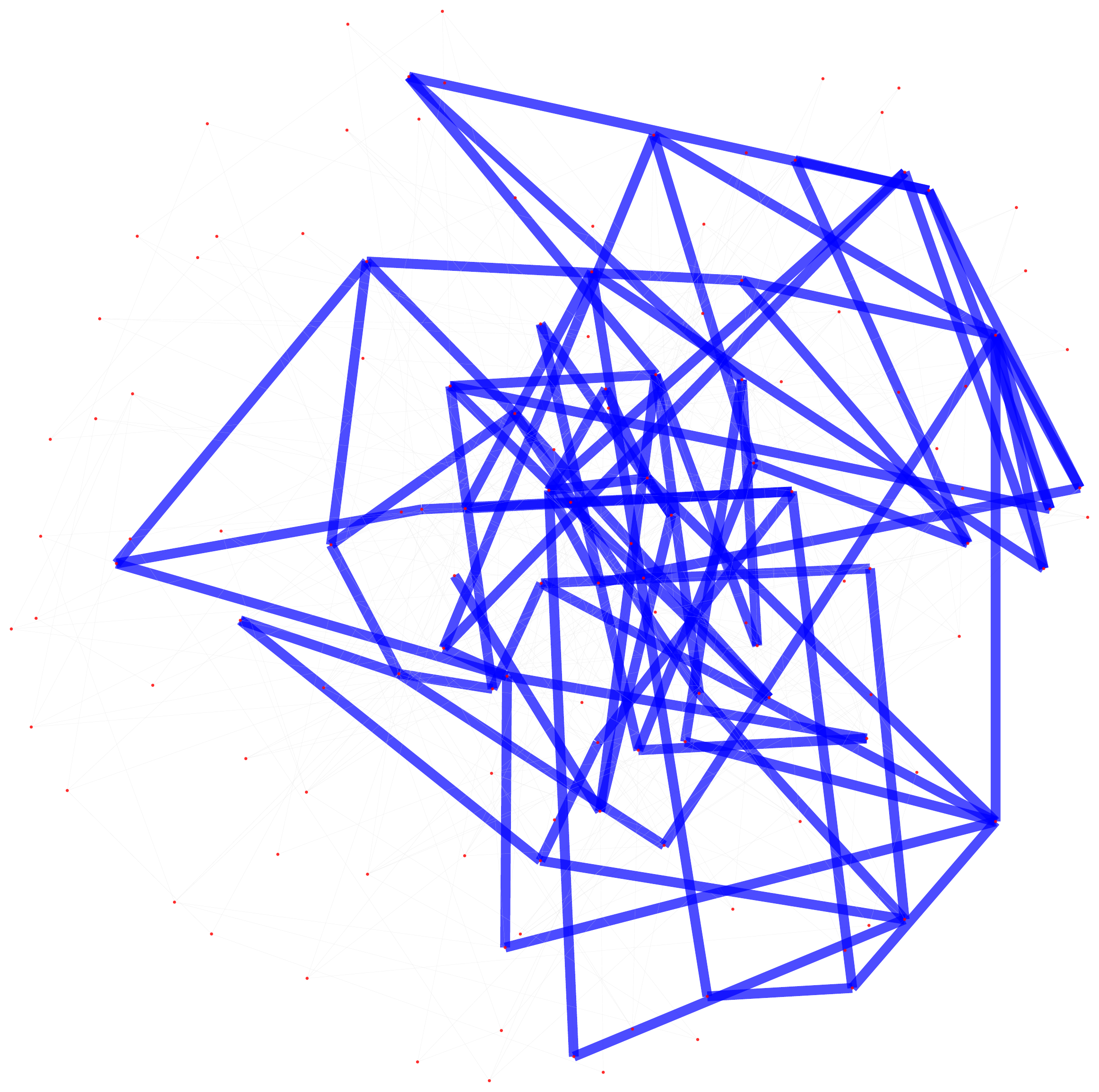}}\hspace{1mm}
\subfloat[  { [480, 2]} ]{\label{fig: path_1}\includegraphics[width=0.10\textwidth]{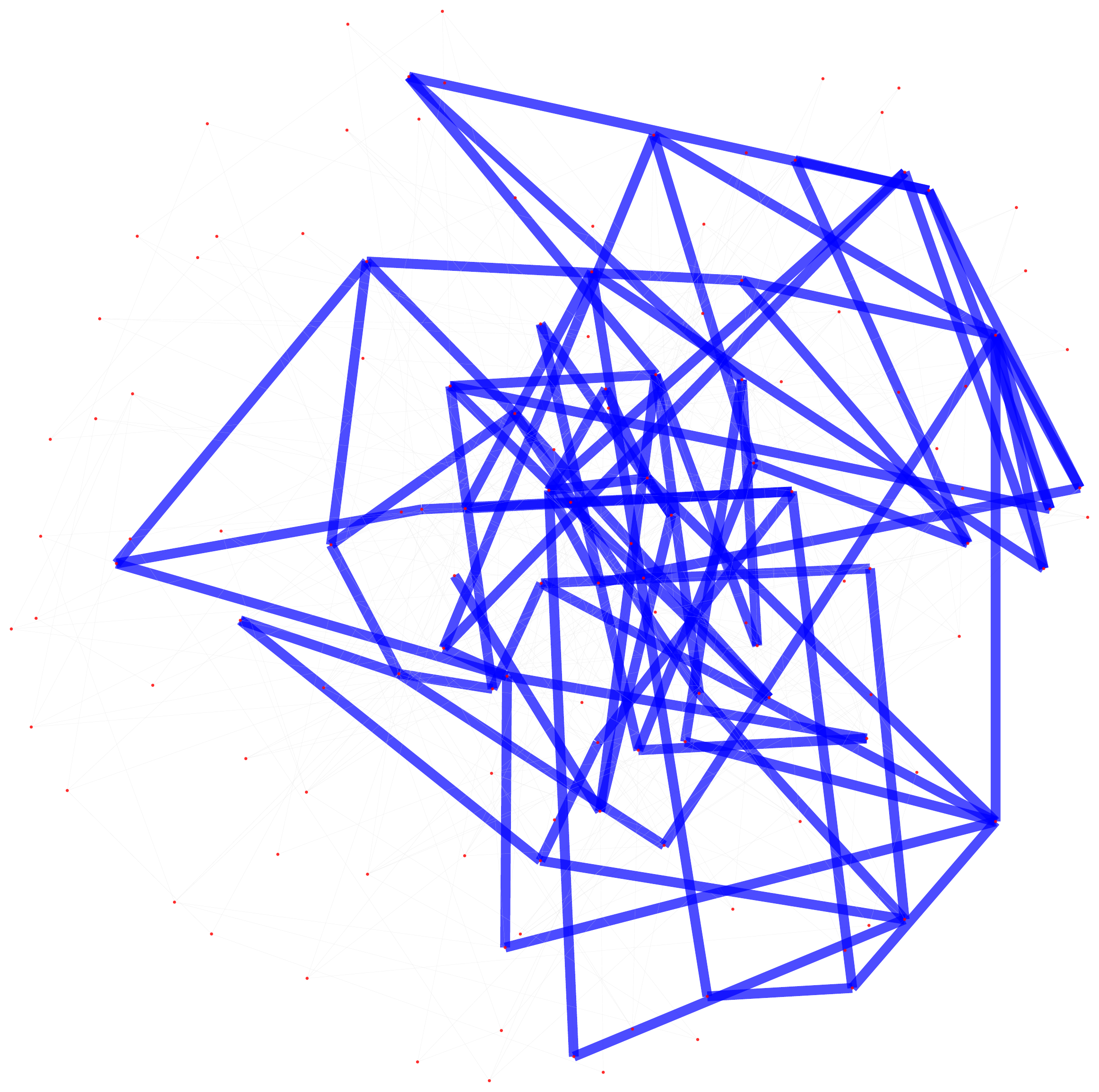}}\hspace{1mm}
\subfloat[  Target ]{\label{fig: path_1}\includegraphics[width=0.10\textwidth]{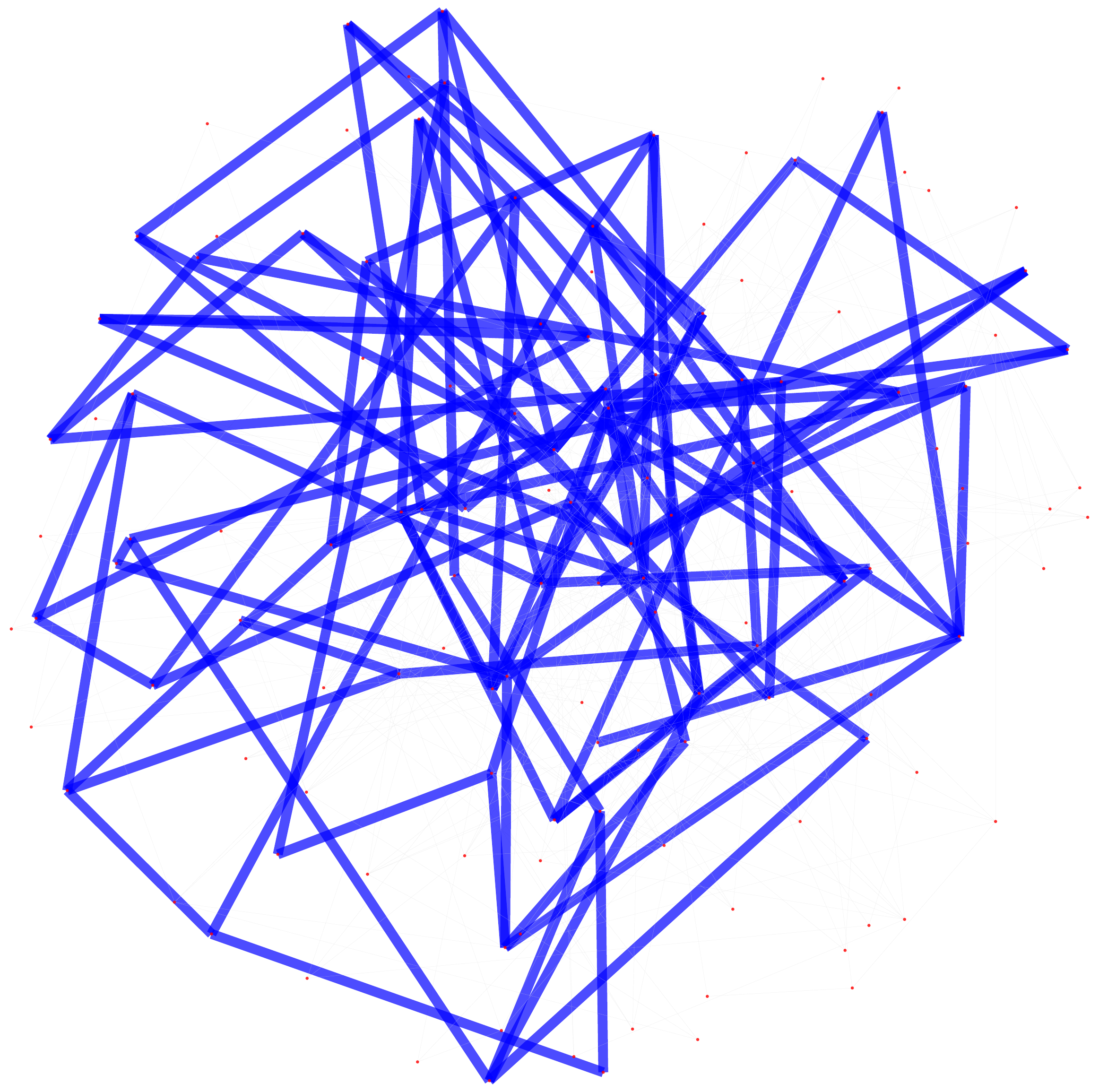}}\hspace{0mm}

\subfloat[  { [2400, 0]} ]{\label{fig: path_1}\includegraphics[width=0.10\textwidth]{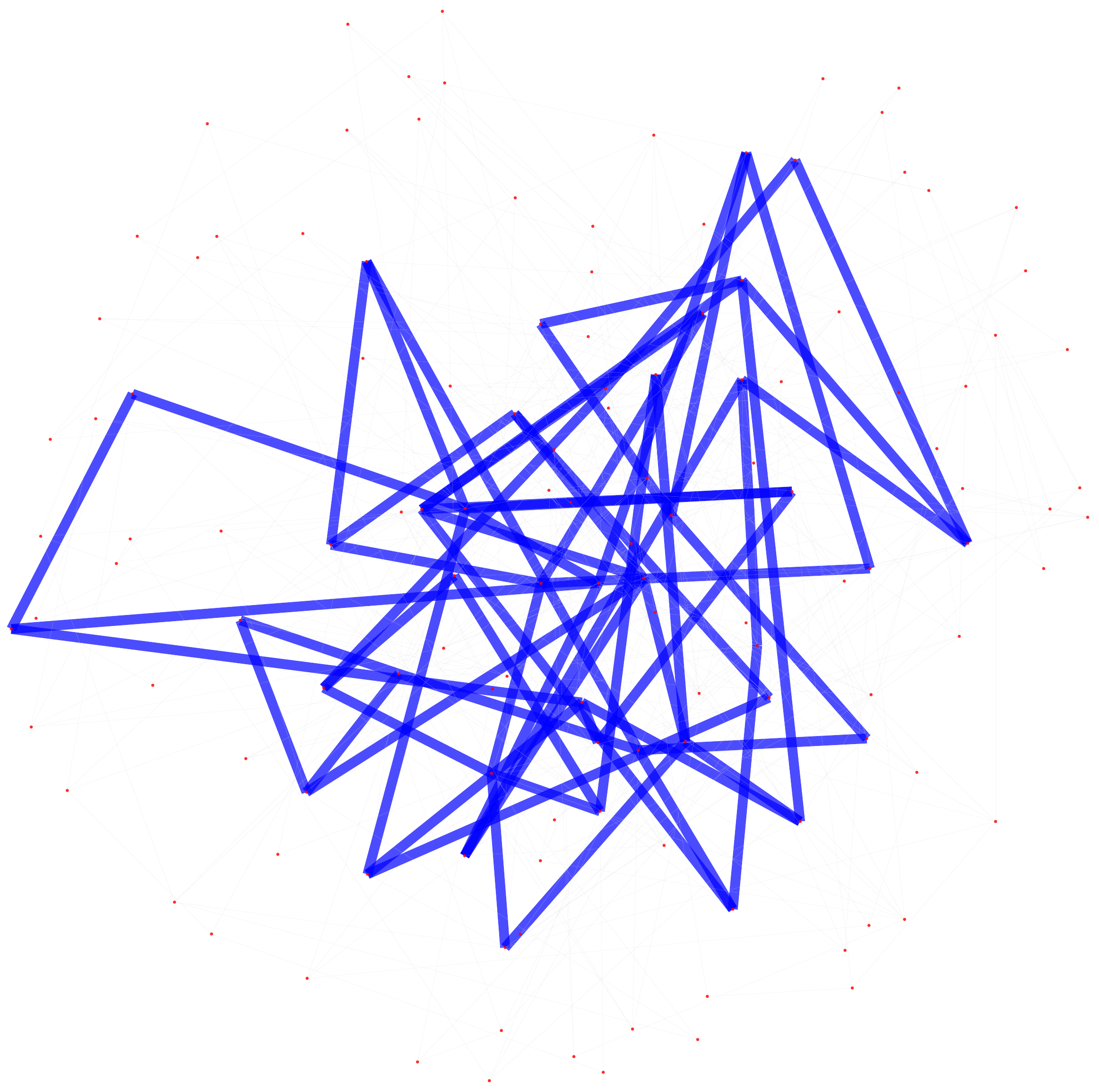}}\hspace{1mm}
\subfloat[  { [2400, 1]} ]{\label{fig: path_1}\includegraphics[width=0.10\textwidth]{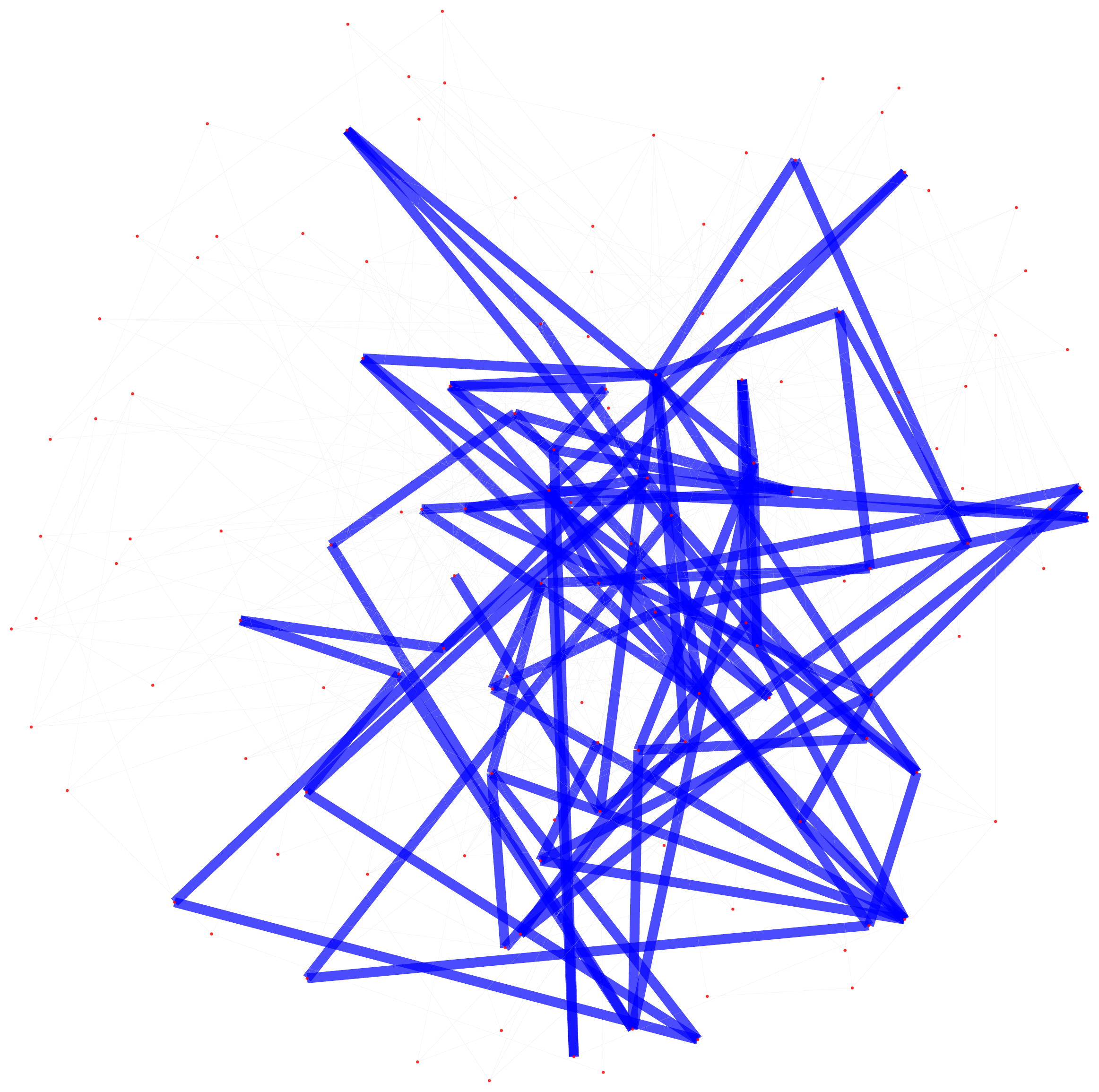}}\hspace{1mm}
\subfloat[  { [2400, 2]} ]{\label{fig: path_1}\includegraphics[width=0.10\textwidth]{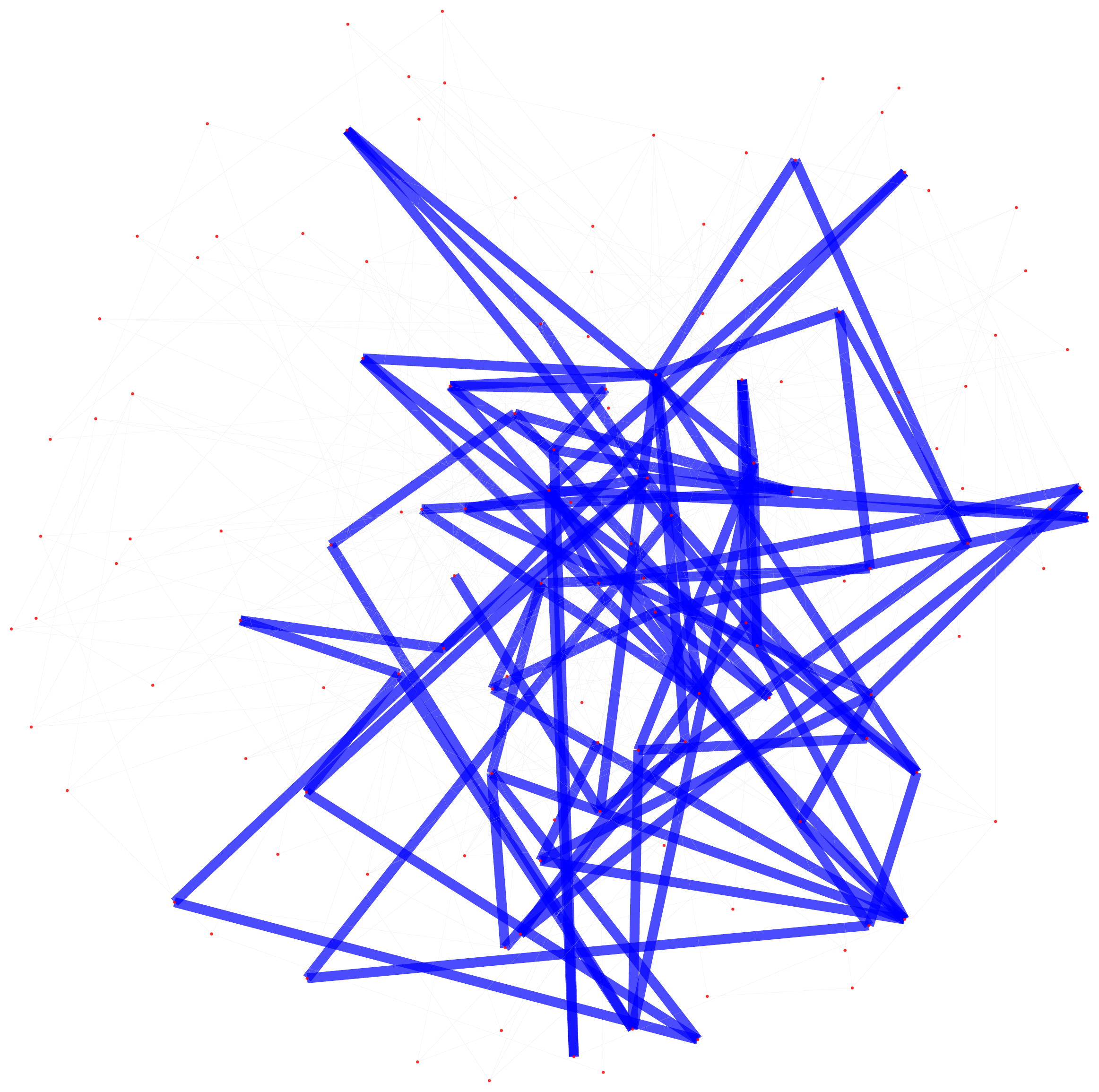}}\hspace{1mm}
\subfloat[  Target ]{\label{fig: path_1}\includegraphics[width=0.10\textwidth]{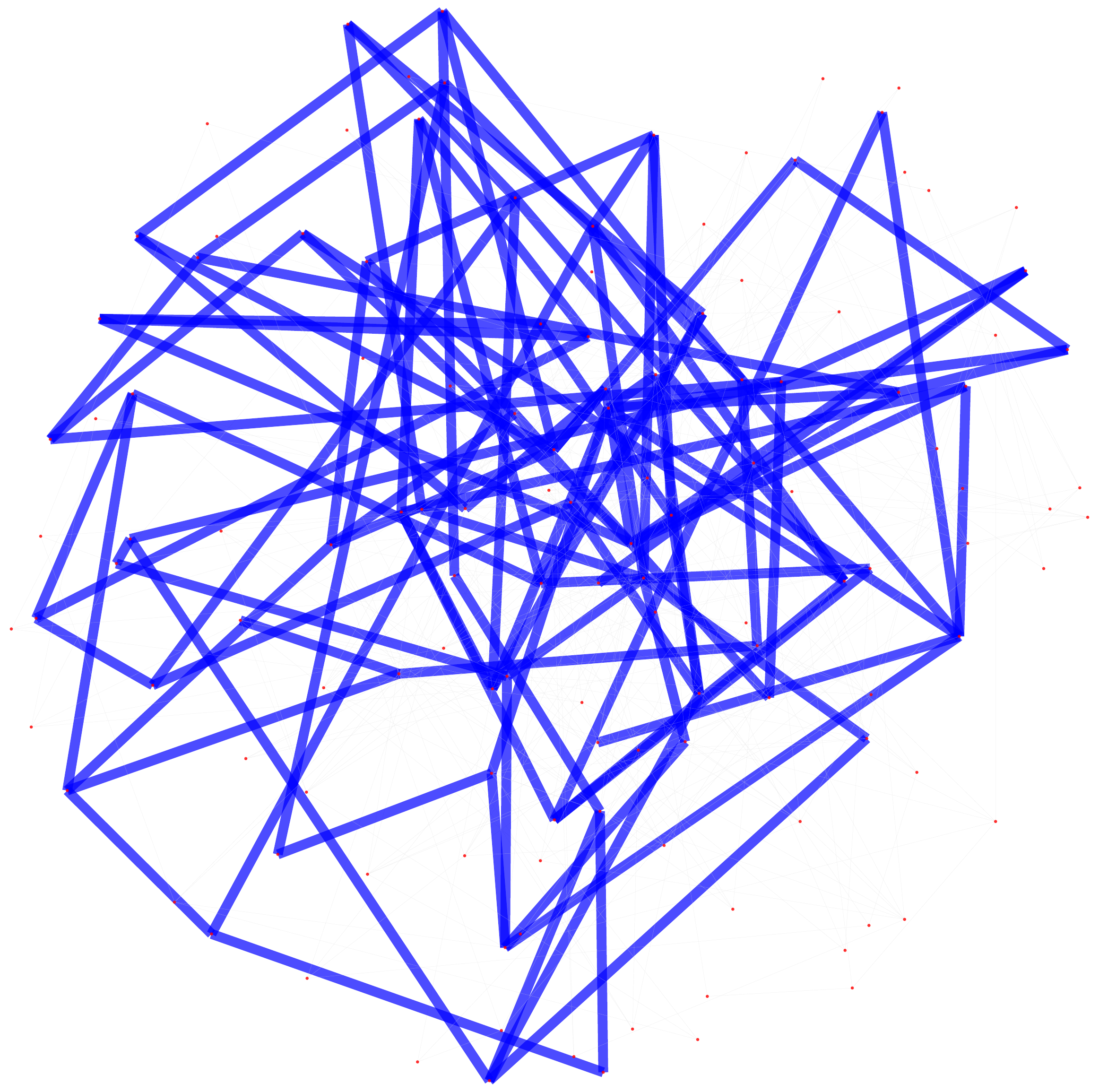}}\hspace{0mm}

\caption{\small Visualization of the results of QRTS-P for an example testing query for Path-to-Tree on Kro.}
\label{fig: more_tree_1}
\end{figure}

\begin{figure}[t]
\centering
\captionsetup[subfloat]{labelfont=scriptsize,textfont=scriptsize,labelformat=empty}
\subfloat[  { [30, 0]} ]{\label{fig: path_1}\includegraphics[width=0.10\textwidth]{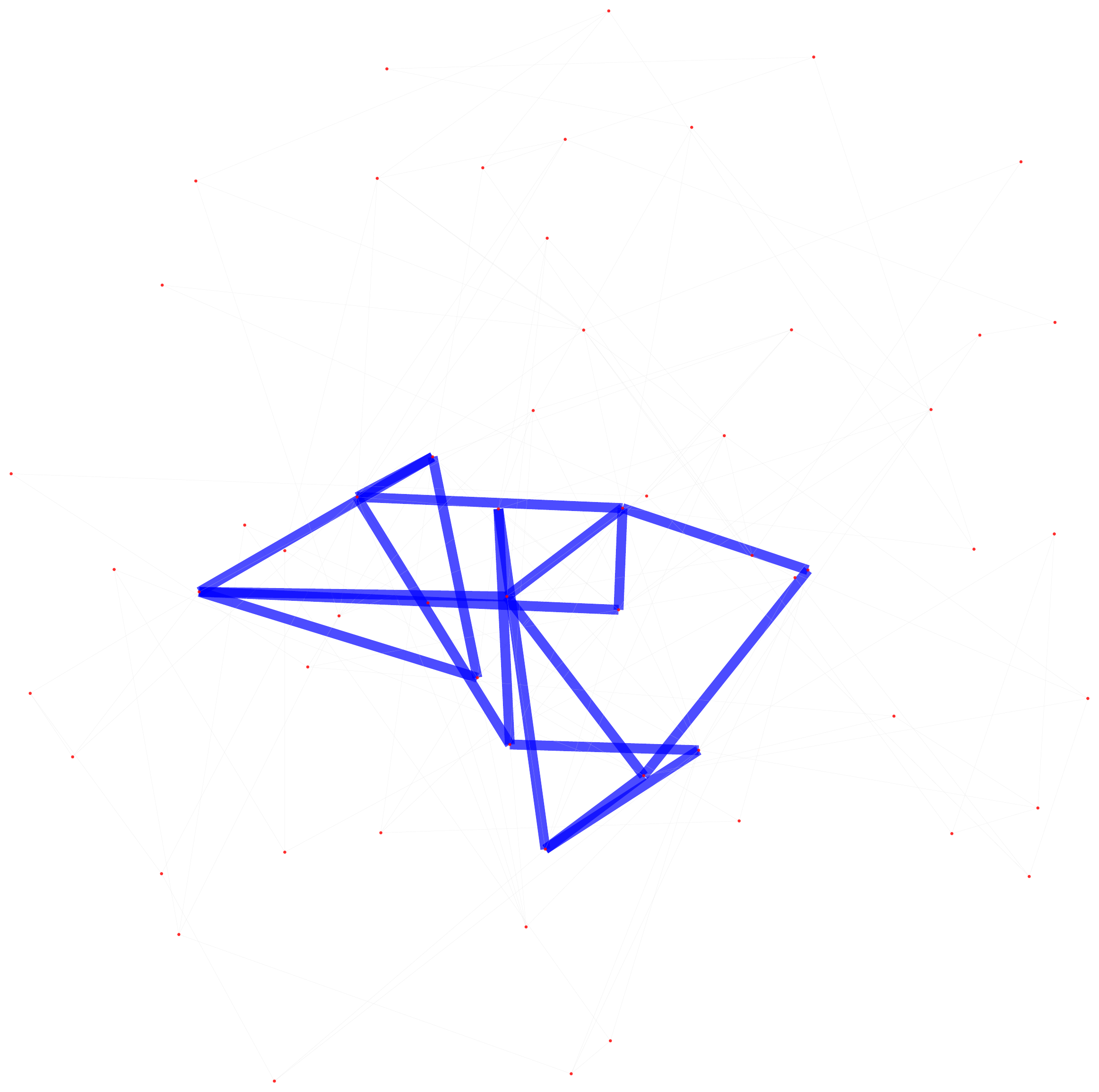}}\hspace{1mm}
\subfloat[  { [30, 1]} ]{\label{fig: path_1}\includegraphics[width=0.10\textwidth]{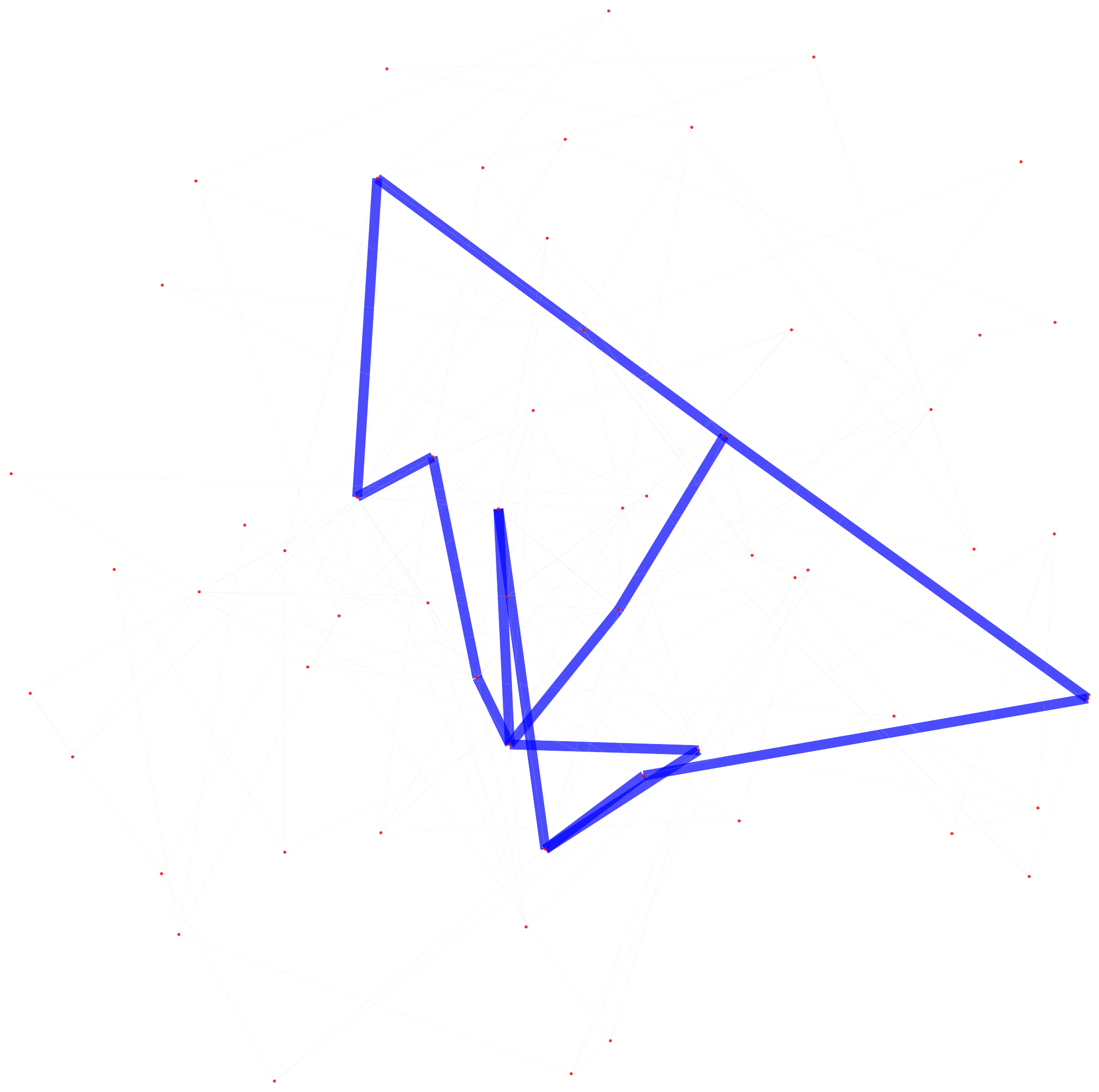}}\hspace{1mm}
\subfloat[  { [30, 2]} ]{\label{fig: path_1}\includegraphics[width=0.10\textwidth]{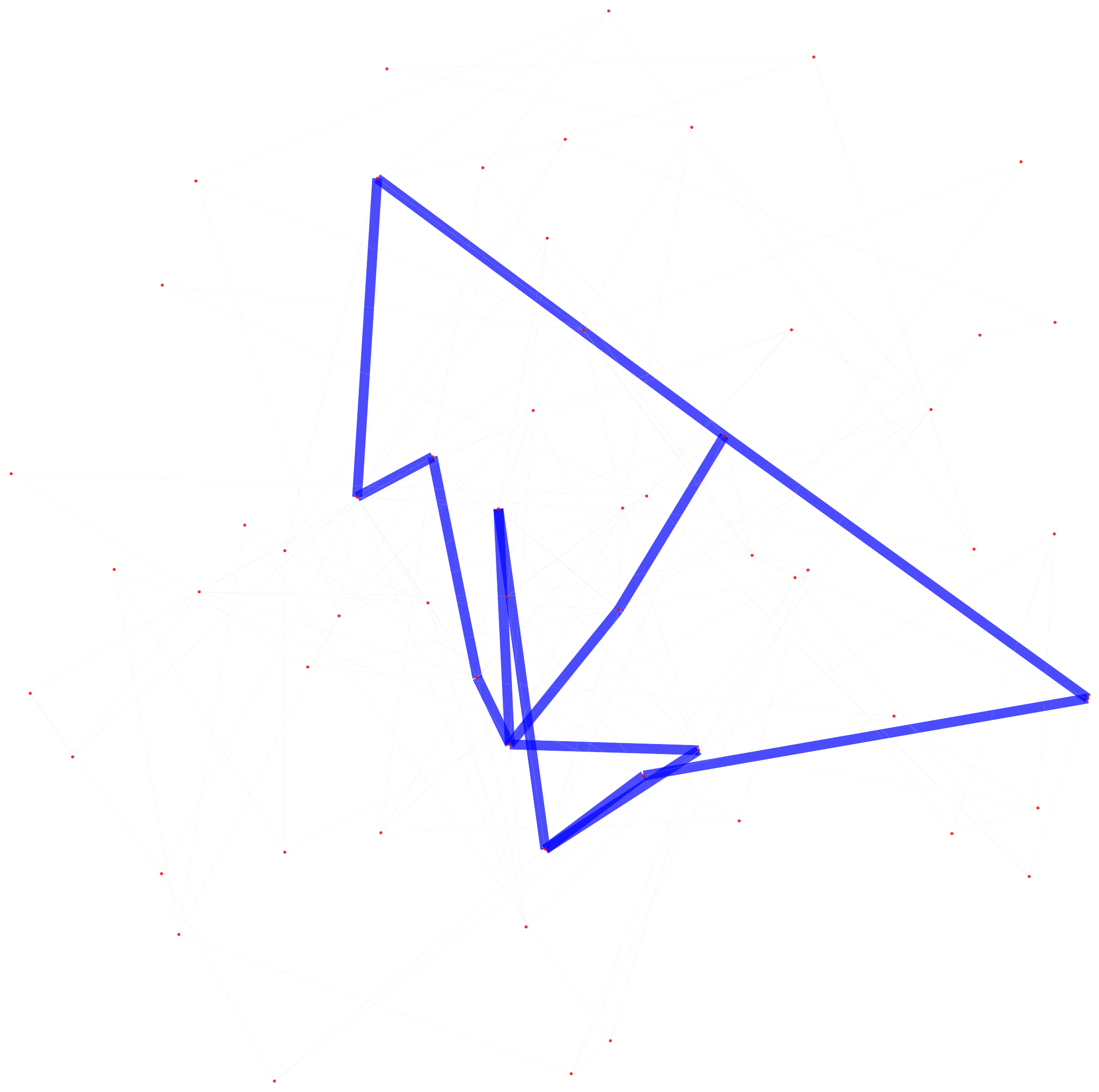}}\hspace{1mm}
\subfloat[  Target ]{\label{fig: path_1}\includegraphics[width=0.10\textwidth]{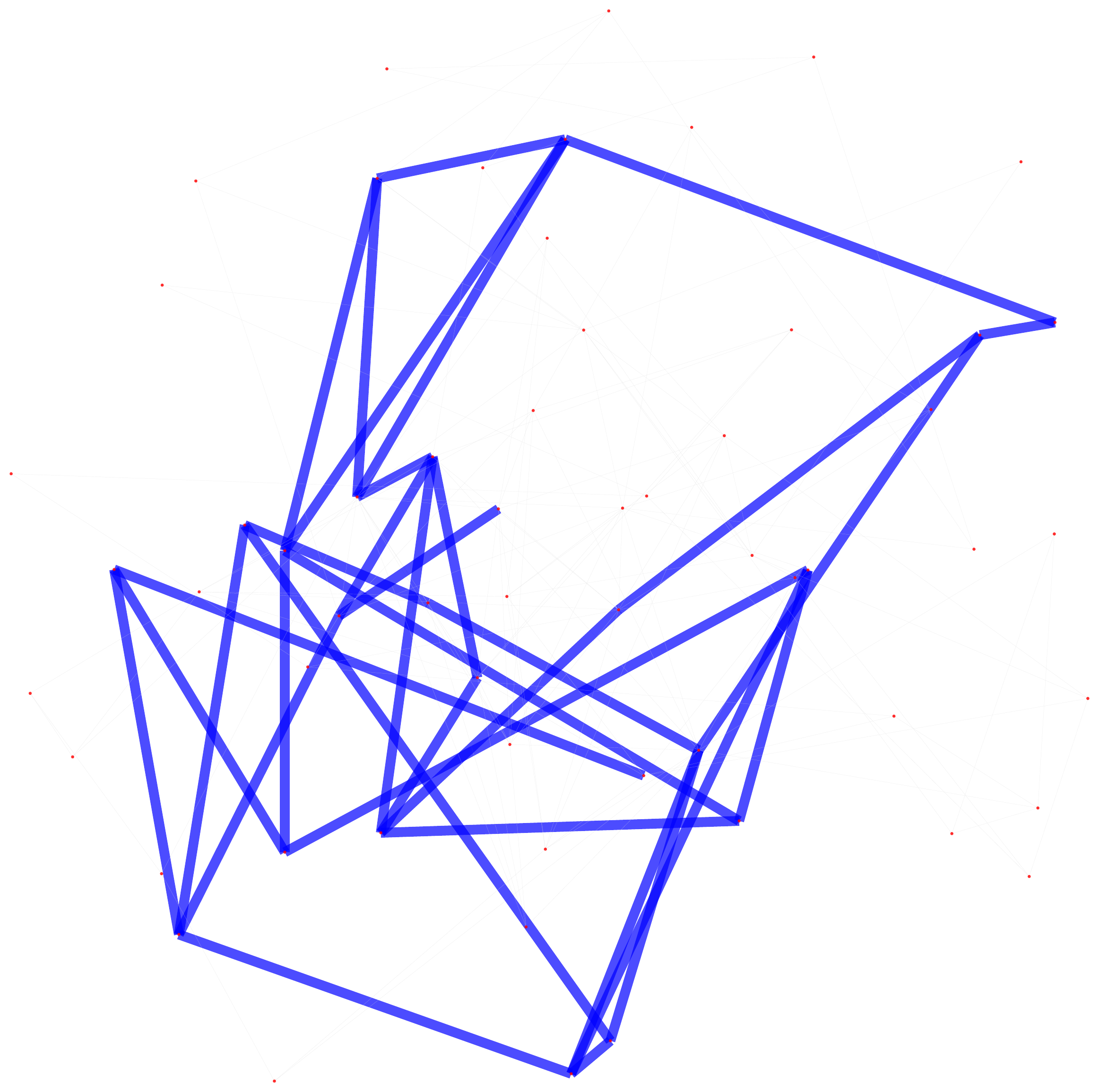}}\hspace{0mm}


\subfloat[  { [240, 0]} ]{\label{fig: path_1}\includegraphics[width=0.10\textwidth]{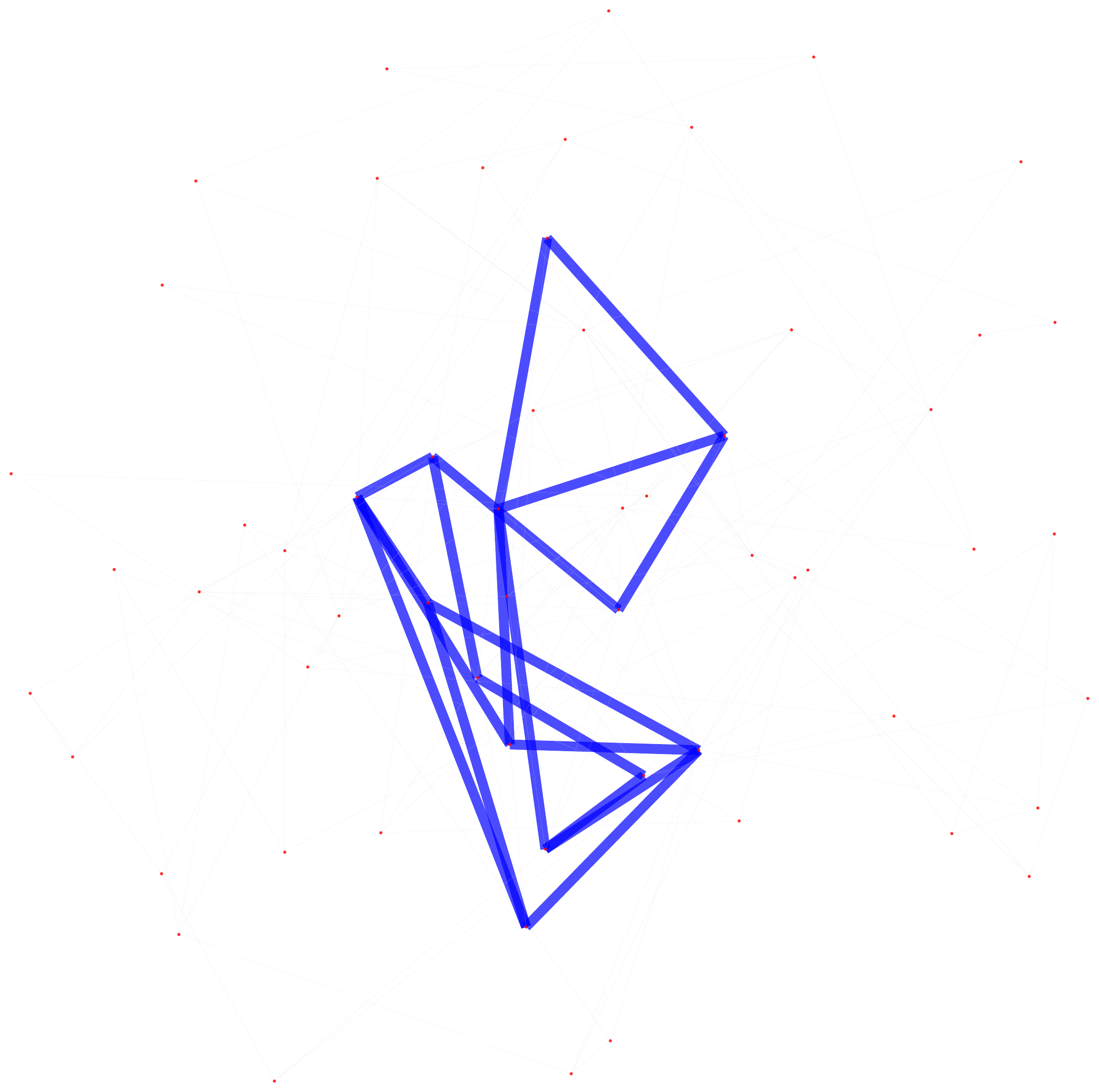}}\hspace{1mm}
\subfloat[  { [240, 1]} ]{\label{fig: path_1}\includegraphics[width=0.10\textwidth]{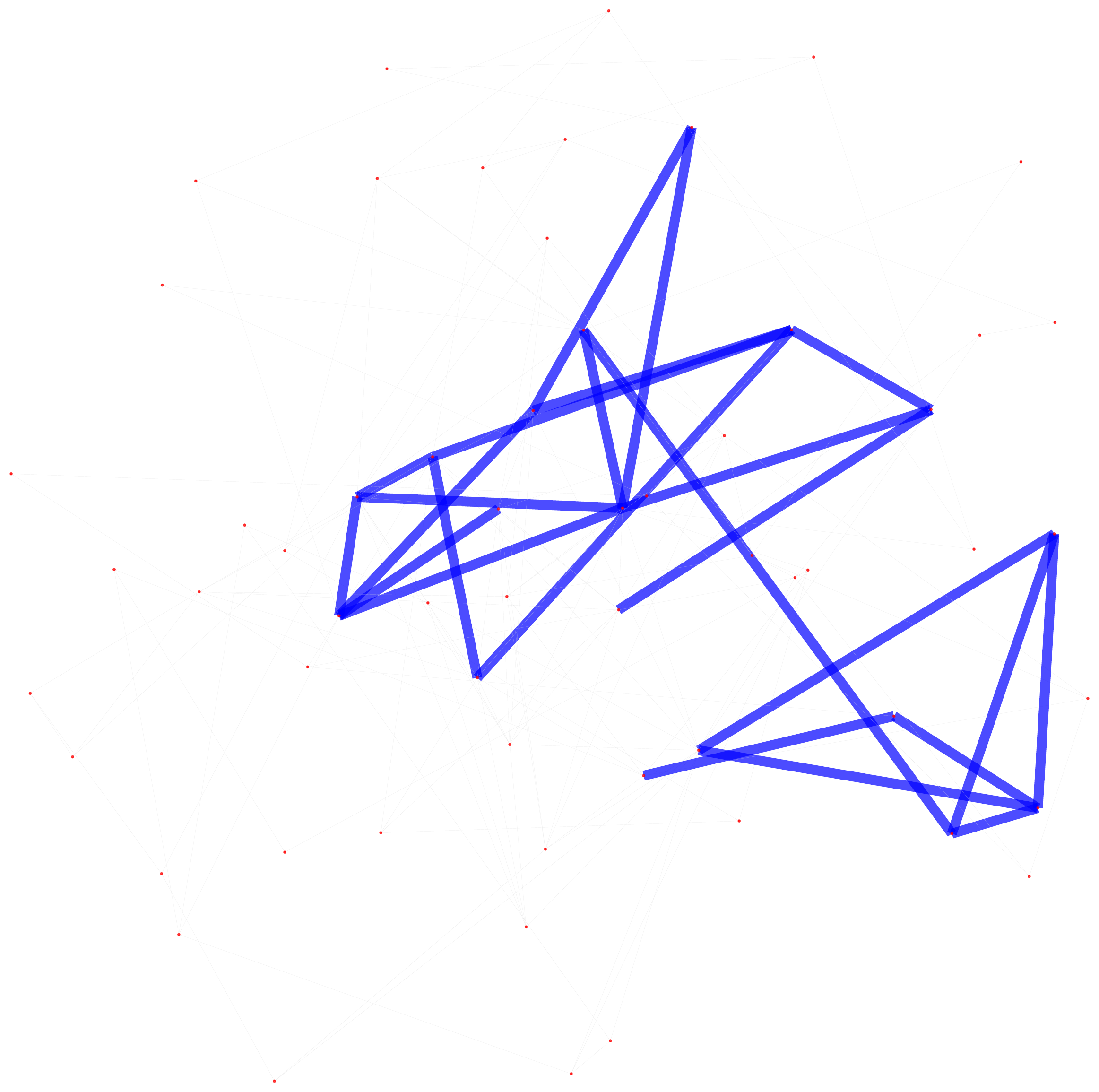}}\hspace{1mm}
\subfloat[  { [240, 2]} ]{\label{fig: path_1}\includegraphics[width=0.10\textwidth]{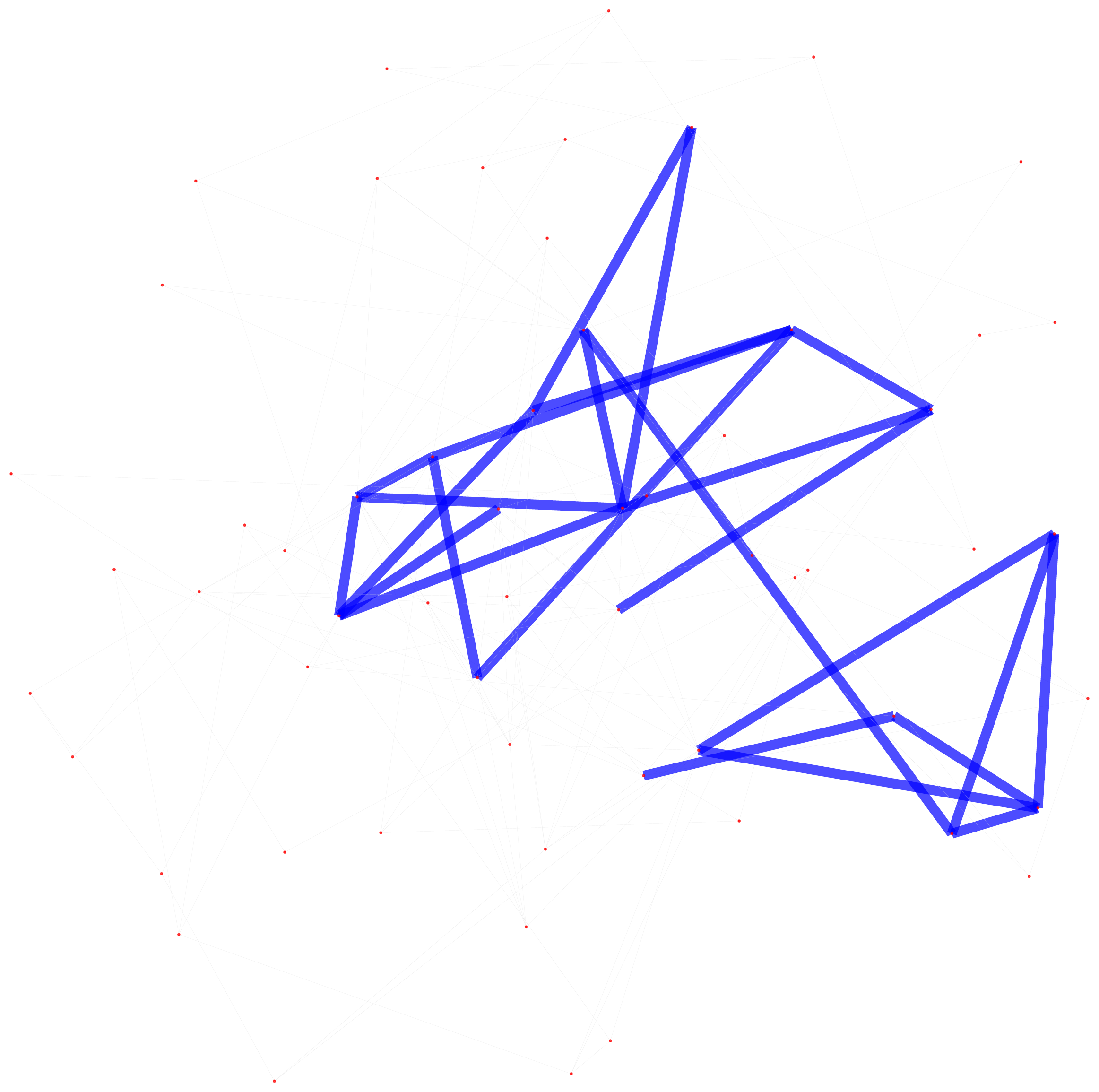}}\hspace{1mm}
\subfloat[  Target ]{\label{fig: path_1}\includegraphics[width=0.10\textwidth]{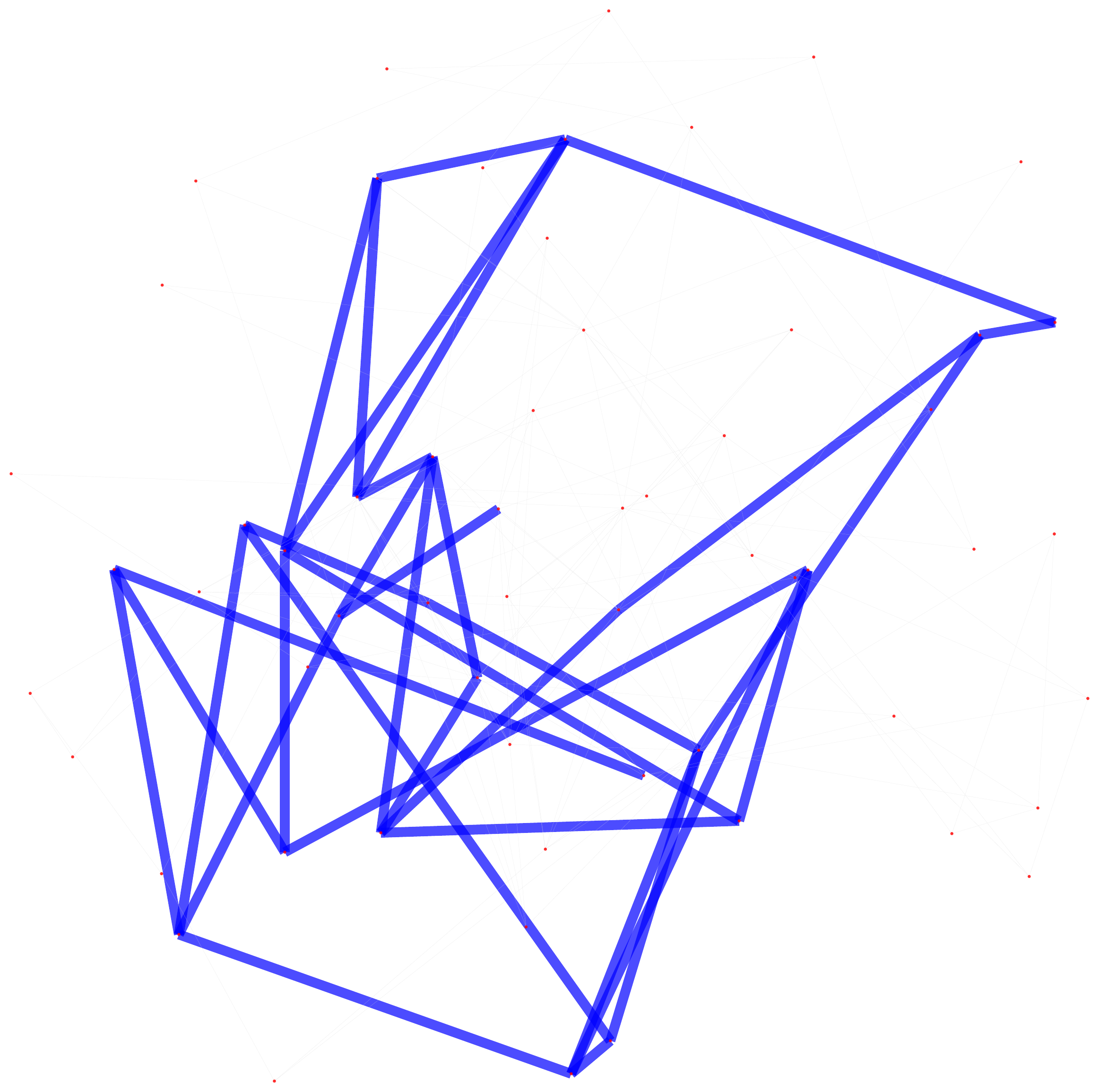}}\hspace{0mm}

\subfloat[  { [480, 0]} ]{\label{fig: path_1}\includegraphics[width=0.10\textwidth]{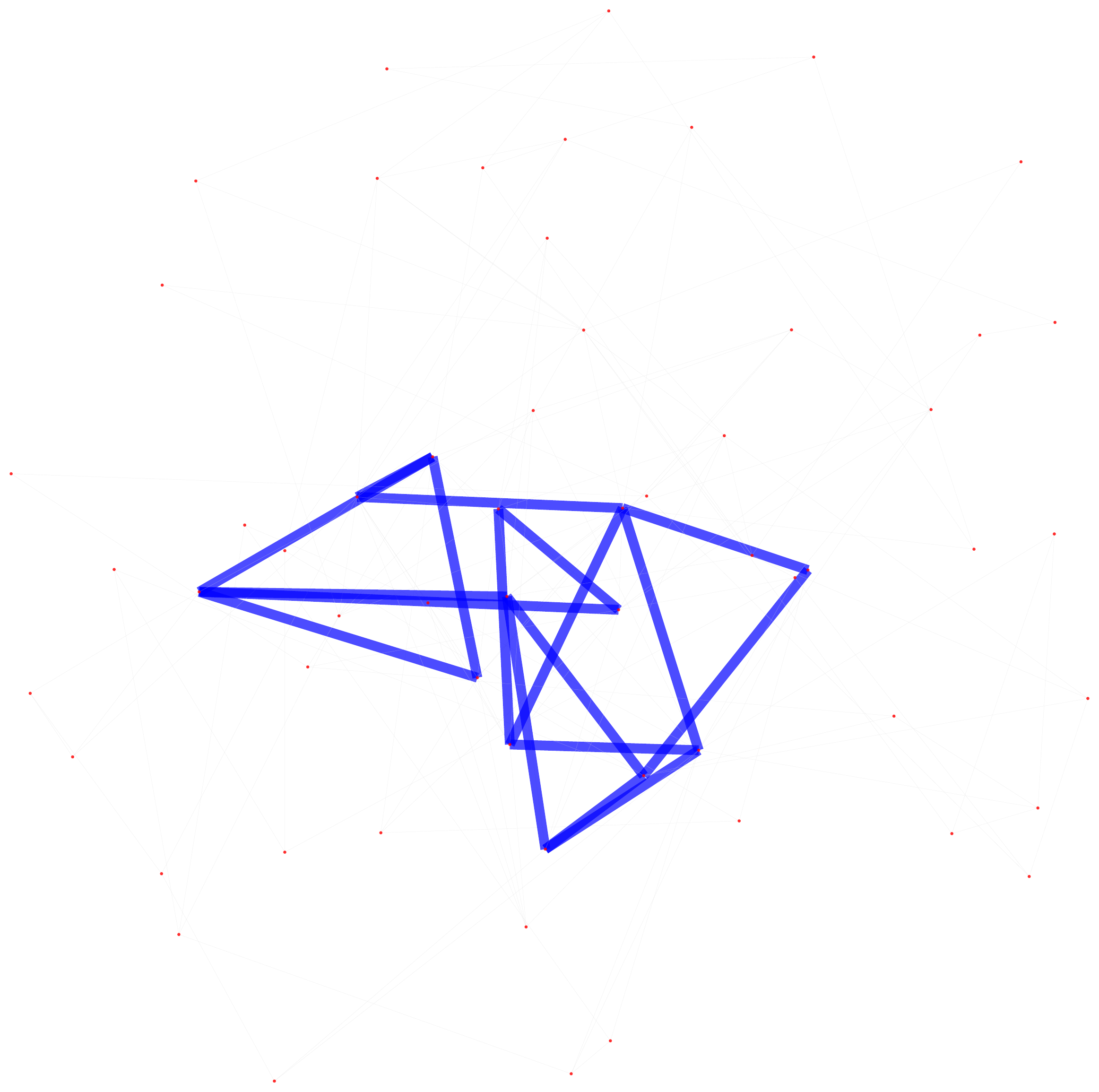}}\hspace{1mm}
\subfloat[  { [480, 1]} ]{\label{fig: path_1}\includegraphics[width=0.10\textwidth]{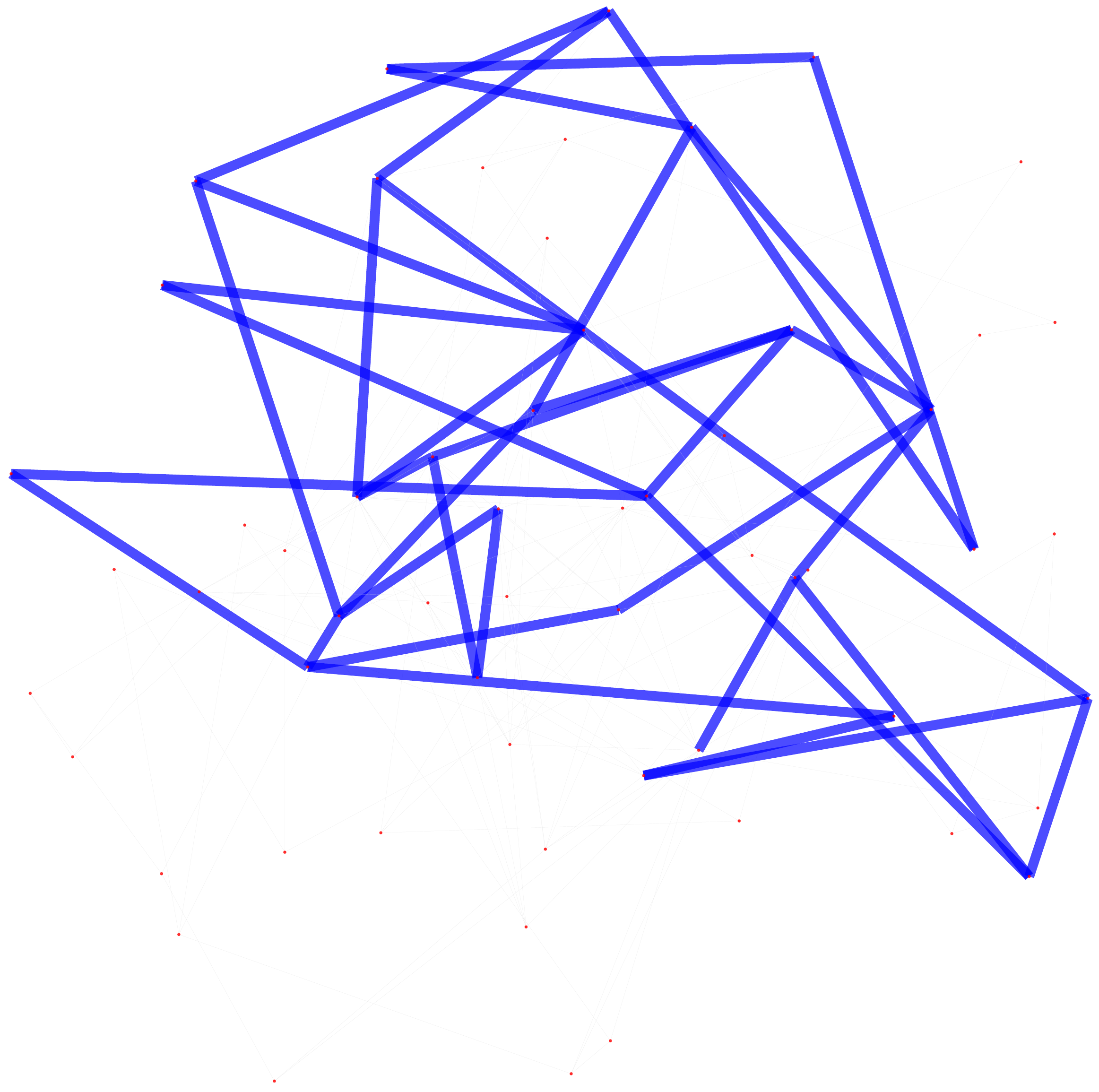}}\hspace{1mm}
\subfloat[  { [480, 2]} ]{\label{fig: path_1}\includegraphics[width=0.10\textwidth]{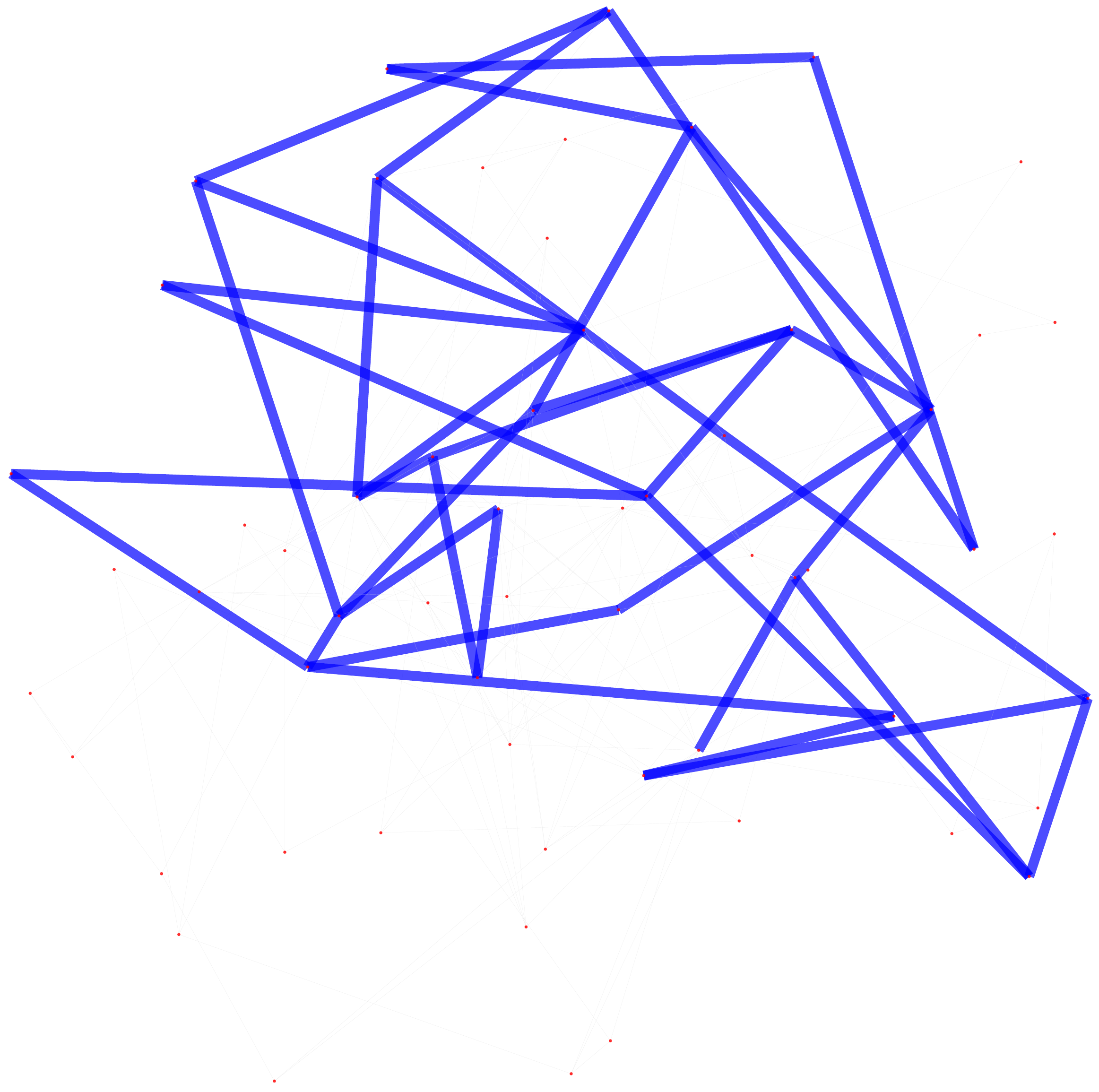}}\hspace{1mm}
\subfloat[  Target ]{\label{fig: path_1}\includegraphics[width=0.10\textwidth]{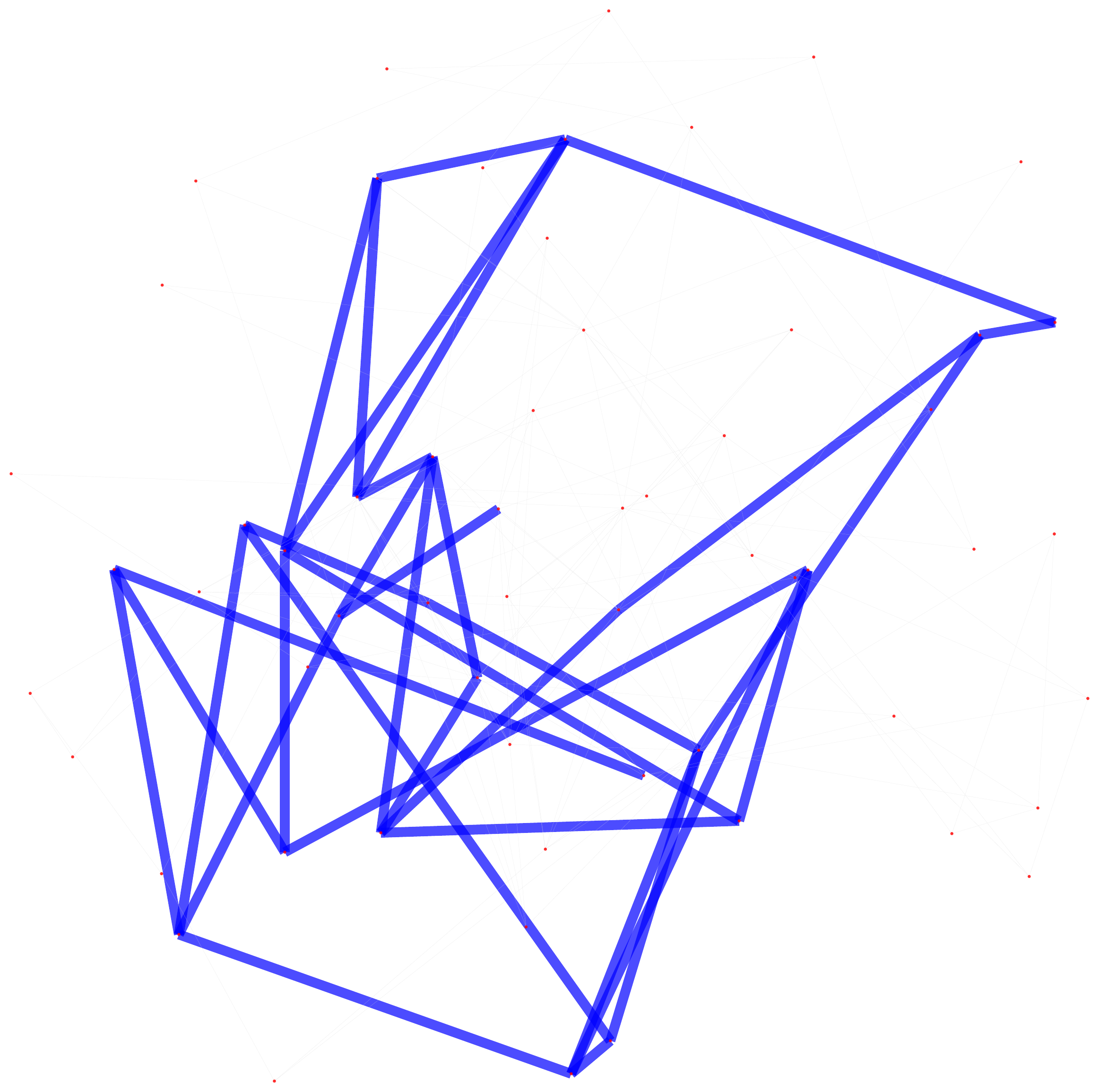}}\hspace{0mm}

\subfloat[  { [2400, 0]} ]{\label{fig: path_1}\includegraphics[width=0.10\textwidth]{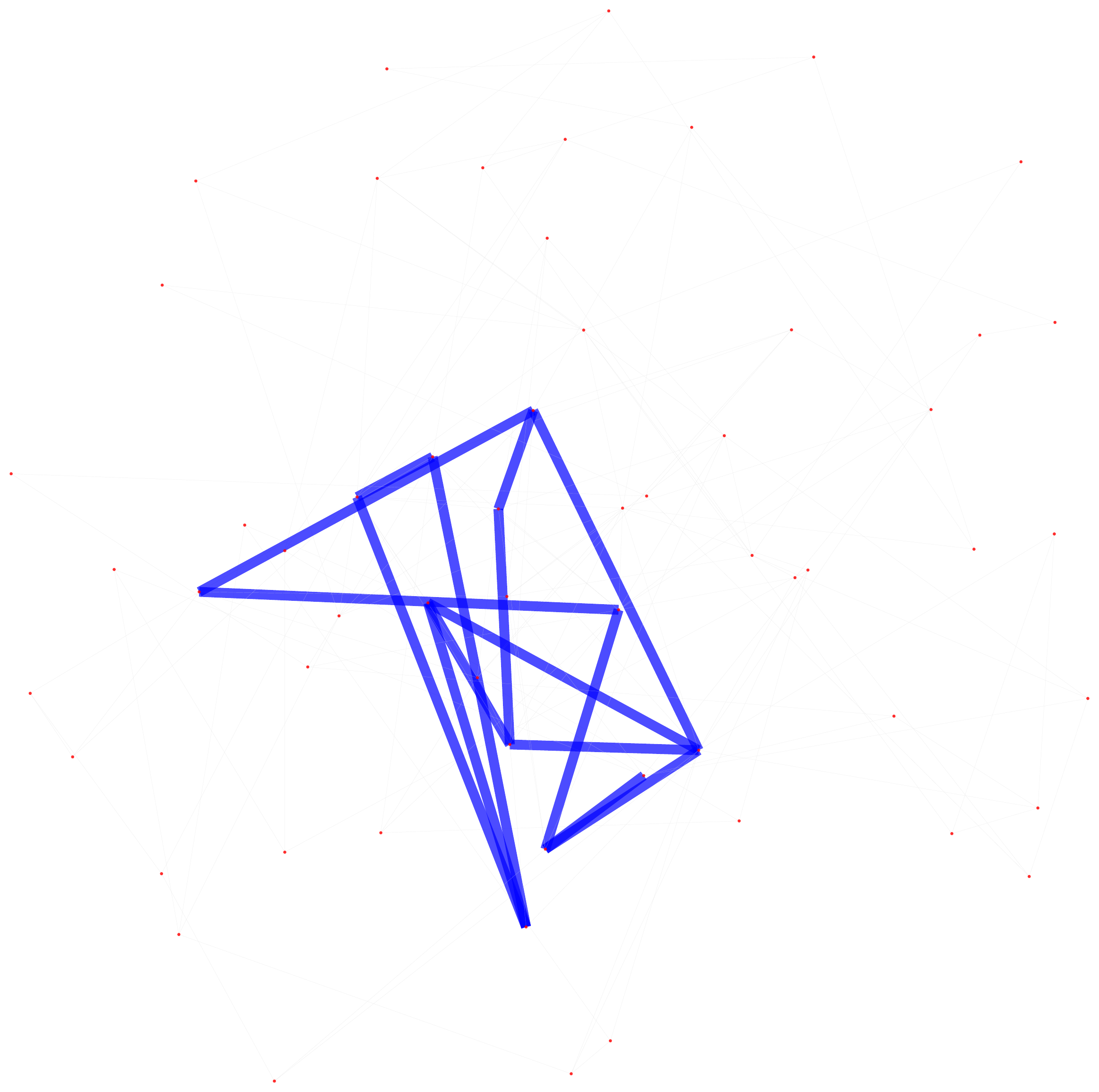}}\hspace{1mm}
\subfloat[  { [2400, 1]} ]{\label{fig: path_1}\includegraphics[width=0.10\textwidth]{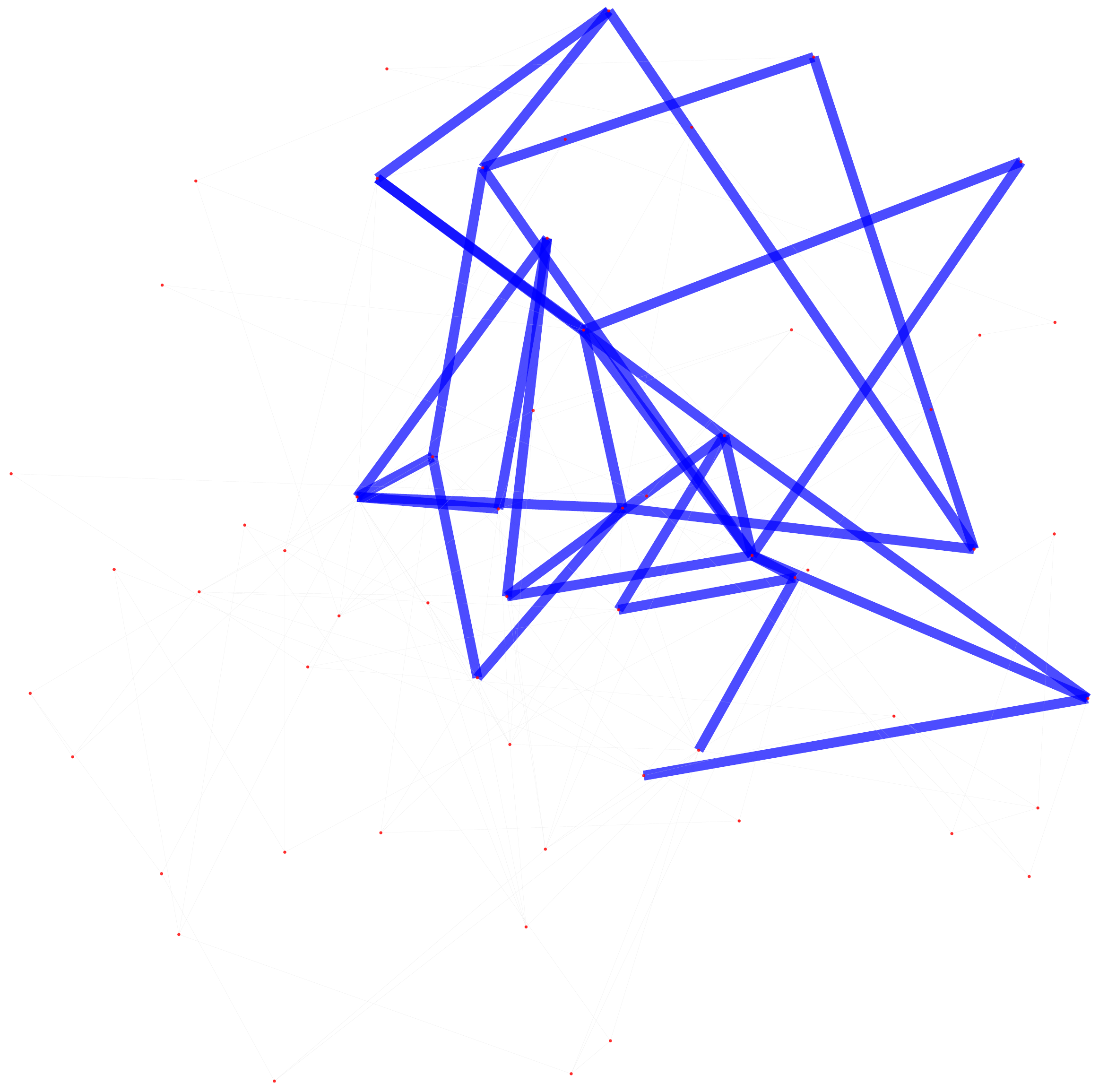}}\hspace{1mm}
\subfloat[  { [2400, 2]} ]{\label{fig: path_1}\includegraphics[width=0.10\textwidth]{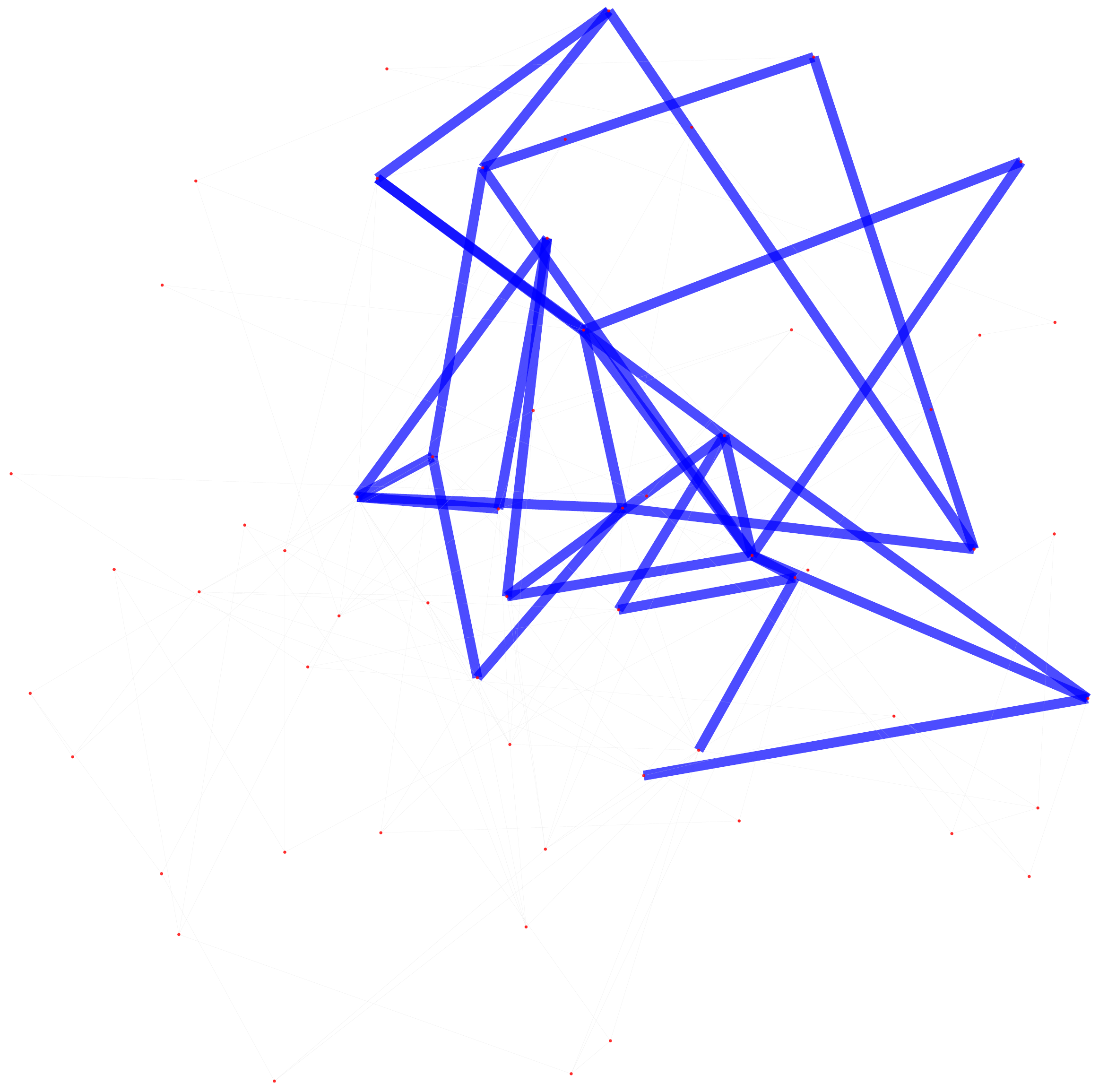}}\hspace{1mm}
\subfloat[  Target ]{\label{fig: path_1}\includegraphics[width=0.10\textwidth]{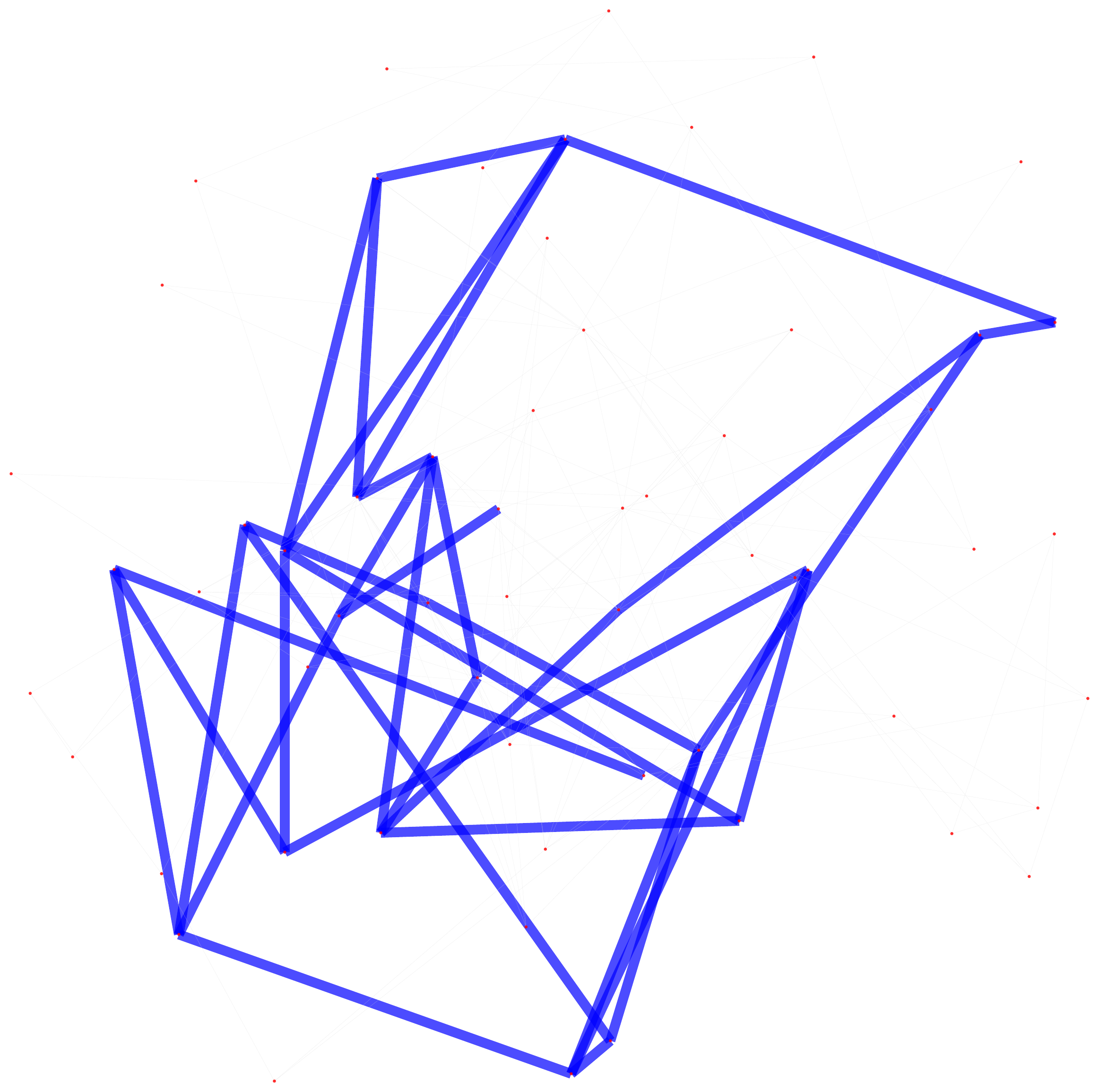}}\hspace{0mm}

\caption{\small Visualization of the results of QRTS-P for an example testing query for Path-to-Tree on Kro.}
\label{fig: more_tree_2}
\end{figure}

\begin{figure}[t]
\centering
\captionsetup[subfloat]{labelfont=scriptsize,textfont=scriptsize,labelformat=empty}
\subfloat[  { [30, 0]} ]{\label{fig: path_1}\includegraphics[width=0.10\textwidth]{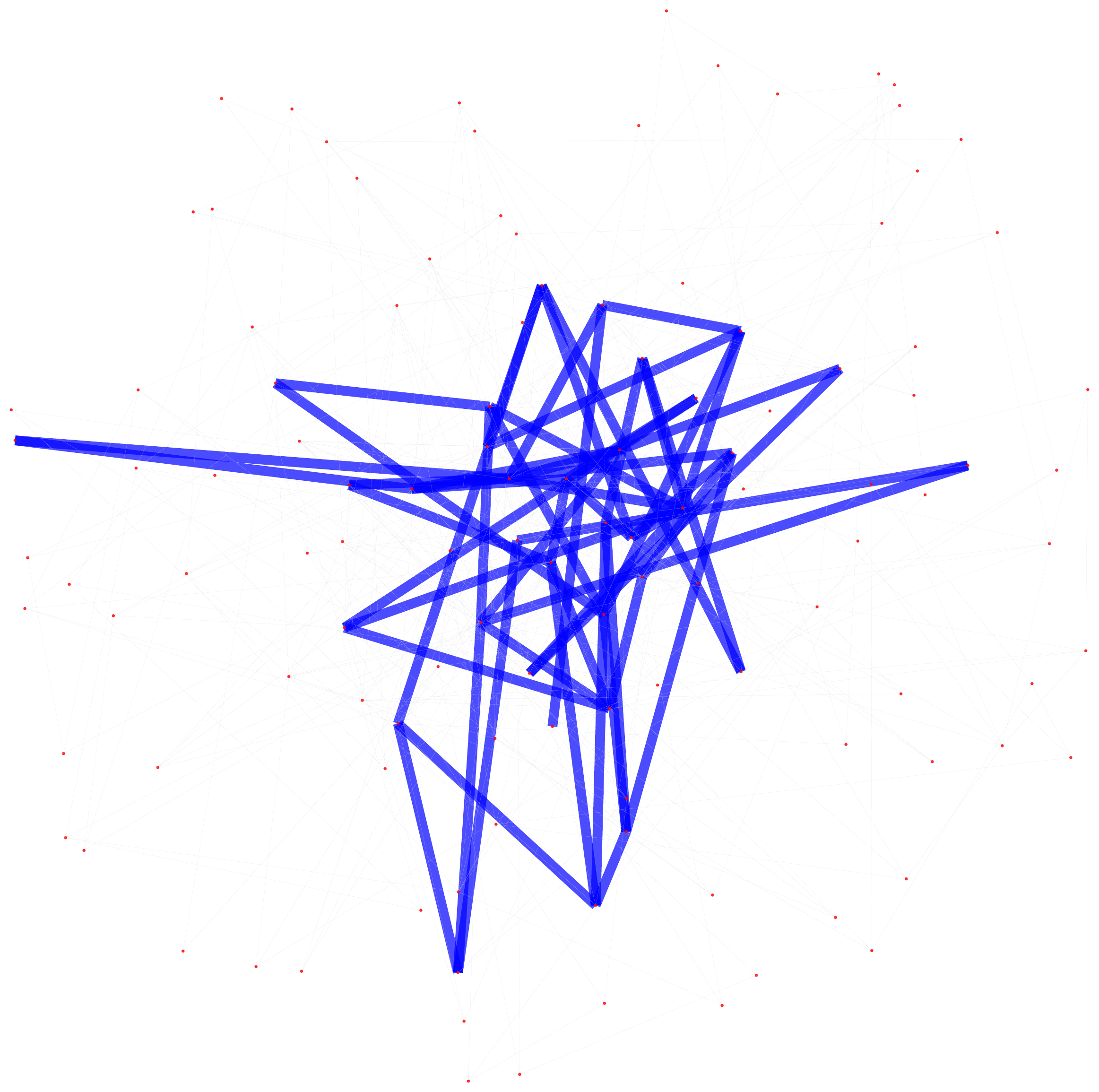}}\hspace{1mm}
\subfloat[  { [30, 1]} ]{\label{fig: path_1}\includegraphics[width=0.10\textwidth]{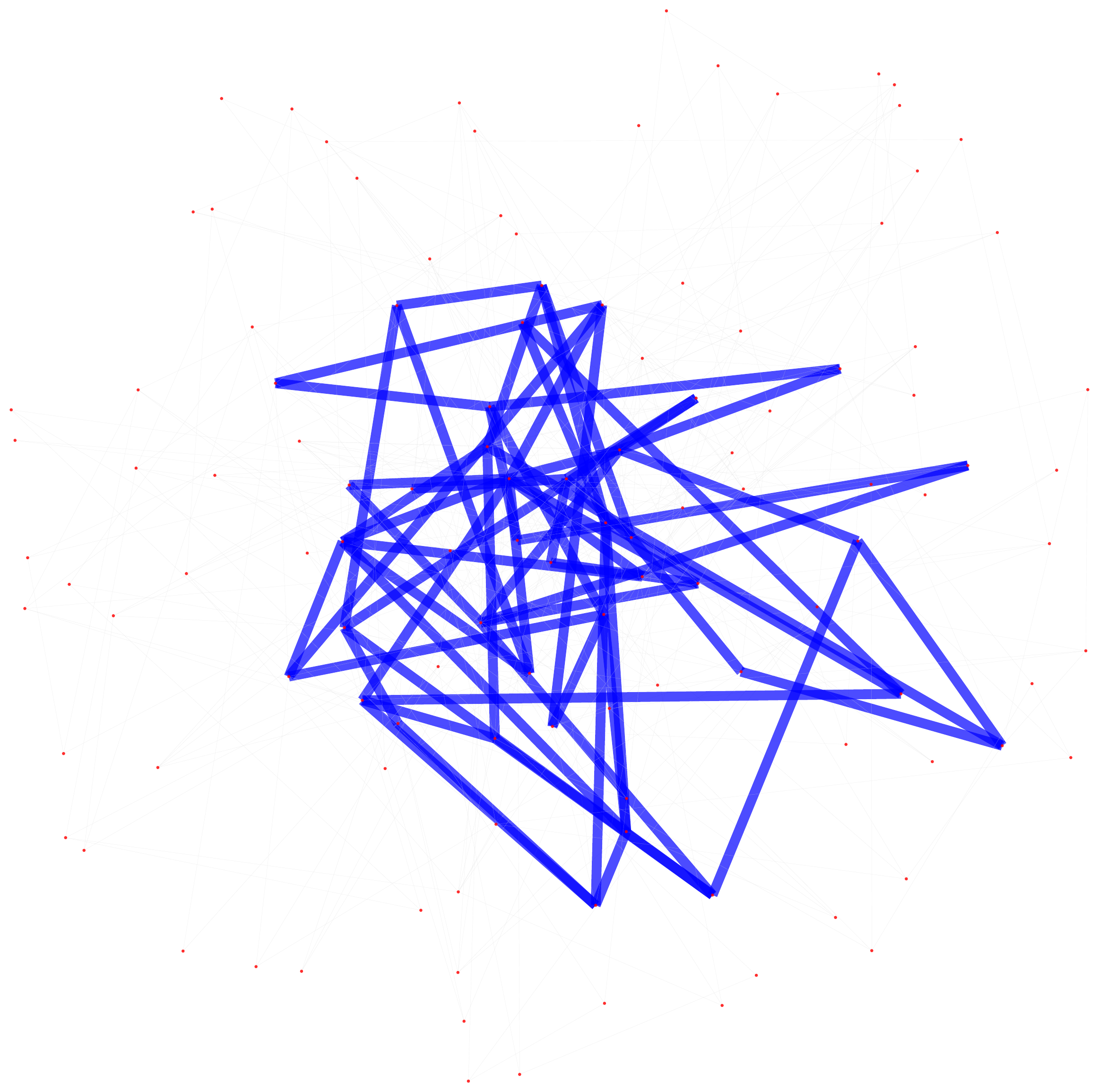}}\hspace{1mm}
\subfloat[  { [30, 2]} ]{\label{fig: path_1}\includegraphics[width=0.10\textwidth]{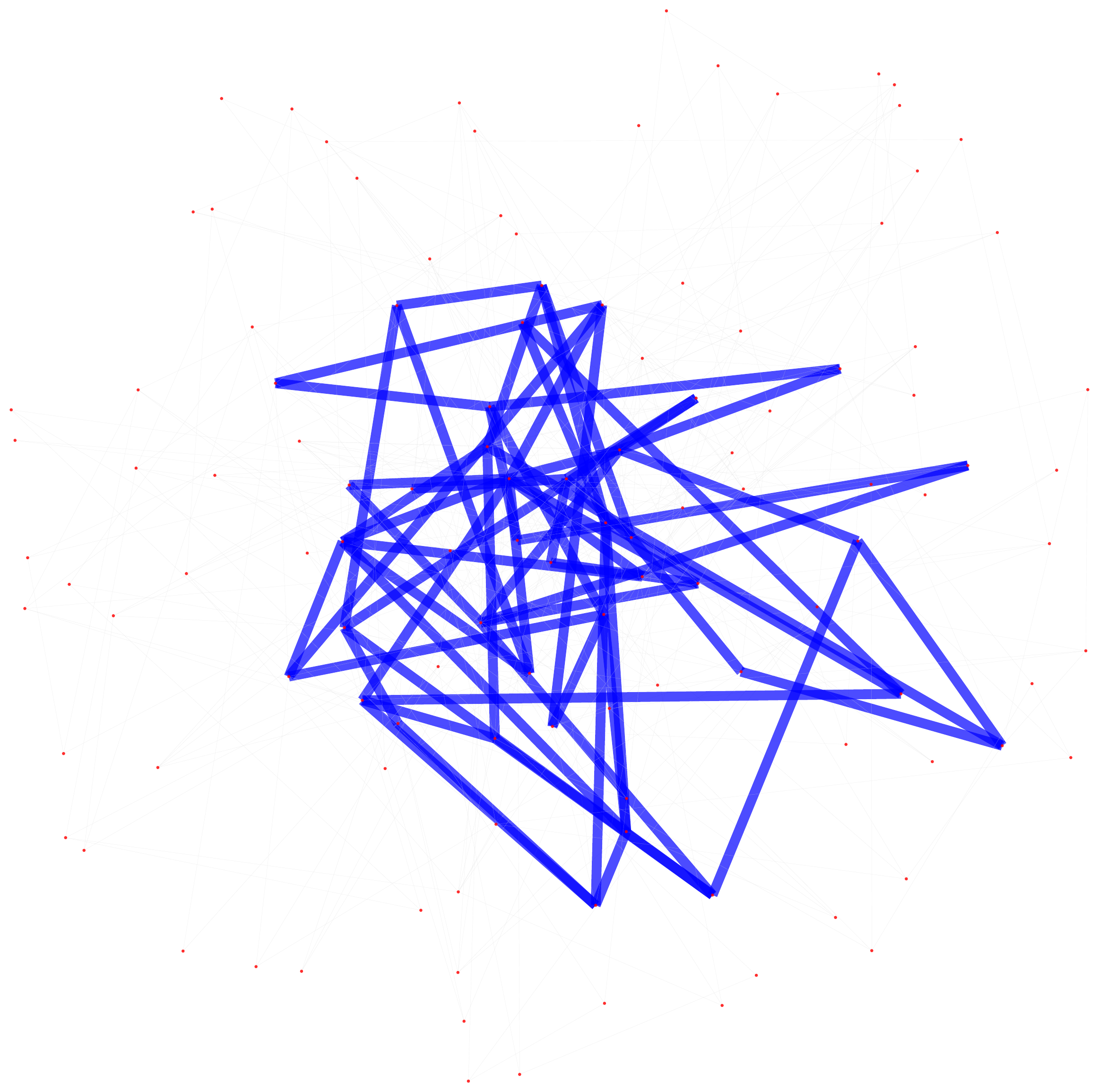}}\hspace{1mm}
\subfloat[  Target ]{\label{fig: path_1}\includegraphics[width=0.10\textwidth]{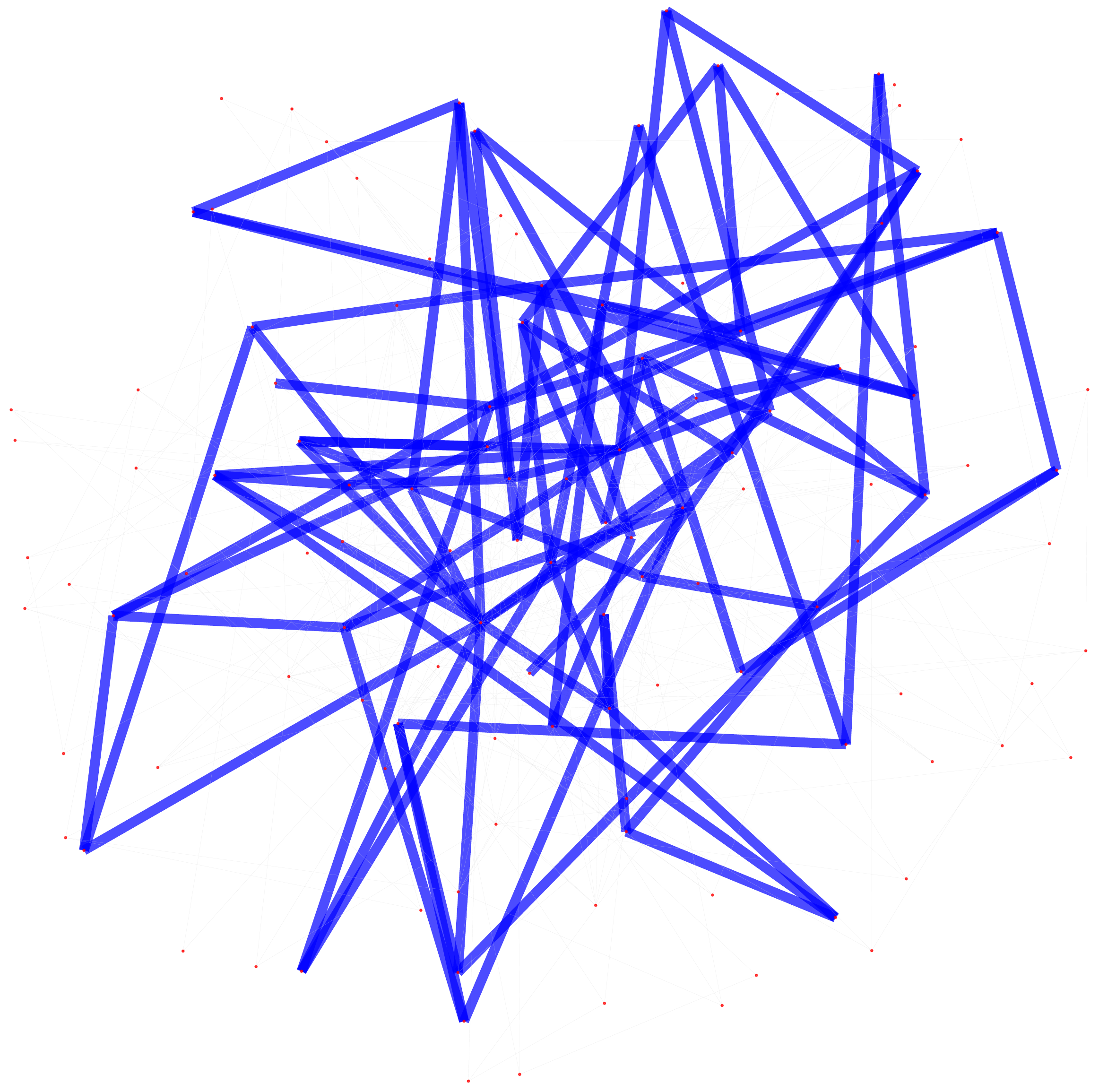}}\hspace{0mm}


\subfloat[  { [240, 0]} ]{\label{fig: path_1}\includegraphics[width=0.10\textwidth]{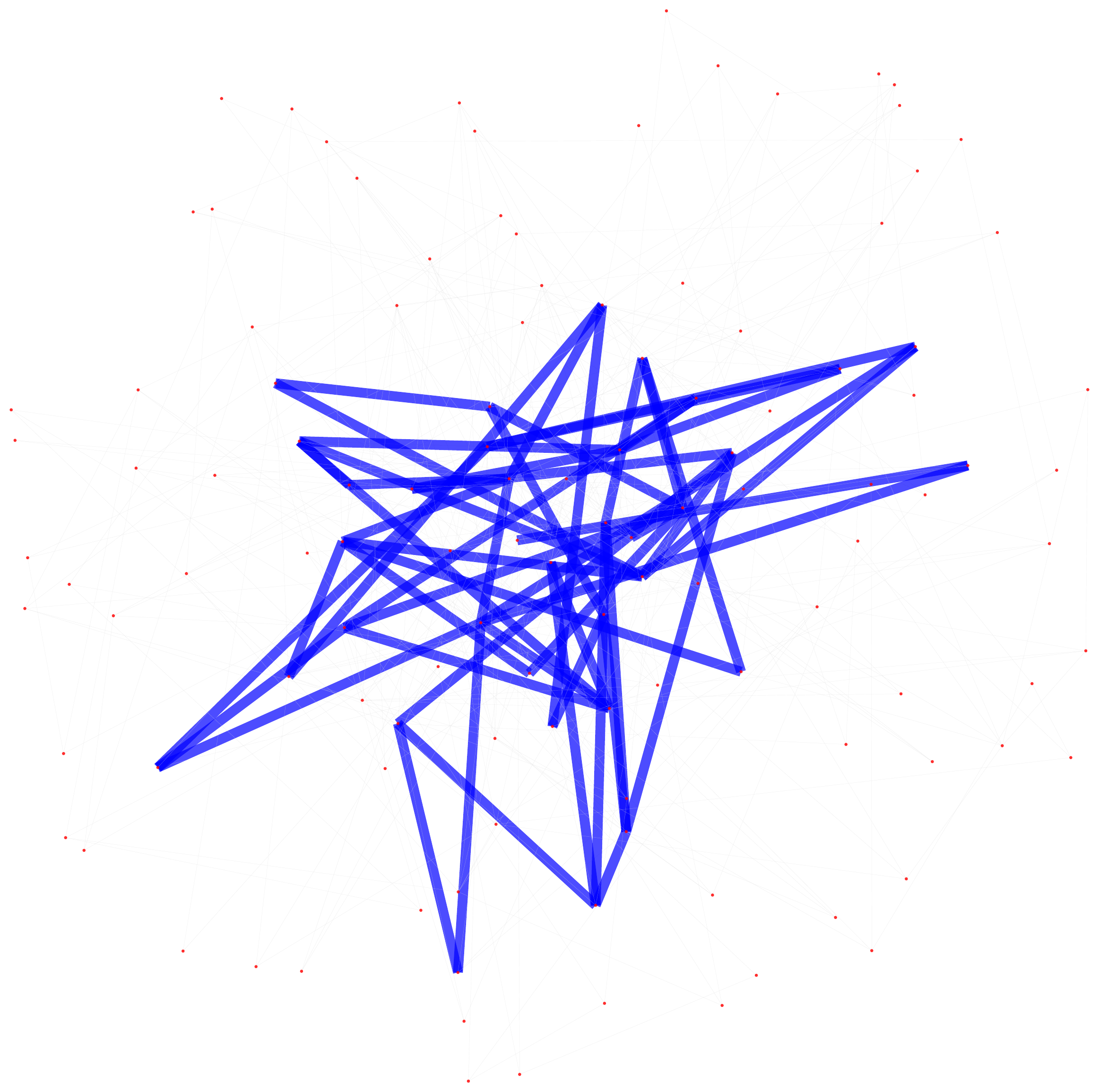}}\hspace{1mm}
\subfloat[  { [240, 1]} ]{\label{fig: path_1}\includegraphics[width=0.10\textwidth]{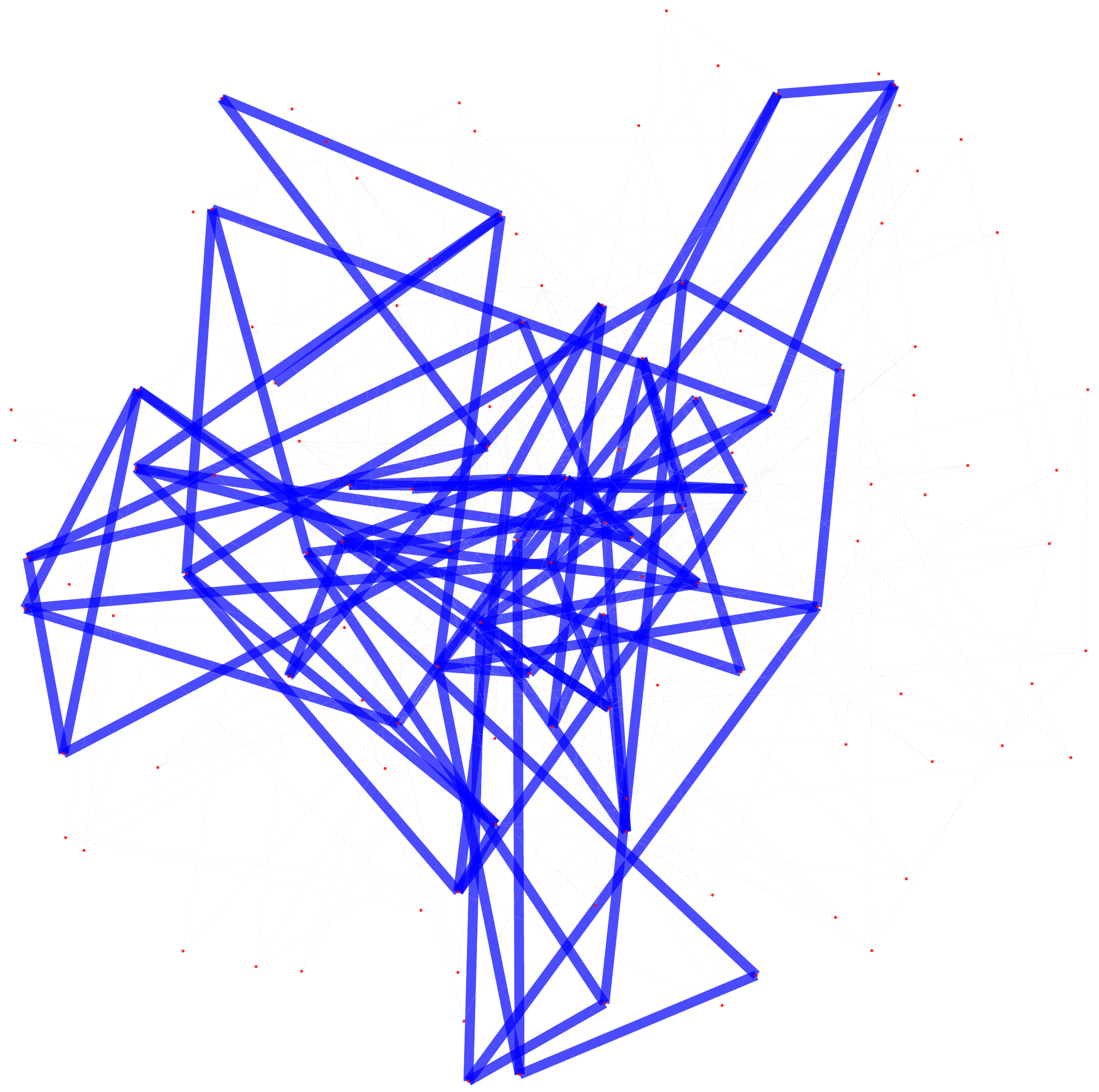}}\hspace{1mm}
\subfloat[  { [240, 2]} ]{\label{fig: path_1}\includegraphics[width=0.10\textwidth]{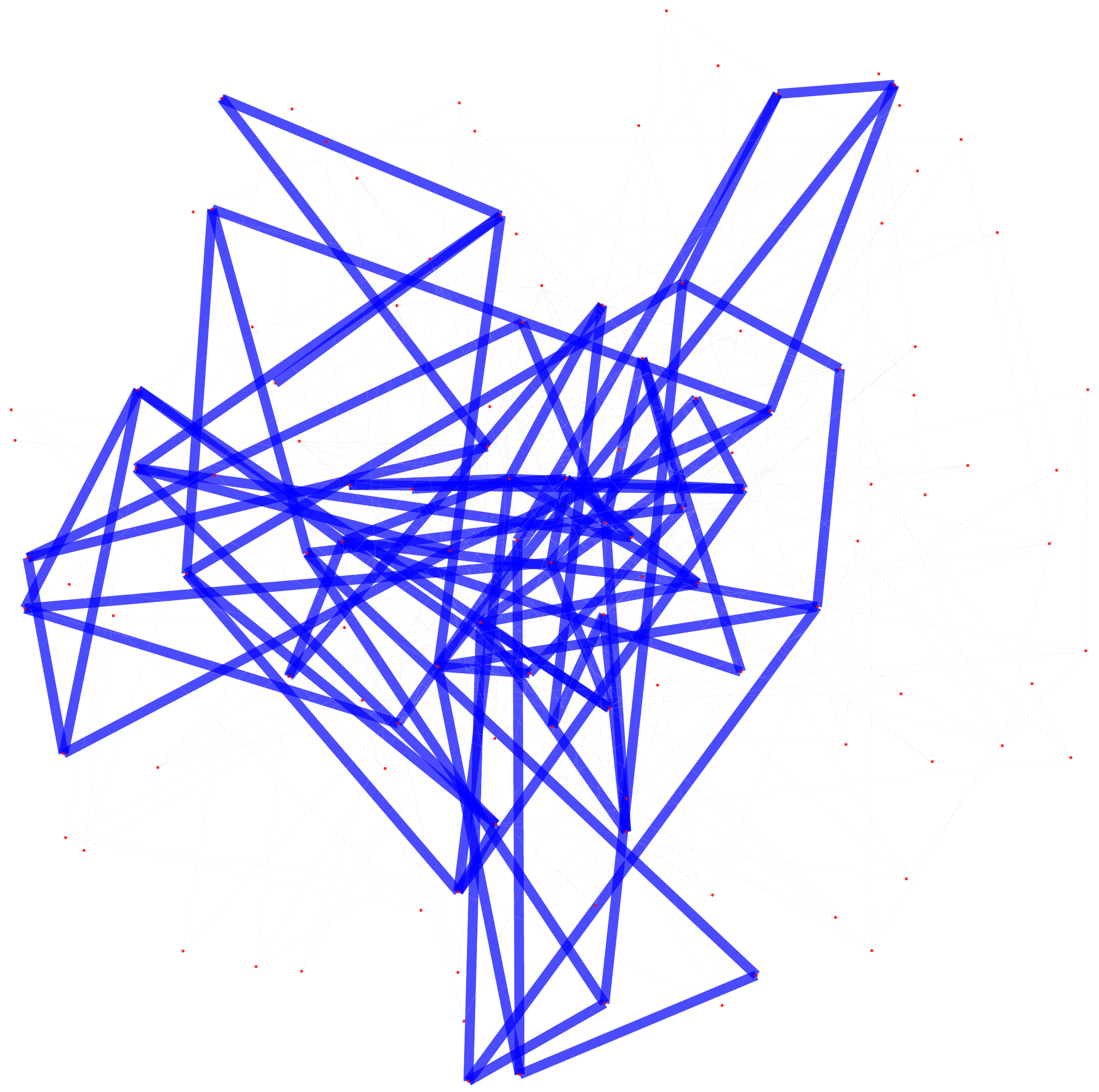}}\hspace{1mm}
\subfloat[  Target ]{\label{fig: path_1}\includegraphics[width=0.10\textwidth]{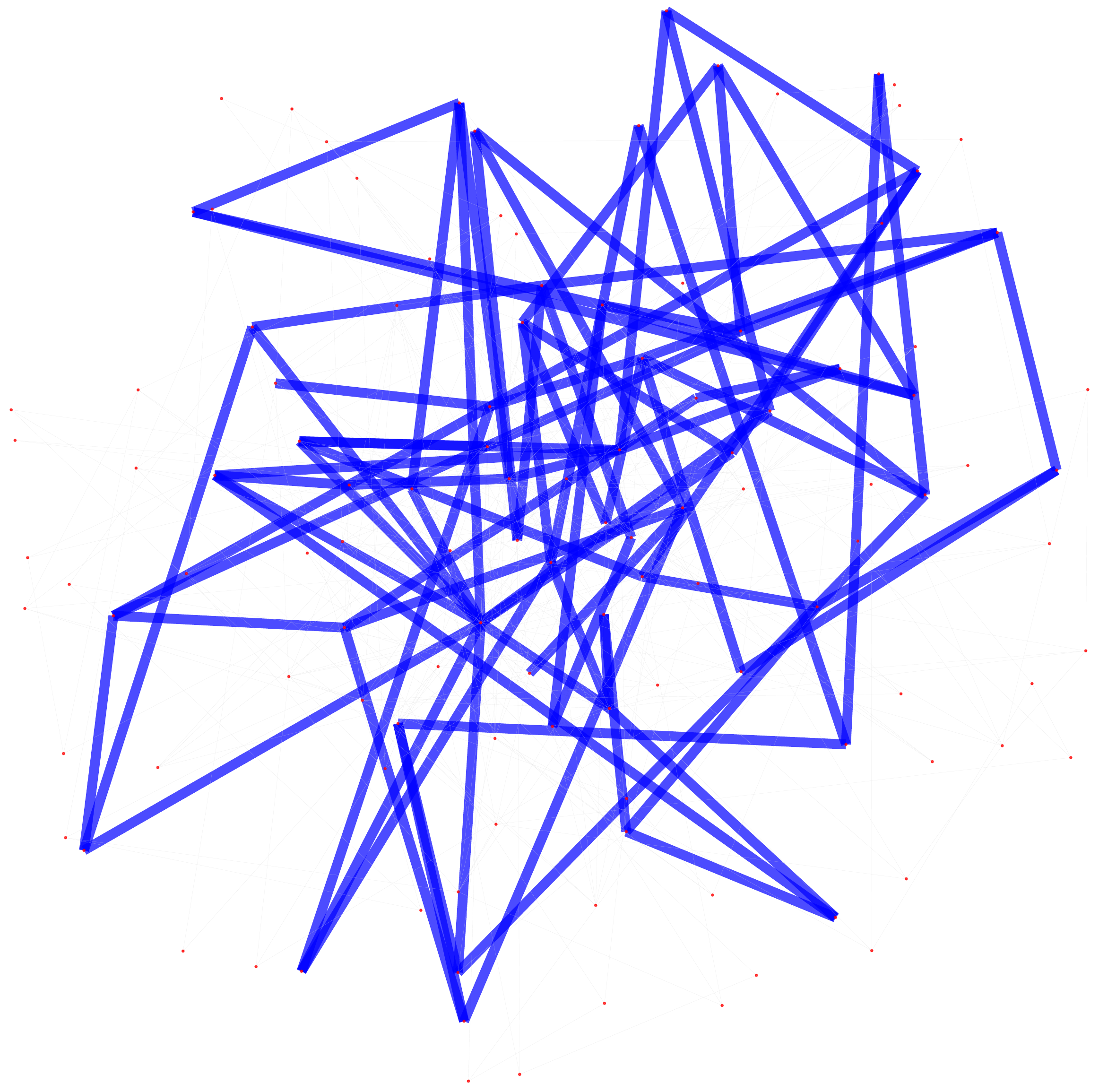}}\hspace{0mm}

\subfloat[  { [480, 0]} ]{\label{fig: path_1}\includegraphics[width=0.10\textwidth]{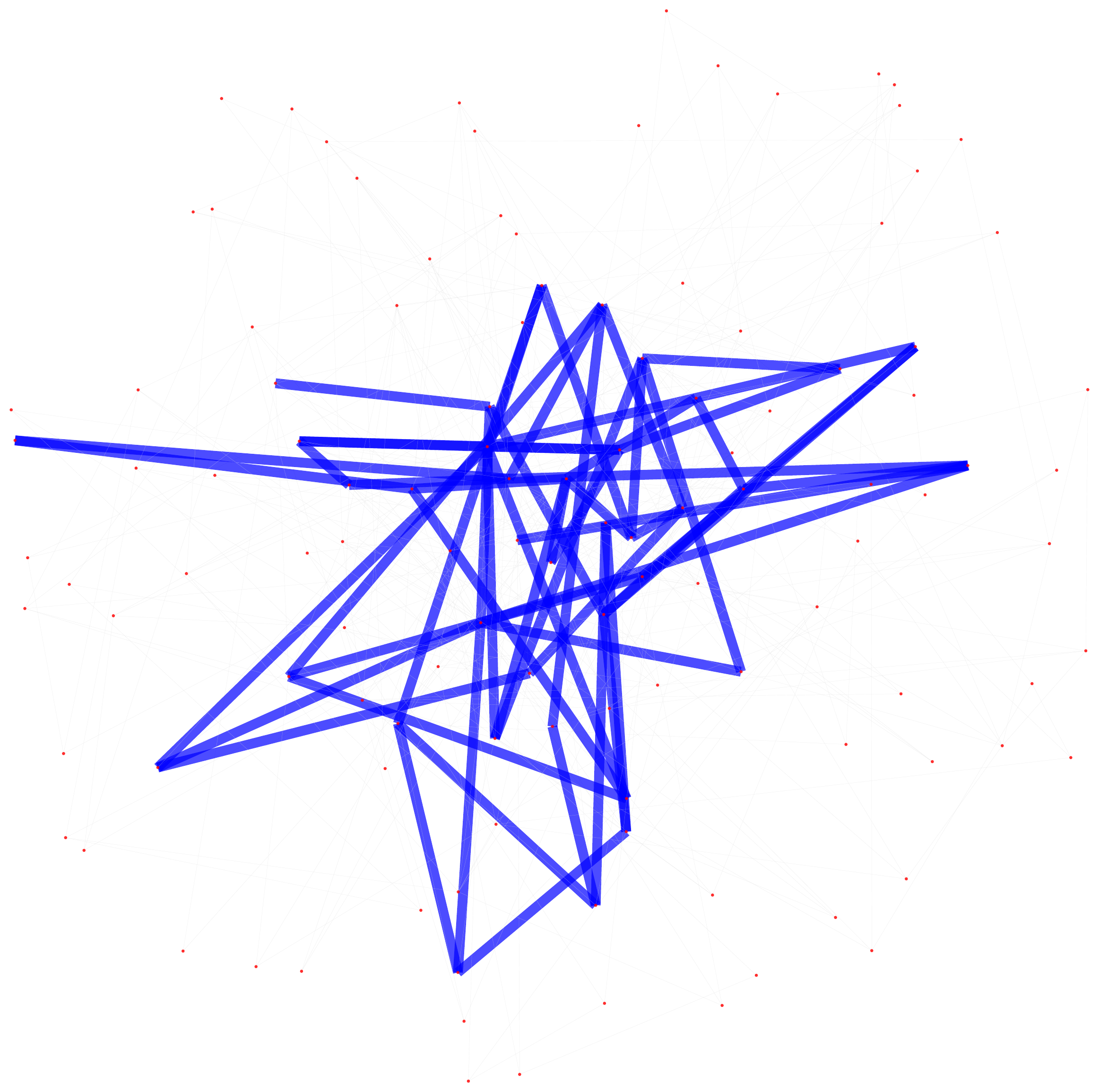}}\hspace{1mm}
\subfloat[  { [480, 1]} ]{\label{fig: path_1}\includegraphics[width=0.10\textwidth]{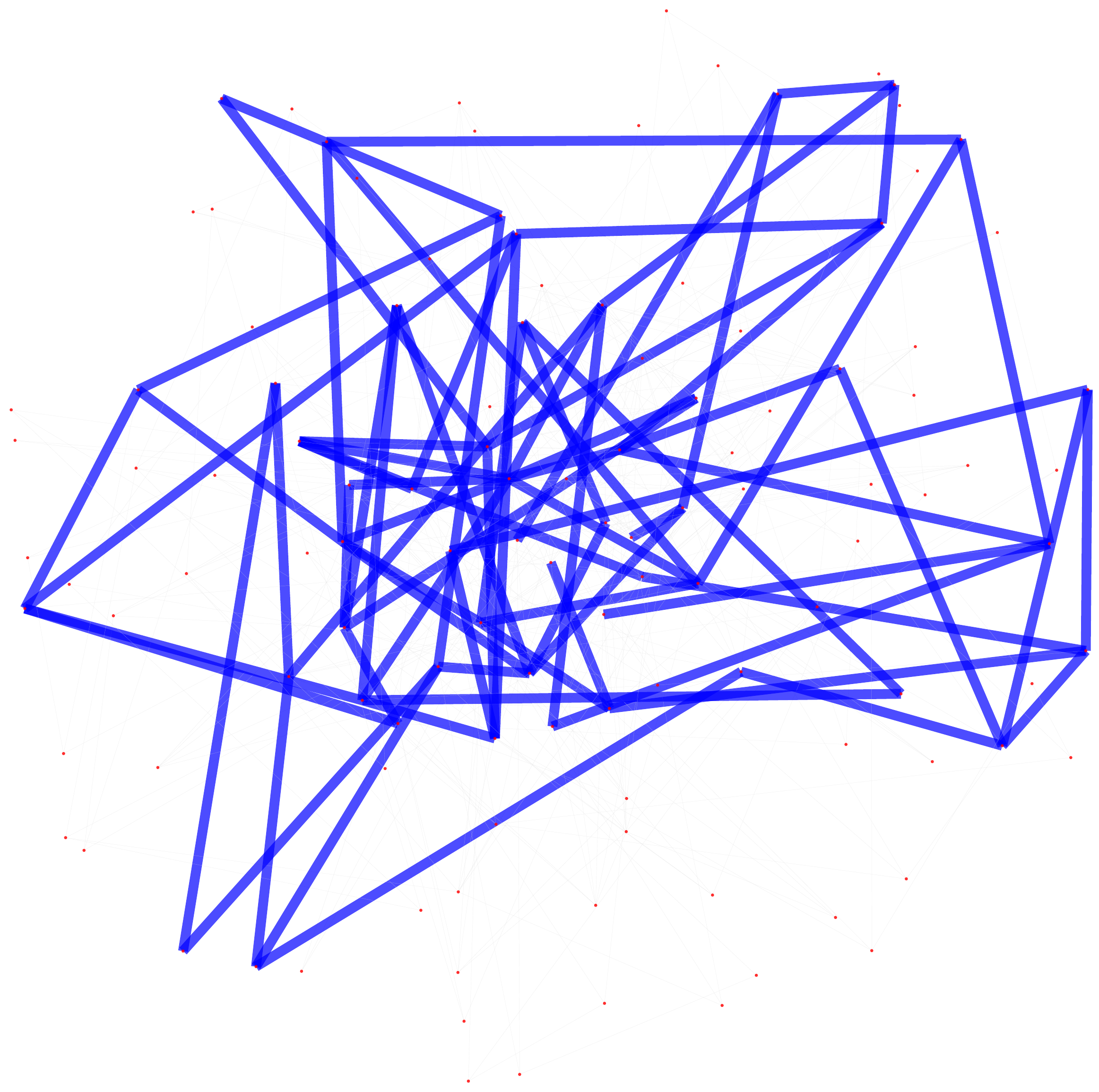}}\hspace{1mm}
\subfloat[  { [480, 2]} ]{\label{fig: path_1}\includegraphics[width=0.10\textwidth]{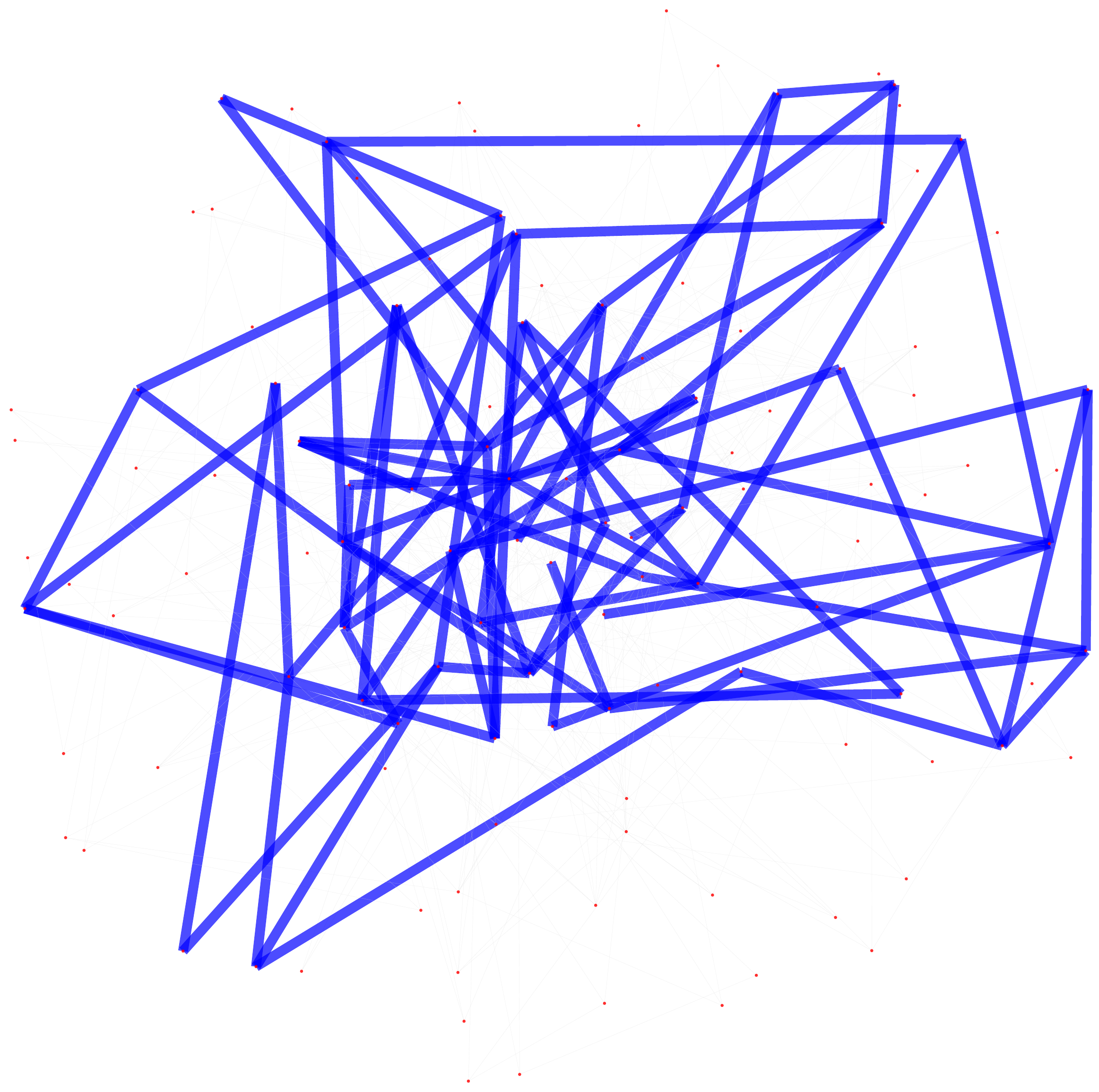}}\hspace{1mm}
\subfloat[  Target ]{\label{fig: path_1}\includegraphics[width=0.10\textwidth]{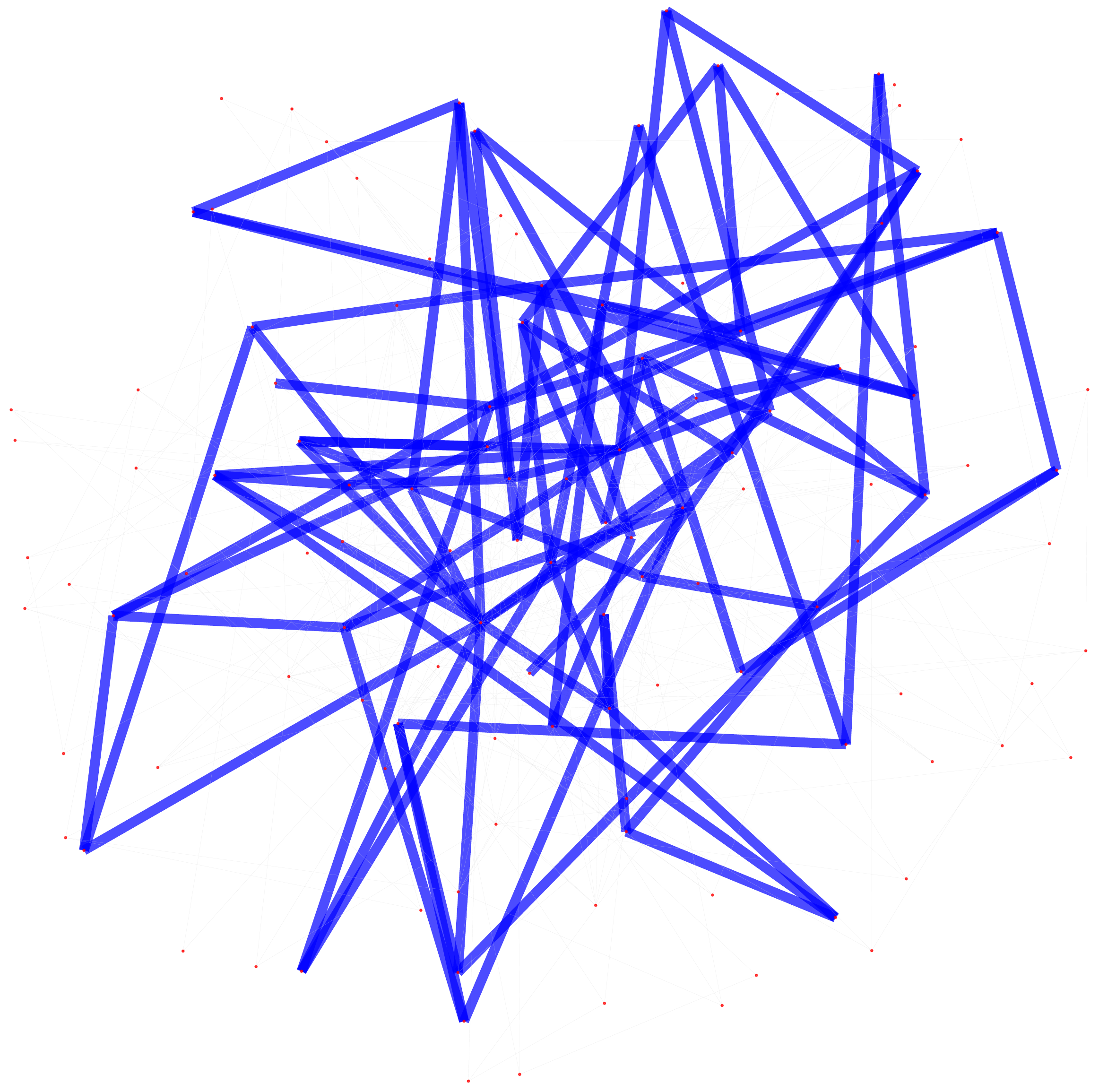}}\hspace{0mm}

\subfloat[  { [2400, 0]} ]{\label{fig: path_1}\includegraphics[width=0.10\textwidth]{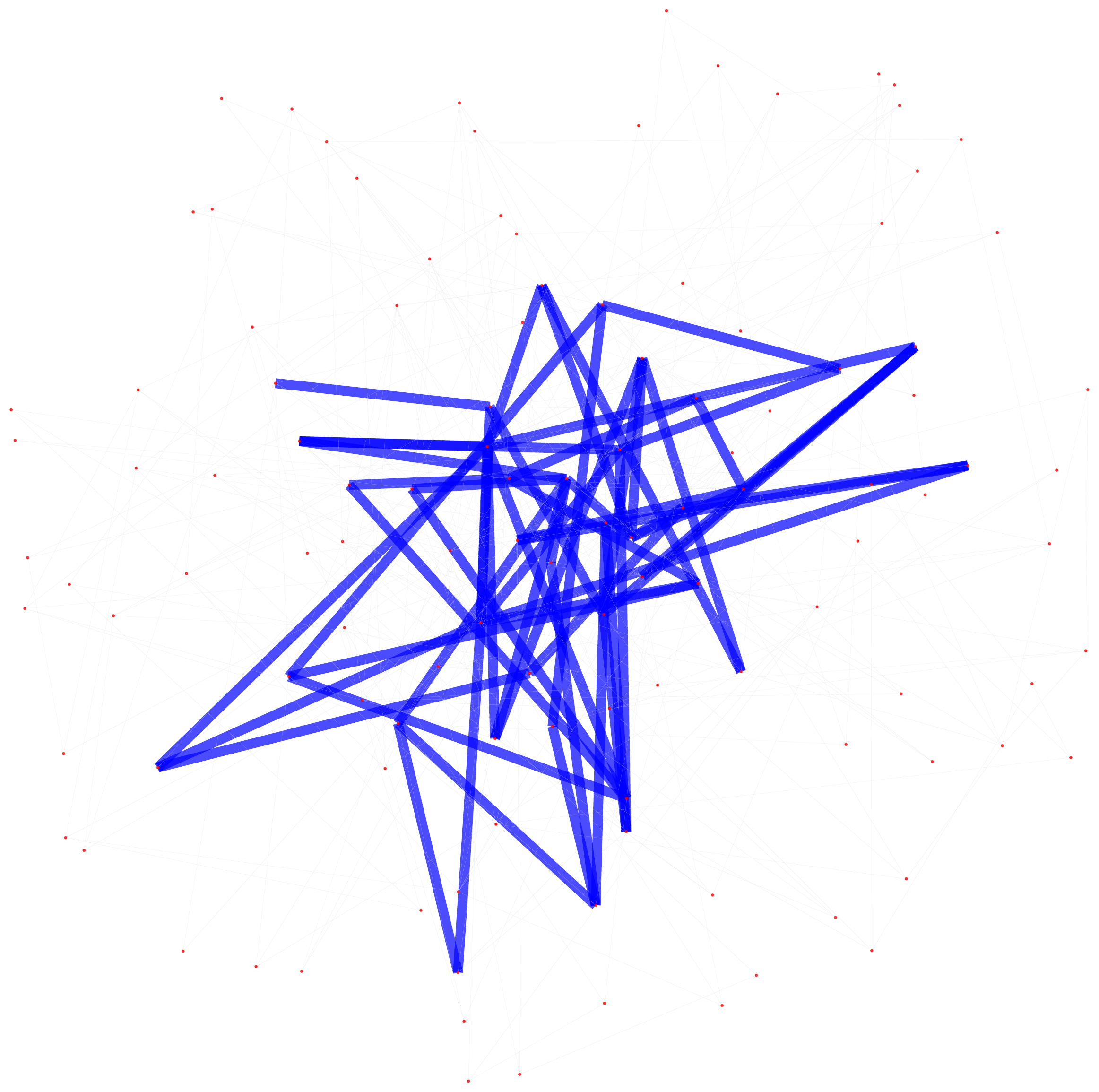}}\hspace{1mm}
\subfloat[  { [2400, 1]} ]{\label{fig: path_1}\includegraphics[width=0.10\textwidth]{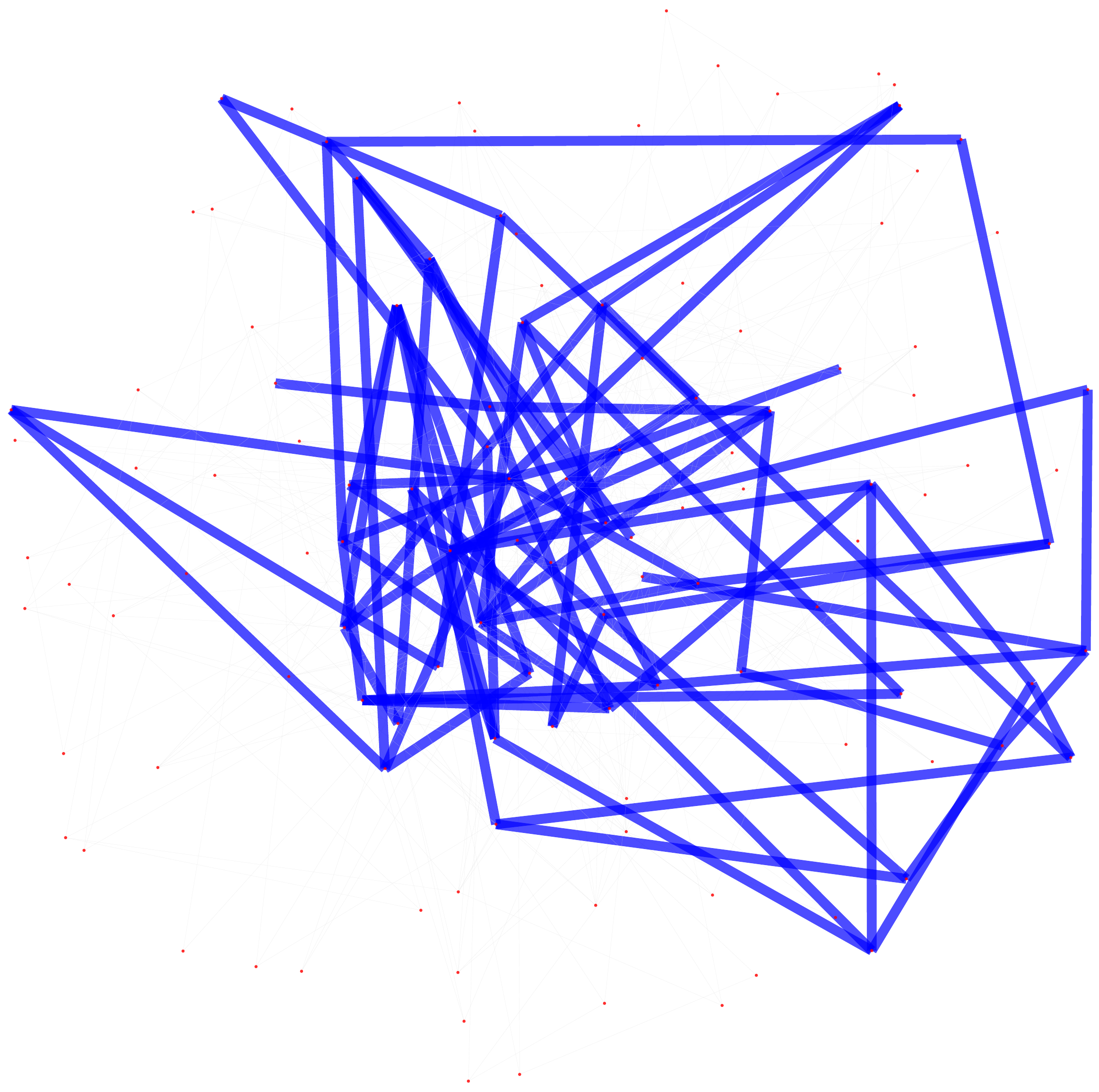}}\hspace{1mm}
\subfloat[  { [2400, 2]} ]{\label{fig: path_1}\includegraphics[width=0.10\textwidth]{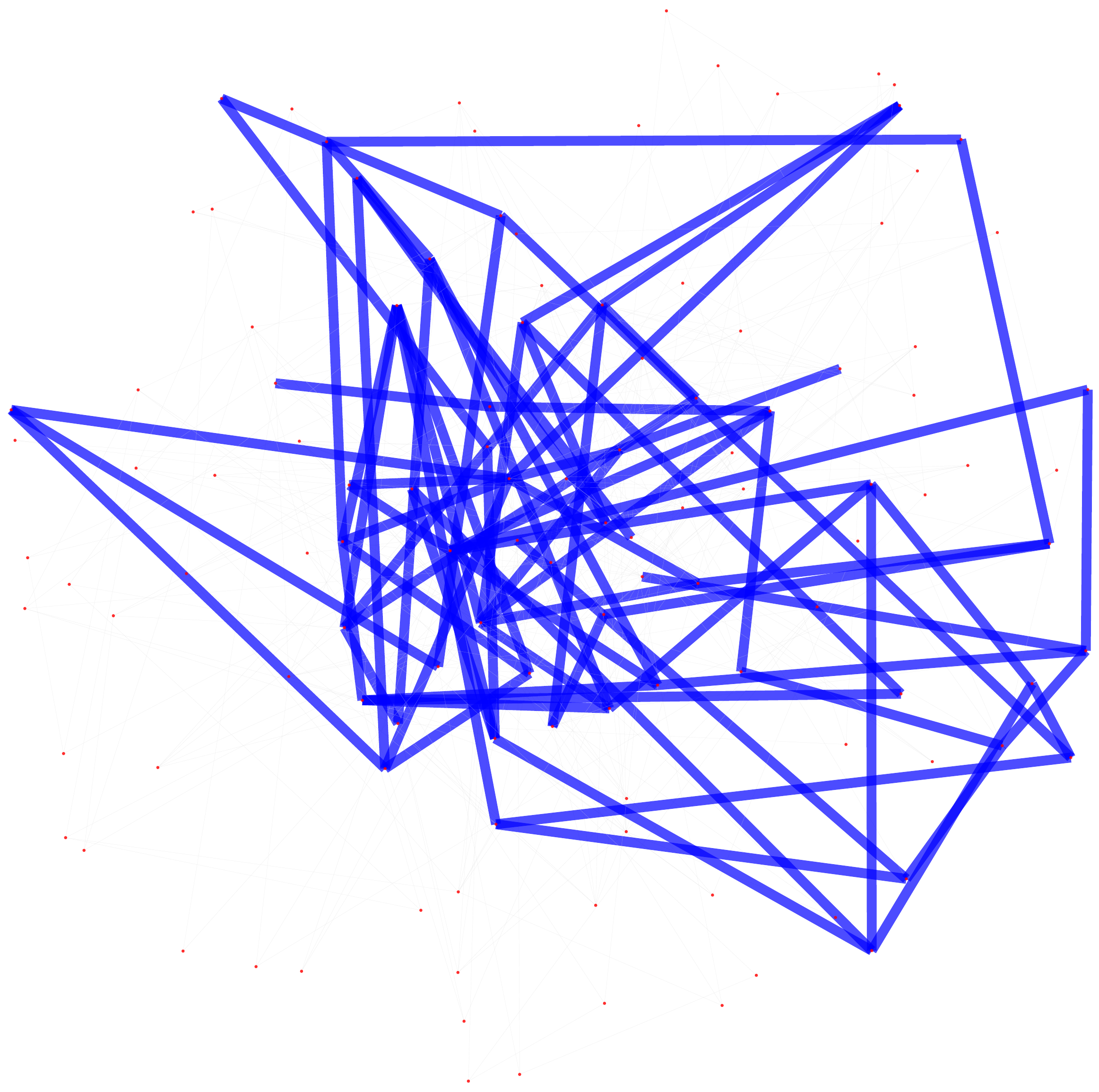}}\hspace{1mm}
\subfloat[  Target ]{\label{fig: path_1}\includegraphics[width=0.10\textwidth]{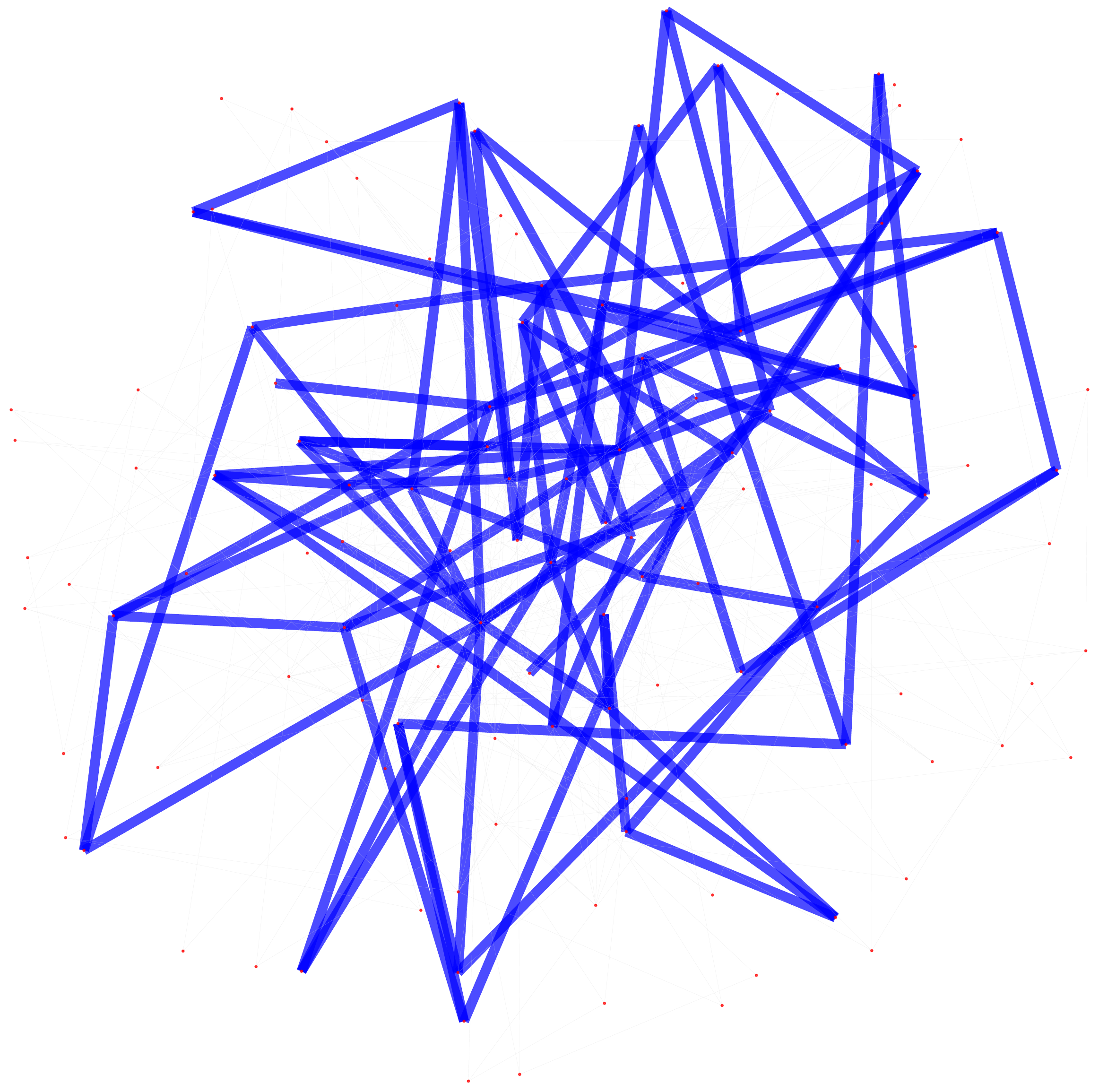}}\hspace{0mm}

\caption{\small Visualization of the results of QRTS-P for an example testing query for Path-to-Tree on Kro.}
\label{fig: more_tree_3}
\end{figure}

\begin{figure}[t]
\centering
\captionsetup[subfloat]{labelfont=scriptsize,textfont=scriptsize,labelformat=empty}
\subfloat[  { [30, 0]} ]{\label{fig: path_1}\includegraphics[width=0.10\textwidth]{images/more/tree_4/4_30_0.pdf}}\hspace{1mm}
\subfloat[  { [30, 1]} ]{\label{fig: path_1}\includegraphics[width=0.10\textwidth]{images/more/tree_4/4_30_1.pdf}}\hspace{1mm}
\subfloat[  { [30, 2]} ]{\label{fig: path_1}\includegraphics[width=0.10\textwidth]{images/more/tree_4/4_30_2.pdf}}\hspace{1mm}
\subfloat[  Target ]{\label{fig: path_1}\includegraphics[width=0.10\textwidth]{images/more/tree_4/4_30_t.pdf}}\hspace{0mm}


\subfloat[  { [240, 0]} ]{\label{fig: path_1}\includegraphics[width=0.10\textwidth]{images/more/tree_4/4_240_0.pdf}}\hspace{1mm}
\subfloat[  { [240, 1]} ]{\label{fig: path_1}\includegraphics[width=0.10\textwidth]{images/more/tree_4/4_240_1.pdf}}\hspace{1mm}
\subfloat[  { [240, 2]} ]{\label{fig: path_1}\includegraphics[width=0.10\textwidth]{images/more/tree_4/4_240_2.pdf}}\hspace{1mm}
\subfloat[  Target ]{\label{fig: path_1}\includegraphics[width=0.10\textwidth]{images/more/tree_4/4_240_t.pdf}}\hspace{0mm}

\subfloat[  { [480, 0]} ]{\label{fig: path_1}\includegraphics[width=0.10\textwidth]{images/more/tree_4/4_480_0.pdf}}\hspace{1mm}
\subfloat[  { [480, 1]} ]{\label{fig: path_1}\includegraphics[width=0.10\textwidth]{images/more/tree_4/4_480_1.pdf}}\hspace{1mm}
\subfloat[  { [480, 2]} ]{\label{fig: path_1}\includegraphics[width=0.10\textwidth]{images/more/tree_4/4_480_2.pdf}}\hspace{1mm}
\subfloat[  Target ]{\label{fig: path_1}\includegraphics[width=0.10\textwidth]{images/more/tree_4/4_480_t.pdf}}\hspace{0mm}

\subfloat[  { [2400, 0]} ]{\label{fig: path_1}\includegraphics[width=0.10\textwidth]{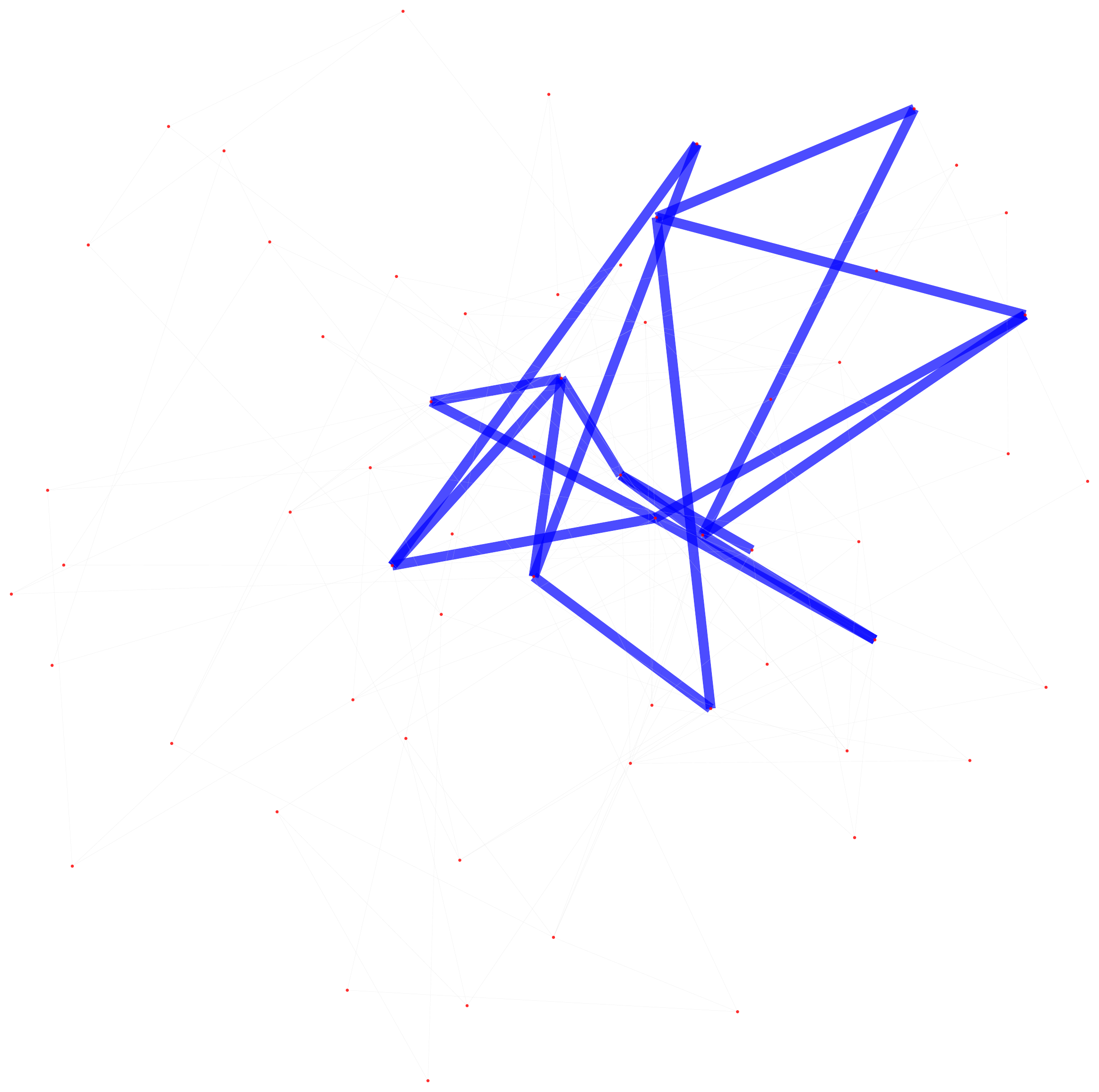}}\hspace{1mm}
\subfloat[  { [2400, 1]} ]{\label{fig: path_1}\includegraphics[width=0.10\textwidth]{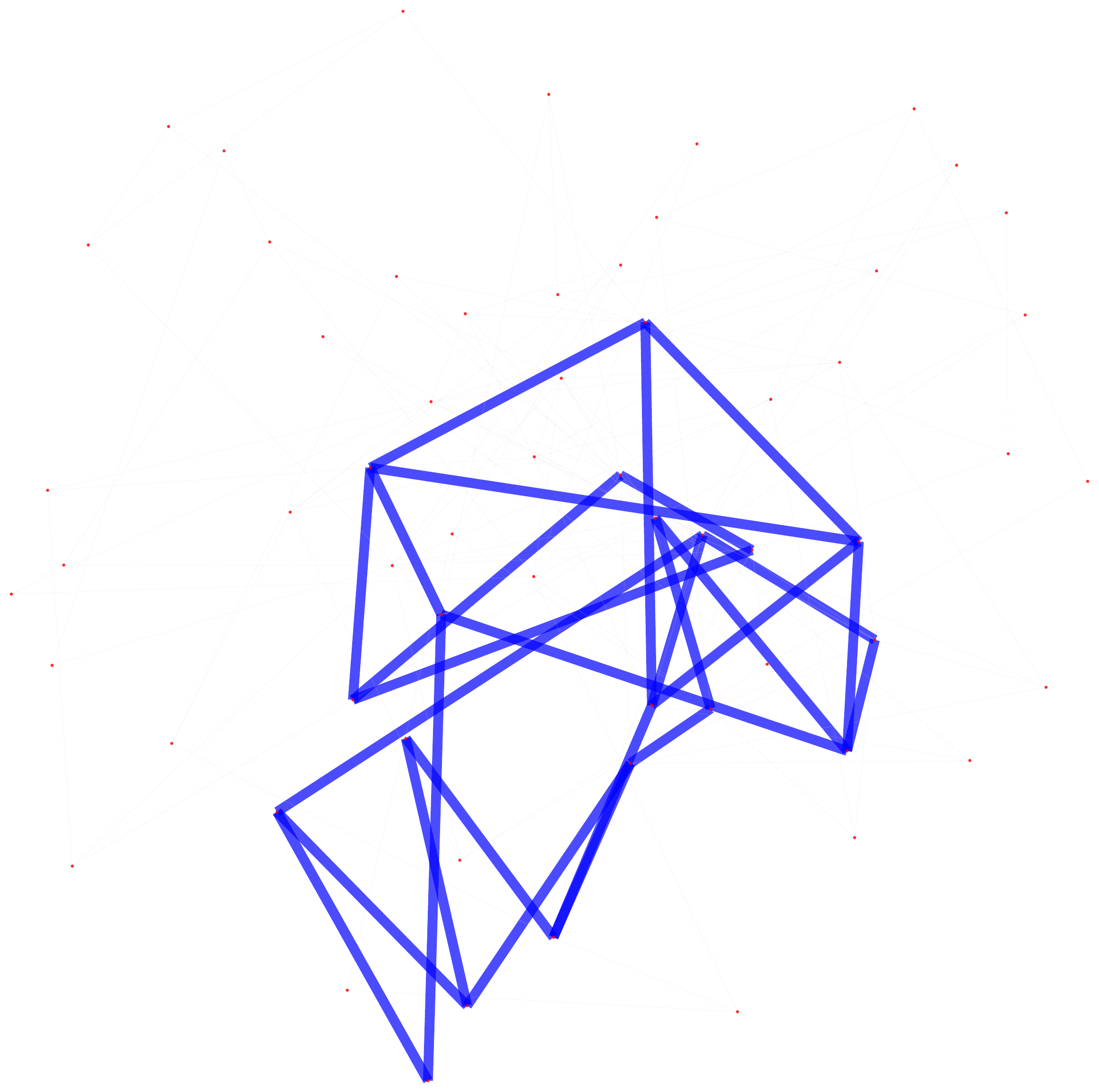}}\hspace{1mm}
\subfloat[  { [2400, 2]} ]{\label{fig: path_1}\includegraphics[width=0.10\textwidth]{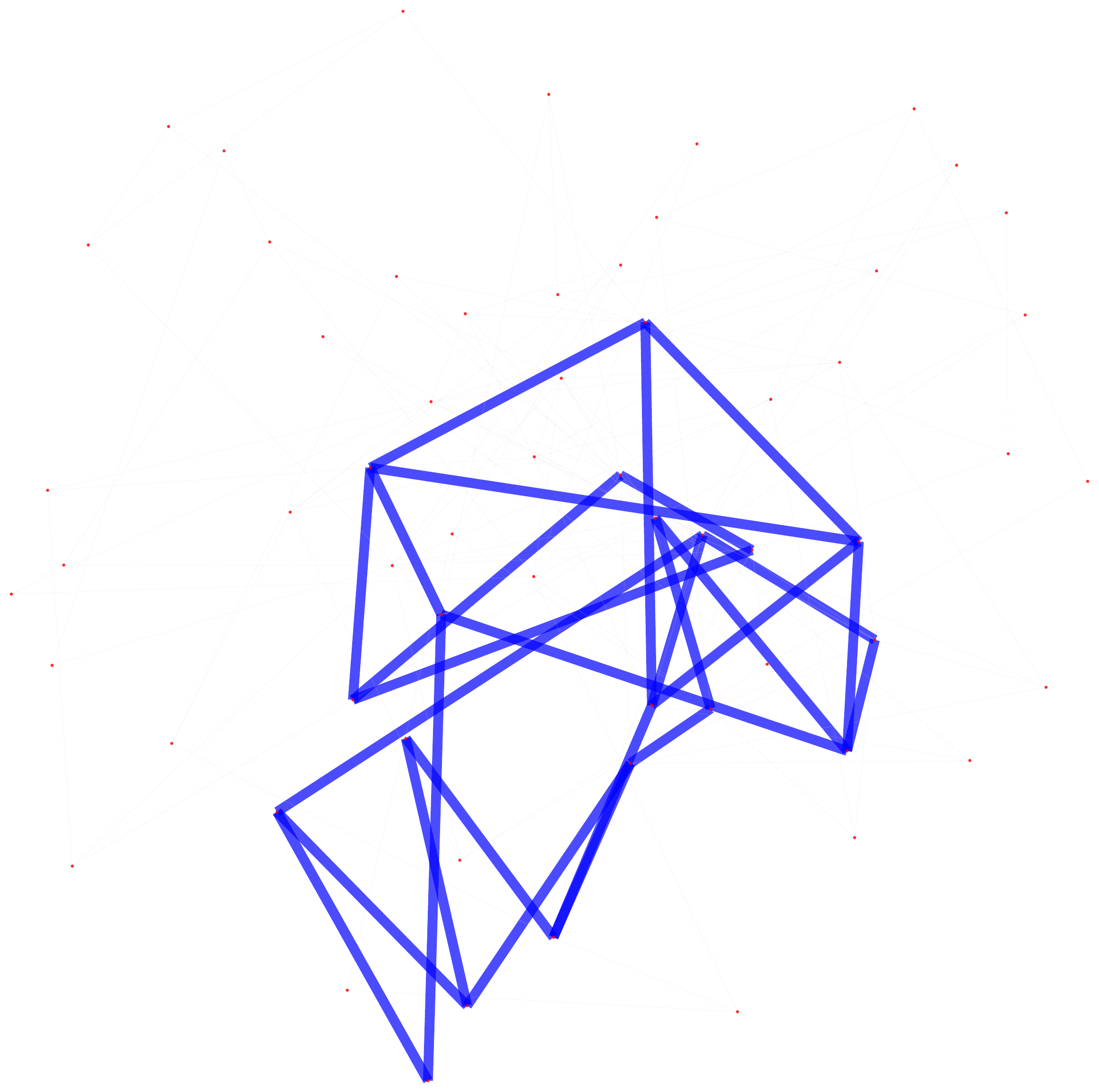}}\hspace{1mm}
\subfloat[  Target ]{\label{fig: path_1}\includegraphics[width=0.10\textwidth]{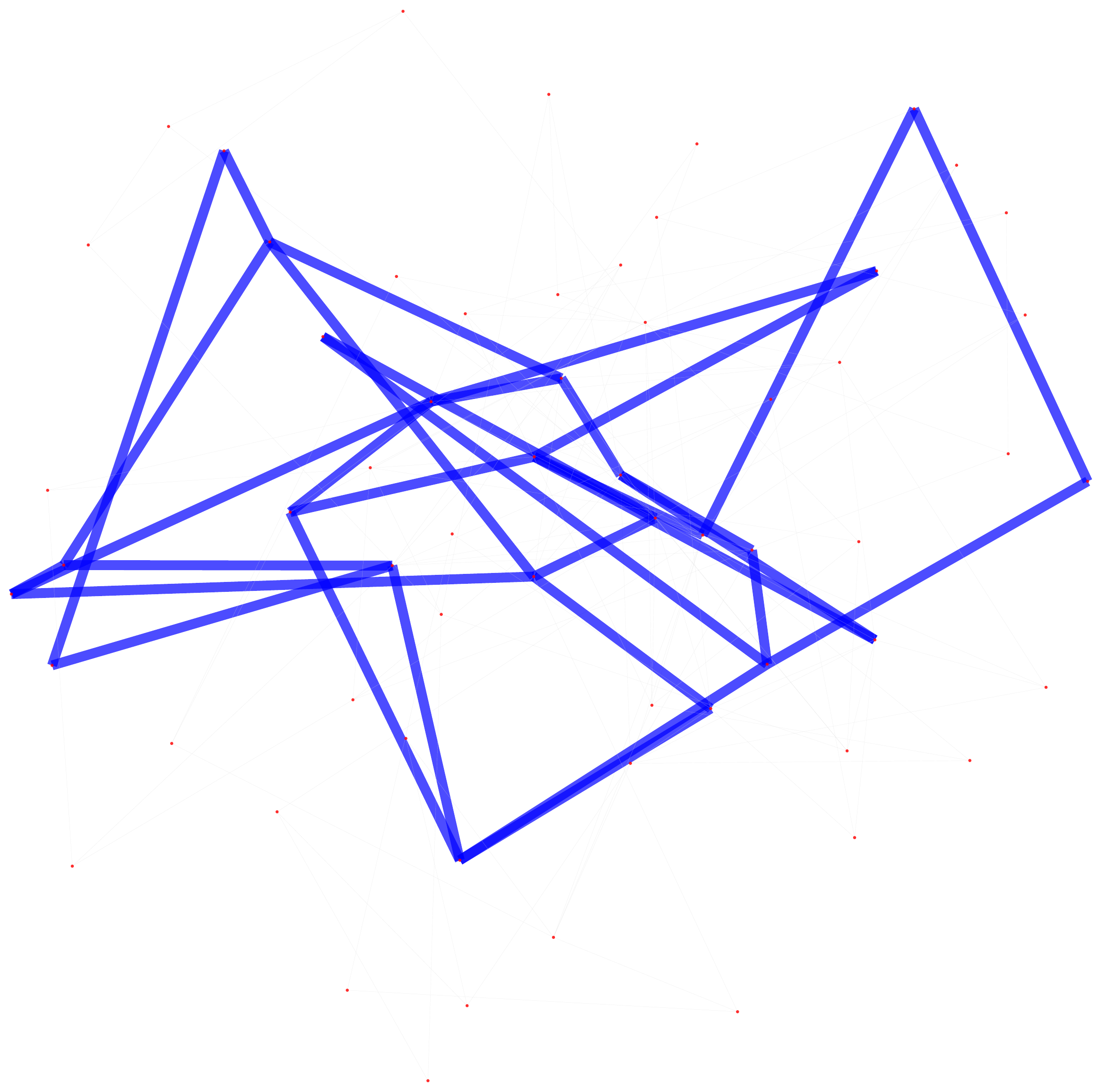}}\hspace{0mm}

\caption{\small Visualization of the results of QRTS-P for an example testing query for Path-to-Tree on Kro.}
\label{fig: more_tree_4}
\end{figure}

\newpage
\end{document}